\documentclass[pdflatex,sn-basic]{sn-jnl}

\usepackage{graphicx}%
\usepackage{multirow}%
\usepackage{amsmath,amssymb,amsfonts}%
\usepackage{amsthm}%
\usepackage[title]{appendix}%
\usepackage{xcolor}%
\usepackage{textcomp}%
\usepackage{manyfoot}%
\usepackage{booktabs}%
\usepackage{array}%
\usepackage{algorithm}%
\usepackage{algorithmicx}%
\usepackage{algpseudocode}%
\usepackage{listings}%

\usepackage{framed}
\usepackage{picinpar}

\theoremstyle{thmstyleone}%
\newtheorem{theorem}{Theorem}

\newtheorem{lemma}[theorem]{Lemma}
\newtheorem{remark}{Remark}%
\newtheorem{definition}{Definition}%
\newtheorem{assumption}{Assumption}%

\begin{document}

\title[A Mechanism Study of Delayed Loss Spikes in \\ Batch-Normalized Linear Models]{A Mechanism Study of Delayed Loss Spikes in \\ Batch-Normalized Linear Models}


\author[1]{\fnm{Peifeng} \sur{Gao}}\email{gaopeifeng@connect.hku.hk}

\author[2]{\fnm{Wenyi} \sur{Fang}}\email{fangwenyi3@huawei.com}

\author[2]{\fnm{Yang} \sur{Zheng}}\email{zhengyang31@huawei.com}

\author[1]{\fnm{Difan} \sur{Zou}}\email{dzou@cs.hku.hk}

\affil[1]{\orgdiv{School of Computing and  Data Science}, \orgname{The University of Hong Kong}
}

\affil[2]{\orgdiv{RAMS Technologies Lab}, \orgname{Huawei Technologies Ltd}}


\abstract{
    Delayed loss spikes have been reported in neural-network training, but existing theory mainly explains earlier non-monotone behavior caused by overly large fixed learning rates. We study one stylized hypothesis: normalization can postpone instability by gradually increasing the effective learning rate during otherwise stable descent. To test this hypothesis at theorem level, we analyze batch-normalized linear models. Our flagship result concerns whitened square-loss linear regression, where we derive explicit no-rising-edge and delayed-onset conditions, bound the waiting time to directional onset, and show that the rising edge self-stabilizes within finitely many iterations. Combined with a square-loss decomposition, this yields a concrete delayed-spike mechanism in the whitened regime. For logistic regression, under highly restrictive active-margin assumptions, we prove only a supporting finite-horizon directional precursor in a knife-edge regime, with an optional appendix-only loss lower bound under an extra non-degeneracy condition. The paper should therefore be read as a stylized mechanism study rather than a general explanation of neural-network loss spikes. Within that scope, the results isolate one concrete delayed-instability pathway induced by batch normalization.
}

\keywords{Loss Spikes, Normalization Method, Linear Model, Large Learning Rate}



\maketitle
\hypersetup{hypertexnames=false}

\section{Introduction}

Delayed loss spikes have been reported in large-scale neural-network training; see, for example, \citet{spike_GLM}, \citet{spike_PaLM}, and \citet{Spike_A_theory_on}. These events can require many additional iterations to recover and, in severe cases, are followed by collapse or restart from an earlier checkpoint. Recent mitigation-oriented work, including \citet{kumar2025zclip} and \citet{wang2025adagc}, further underscores that spike suppression remains a practical concern in modern large-model pretraining. We use these papers only as operational motivation, not as direct theoretical predecessors and not as empirical validation of the mechanism analyzed below. The goal of this paper is narrower: to test, in a tractable normalized linear setting, whether batch normalization can already create one delayed-instability pathway.

One influential line of work is the Edge of Stability (EoS) phenomenon \citep{First_EOS}, which studies optimization trajectories that remain stable even when the learning rate exceeds classical local-smoothness thresholds. In this regime, the loss can oscillate while still decreasing on longer timescales. Subsequent theory explains such behavior for carefully chosen scalar objectives \citep{eos_minimalist,eos_monotonically_decreases_the_sharpness} or under restrictive structural assumptions \citep{ahn2022understanding,ma2022beyond}. These results provide important insight into non-monotone training dynamics, but they do not explain the delayed spike pattern reported in large-scale practice, where instability often appears only after the loss has already become relatively small.

A more directly related line of work studies gradient descent with learning rates beyond inverse smoothness in simple linear or neural-network models \citep{eos_logistic_regression,andriushchenko2023sgd,wu2024large,lu2023benign}. These results show that stable, and sometimes accelerated, convergence can still occur. However, the instability predicted there typically appears near the beginning of training; see, in particular, \citet{eos_logistic_regression}. By contrast, the empirical spikes that motivate this paper can emerge much later, after a long period of apparently stable descent. Existing theory in this direction also does not identify explicit trigger conditions for delayed spikes or characterize their subsequent shape.

Our starting point is the empirical observation that many reported loss spikes arise in models equipped with normalization layers, such as weight normalization \citep{WN}, batch normalization \citep{BN}, and layer normalization \citep{LN}. At the same time, prior theory shows that normalization can induce effective learning-rate auto-tuning or, more broadly, scale-adaptive dynamics; see \citet{arora2018theoretical}, \citet{hoffer2018norm}, and \citet{morwani2022inductive}. This suggests a plausible mechanism for delayed instability: normalization can stabilize early training while gradually amplifying the effective learning rate until directional instability is triggered.

To investigate this mechanism, we study the simplest setting in which it can be analyzed cleanly: linear models with batch normalization trained by full-batch gradient descent. The scope of our results is intentionally asymmetric. Our main theorem line treats whitened square-loss linear regression and gives a comparatively strong characterization of the delayed-instability mechanism, including explicit no-rising-edge and delayed-onset conditions, waiting-time control, and finite-time self-stabilization of the rising edge. Through the square-loss decomposition in Lemma \ref{lemma_Decompositionofrisk}, this yields a concrete delayed-spike interpretation in the whitened regime. The logistic-regression analysis is narrower: under additional restrictive active-margin assumptions, it proves only a finite-horizon directional precursor, with an optional appendix-only non-degeneracy condition used solely for a loss lower bound. The paper should therefore be read as a mechanism study with one flagship theorem regime and one supporting extension. Relative to prior theory, the closest formal comparator for the flagship linear result is \citet{arora2018theoretical}, whose theorem is broader on generic batch-normalized convergence whereas ours is narrower but more explicit on delayed onset and self-stabilization. By contrast, \citet{eos_logistic_regression} mainly provides a mechanism contrast because its instability is early and fixed-scale, while \citet{implicit_bias_BN} provides a scope contrast because our logistic result does not strengthen the asymptotic max-margin guarantee.

The main contributions are as follows:




\begin{itemize}
    \item For whitened batch-normalized linear regression, we identify explicit sufficient conditions for no-rising-edge and delayed-onset regimes, bound the waiting time and duration of the \textit{Rising Edge}, and show that the instability self-stabilizes because directional divergence drives the effective learning rate back down. Via Lemma \ref{lemma_Decompositionofrisk}, this yields a concrete delayed-spike mechanism in the whitened square-loss setting.

    \item We derive a scale-invariant directional convergence/divergence lemma for normalized parameterizations. This is the common technical bridge in both analyses, but the resulting theorem payloads remain asymmetric: a full delayed-onset/self-stabilization theorem line in the whitened linear regime and only a supporting finite-horizon precursor statement in logistic regression.

    \item For batch-normalized logistic regression, we do not prove a full spike theorem. Instead, under highly restrictive active-margin assumptions, we characterize a supporting finite-horizon directional precursor: the directional error first contracts to a small threshold, and on a later positive-alignment branch there is an explicit exit threshold beyond which the directional rising edge cannot persist. This identifies how margin, conditioning, learning rate, and initialization interact in that narrow stylized regime.
\end{itemize}

\noindent\textbf{Scope and limitations.}
The strongest theorem in this paper is the whitened square-loss linear-regression result, which gives a full directional delayed-onset/self-stabilization picture in a deliberately stylized regime together with a square-loss spike interpretation. The logistic-regression analysis is narrower: the active-margin assumption enforces a knife-edge max-margin geometry, and Assumption \ref{Non_degenerate_data} appears only in the appendix as an optional bridge from direction to a logistic-loss lower bound. The numerical section is included only as a qualitative mechanism illustration, not as comprehensive validation in modern nonlinear architectures. Section \ref{ProblemSetup} introduces the notation, Sections \ref{Main_Results} and \ref{ProofOverview} present the main theorem line and proof intuition, and the appendices contain the full technical details.


\section{Related Work}

\noindent\textbf{Theoretical Studies on Normalization.}  
Normalization layers such as batch normalization \citep{BN} and layer normalization \citep{LN} substantially reshape optimization dynamics. For batch normalization, \citet{arora2018theoretical}, \citet{hoffer2018norm}, and \citet{morwani2022inductive} highlight effective learning-rate auto-tuning and related scale-adaptive effects, while \citet{kohler2019exponential} and \citet{cai2019quantitative} quantify how normalization alters optimization geometry and convergence rates. Recent theory further studies mean-field signal propagation, representation orthogonalization, depth scaling, expressivity, and layer-normalization-induced nonlinearity \citep{yang2019mean_field_bn,daneshmand2021bn_orthogonalizes,meterez2024bn_without_gradient_explosion,burkholz2024bn_universal,ni2024layernorm_nonlinearity}. Recent systems papers such as \citet{wang2025sdd} and \citet{zhuo2025hybridnorm} revisit normalization and scale control from a stability perspective at large-model scale. We use these works mainly as context. The closest formal comparator for our flagship theorem is \citet{arora2018theoretical}, but our focus is different: rather than asymptotic convergence under large steps, we study a transient delayed-instability mechanism in a narrower whitened linear regime.

On the logistic side, \citet{implicit_bias_BN} study batch-normalized logistic regression and prove convergence toward the max-margin direction; our supporting logistic theorem does not strengthen that asymptotic result, but instead isolates a narrower finite-horizon directional regime under stronger structural assumptions.

\noindent
\textbf{Edge of Stability.} 
Edge of Stability (EoS) \citep{First_EOS} refers to the regime in which optimization remains stable even when the learning rate exceeds the classical threshold $2/\lambda_{\max}$, with $\lambda_{\max}$ denoting the top Hessian eigenvalue. Instead of diverging immediately, the loss may oscillate while continuing to decrease overall. A substantial follow-up literature explains this behavior for specific models and objectives \citep{eos_minimalist, chen2022gradient, ahn2023learning, even2023s, ma2022beyond, ahn2022understanding, eos_Self_stabilization, wang2022analyzing}. More recent work connects large-step dynamics to deep-network EoS theory, curvature-aware tuning, learning-rate transfer across scales, outlier sensitivity, and large-step implicit regularization \citep{arora2022understanding_eos,roulet2024stepping_edge,noci2024super_consistency,rosenfeld2024outliers_opposing_signals,qiao2024stable_minima}. These works clarify important mechanisms behind non-monotone training, but the analyzed objectives are often far from modern normalized networks, or else rely on assumptions that are difficult to verify in practice.

\citet{eos_logistic_regression} are especially relevant here: they show that logistic regression can converge to the max-margin direction under arbitrary constant learning rates, while large learning rates generate oscillatory loss behavior. Our work differs mainly in mechanism: their oscillations arise early in training, whereas the empirical spikes that motivate this paper typically appear only after a long stable phase \citep{Spike_Spike_No_More, Spike_A_theory_on}. We therefore focus on whether batch normalization can create a delayed-instability mechanism that is absent from plain logistic regression with a fixed scale.

\section{Problem Setup}\label{ProblemSetup}
Because we study both linear regression and logistic regression, we first introduce a unified notation. 
Let $n$ denote the number of samples and $d$ the feature dimension. 

\noindent\textbf{Dataset.}
The training dataset is $S = \{(\mathbf{x}_i, y_i)\}_{i=1}^{n}$, 
where $\mathbf{x}_i \in \mathbb{R}^{d}$ is the feature vector and $y_i \in \{1, -1\}$ is the corresponding label. 
We use the following matrix notation:
\begin{equation*}
\begin{aligned}
  & \mathbf{X} = [\mathbf{x}_1, \cdots, \mathbf{x}_n] \in \mathbb{R}^{d \times n}; \ 
  \mathbf{y} = [y_1, \cdots, y_n] \in \mathbb{R}^{n \times 1}; 
  \\ & 
  \tilde{\mathbf{X}} = \mathbf{X} \text{diag}\left(\mathbf{y}\right); \ 
  \mathbf{\Sigma} = \frac{1}{n} \tilde{\mathbf{X}} \tilde{\mathbf{X}}^T, \ 
  \boldsymbol{\mu} = \frac{1}{n} \tilde{\mathbf{X}} \mathbf{1}_{n},
\end{aligned}
\end{equation*}
where $\mathbf{1}_{n}$ denotes the $n$-dimensional all-ones vector. 
We also use the inner product and norm induced by $\mathbf{\Sigma}$:
for any $\mathbf{a}, \mathbf{b} \in \mathbb{R}^{d}$, we define
\begin{equation*}
\begin{aligned}
  \langle \mathbf{a}, \mathbf{b} \rangle_{\mathbf{\Sigma}} = \mathbf{a}^T \mathbf{\Sigma} \mathbf{b}; \ 
  \Vert \mathbf{a} \Vert_{\mathbf{\Sigma}}^2 = \mathbf{a}^T \mathbf{\Sigma} \mathbf{a}. 
\end{aligned}
\end{equation*}
Unless marked by the subscript $\mathbf{\Sigma}$, the directional quantities used for geometric intuition in the main text are Euclidean. We introduce $\mathbf{\Sigma}$-weighted variants only when the covariance geometry is essential to a bound.
\noindent\textbf{ Linear Model and Risk.}
Following \citet{implicit_bias_BN}, we consider the batch-normalized linear model
\begin{equation*}
\begin{aligned}
  \text{logit} \left(\mathbf{x}_i; \mathbf{w}, \alpha \right) = 
  \alpha \cdot \frac{\langle \mathbf{x}_i, \mathbf{w} \rangle }{\Vert \mathbf{w} \Vert_{\mathbf{\Sigma}}},
  \ \mathbf{w} \in \mathbb{R}^{d}, 
  \ \alpha \in \mathbb{R},
\end{aligned}
\end{equation*}
where $\mathbf{w}$ is the linear parameter and $\alpha$ is the batch-normalization scale. 
We use matrix notation throughout the analysis. For the logits of all samples, we write
\begin{equation*}
\begin{aligned}
  \left[\begin{array}{ccccccc}
    \text{logit} \left(\mathbf{x}_1; \mathbf{w}, \alpha \right),
    \cdots,
    \text{logit} \left(\mathbf{x}_n; \mathbf{w}, \alpha \right)
  \end{array}\right]^T
  =
  \alpha \cdot \frac{\mathbf{X}^T \mathbf{w}}{\Vert \mathbf{w} \Vert_{\mathbf{\Sigma}}}
\end{aligned}
\end{equation*}
We consider losses $\ell \in \{\ell_{log}, \ell_{squ}\}$, where
$\ell_{log} \left(\cdot\right) := \log\left(1 + \exp \left(-(\cdot)\right)\right)$.
Since the labels take values in $\{1, -1\}$, we define
$\ell_{squ} \left(\cdot\right) := {(1 - (\cdot))^2}/{2}$
so that the same notation covers both logistic regression and a square-loss linear surrogate. We write $\boldsymbol{\ell}\left(\cdot\right)$ for the element-wise version of the loss. The empirical risk is
\begin{equation*}
\begin{aligned}
  \mathcal{R} \left( \mathbf{w}, \alpha \right)
  =
  \frac{1}{n} \mathbf{1}_{n}^T \boldsymbol{\ell} \left( \alpha \cdot
    \frac{\tilde{\mathbf{X}}^T \mathbf{w}}{\Vert \mathbf{w} \Vert_{\mathbf{\Sigma}}} 
  \right),
\end{aligned}
\end{equation*}
\noindent\textbf{Gradient Descent.}
We study full-batch gradient descent with separate step sizes for $\mathbf{w}$ and $\alpha$. The corresponding gradients are
\begin{equation*}
\begin{aligned}
  & \nabla_{\mathbf{w}} \mathcal{R} \left(\mathbf{w}, \alpha\right) 
  = 
  \frac{\alpha}{n \Vert \mathbf{w} \Vert_{\mathbf{\Sigma}}}
  \left(
    \mathbf{I} - \frac{\mathbf{\Sigma} \mathbf{w} \mathbf{w}^{T}}{\Vert \mathbf{w} \Vert_{\mathbf{\Sigma}}^{2}}
  \right)
  \tilde{\mathbf{X}} 
  \boldsymbol{\ell^\prime}\left(
    \frac{
    \alpha \tilde{\mathbf{X}}^T \mathbf{w}
    }{\Vert \mathbf{w} \Vert_\mathbf{\Sigma}}
  \right);
  \\ & 
  \frac{\partial \mathcal{R}}{\partial \mathbf{\alpha}} \left(\mathbf{w}, \alpha\right)
  = 
  \frac{1}{n}
  \left(
    \frac{
      \tilde{\mathbf{X}}^{T} \mathbf{w}
      }{ \Vert \mathbf{w} \Vert_\mathbf{\Sigma}}
  \right)^T 
  \boldsymbol{\ell^\prime}\left(
    \frac{
    \alpha \tilde{\mathbf{X}}^T \mathbf{w}
    }{\Vert \mathbf{w} \Vert_\mathbf{\Sigma}}
  \right),
\end{aligned}
\end{equation*}
where $\boldsymbol{\ell^{\prime}}$ denotes the vector of element-wise derivatives of $\ell$.
We study the parameter sequence $(\mathbf{w}_t, \alpha_t)$ generated by
\begin{equation}\label{gradient_updata}
\begin{aligned}
  \mathbf{w}_{t+1} \leftarrow \mathbf{w}_{t} - \eta 
  \nabla_{\mathbf{w}} \mathcal{R}_t; \ \ 
  \alpha_{t+1} \leftarrow \alpha_{t} - \eta_{\alpha} 
  \frac{\partial \mathcal{R}_t}{\partial \alpha},
\end{aligned}
\end{equation}
starting from $(\mathbf{w}_0, \alpha_0)$, where $\mathcal{R}_t := \mathcal{R} \left(\mathbf{w}_t, \alpha_t\right)$ for brevity. Since $\nabla_{\mathbf{w}} \mathcal{R}_t$ always lies in $\text{span}\left(\mathbf{X}\right) := \text{span} \left\{\mathbf{x}_1, \cdots, \mathbf{x}_n\right\}$, only the component of $\mathbf{w}_t$ inside $\text{span}\left(\mathbf{X}\right)$ affects the dynamics. We therefore assume that the initialization lies in $\mathcal{X} := \text{span} \left(\mathbf{X}\right) \backslash \{0\}$, and, without loss of generality, take $\alpha_0 > 0$. 



\section{Main Results}\label{Main_Results}
We begin with the directional quantities that drive the paper's mechanism. In the whitened linear-regression regime, sharp directional divergence yields explicit no-rising-edge and delayed-onset conditions together with finite-time self-stabilization; Lemma \ref{lemma_Decompositionofrisk} then translates that event into the square-loss spike interpretation emphasized in the introduction. The logistic analysis below is included only to test whether a related directional precursor can still be proved beyond square loss under substantially stronger assumptions. Here, ``direction'' refers to the reference direction that gradient descent should ultimately approach in order to reduce the loss. We denote this direction by $\hat{\mathbf{w}}$: for linear regression it is the least-squares direction, while for logistic regression it is the max-margin SVM direction \citep{implicit_bias_BN,eos_logistic_regression}. Roughly speaking, when $\mathbf{w}_t$ remains aligned with $\hat{\mathbf{w}}$ and the scale $\alpha_t$ does not vary too abruptly, the loss continues to decrease. A rapid directional departure from $\hat{\mathbf{w}}$ can instead trigger a sudden loss increase. To quantify this effect, we introduce $\rho_t$ and $\rho_t^{\perp}$:
\begin{equation*}
\begin{aligned}
  & \rho_t 
  := \rho \left( \mathbf{w}_t \right) 
  := \frac{\langle \hat{\mathbf{w}}, \mathbf{w}_t \rangle}{\Vert \mathbf{w}_t \Vert};
  \quad
  \rho_t^{\perp} 
  := \rho^{\perp} \left( \mathbf{w}_t \right) 
  := \left\Vert
    \hat{\mathbf{w}} - \rho_t \frac{\mathbf{w}_t}{\Vert \mathbf{w}_t \Vert}
  \right\Vert.
\end{aligned} 
\end{equation*}
The left panel of Figure \ref{fig_demo} illustrates that $\rho \left( \mathbf{w} \right)$ is the component of $\hat{\mathbf{w}}$ along $\mathbf{w}$, whereas $\rho^{\perp} \left( \mathbf{w} \right)$ is the orthogonal residual. By the Pythagorean theorem,
$\left(\rho^{\perp} \left( \mathbf{w} \right)\right)^2 + \rho \left( \mathbf{w} \right)^2 = \Vert \hat{\mathbf{w}} \Vert^2$ for every $\mathbf{w}$. In particular, $\rho_t^{\perp} / \rho_t$ is the tangent of the angle between $\mathbf{w}_t$ and $\hat{\mathbf{w}}$, so the convergence of $\rho_t^{\perp} / \rho_t$ to $0$ means that $\mathbf{w}_t$ becomes increasingly aligned with the reference direction. We use these quantities first in linear regression and then, in a weaker form, in logistic regression. For ease of exposition, we introduce two system states according to the trend of $\rho_t^{\perp} / \rho_t$, as illustrated on the right of Figure \ref{fig_demo}:
\begin{itemize}
    \item \textit{Rising Edge}: the time span over which $\rho_t^{\perp} / \rho_t$ increases;
    \item \textit{Falling Edge}: the time span over which $\rho_t^{\perp} / \rho_t$ decreases.
\end{itemize}
\begin{figure}[t]
  \centering
  \begin{minipage}{0.38\columnwidth}
    \centering
    \includegraphics[width=\columnwidth]{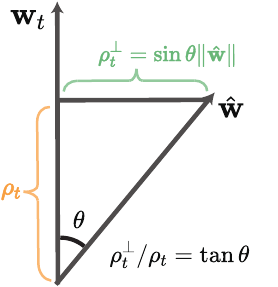}
  \end{minipage}\hfill
  \begin{minipage}{0.58\columnwidth}
    \centering
    \includegraphics[width=\columnwidth]{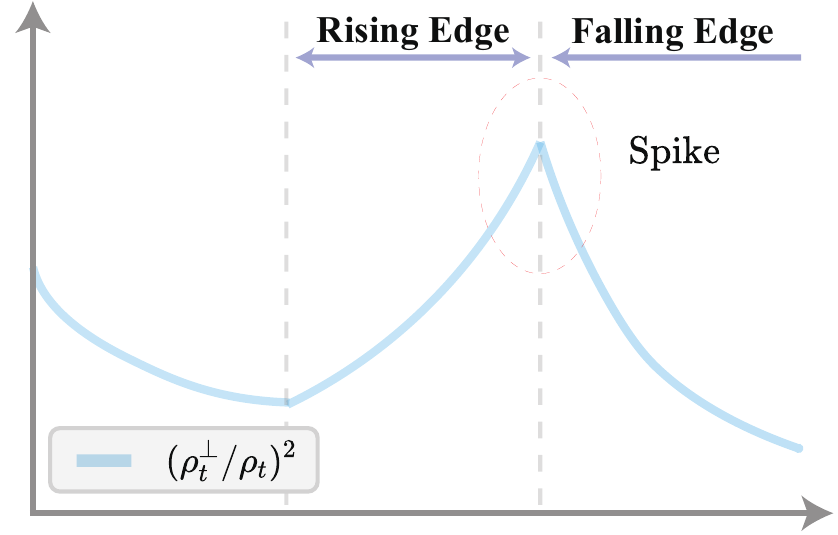}
  \end{minipage}
  \caption{\textbf{Left}: $\rho_t^{\perp}$ and $\rho_t^{\perp}/\rho_t$ measure the directional deviation between $\hat{\mathbf{w}}$ and $\mathbf{w}_t$. \textbf{Right}: an example trajectory of $\rho_t^{\perp} / \rho_t$, where the \textit{Rising Edge} and \textit{Falling Edge} are defined by its monotonic trend.}
  \label{fig_demo}
\end{figure}
Our goal is to characterize this directional onset/self-stabilization mechanism in detail, including its trigger condition and the durations of the rising and falling phases, and then interpret it in square loss through Lemma \ref{lemma_Decompositionofrisk}.

\subsection{Whitened Square-Loss Linear Regression}\label{Linear_Regression_Model}

We begin with whitened square-loss linear regression because it is the paper's strongest and most complete theorem regime. The reader can refer to Appendix \ref{Detailed_Proof_for_Linear_Regression} for full proofs. To simplify the analysis, we consider whitened data, so that the empirical covariance satisfies $\mathbf{\Sigma}=\mathbf{I}$. Relative to large-step convergence theory for normalized objectives and early-oscillation results without normalization, the present whitened setting is narrower but yields a more explicit delayed-instability package: no-rising-edge and delayed-onset conditions, a waiting-time bound, and finite-time self-stabilization of the rising edge. We start with a decomposition of the risk into directional components.
\begin{lemma}[Decomposition of Mean Square Loss]\label{lemma_Decompositionofrisk}
  Let $\ell$ be $\ell_{squ}$ and let $\hat{\mathbf{w}}$ be the least squares solution, 
  that is, $\hat{\mathbf{w}} := \mathbf{\Sigma}^{-1} \boldsymbol{\mu}$. Suppose $\mathbf{\Sigma} = I$. 
   Then, the following holds:
  \begin{equation*}
  \begin{aligned}
    \inf_{\mathbf{w}, \alpha} \mathcal{R}(\mathbf{w}, \alpha) = 
    1 - \left\Vert \hat{\mathbf{w}} \right\Vert^{2};
    \quad 
    \mathcal{R}_t = 
    (\alpha_t - \rho_t)^{2} + (\rho_t^{\perp})^2 + 
    1 - \left\Vert \hat{\mathbf{w}} \right\Vert^{2}.
  \end{aligned}
  \end{equation*}
\end{lemma}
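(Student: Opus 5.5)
Write $\mathbf{u}_t := \mathbf{w}_t/\Vert \mathbf{w}_t\Vert$. Since $\mathbf{\Sigma}=\mathbf{I}$, we have $\Vert \mathbf{w}_t\Vert_{\mathbf{\Sigma}} = \Vert \mathbf{w}_t\Vert$, so the prediction vector is $\mathbf{z}_t := \alpha_t \tilde{\mathbf{X}}^T \mathbf{u}_t$. The plan is to expand the square loss $\frac{1}{2n}\Vert \mathbf{1}_n - \mathbf{z}_t\Vert^2$ as a quadratic in $\alpha_t$ and $\mathbf{u}_t$. The only facts needed are $\frac1n \tilde{\mathbf{X}}\tilde{\mathbf{X}}^T = \mathbf{\Sigma} = \mathbf{I}$, $\frac1n \tilde{\mathbf{X}}\mathbf{1}_n = \boldsymbol{\mu} = \hat{\mathbf{w}}$, and $\frac1n \mathbf{1}_n^T\mathbf{1}_n = 1$.

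Expanding gives three terms. The constant term is $\frac{1}{2n}\mathbf{1}_n^T\mathbf{1}_n = \frac12$. The cross term is $-\frac{1}{n}\alpha_t \mathbf{u}_t^T\tilde{\mathbf{X}}\mathbf{1}_n = -\alpha_t\rho_t$. The quadratic term is $\frac{1}{2n}\alpha_t^2\mathbf{u}_t^T\tilde{\mathbf{X}}\tilde{\mathbf{X}}^T\mathbf{u}_t = \frac12\alpha_t^2$. Hence
\[
\mathcal{R}_t = \tfrac12\bigl(1 - 2\alpha_t\rho_t + \alpha_t^2\bigr) = \tfrac12\bigl[(\alpha_t-\rho_t)^2 + (\Vert\hat{\mathbf{w}}\Vert^2 - \rho_t^2) + 1 - \Vert\hat{\mathbf{w}}\Vert^2\bigr].
\]
The Pythagorean identity $(\rho_t^\perp)^2 + \rho_t^2 = \Vert\hat{\mathbf{w}}\Vert^2$, stated just before the lemma, converts the middle bracket into $(\rho_t^\perp)^2$. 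This yields the claimed decomposition. Note that it carries an overall factor $\frac12$, which comes from the definition $\ell_{squ}(s) = (1-s)^2/2$.

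For the infimum, every term in the decomposition is nonnegative. Choosing $\mathbf{w}$ parallel to $\hat{\mathbf{w}}$ gives $\rho^\perp = 0$ and $\rho = \Vert\hat{\mathbf{w}}\Vert$, and then $\alpha = \rho$ makes the first term vanish as well. This requires $\hat{\mathbf{w}}\neq 0$; if $\hat{\mathbf{w}}=0$, take $\alpha = 0$ with any $\mathbf{w}$. We also need $\hat{\mathbf{w}} \in \mathcal{X}$, which holds because $\boldsymbol{\mu} = \frac1n\tilde{\mathbf{X}}\mathbf{1}_n \in \mathrm{span}(\mathbf{X})$. So the infimum equals the constant term, namely $1-\Vert\hat{\mathbf{w}}\Vert^2$ up to the same factor $\frac12$.

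The main obstacle is only bookkeeping of this normalization constant. As stated, the lemma omits the $\frac12$, so it is exact when the risk is read as twice the empirical mean of $\ell_{squ}$, and off by an overall factor of two otherwise. I would state the identity with the $\frac12$ made explicit, and remark that it rescales all three terms uniformly. It therefore affects neither the minimizer nor the interpretation that the excess risk equals scale mismatch plus directional error. The nonnegativity $\Vert\hat{\mathbf{w}}\Vert\le 1$ follows automatically from $\mathcal{R}\ge 0$.
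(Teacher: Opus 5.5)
Your argument is correct and follows the paper's route. Expand the square using $\frac1n\tilde{\mathbf{X}}\tilde{\mathbf{X}}^T=\mathbf{I}$ and $\boldsymbol{\mu}=\hat{\mathbf{w}}$ to get the quadratic $1-2\alpha_t\rho_t+\alpha_t^2$, complete the square, apply the Pythagorean identity, and attain the infimum with $\mathbf{w}\parallel\hat{\mathbf{w}}$ and $\alpha=\rho$.

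Your factor-of-$\frac12$ remark is also right. The paper's proof starts from $\mathcal{R}_t=\frac1n\Vert\cdot\Vert^2$, which silently drops the $\frac12$ in $\ell_{squ}$. That same $\frac12$ is what gives $\partial\mathcal{R}_t/\partial\alpha=\alpha_t-\rho_t$ in Lemma~\ref{lemma:Dynamics_of_BN_Linear_Regression}. So the displayed identities hold exactly only for $2\mathcal{R}$.
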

This lemma reduces the square-loss question to the directional growth of $\rho_t^{\perp}$, or equivalently of $\rho_t^{\perp}/\rho_t$. It also shows that the square loss is already close to its optimum when both $|\alpha_t-\rho_t|$ and $\rho_t^{\perp}$ are small. We therefore analyze a near-alignment regime, encoded below by $\rho_{t_0}^{\perp}/\rho_{t_0} \le 1/\sqrt{3}$ and $0 < \alpha_{t_0} < \rho_{t_0}$, and ask when that regime can still transition into a delayed rising edge.


\begin{theorem}[Delayed Onset of the Rising Edge]\label{Occurance_of_Spike_ls}
  Let $\ell = \ell_{squ}$ be the square loss and $\hat{\mathbf{w}} = \mathbf{\Sigma}^{-1} \boldsymbol{\mu}$. Suppose $\mathbf{\Sigma} = \mathbf{I}$. 
  Consider the gradient descent (\ref{gradient_updata}) 
  for $t > t_0$ with $\eta_{\alpha} \in (0, 1)$, where $t_0$ is such that 
  $\rho_{t_0}^{\perp} / \rho_{t_0} \le 1/\sqrt{3}$ and $0 < \alpha_{t_0} < \rho_{t_0}$. The following results hold:
  \begin{enumerate}
    \item \textbf{Condition of No Rising Edge.} 
    If $\frac{\eta}{\| \mathbf{w}_{t_0} \|^2} < \frac{2}{\| \hat{\mathbf{w}} \|^2}$, then no rising edge occurs for any $t \ge t_0$.
    \item \textbf{Condition of Delayed Onset.}
    Let $k_{t_0} := \alpha_{t_0}/\rho_{t_0}$ and define
    \begin{equation*}
    \begin{aligned}
      C_{t_0} & :=
      \min \left(
        \frac{1}{\alpha_{t_0} \rho_{t_0}},
        \frac{3}{16\Vert \hat{\mathbf{w}} \Vert^2}
        \frac{\eta_\alpha}{e^{2} \left( 1 - k_{t_0} \right)}
      \right), \\
      \Delta T_0 & :=
      \left\lfloor
        \frac{1}{\eta_\alpha}
        \ln
        \left(
          \frac{\eta_\alpha \left(1-k_{t_0}\right)\Vert \mathbf{w}_{t_0}\Vert^2}{
            4 \eta \Vert \hat{\mathbf{w}} \Vert^2 \left(\rho_{t_0}^{\perp}/\rho_{t_0}\right)^2
          }
        \right)
        + 1
      \right\rfloor.
    \end{aligned}
    \end{equation*}
    If $\eta$ satisfies $\frac{8 }{\Vert \hat{\mathbf{w}} \Vert^2} 
        < 
        \frac{\eta}{\Vert \mathbf{w}_{t_0} \Vert^{2}} 
        \le 
        C_{t_0}$, then a rising edge starts within at most $\Delta T_0$ iterations.  
    Formally speaking, there exists $t_1 \in (t_0, t_0 + \Delta T_0]$ such that 
    \begin{equation*}
    \begin{aligned}
      & \frac{\rho_{t+1}^{\perp}}{\rho_{t+1}} \le \frac{\rho_{t}^{\perp}}{\rho_{t}} \ \ \forall t \in [t_0, t_1)
      \ \ \text{and} \ \ 
      \frac{\rho_{t_1+1}^{\perp}}{\rho_{t_1+1}} \ge \frac{\rho_{t_1}^{\perp}}{\rho_{t_1}}.
    \end{aligned}
    \end{equation*}
  \end{enumerate} 
\end{theorem}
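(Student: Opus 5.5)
The plan is to reduce the gradient descent dynamics to a scalar recursion for the ratio $r_t:=\rho_t^{\perp}/\rho_t$, driven by an ``effective learning rate'' $s_t$.

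\textbf{Reduction.} With $\mathbf{\Sigma}=\mathbf{I}$ and $\ell'_{squ}(z)=-(1-z)$ we have $\frac1n\tilde{\mathbf X}\boldsymbol{\ell}'(\alpha\tilde{\mathbf X}^T\mathbf u)=-(\hat{\mathbf w}-\alpha\mathbf u)$, where $\mathbf u_t=\mathbf w_t/\|\mathbf w_t\|$. The projection $\mathbf I-\mathbf u\mathbf u^T$ annihilates $\alpha\mathbf u$, so
\[
\mathbf w_{t+1}=\mathbf w_t+\frac{\eta\alpha_t}{\|\mathbf w_t\|}\bigl(\hat{\mathbf w}-\rho_t\mathbf u_t\bigr),\qquad \alpha_{t+1}=\alpha_t+\eta_\alpha(\rho_t-\alpha_t).
\]
The increment is orthogonal to $\mathbf w_t$. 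Writing $\hat{\mathbf w}=\rho_t\mathbf u_t+\rho_t^\perp\mathbf e_t$ with $\mathbf e_t$ a unit vector orthogonal to $\mathbf u_t$, two facts follow.
\begin{itemize}
\item The norm update is $\|\mathbf w_{t+1}\|^2=\|\mathbf w_t\|^2+\eta^2\alpha_t^2(\rho_t^\perp)^2/\|\mathbf w_t\|^2$, so the norm is nondecreasing.
\item The new direction is rotated toward $\hat{\mathbf w}$ by an angle whose tangent is $s_tr_t$, where $s_t:=\eta\alpha_t\rho_t/\|\mathbf w_t\|^2$.
\end{itemize}
The tangent subtraction formula then gives the key recursion
\[
r_{t+1}=r_t\,\frac{|1-s_t|}{1+s_tr_t^2}.
\]
The absolute value covers overshoot past $\hat{\mathbf w}$; we take $\rho_{t+1}>0$ since $r\le 1/\sqrt3$, and this should be checked along the way. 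Hence a rising edge at $t$ is equivalent to $|1-s_t|\ge 1+s_tr_t^2$. For $s_t\ge 0$ this is $s_t(1-r_t^2)\ge 2$.

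\textbf{Invariants and Part 1.} By induction, as long as $r_t$ is nonincreasing, $\rho_t$ is nondecreasing, because $\rho^2=\|\hat{\mathbf w}\|^2/(1+r^2)$. Since $\eta_\alpha\in(0,1)$, $\alpha_{t+1}$ is a convex combination of $\alpha_t$ and $\rho_t$. Therefore $0<\alpha_{t+1}<\rho_t\le\rho_{t+1}\le\|\hat{\mathbf w}\|$, so $0<\alpha_t<\rho_t$ persists. Combined with the monotone norm, this gives
\[
s_t\le \frac{\eta\|\hat{\mathbf w}\|^2}{\|\mathbf w_{t_0}\|^2}<2
\]
under the hypothesis of Part 1. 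Then $|1-s_t|<1$, so $r_{t+1}<r_t$. The induction closes and no rising edge ever occurs.

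\textbf{Part 2.} I argue by contradiction: suppose $r_{t+1}\le r_t$ for all $t\in[t_0,t_0+\Delta T_0]$. Then all the invariants above hold on this window, and in particular $\rho_t^2\ge\frac34\|\hat{\mathbf w}\|^2$.

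\emph{Step (a), the scale catches up.} From $1-k_{t+1}\le(1-\eta_\alpha)(1-k_t)$ (using $\rho_{t+1}\ge\rho_t$) we get $1-k_t\le e^{-\eta_\alpha(t-t_0)}(1-k_{t_0})$.

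\emph{Step (b), the norm barely grows.} We have $\|\mathbf w_t\|^2-\|\mathbf w_{t_0}\|^2\le\sum\eta^2\alpha^2\rho^2r^2/\|\mathbf w\|^2$. Using $r_t\le r_{t_0}$, monotone norm, and the upper bound $\eta/\|\mathbf w_{t_0}\|^2\le C_{t_0}$, we bound this sum over $\Delta T_0$ steps. The branch $1/(\alpha_{t_0}\rho_{t_0})$ of $C_{t_0}$ controls $s$. The branch $\frac{3\eta_\alpha}{16e^2\|\hat{\mathbf w}\|^2(1-k_{t_0})}$ is tailored so that the accumulated growth times $\Delta T_0\lesssim \frac1{\eta_\alpha}\ln(\cdot)$ stays within a constant factor, say $\|\mathbf w_t\|^2\le e\|\mathbf w_{t_0}\|^2$ or similar.

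\emph{Step (c), contradiction.} At $t=t_0+\Delta T_0$, the choice of the logarithm in $\Delta T_0$ makes $1-k_t$ small compared to $4\eta\|\hat{\mathbf w}\|^2r_{t_0}^2/\|\mathbf w_{t_0}\|^2$. Hence $k_t$ is close to $1$ and
\[
s_t=\frac{\eta k_t\rho_t^2}{\|\mathbf w_t\|^2}\gtrsim\frac{8}{\|\hat{\mathbf w}\|^2}\cdot\frac34\|\hat{\mathbf w}\|^2\cdot(\text{constant})>3\ge\frac{2}{1-r_t^2},
\]
using the lower bound $8/\|\hat{\mathbf w}\|^2<\eta/\|\mathbf w_{t_0}\|^2$ and $r_t^2\le 1/3$. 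This contradicts the assumption that no rising edge occurred. Taking $t_1$ to be the first index where $r$ fails to decrease yields the stated $t_1\in(t_0,t_0+\Delta T_0]$.

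\textbf{Main obstacle.} The hardest step is the bookkeeping in (b)–(c): showing that the norm inflation, which lowers $s_t$, cannot outrun the exponential approach $k_t\to1$, which raises it. This is exactly where the constants $3/16$, $e^2$, and the $r_{t_0}^2$ inside the logarithm must come out. I would first try crude bounds, $\alpha\rho\le\|\hat{\mathbf w}\|^2$ and $r_t\le r_{t_0}$, and adjust if the constants do not match.
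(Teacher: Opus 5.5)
Your reduction is the paper's lemma on BN linear-regression dynamics, and your Part 1 is essentially the paper's argument. For Part 2 you follow the paper's route: contradiction, exponential catch-up of $\alpha_t$, at most linear growth of $\Vert\mathbf{w}_t\Vert^2$, and evaluation at $t_0+\Delta T_0$. There is, however, a genuine error in step (a). Since $\rho_{t+1}\ge\rho_t$ gives $\alpha_{t+1}/\rho_{t+1}\le\alpha_{t+1}/\rho_t$, what actually follows is $1-k_{t+1}\ge(1-\eta_\alpha)(1-k_t)$. The inequality is strict whenever $\rho$ strictly increases, i.e.\ at every step of a strict falling edge: a rising target widens the relative gap. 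The paper avoids this by freezing the target. From $\rho_t\ge\rho_{t_0}$ it gets $\rho_{t_0}-\alpha_{t+1}\le(1-\eta_\alpha)(\rho_{t_0}-\alpha_t)$, hence $\alpha_t\ge\rho_{t_0}\bigl(1-e^{-\eta_\alpha(t-t_0)}(1-k_{t_0})\bigr)$. It then uses $s_t\ge\eta\alpha_t\rho_{t_0}/\Vert\mathbf{w}_t\Vert^2$ in place of $\eta k_t\rho_t^2/\Vert\mathbf{w}_t\Vert^2$.

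Steps (b)--(c) also do not close as sketched. With your bounds, $\eta/\Vert\mathbf{w}_{t_0}\Vert^2>8/\Vert\hat{\mathbf{w}}\Vert^2$ and $\rho_{t_0}^2\ge\frac34\Vert\hat{\mathbf{w}}\Vert^2$ give only $s_t>6(\alpha_t/\rho_{t_0})\Vert\mathbf{w}_{t_0}\Vert^2/\Vert\mathbf{w}_t\Vert^2$. To beat $3$, the squared norm must therefore inflate by less than a factor $2$, so a factor $e$ is fatal. The paper instead keeps the growth additive. It rearranges the trigger inequality and spends the lower bound $8$ only on absorbing the threshold term into $\eta/2$. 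It then needs $\eta/2\ge f(t):=Ae^{-\eta_\alpha t}+Bt$, with $A=\eta(1-k_{t_0})$ and $B=4\eta^2\Vert\hat{\mathbf{w}}\Vert^2 r_{t_0}^2/\Vert\mathbf{w}_{t_0}\Vert^2$. Here $\Delta T_0=\lfloor t_{\min}+1\rfloor$, where $t_{\min}=\eta_\alpha^{-1}\ln(A\eta_\alpha/B)$ is the minimizer of $f$. The constant $3/(16e^2)$ comes from $f(\lfloor t_{\min}+1\rfloor)\le f(t_{\min})+B$, $\ln z\le\sqrt z$, $e^{1+\eta_\alpha}\le e^2$, and finally $r_{t_0}^{-2}\ge 3$.

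Be aware that this last step empties the hypothesis. The branch $1/(\alpha_{t_0}\rho_{t_0})$, combined with the lower bound $8/\Vert\hat{\mathbf{w}}\Vert^2$, forces $8k_{t_0}\rho_{t_0}^2<\Vert\hat{\mathbf{w}}\Vert^2$, hence $k_{t_0}<1/6$. The other branch exceeds $8/\Vert\hat{\mathbf{w}}\Vert^2$ only if $1-k_{t_0}<3\eta_\alpha/(128e^2)<0.004$. So Part 2 as stated holds vacuously, which neither your plan nor the paper's proof notices. Replacing the factor $3$ in the second branch of $C_{t_0}$ by $(\rho_{t_0}/\rho_{t_0}^{\perp})^2$ gives a nonvacuous statement. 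Your plan proves that version once (a) is repaired and (b)--(c) are carried out in this additive form.
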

Theorem \ref{Occurance_of_Spike_ls} has three immediate implications:
\begin{itemize}
    \item A delayed rising edge occurs when the effective learning rate $\eta/\|\mathbf{w}_{t_0}\|^2$ exceeds a threshold; otherwise the directional ratio continues to decrease.
    \item The waiting time scales as $-\log (\rho_{t_0}^{\perp} / \rho_{t_0})$, so delayed onset is more pronounced when the iterate is already better aligned with $\hat{\mathbf{w}}$. In the whitened square-loss model, Lemma \ref{lemma_Decompositionofrisk} shows that such near alignment is compatible with already-improved loss once $\alpha_{t_0}$ tracks $\rho_{t_0}$, even though the theorem itself is stated in geometric variables.
    \item The gap between the sufficient no-rising-edge threshold $2/\|\hat{\mathbf{w}}\|^2$ and the sufficient delayed-onset threshold $8/\|\hat{\mathbf{w}}\|^2$ is a proof-level gap rather than a separate claimed phase transition.
\end{itemize}
Given the delayed-onset condition in Theorem \ref{Occurance_of_Spike_ls}, we next characterize the subsequent rising edge for iterations $t \ge t_1$.
\begin{theorem}[Finite-Time Self-Stabilization of the Rising Edge]\label{Divergence_lr}
  Under the delayed-onset condition in Theorem \ref{Occurance_of_Spike_ls}, starting from the onset time $t_1$, the \textit{Rising Edge} lasts for at most
    \begin{equation*}
  \begin{aligned}
    \Delta T_1 = 
    \left\lceil
      \frac{1}{4}
      \frac{\Vert \hat{\mathbf{w}} \Vert^4 }{\alpha_{t_1}^2}
        \left(
          1 / (\rho_{t_1}^{\perp})^2 - 1 / \rho_{t_1}^2
        \right)^2
      \right\rceil
        +
        \left\lceil
      \frac{1}{4}
      \frac{\Vert \hat{\mathbf{w}} \Vert^2}{\rho_{t_1}^2}
      \left(
        \rho_{t_1} / \rho_{t_1}^{\perp} - \rho_{t_1}^{\perp} / \rho_{t_1}
      \right)^2
    \right\rceil
  \end{aligned}
  \end{equation*}
  iterations and turn to the \textit{Falling Edge}. 
  Specifically, there exists a $t_2 \in (t_1, t_1 + \Delta T_1]$ such that 
  \begin{equation*}
  \begin{aligned}
    \frac{\rho_{t+1}^{\perp}}{\rho_{t+1}} \ge \frac{\rho_{t}^{\perp}}{\rho_{t}} \ \ \forall t \in [t_1, t_2)
    \ \ \text{and} \ \ 
    \frac{\rho_{t_2+1}^{\perp}}{\rho_{t_2+1}} \le \frac{\rho_{t_2}^{\perp}}{\rho_{t_2}},
  \end{aligned}
  \end{equation*}
 Moreover, define a time $\phi \in [t_1, t_2]$ as the first iteration when $\alpha_t$ 
  catches up with $\rho_t$, \textit{i.e.}, the time such that 
  \( \alpha_t \le \rho_t \ \forall t \in [t_1, \phi] \) and \( \alpha_t \ge \rho_t \ \forall t \in (\phi, t_2) \). 
  Then, it holds that:
  \begin{equation*}
  \begin{aligned}
\forall t \in [t_1, \phi], \ \ 
    (\rho_{t}^{\perp}/\rho_{t})^2 
    \le 
    1 - 
    \frac{ 2 \rho_{t_1}^{\perp} \alpha_{t_1}}{\Vert \hat{\mathbf{w}} \Vert^2}
    \sqrt{t - t_1};
    \quad 
    \forall t \in (\phi, t_2], \ \ 
    (\rho_{t}^{\perp}/\rho_{t})^2 
    \le 
    1 - 
    \frac{ 2 \rho_{t_1}^{\perp}}{
      \Vert \hat{\mathbf{w}} \Vert
    }
    \sqrt{t - \phi}.
  \end{aligned}
  \end{equation*}
\end{theorem}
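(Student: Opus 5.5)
The plan is to reduce the whitened dynamics to a two-dimensional rotation governed by a single effective-step quantity, and then play two facts against each other: a rising edge needs a large effective step, while every rising step inflates $\Vert \mathbf{w}_t\Vert$ and so shrinks that step.

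\textbf{Explicit form of the updates.} With $\mathbf{\Sigma}=\mathbf{I}$, $\ell'_{squ}(z)=z-1$ and $\hat{\mathbf{w}}=\boldsymbol{\mu}$, write $\mathbf{u}_t=\mathbf{w}_t/\Vert\mathbf{w}_t\Vert$. I expect the gradient formulas of Section \ref{ProblemSetup} to collapse to
\begin{equation*}
\begin{aligned}
\mathbf{w}_{t+1}&=\mathbf{w}_t+\frac{\eta\alpha_t}{\Vert\mathbf{w}_t\Vert}\bigl(\hat{\mathbf{w}}-\rho_t\mathbf{u}_t\bigr),\\
\alpha_{t+1}&=\alpha_t+\eta_\alpha(\rho_t-\alpha_t).
\end{aligned}
\end{equation*}
The $\mathbf{w}$-increment is orthogonal to $\mathbf{w}_t$, has length $\eta\alpha_t\rho_t^\perp/\Vert\mathbf{w}_t\Vert$, and lies in $\mathrm{span}(\mathbf{w}_t,\hat{\mathbf{w}})$. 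Introduce the inverse effective learning rate $s_t:=\Vert\mathbf{w}_t\Vert^2/\eta$. Then two facts follow:
\begin{equation*}
\begin{aligned}
s_{t+1}&=s_t+\frac{\alpha_t^2(\rho_t^\perp)^2}{s_t},\qquad\text{hence}\qquad s_{t+1}^2\ge s_t^2+2\alpha_t^2(\rho_t^\perp)^2;\\
\theta_{t+1}&=\theta_t-\arctan\delta_t,\qquad \delta_t:=\frac{\alpha_t\rho_t^\perp}{s_t}.
\end{aligned}
\end{equation*}
Here $\theta_t$ is the signed angle from $\hat{\mathbf{w}}$ to $\mathbf{w}_t$ in that plane, so $\tan|\theta_t|=\rho_t^\perp/\rho_t$.

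\textbf{Necessary condition for a rising step.} The step $t\to t+1$ is rising, $|\theta_{t+1}|\ge|\theta_t|$, only if the rotation overshoots, i.e.\ $\arctan\delta_t\ge 2|\theta_t|$. This gives $\delta_t\ge\tan 2|\theta_t| = 2\tau_t/(1-\tau_t^2)$ with $\tau_t=\rho_t^\perp/\rho_t$, provided $\tau_t<1$ (otherwise no overshoot is possible). Substituting $\delta_t=\alpha_t\rho_t^\perp/s_t$ and $\tau_t=\rho_t^\perp/\rho_t$, the factor $\rho_t^\perp$ cancels and the condition becomes
\begin{equation*}
\tau_t^2\le 1-\frac{2s_t}{\alpha_t\rho_t}.
\end{equation*}

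\textbf{Sign behaviour of $\alpha_t-\rho_t$.} From the $\alpha$-recursion with $\eta_\alpha<1$, $\alpha_t$ moves monotonically toward $\rho_t$. On $[t_1,\phi]$ we have $\alpha_t\le\rho_t\le\Vert\hat{\mathbf{w}}\Vert$, and $\alpha_t$ is nondecreasing, so $\alpha_t\ge\alpha_{t_1}$. During a rising edge $\rho_t^\perp$ is nondecreasing (the angle grows), so $\rho_t^\perp\ge\rho_{t_1}^\perp$.

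\textbf{Both displayed bounds.} On $[t_1,\phi]$, the $s^2$-recursion gives $s_t\ge\sqrt{2(t-t_1)}\,\alpha_{t_1}\rho^\perp_{t_1}$. Using $\alpha_t\rho_t\le\Vert\hat{\mathbf{w}}\Vert^2$ in the rising-step condition gives the first bound. The constant may need checking: the exact recursion gives $\sqrt2$ rather than the stated factor, and I would track this carefully or use $s_t\ge\alpha_{t_1}\rho^\perp_{t_1}\sqrt{t-t_1}$ together with a factor-2 slack. On $(\phi,t_2]$ we have $\alpha_t\ge\rho_t$, and I restart the $s^2$-accumulation from $\phi$. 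The aim is to lower-bound the increments $\alpha_t^2(\rho_t^\perp)^2$ by a quantity comparable to $\alpha_t\rho_t(\rho^\perp_{t_1})^2/\Vert\hat{\mathbf{w}}\Vert^2$, so that the ratio $s_t/(\alpha_t\rho_t)$ is controlled without any $\alpha_{t_1}$ factor. This yields the second bound with constant $2\rho^\perp_{t_1}/\Vert\hat{\mathbf{w}}\Vert$.

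\textbf{Bounding $\Delta T_1$.} Throughout the rising edge $\tau_t^2\ge\tau_{t_1}^2$, while the right-hand sides above decrease like $1-c\sqrt{t-\cdot}$. So the rising edge must end once $1-c\sqrt{\Delta}<\tau_{t_1}^2$, i.e.\ after $\Delta=(1-\tau_{t_1}^2)^2/c^2$ steps.
\begin{itemize}
\item Phase $[t_1,\phi]$: with $c=2\alpha_{t_1}\rho^\perp_{t_1}/\Vert\hat{\mathbf{w}}\Vert^2$ and $1-\tau^2=\rho_{t_1}^2\bigl(1/\rho_{t_1}^2-1/(\rho^\perp_{t_1})^2\bigr)\cdot(\rho^\perp_{t_1}/\rho_{t_1})^2\cdots$, this should reproduce the first ceiling term after rewriting with $\rho^2+(\rho^\perp)^2=\Vert\hat{\mathbf{w}}\Vert^2$.
\item Phase $(\phi,t_2]$: with $c=2\rho^\perp_{t_1}/\Vert\hat{\mathbf{w}}\Vert$, the same calculation should reproduce the second ceiling term.
\end{itemize}
Summing the two phase lengths gives $\Delta T_1$.

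\textbf{Main obstacle.} I expect the hardest step to be the post-$\phi$ phase. There $\rho_t$ is shrinking and $\alpha_t$ may exceed it, so the clean bound $\alpha_t\rho_t\le\Vert\hat{\mathbf{w}}\Vert^2$ together with $\alpha_t\ge\alpha_{t_1}$ is no longer the right pairing. One must instead exploit $\alpha_t\ge\rho_t$ to trade the unknown $\alpha_t$ against $\rho_t$ inside $\delta_t$. I would first try bounding $\delta_t\le\alpha_t\rho_t^\perp/s_t$ with $\alpha_t\le\max(\alpha_\phi,\rho_\phi)\le\Vert\hat{\mathbf{w}}\Vert$, so that only the $\Vert\hat{\mathbf{w}}\Vert$ factor survives, matching the $\alpha$-free constant in the second inequality.
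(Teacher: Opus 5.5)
Your overall route is the paper's. It uses the rising-step condition $\tau_t^2\le 1-2s_t/(\alpha_t\rho_t)$, i.e.\ $(\rho_t^\perp/\rho_t)^2\le 1-2/\hat{\eta}_t$; your overshoot-angle derivation is an equivalent way of reading this off Lemma~\ref{lemma:Dynamics_of_BN_Linear_Regression}. It then lower-bounds the norm growth by accumulating the increments $\alpha_\tau^2(\rho_\tau^\perp)^2$, splits at $\phi$, and stops the rising edge once the right-hand side falls below $\tau_{t_1}^2$. Your squared recursion $s_{t+1}^2\ge s_t^2+2\alpha_t^2(\rho_t^\perp)^2$ is cleaner than the paper's step of bounding $1/\Vert\mathbf{w}_\tau\Vert^2\ge 1/\Vert\mathbf{w}_t\Vert^2$ and solving a quadratic. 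Your worry about the constant is backwards: the extra $\sqrt{2}$ makes the subtracted term larger, so it directly implies the stated first bound with no slack. Your contradiction argument also gives finiteness of the rising edge directly, without needing Lemma~\ref{Existence_of_Falling_Edge_lr}.

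Two steps need repair.

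\emph{The post-$\phi$ phase.} Your target increment $\alpha_t\rho_t(\rho^\perp_{t_1})^2/\Vert\hat{\mathbf{w}}\Vert^2$ has the wrong homogeneity and fails as written. When $\alpha_\tau\approx\rho_\tau\approx\alpha_t\approx\rho_t$ it demands $\alpha^2\gtrsim\alpha^2/\Vert\hat{\mathbf{w}}\Vert^2$, which is false whenever $\Vert\hat{\mathbf{w}}\Vert<1$. The missing ingredient is that $\rho$ is nonincreasing along the rising edge, which lets you compare past $\alpha_\tau$ with the current $\rho_t$. For $\phi<\tau<t$ you have $\alpha_\tau\ge\rho_\tau\ge\rho_t$, so restarting the accumulation at $\phi$ gives $s_t^2\gtrsim(t-\phi)\rho_t^2(\rho^\perp_{t_1})^2$. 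Pairing this with $\alpha_t\le\Vert\hat{\mathbf{w}}\Vert$ in the numerator, $\rho_t$ cancels in $s_t/(\alpha_t\rho_t)\ge\rho^\perp_{t_1}\sqrt{t-\phi}/\Vert\hat{\mathbf{w}}\Vert$. This is exactly the paper's step. Equivalently, use $\alpha_\tau^2\ge\alpha_\tau\rho_\tau\ge\alpha_t\rho_t$, since both sequences are nonincreasing after $\phi$, and divide by $\alpha_t\rho_t\le\Vert\hat{\mathbf{w}}\Vert^2$ only at the end.

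\emph{The first ceiling term.} Your phase-one computation does not reproduce it. Solving $1-c\sqrt{\Delta}\ge\tau_{t_1}^2$ with $c=2\alpha_{t_1}\rho^\perp_{t_1}/\Vert\hat{\mathbf{w}}\Vert^2$ gives
\[
\frac{\Vert\hat{\mathbf{w}}\Vert^4}{4\alpha_{t_1}^2}\cdot\frac{(1-\tau_{t_1}^2)^2}{(\rho^\perp_{t_1})^2},
\qquad\text{whereas the stated term is}\qquad
\frac{\Vert\hat{\mathbf{w}}\Vert^4}{4\alpha_{t_1}^2}\cdot\frac{(1-\tau_{t_1}^2)^2}{(\rho^\perp_{t_1})^4}.
\]
No rewriting with $\rho^2+(\rho^\perp)^2=\Vert\hat{\mathbf{w}}\Vert^2$ turns one into the other. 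The stated term dominates yours only because $\rho^\perp_{t_1}\le\Vert\hat{\mathbf{w}}\Vert\le 1$. The bound $\Vert\hat{\mathbf{w}}\Vert\le 1$ follows from Lemma~\ref{lemma_Decompositionofrisk} and nonnegativity of the square loss, since $1-\Vert\hat{\mathbf{w}}\Vert^2=\inf\mathcal{R}\ge 0$. The paper's proof uses this tacitly. The second ceiling term does match your computation exactly.
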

Theorem \ref{Divergence_lr} shows that the rising phase terminates after finitely many iterations and then returns to a falling phase, so the growth of $\rho_t^{\perp}$ remains bounded. In particular, the mechanism self-stabilizes within at most $\Delta T_1$ iterations and the peak value of $\rho_t^\perp/\rho_t$ never exceeds $1$. As discussed further in Section \ref{Proof_Overview_lr}, this happens because directional divergence activates a BN-induced negative-feedback adjustment that rapidly decreases the effective learning rate during the \textit{Rising Edge}; see Figure \ref{fig1}.

\subsection{A Supporting Logistic-Regression Analysis}\label{Logistic_Regression_Model}
We now turn to a narrower logistic-regression analysis. This subsection records one stylized finite-horizon directional precursor under stronger assumptions. The reader can refer to Appendix \ref{Detailed_Proof_for_Logistic_Regression} for the corresponding proofs. Following \citet{implicit_bias_BN}, we consider the overparameterized setting, where the number of features exceeds the number of training examples.
\begin{assumption}[Overparameterization]\label{Overparameterization}
  Assume $n < d$ and $\text{rank}\{\mathbf{X}\} = n$. 
\end{assumption}
\begin{assumption}[Logistic setup]\label{Logistic_Setup}
  The dataset is linearly separable, so the hard-margin SVM problem in Definition \ref{margin} is feasible. In addition, the initialization satisfies $\mathbf{w}_0 \in \text{span}(\mathbf{X})$.
\end{assumption}
Under Assumption \ref{Logistic_Setup}, each gradient update remains in $\text{span}(\mathbf{X})$, so $\mathbf{w}_t \in \text{span}(\mathbf{X})$ for all $t \ge 0$.
We now introduce the Support Vector Machine (SVM) solution, which serves as the 
reference convergence direction of $\mathbf{w}_t$ in the gradient descent dynamics. 
\begin{definition}[Support Vector Machine solution]\label{margin}
  Let $\hat{\mathbf{w}}$ be the SVM solution and define margin as $\gamma := 1 / \Vert \hat{\mathbf{w}} \Vert$:
  \begin{equation*}
  \begin{aligned}
    \hat{\mathbf{w}} := \arg \min_{\mathbf{w} \in \mathbb{R}^{d}} \|\mathbf{w}\|_2, \ 
    \text{s.t.} \quad y_i \cdot \langle \mathbf{x}_i, \mathbf{w} \rangle \geq 1, \  i \in [n].
  \end{aligned}
  \end{equation*}
\end{definition}
\begin{assumption}[Active-margin data]\label{Active_margin_data}
  The SVM solution in Definition \ref{margin} satisfies
  \[
    y_i \cdot \langle \mathbf{x}_i, \hat{\mathbf{w}} \rangle = 1,
    \qquad \forall i \in [n].
  \]
\end{assumption}
This is a highly restrictive special-case assumption: every training sample lies exactly on the max-margin boundary, so all examples are support vectors. We use it only to obtain a clean finite-horizon directional argument in the logistic analysis.
Under Assumption \ref{Logistic_Setup}, $\mathbf{w}_t \in \text{span}(\mathbf{X})$ for all $t$, so we also use the extremal eigenvalues of $\mathbf{\Sigma}$ restricted to
\[
  \mathcal{X} := \text{span}(\mathbf{X}) \backslash \{0\},
\]
namely
\begin{equation*}
\begin{aligned}
  \lambda_{\max}
  &:=
  \sup_{\mathbf{w} \in \mathcal{X}}
  \frac{\Vert \mathbf{w} \Vert_{\mathbf{\Sigma}}^2}{\Vert \mathbf{w} \Vert^2},
  \\
  \lambda_{\min}
  &:=
  \inf_{\mathbf{w} \in \mathcal{X}}
  \frac{\Vert \mathbf{w} \Vert_{\mathbf{\Sigma}}^2}{\Vert \mathbf{w} \Vert^2}.
\end{aligned}
\end{equation*}
The next lemma upper bounds the risk in terms of the parameter direction and scale.
\begin{lemma}[Upper Bound of Logistic Loss]\label{risk_upperbound_logit_reg}
  Let $\ell$ be $\ell_{log}$ and $\hat{\mathbf{w}}$ be the SVM solution as presented in Definition \ref{margin}. 
  Suppose Assumptions \ref{Overparameterization} and \ref{Logistic_Setup} hold.
  Consider the gradient descent (\ref{gradient_updata}), 
  for all $t \ge 0$, if $\rho_t > 0$ and $\alpha_t > 0$, it holds that 
  \begin{equation*}
  \begin{aligned}
    \mathcal{R}_t \le 
    \ell \left(\alpha_t\right) +
    \alpha_t \left| \ell^{\prime} \left(
        \left[ 1 -  C_0 \gamma \cdot \rho_t^{\perp} \right] \alpha_t
      \right)
    \right|
    \cdot C_0 \gamma \cdot \rho_t^{\perp},
  \end{aligned}
  \end{equation*}
  where $C_0$ is a data-dependent constant. 
\end{lemma}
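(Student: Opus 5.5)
We bound each logit from below and then use convexity of the logistic loss. Write $\bar{\mathbf{w}}_t := \mathbf{w}_t/\Vert \mathbf{w}_t\Vert$. The orthogonal decomposition from the definition of $\rho_t^\perp$ gives
\[
  \hat{\mathbf{w}} = \rho_t \bar{\mathbf{w}}_t + \mathbf{r}_t,
  \qquad \Vert \mathbf{r}_t\Vert = \rho_t^\perp .
\]
For each $i$ this yields
\[
  y_i\langle \mathbf{x}_i,\bar{\mathbf{w}}_t\rangle
  = \bigl(y_i\langle \mathbf{x}_i,\hat{\mathbf{w}}\rangle - y_i\langle \mathbf{x}_i,\mathbf{r}_t\rangle\bigr)/\rho_t .
\]
The margin constraints of Definition \ref{margin} give $y_i\langle \mathbf{x}_i,\hat{\mathbf{w}}\rangle \ge 1$, and Cauchy--Schwarz gives $|y_i\langle \mathbf{x}_i,\mathbf{r}_t\rangle| \le \max_i\Vert \mathbf{x}_i\Vert\,\rho_t^\perp$. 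The logit of sample $i$ is $\alpha_t\, y_i\langle \mathbf{x}_i,\bar{\mathbf{w}}_t\rangle \,\Vert \mathbf{w}_t\Vert/\Vert\mathbf{w}_t\Vert_{\mathbf{\Sigma}}$. We therefore also need to control the normalization factor $\Vert \mathbf{w}_t\Vert_{\mathbf{\Sigma}}/\Vert \mathbf{w}_t\Vert$. Here $\mathbf{\Sigma}$ matters, and Assumption \ref{Overparameterization} makes $\lambda_{\min}>0$ on $\mathcal{X}$.

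The key step is to compare the normalization $\Vert \bar{\mathbf{w}}_t\Vert_{\mathbf{\Sigma}}$ with $\Vert \hat{\mathbf{w}}\Vert_{\mathbf{\Sigma}}/\rho_t$. Since $\rho_t\bar{\mathbf{w}}_t = \hat{\mathbf{w}} - \mathbf{r}_t$, the triangle inequality gives
\[
  \rho_t \Vert\bar{\mathbf{w}}_t\Vert_{\mathbf{\Sigma}}
  \le \Vert\hat{\mathbf{w}}\Vert_{\mathbf{\Sigma}} + \sqrt{\lambda_{\max}}\,\rho_t^\perp .
\]
The identity $\Vert\hat{\mathbf{w}}\Vert_{\mathbf{\Sigma}}^2 = \frac1n\sum_i \langle \tilde{\mathbf{x}}_i,\hat{\mathbf{w}}\rangle^2 \ge 1$ provides the reference scale. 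A normalized margin of order $1$ needs the numerator to be comparable to $\Vert\hat{\mathbf{w}}\Vert_{\mathbf{\Sigma}}$, not merely at least $1$. This is the main obstacle. Without active margins, the lower bound $y_i\langle\mathbf{x}_i,\hat{\mathbf{w}}\rangle\ge 1$ is weaker than $\Vert\hat{\mathbf{w}}\Vert_{\mathbf{\Sigma}}$. I would first try to absorb the ratio $\Vert\hat{\mathbf{w}}\Vert_{\mathbf{\Sigma}}/\min_i y_i\langle\mathbf{x}_i,\hat{\mathbf{w}}\rangle$ into the effective scale. Alternatively, I would use that only $\mathcal{R}_t$ matters and rescale the data so that $\Vert \hat{\mathbf{w}}\Vert_{\mathbf{\Sigma}}=1$ in the leading term. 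That choice leaves $\ell(\alpha_t)$ as the baseline. Every $\rho_t^\perp$-dependent error is then collected into a single constant $C_0$ depending on $\max_i\Vert\mathbf{x}_i\Vert$, $\lambda_{\max}$, $\lambda_{\min}$ and $\Vert\hat{\mathbf{w}}\Vert$. The factor $\gamma = 1/\Vert\hat{\mathbf{w}}\Vert$ is pulled out explicitly, since $\rho_t^\perp\le\Vert\hat{\mathbf{w}}\Vert$ makes $\gamma\rho_t^\perp$ the natural dimensionless quantity. The resulting per-sample bound is
\[
  z_i := \alpha_t\,\frac{y_i\langle\mathbf{x}_i,\mathbf{w}_t\rangle}{\Vert\mathbf{w}_t\Vert_{\mathbf{\Sigma}}}
  \ge \bigl[1 - C_0\gamma\rho_t^\perp\bigr]\alpha_t ,
\]
using $\rho_t>0$ and $\alpha_t>0$. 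If the bracket is negative, the claimed bound is trivial after enlarging $C_0$, or else it still holds by monotonicity.

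To finish, we pass from the logit bound to the risk bound. Because $\ell$ is decreasing, $\ell(z_i)\le \ell\bigl([1-C_0\gamma\rho_t^\perp]\alpha_t\bigr)$. Convexity of $\ell$ along the segment from $\alpha_t$ down to $[1-C_0\gamma\rho_t^\perp]\alpha_t$ gives
\[
  \ell(a-\delta) \le \ell(a) + \delta\,|\ell'(a-\delta)| ,
\]
since $|\ell'|$ is decreasing. Taking $a=\alpha_t$ and $\delta = C_0\gamma\rho_t^\perp\alpha_t$ and averaging over $i$ yields the stated bound.
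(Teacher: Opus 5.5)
There is a genuine gap, and it is exactly the step you flagged as the main obstacle. The per-sample bound $z_i \ge [1 - C_0\gamma\rho_t^{\perp}]\alpha_t$ is never derived, and neither of your workarounds can produce it.

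Write $m_i := y_i\langle \mathbf{x}_i, \hat{\mathbf{w}}\rangle \ge 1$, so that $\Vert \hat{\mathbf{w}}\Vert_{\mathbf{\Sigma}}$ is the root-mean-square of the SVM margins $m_i$. At $\rho_t^{\perp}=0$ your per-sample bound says $m_i \ge \Vert\hat{\mathbf{w}}\Vert_{\mathbf{\Sigma}}$ for every $i$. This fails at any support vector unless all $m_i = 1$.

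Your two proposed fixes do not help:
\begin{itemize}
\item Rescaling is useless, because the $m_i$ and the risk $\mathcal{R}$ are both invariant under $\mathbf{x}_i \mapsto c\,\mathbf{x}_i$.
\item Absorbing the ratio into the scale changes the baseline from $\ell(\alpha_t)$ to $\ell(\alpha_t/\Vert\hat{\mathbf{w}}\Vert_{\mathbf{\Sigma}})$, which is a different statement.
\end{itemize}

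In fact no constant $C_0$ works under Assumptions \ref{Overparameterization} and \ref{Logistic_Setup} alone. Take $\mathbf{w}_t$ parallel to $\hat{\mathbf{w}}$, e.g.\ $\mathbf{w}_0 = \hat{\mathbf{w}}$, which lies in $\text{span}(\mathbf{X})$. Then $\rho_t^{\perp} = 0$ and the claim reads $\mathcal{R}_t \le \ell(\alpha_t)$. The logits are $\alpha_t m_i/\Vert\hat{\mathbf{w}}\Vert_{\mathbf{\Sigma}}$, so Jensen and monotonicity give $\mathcal{R}_t \ge \ell(\alpha_t \bar{m}/\Vert\hat{\mathbf{w}}\Vert_{\mathbf{\Sigma}}) \ge \ell(\alpha_t)$ with $\bar{m} := \frac{1}{n}\sum_i m_i$. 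The inequality is strict unless all $m_i$ coincide, i.e.\ unless $m_i = 1$ for all $i$.

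A concrete counterexample: take $d=3$, $y_1=y_2=1$, $\mathbf{x}_1 = \mathbf{e}_1$, $\mathbf{x}_2 = 3\mathbf{e}_1 + 0.1\,\mathbf{e}_2$. Then $\hat{\mathbf{w}} = \mathbf{e}_1$ and $m = (1,3)$. At $\mathbf{w}_0 = \hat{\mathbf{w}}$, $\alpha_0 = 1$, one finds $\mathcal{R}_0 \approx 0.363 > 0.313 \approx \ell(1)$.

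The missing ingredient is Assumption \ref{Active_margin_data}. Under it, $m_i = 1$ for all $i$ and $\Vert\hat{\mathbf{w}}\Vert_{\mathbf{\Sigma}} = 1$, and your two triangle-inequality estimates finish the job. Set $a := \max_i \Vert\mathbf{x}_i\Vert$ and $b := \sqrt{\lambda_{\max}}$.
\begin{itemize}
\item When $a\rho_t^{\perp} \le 1$, you get $z_i \ge \alpha_t(1 - a\rho_t^{\perp})/(1 + b\rho_t^{\perp}) \ge \alpha_t(1 - (a+b)\rho_t^{\perp})$, so $C_0 = (a+b)\Vert\hat{\mathbf{w}}\Vert$ suffices.
\item When $a\rho_t^{\perp} > 1$, use the crude bound $z_i \ge -\alpha_t a/\sqrt{\lambda_{\min}}$ together with $\gamma\rho_t^{\perp} > \gamma/a$, and enlarge $C_0$.
\end{itemize}
Your tangent-line finish is then correct as written.

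The paper's proof depends on the same assumption, but silently. It goes through Lemma \ref{logit_bound}, whose proof uses $\Vert\hat{\mathbf{w}}\Vert_{\mathbf{\Sigma}}/\Vert\hat{\mathbf{w}}\Vert = \gamma$ and $y_i\langle\mathbf{x}_i,\hat{\mathbf{w}}\rangle = 1$, i.e.\ active margins. Neither lemma lists Assumption \ref{Active_margin_data}. From there the paper proves a two-sided estimate $|z_i - \alpha_t| \le \alpha_t C_0\gamma\rho_t^{\perp}$ and applies the mean value theorem.

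Once the assumption is added, your route is somewhat cleaner: a one-sided lower bound plus convexity is all that is needed. Your constant $\max_i\Vert\mathbf{x}_i\Vert$ is also the correct one; the paper's step $\Vert\mathbf{x}_i\Vert \le \sqrt{\lambda_{\max}}$ holds only up to a factor $\sqrt{n}$.
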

Lemma \ref{risk_upperbound_logit_reg} shows that, on the positive-alignment branch, better directional alignment translates into a smaller logistic loss. A converse bridge is available in the appendix under an additional non-degeneracy condition, but the main theorem below remains purely directional. We therefore state it here in qualitative form and defer the full constant package to Appendix \ref{Detailed_Proof_for_Logistic_Regression}.
\begin{theorem}[Finite-Horizon Directional Precursor for BN Logistic Regression (qualitative form)]\label{logis_spike}
  Let $\ell$ be $\ell_{log}$ and $\hat{\mathbf{w}}$ be the SVM solution as defined in Definition \ref{margin}. 
  Suppose Assumptions \ref{Overparameterization}, \ref{Logistic_Setup}, and \ref{Active_margin_data} hold and $\lambda_{\max} > 1$. There exist explicit constants
  \[
    C_{\mathrm{low}}, \quad C_{\mathrm{high}}, \quad C_{\alpha}, \quad T_0, \quad \Theta_{\downarrow}, \quad \tan_{\min}, \quad \Theta_{\uparrow},
  \]
  depending only on $\alpha_0$, $\gamma$, $\lambda_{\min}$, $\lambda_{\max}$, and the initialization, such that if
  \[
    0 < \alpha_0 \le \frac{1}{3} \log \left( \lambda_{\max} \right), \qquad
    C_{\mathrm{low}} \gamma \le \frac{\eta}{\Vert \mathbf{w}_0 \Vert^2} \le C_{\mathrm{high}} \gamma^{-1}, \qquad
    \eta_{\alpha} \le C_{\alpha} \frac{\Vert \mathbf{w}_0 \Vert^2}{\eta \gamma},
  \]
  then the following hold for $t \in [0, T_0)$:
  \begin{enumerate}
    \item \textbf{Monotonic decrease before the small-ratio regime.}
    As long as $\rho_t > 0$ and $\left( {\rho_t^{\perp}}/{\rho_t} \right)^2 \ge \Theta_{\downarrow}$, the quantity $\rho_t^{\perp}$ keeps decreasing.
    \item \textbf{Entry into a small-ratio regime.}
    There exists a $t_0 < T_0$ such that
    $\left(\rho_{t_0}^{\perp}/\rho_{t_0}\right)^2 \le \tan_{\min}^2 = {\gamma^2 \lambda_{\min}}/{(8 \lambda_{\max}^2)}$.
    \item \textbf{Conditional exit threshold on a later positive-alignment rising branch.}
    If, after item (2), the directional dynamics enters a \textit{Rising Edge} segment on the positive-alignment branch $\rho_t > 0$, then every iterate on that segment satisfying
    $\left(\rho_t^{\perp} / \rho_t\right)^2 \ge \Theta_{\uparrow}$
    already meets the convergence condition of Lemma \ref{Direction_Convegence_and_Divergence}, so the next iterate leaves the \textit{Rising Edge}.
  \end{enumerate}
  The full appendix restatement gives the exact formulas for $C_{\mathrm{low}}$, $C_{\mathrm{high}}$, $C_{\alpha}$, $T_0$, $\Theta_{\downarrow}$, and $\Theta_{\uparrow}$.
\end{theorem}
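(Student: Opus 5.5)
The plan is to run every part of the argument through the scale-invariant directional lemma (Lemma \ref{Direction_Convegence_and_Divergence}), which is referenced in item (3) but whose statement appears later. For the batch-normalized parameterization, the gradient $\nabla_{\mathbf{w}}\mathcal{R}_t$ is $\mathbf{\Sigma}$-orthogonal to $\mathbf{w}_t$, so $\|\mathbf{w}_{t+1}\|^2_{\mathbf{\Sigma}}=\|\mathbf{w}_t\|^2_{\mathbf{\Sigma}}+\eta^2\|\nabla_{\mathbf{w}}\mathcal{R}_t\|^2_{\mathbf{\Sigma}}$. Hence the norm grows monotonically, and the effective step $\eta/\|\mathbf{w}_t\|^2$ is controlled by its initial value. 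I expect the directional lemma to give a one-step criterion of the form
\[
\frac{\rho_{t+1}^\perp}{\rho_{t+1}}\le\frac{\rho_t^\perp}{\rho_t}
\quad\text{iff}\quad
\frac{\eta}{\|\mathbf{w}_t\|^2}\cdot(\text{curvature along }\mathbf{g}_t)\lesssim 2\cdot(\text{alignment of }\mathbf{g}_t\text{ with }\hat{\mathbf{w}}),
\]
where $\mathbf{g}_t$ is the projected gradient. The plan is to instantiate this criterion for the logistic gradient $\tilde{\mathbf{X}}\boldsymbol{\ell}'(\alpha_t\tilde{\mathbf{X}}^T\mathbf{w}_t/\|\mathbf{w}_t\|_{\mathbf{\Sigma}})$.

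\textbf{Step 1 (control of $\alpha_t$ and the horizon $T_0$).} Since $|\partial\mathcal{R}/\partial\alpha|\le \max_i |\langle \mathbf{x}_i,\mathbf{w}\rangle|/\|\mathbf{w}\|_{\mathbf{\Sigma}}\le\sqrt{n}$ (up to data constants), $\alpha_t$ moves by at most $\eta_\alpha\sqrt{n}$ per step. Define $T_0$ as the horizon on which $\alpha_t$ stays in $(0,\log\lambda_{\max}]$, given the start $\alpha_0\le\tfrac13\log\lambda_{\max}$. On this window the logistic derivatives $|\ell'(\cdot)|$ of all margins lie in an interval $[c_1,c_2]$ bounded away from $0$. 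The condition $\eta_\alpha\le C_\alpha\|\mathbf{w}_0\|^2/(\eta\gamma)$ is chosen so that $T_0$ exceeds the contraction time computed in Step 2.

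\textbf{Step 2 (items 1 and 2).} Under Assumption \ref{Active_margin_data} we have $\tilde{\mathbf{X}}^T\hat{\mathbf{w}}=\mathbf{1}_n$. The inner product $\langle \mathbf{g}_t,\hat{\mathbf{w}}\rangle$ is therefore a weighted sum of the $|\ell'_i|$, all of comparable size, giving the lower bound $\langle -\mathbf{g}_t,\hat{\mathbf{w}}\rangle\gtrsim c_1\sqrt{n}\,\gamma\,\rho_t^\perp$-type alignment. The norm $\|\mathbf{g}_t\|$ is bounded through $\lambda_{\max}$ and $\lambda_{\min}$. With the effective step in $[C_{\mathrm{low}}\gamma,\,C_{\mathrm{high}}\gamma^{-1}]$, the convergence side of the directional lemma holds whenever $(\rho_t^\perp/\rho_t)^2\ge\Theta_\downarrow$; this gives item 1. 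For item 2, I will show that the decrease is quantitative: $(\rho_{t+1}^\perp)^2\le(\rho_t^\perp)^2-c\,\eta\|\mathbf{w}_t\|^{-2}(\rho_t^\perp)^2$. The norm growth is at most geometric with a controlled rate, so the ratio falls below $\tan_{\min}^2=\gamma^2\lambda_{\min}/(8\lambda_{\max}^2)$ within finitely many steps, and these steps fit inside $T_0$. I must also check that $\Theta_\downarrow\le\tan_{\min}^2$, or else bridge the gap using monotonicity.

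\textbf{Step 3 (item 3).} On a rising segment with $\rho_t>0$, a rise requires the divergence side of the directional lemma, i.e.\ an overshoot of the tangential component. If $(\rho_t^\perp/\rho_t)^2\ge\Theta_\uparrow$, the alignment term is large relative to the curvature term. The curvature term is bounded using $\|\mathbf{w}_t\|\ge\|\mathbf{w}_0\|$ and the upper effective-step bound, so the convergence inequality of Lemma \ref{Direction_Convegence_and_Divergence} holds at that iterate and the next ratio decreases. This is the pointwise check behind the item, and $\Theta_\uparrow$ is defined as the solution of the resulting inequality.

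The main obstacle is Step 2. There I must show that the alignment with $\hat{\mathbf{w}}$ dominates the curvature uniformly over the whole window, even though the weights $|\ell'_i|$ vary with $\alpha_t$ and with the margins of the current direction. The hypotheses $\lambda_{\max}>1$ and $\alpha_0\le\tfrac13\log\lambda_{\max}$ seem to be designed to keep these weights comparable. My first attempt will bound the ratio $c_2/c_1\le \lambda_{\max}$ and absorb it into $C_{\mathrm{low}}$ and $C_{\mathrm{high}}$.
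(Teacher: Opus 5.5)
Your plan breaks at item (2), and the alignment estimate in Step 2 is also incorrect.

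\textbf{Item (2).} Your plan rests on a per-step contraction $(\rho_{t+1}^{\perp})^2\le(1-c\,\eta/\Vert\mathbf{w}_t\Vert^2)(\rho_t^{\perp})^2$, and the hypotheses do not supply one.

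Lemma \ref{Direction_Convegence_and_Divergence} yields a decrease only when $\frac{\eta\rho_t}{\Vert\mathbf{w}_t\Vert}\Vert\nabla_{\mathbf{w}}\mathcal{R}_t\Vert^2\le-2\langle\hat{\mathbf{w}},\nabla_{\mathbf{w}}\mathcal{R}_t\rangle$. Near alignment both sides are of order $(\rho_t^\perp)^2$, and the available bounds control their ratio only by $\eta\rho_t/\Vert\mathbf{w}_t\Vert^2$ times data constants. A guaranteed decrease therefore needs a \emph{small} effective step, roughly $\eta/\Vert\mathbf{w}_t\Vert^2\lesssim\gamma$. The theorem instead imposes $\eta/\Vert\mathbf{w}_0\Vert^2\ge C_{\mathrm{low}}\gamma$.

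In that regime the only monotonicity you can get is item (1), which holds above $\Theta_{\downarrow}=\tfrac12\bigl(\Phi^2+\Phi\sqrt{\Phi^2+4}\bigr)\ge\Phi$. With the paper's constants, $\Phi\ge 1024(\lambda_{\max}/\lambda_{\min})^3\gamma^2$. This is far above $\tan_{\min}^2=\gamma^2\lambda_{\min}/(8\lambda_{\max}^2)$. So $\rho_t^\perp$ may oscillate throughout the band in between, and you cannot ``bridge the gap by monotonicity.''

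Even granting contraction factors, a geometric upper bound $\Vert\mathbf{w}_t\Vert^2\le r^t\Vert\mathbf{w}_0\Vert^2$ only guarantees factors $1-c\,\eta r^{-t}/\Vert\mathbf{w}_0\Vert^2$. Their product stays bounded away from $0$, so the ratio is never forced below a prescribed threshold.

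The missing idea is the averaging argument of Lemma \ref{First_Phase}, which needs no monotonicity:
\begin{itemize}
\item The potential $\langle\mathbf{w}_t,\hat{\mathbf{w}}\rangle$ is nondecreasing and gains at least $c\,\eta(\rho_t^\perp)^2/\Vert\mathbf{w}_t\Vert$ per step.
\item Its total gain on $[0,T_0)$ is at most $(\Vert\mathbf{w}_0\Vert+\Vert\mathbf{w}_{T_0}\Vert)/\gamma$.
\item Hence $\min_{t<T_0}(\rho_t^\perp)^2\le\frac{1}{T_0}\sum_{t<T_0}(\rho_t^\perp)^2\lesssim\Vert\mathbf{w}_{T_0}\Vert^2/(T_0\eta\gamma)$.
\item The norm grows at most linearly in $T_0$: $\Vert\mathbf{w}_{T_0}\Vert^2\le\Vert\mathbf{w}_1\Vert^2+O(T_0\eta^2/\Vert\mathbf{w}_1\Vert^2)$.
\item The gradient-norm \emph{lower} bound gives $\Vert\mathbf{w}_1\Vert^2\gtrsim\eta^2/\Vert\mathbf{w}_0\Vert^2$. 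This is where the large effective step helps rather than hurts.
\item Finally, $T_0$ is fixed first and then $\eta_\alpha\le\alpha_0/(2T_0)$ is imposed, which keeps $\alpha_t\in[\alpha_0/2,3\alpha_0/2]$.
\end{itemize}

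\textbf{Alignment estimate.} A lower bound linear in $\rho_t^\perp$ cannot hold. By Lemma \ref{Orthogonal}, $|\langle\hat{\mathbf{w}},\nabla_{\mathbf{w}}\mathcal{R}_t\rangle|\le\rho_t^\perp\Vert\nabla_{\mathbf{w}}\mathcal{R}_t\Vert$. Under active margins the gradient vanishes when $\mathbf{w}_t\parallel\hat{\mathbf{w}}$ and is $O(\rho_t^\perp)$. The correct form is therefore the quadratic bound of Lemma \ref{inner_product_bound}.

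More seriously, comparability of the weights $|\ell'_i|$ does not give positivity. With $\mathbf{m}=\tilde{\mathbf{X}}^T\mathbf{w}_t$, the aligned component is proportional to $\mathbf{1}^T(\mathbf{I}-\mathbf{m}\mathbf{m}^T/\Vert\mathbf{m}\Vert^2)|\boldsymbol{\ell^{\prime}}|$. The fluctuation of $|\boldsymbol{\ell^{\prime}}|$ about its mean enters this expression with indefinite sign. The sign is fixed only by using that $|\ell'|$ is \emph{decreasing} in the margin, via the pairwise identity together with Lemma C.1 of \citet{implicit_bias_BN}. This term does not involve $\eta$, so no choice of $C_{\mathrm{low}},C_{\mathrm{high}}$ can absorb a bad ratio $c_2/c_1$.

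For item (3) you likewise need the $\rho^\perp$-linear bound $\Vert\nabla_{\mathbf{w}}\mathcal{R}_t\Vert\le\lambda_{\max}(\alpha_t+1)\alpha_t\rho_t^\perp/\Vert\mathbf{w}_t\Vert_{\mathbf{\Sigma}}$ from Lemma \ref{Gradient_Norm_Bound}. With it, both sides scale as $(\rho_t^\perp)^2$ and the criterion reduces to a bound on $\rho_t$ alone, i.e.\ to $\Theta_{\uparrow}$. The crude gradient bound would only reproduce item (1).

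A minor slip: the gradient is Euclidean-orthogonal to $\mathbf{w}_t$, not $\mathbf{\Sigma}$-orthogonal. The Pythagorean norm update therefore holds for $\Vert\cdot\Vert$, which is in any case the norm appearing in the effective step.
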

\noindent The theorem is intentionally narrow: it applies only in parameter regimes where the lower and upper bounds above are simultaneously satisfiable, so it should not be read as a generic description of batch-normalized logistic regression or as an improvement over the asymptotic guarantee of \citet{implicit_bias_BN}.
Theorem \ref{logis_spike} has two main qualitative takeaways:
\begin{itemize}
    \item A larger learning rate $\eta$ and a smaller initialization norm $\|\mathbf w_0\|$ make the small-ratio regime easier to reach. The exact threshold formula is recorded in the appendix restatement.
    \item The delayed directional instability also depends on both the margin and the condition number of the data covariance matrix. Since $\tan_{\min}^2 = {\gamma^2 \lambda_{\min}}/{(8 \lambda_{\max}^2)}$, smaller margins and worse conditioning push the dynamics deeper into the low-directional-error regime before a positive-alignment rising branch can start.
\end{itemize}
Taken together, the two theorem lines support a limited common message: the same BN-induced auto-rate-tuning mechanism can drive directional instability, but only the whitened linear analysis yields an explicit delayed-onset and self-stabilization theorem with a square-loss interpretation. The logistic theorem identifies one narrower pathway. In particular, whenever the positive-alignment branch in item (3) is realized, the corresponding directional exit threshold in logistic regression scales with $\eta / (\gamma \Vert \mathbf{w}_0 \Vert^2)$, whereas for linear regression the peak directional ratio along the rising edge is bounded by a constant-order quantity.


\section{Proof Overview}\label{ProofOverview}

We first explain, at a high level, how batch normalization reshapes the direction dynamics of gradient descent. We then specialize this mechanism to the whitened square-loss linear-regression analysis and the narrower logistic-regression analysis. The goal of this section is a roadmap rather than a derivation: it tells the reader which lemma drives which transition, while the appendix carries the constant tracking and proof details. Full proofs are deferred to Appendix \ref{Detailed_Proof_for_Linear_Regression} and Appendix \ref{Detailed_Proof_for_Logistic_Regression}.
\subsection{Directional Convergence Induced by BN}\label{Directional_Convergence}
We begin at a level that is agnostic to the specific task and objective, and focus only on the direction dynamics induced by normalization. 
The following lemma describes when a normalized model moves toward or away from a reference direction during gradient descent.
\begin{lemma}[Directional Convergence and Divergence]\label{Direction_Convegence_and_Divergence}
  Suppose there exists a reference direction $\hat{\mathbf{w}}$. 
  Consider gradient descent (\ref{gradient_updata}) on an objective function 
  $\mathcal{R} \left(\mathbf{w}, \alpha\right)$, where $\mathbf{w}$
  is parameterized by normalization and $\alpha$ represents the scaling factor of the normalization.
  Assume further that the objective is scale-invariant in $\mathbf{w}$, namely
  \[
    \mathcal{R}(k \mathbf{w}, \alpha) = \mathcal{R}(\mathbf{w}, \alpha),
    \qquad \forall k > 0.
  \]
  We have the following direction convergence condition: 
  if there exists a $t \ge 0$ such that 
  \begin{equation*}
  \begin{aligned}
  \rho_t > 0 \text{ and }\ 
    \frac{\eta {\rho}_{t}}{\Vert \mathbf{w}_{t} \Vert}
    \left\Vert
      \nabla_{\mathbf{w}} \mathcal{R}_t
    \right\Vert^2
    \le 
    - 2
    \left\langle 
      \hat{\mathbf{w}},
      \nabla_{\mathbf{w}} \mathcal{R}_t
    \right\rangle,
  \end{aligned}
  \end{equation*}
  it holds that $\left( {\rho}_{t+1}^{\perp} \right)^2 \le \left( {\rho}_{t}^{\perp}\right)^2$, 
  and the following direction divergence condition: 
  if there exists a $t \ge 0$ such that 
  \begin{equation*}
  \begin{aligned}
  0 < \rho_t^{\perp} / \rho_t \le 1, \ \ \alpha_t > 0
  \ \ \text{and} \ \ 
    \frac{\eta }{\Vert \mathbf{w}_t \Vert} 
    \left\Vert 
      \nabla_{\mathbf{w}} \mathcal{R}_t
    \right\Vert
    \ge
    \frac{2  {\rho}_t {\rho}_t^{\perp}}{
      {\rho}_t^2 
      - \left({\rho}_t^{\perp}\right)^2
    }
  \end{aligned},
  \end{equation*}
  it holds that $\left({\rho}_{t+1}^{\perp}\right)^2 \ge \left({\rho}_{t}^{\perp}\right)^2$. 
\end{lemma}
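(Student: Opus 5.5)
The plan is a direct one-step computation that uses only scale invariance, with no appeal to the structure of $\mathcal{R}$. Write $\mathbf{g}_t := \nabla_{\mathbf{w}} \mathcal{R}_t$, $\mathbf{u}_t := \mathbf{w}_t/\Vert \mathbf{w}_t\Vert$, $a_t := \langle \hat{\mathbf{w}}, \mathbf{g}_t\rangle$ and $s_t := \eta \Vert \mathbf{g}_t\Vert/\Vert \mathbf{w}_t\Vert$.

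\textbf{Step 1: orthogonality and norm growth.} Differentiating $\mathcal{R}(k\mathbf{w},\alpha)=\mathcal{R}(\mathbf{w},\alpha)$ in $k$ at $k=1$ gives Euler's identity $\langle \mathbf{w}_t, \mathbf{g}_t\rangle = 0$. Two consequences follow. First, $\Vert \mathbf{w}_{t+1}\Vert^2 = \Vert \mathbf{w}_t\Vert^2(1+s_t^2)$. Second,
\[
\rho_{t+1} = \frac{\rho_t \Vert \mathbf{w}_t\Vert - \eta a_t}{\Vert \mathbf{w}_{t+1}\Vert}.
\]
By the Pythagorean identity $(\rho^\perp)^2 = \Vert\hat{\mathbf{w}}\Vert^2 - \rho^2$, the claim $(\rho_{t+1}^\perp)^2 \le (\rho_t^\perp)^2$ is equivalent to $\rho_{t+1}^2 \ge \rho_t^2$, and the reverse inequality is equivalent to $\rho_{t+1}^2 \le \rho_t^2$. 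Both parts of the lemma therefore reduce to comparing
\[
\bigl(\rho_t \Vert \mathbf{w}_t\Vert - \eta a_t\bigr)^2 \quad\text{with}\quad \rho_t^2 \Vert \mathbf{w}_t\Vert^2 (1+s_t^2).
\]

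\textbf{Step 2: convergence condition.} Expanding, $\rho_{t+1}^2 \ge \rho_t^2$ is equivalent to
\[
-2\rho_t \Vert \mathbf{w}_t\Vert \eta a_t + \eta^2 a_t^2 \ge \rho_t^2 \eta^2 \Vert \mathbf{g}_t\Vert^2 .
\]
Since $\rho_t>0$, we may divide by $\rho_t \eta \Vert \mathbf{w}_t\Vert>0$. This turns it into
\[
\frac{\eta \rho_t}{\Vert \mathbf{w}_t\Vert}\Vert \mathbf{g}_t\Vert^2 \le -2a_t + \frac{\eta a_t^2}{\rho_t\Vert \mathbf{w}_t\Vert}.
\]
The last term is nonnegative, so the hypothesis of the lemma is sufficient. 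The case $\eta=0$ is trivial.

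\textbf{Step 3: divergence condition.} Here the key is to bound $a_t$ using orthogonality again. Because $\mathbf{g}_t \perp \mathbf{u}_t$, we have $a_t = \langle \hat{\mathbf{w}} - \rho_t \mathbf{u}_t, \mathbf{g}_t\rangle$, and Cauchy–Schwarz gives $|a_t| \le \rho_t^\perp \Vert \mathbf{g}_t\Vert$. The hypothesis $\rho_t^\perp/\rho_t>0$ forces $\rho_t>0$. Then
\[
\rho_{t+1}^2 = \frac{(\rho_t - \eta a_t/\Vert \mathbf{w}_t\Vert)^2}{1+s_t^2} \le \frac{(\rho_t + \rho_t^\perp s_t)^2}{1+s_t^2}.
\]
This holds because $|\rho_t - \eta a_t/\Vert\mathbf{w}_t\Vert| \le \rho_t + \rho_t^\perp s_t$, which also covers the case where the numerator changes sign. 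It thus suffices that $(\rho_t+\rho_t^\perp s_t)^2 \le \rho_t^2(1+s_t^2)$, i.e. $2\rho_t\rho_t^\perp s_t + (\rho_t^\perp)^2 s_t^2 \le \rho_t^2 s_t^2$. Dividing by $s_t>0$, this becomes
\[
s_t\bigl(\rho_t^2 - (\rho_t^\perp)^2\bigr) \ge 2\rho_t\rho_t^\perp,
\]
which is exactly the stated divergence hypothesis when $\rho_t^\perp<\rho_t$. In the boundary case $\rho_t^\perp/\rho_t = 1$, the right-hand side of the hypothesis is infinite, so that case is interpreted vacuously. The hypothesis $\alpha_t>0$ is not needed in this argument.

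\textbf{Main obstacle.} The only delicate point is Step 3: one must get the sign-robust bound on the numerator and use $\mathbf{g}_t\perp\mathbf{w}_t$ to replace $\Vert\hat{\mathbf{w}}\Vert$ by $\rho_t^\perp$ in the estimate of $a_t$. Without this, the threshold would not take the stated $\tan$-type form $2\rho\rho^\perp/(\rho^2-(\rho^\perp)^2)$.
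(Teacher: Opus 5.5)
Your proposal is correct and is essentially the paper's argument. Scale invariance gives $\langle \mathbf{w}_t,\nabla_{\mathbf{w}}\mathcal{R}_t\rangle=0$ and $\Vert\mathbf{w}_{t+1}\Vert^2=\Vert\mathbf{w}_t\Vert^2+\eta^2\Vert\nabla_{\mathbf{w}}\mathcal{R}_t\Vert^2$, both parts reduce to the sign of $\rho_{t+1}^2-\rho_t^2$, and the divergence part uses the same Cauchy--Schwarz bound on the gradient component along $\hat{\mathbf{w}}-\rho_t\mathbf{w}_t/\Vert\mathbf{w}_t\Vert$ (the paper's $\rho_t^{\perp}\mathbf{e}_2$).

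The differences are minor. In the convergence part you expand exactly rather than using the paper's projection inequality; this is slightly sharper but gives the same sufficient condition. You also handle the boundary case $\rho_t^{\perp}=\rho_t$ more carefully than the paper, whose assertion $\rho_t^2-(\rho_t^{\perp})^2>0$ fails there.
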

\begin{remark}
  Recall that smaller $\rho_t^{\perp}$ means that the direction of $\mathbf{w}_t$ is closer to $\hat{\mathbf{w}}$. The lemma therefore gives generic convergence and divergence conditions for the direction dynamics of any scale-invariant normalized model. It is not restricted to batch normalization, but it does rely essentially on the scale-invariance assumption stated above. Once that assumption holds, the conclusion does not depend on the particular training objective or the specific choice of reference direction. 
\end{remark}
Lemma \ref{Direction_Convegence_and_Divergence} is one of the main technical tools of the paper; its proof is given in Appendix \ref{Proof_of_Lemma_direction}. In the next two subsections, we use it to explain how normalization-induced directional divergence can generate delayed instability after a sustained well-aligned phase.

\subsection{Whitened Square-Loss Linear Regression}\label{Proof_Overview_lr}
For linear regression, the square-loss gradient admits an especially transparent reformulation in terms of $\rho_t$ and $\rho_t^{\perp}$:
\begin{equation*}
\begin{aligned}
  & \nabla_{\mathbf{w}} \mathcal{R}_t = 
  - \frac{\alpha_t}{\Vert \mathbf{w}_{t} \Vert}
  \left(
    \mathbf{I}
    -
    \frac{\mathbf{w} \mathbf{w}^T}{\Vert \mathbf{w} \Vert^2}
  \right)\hat{\mathbf{w}} ;
  \quad
  \frac{\partial \mathcal{R}_t}{\partial \alpha} = - \left(\rho_{t} - \alpha_t\right).
\end{aligned}
\end{equation*}
Thus $\nabla_{\mathbf{w}} \mathcal{R}_t$ is collinear with the projection of $\hat{\mathbf{w}}$ onto $\text{span}^{\perp} \{\mathbf{w}_t\}$. This exact structure is what makes the linear-regression analysis much sharper than the logistic one: combined with Lemma \ref{Direction_Convegence_and_Divergence}, it yields a nearly closed dynamical system for the direction ratio and the effective learning rate.
\begin{figure}[!htbp]
  \includegraphics[width=0.95\columnwidth]{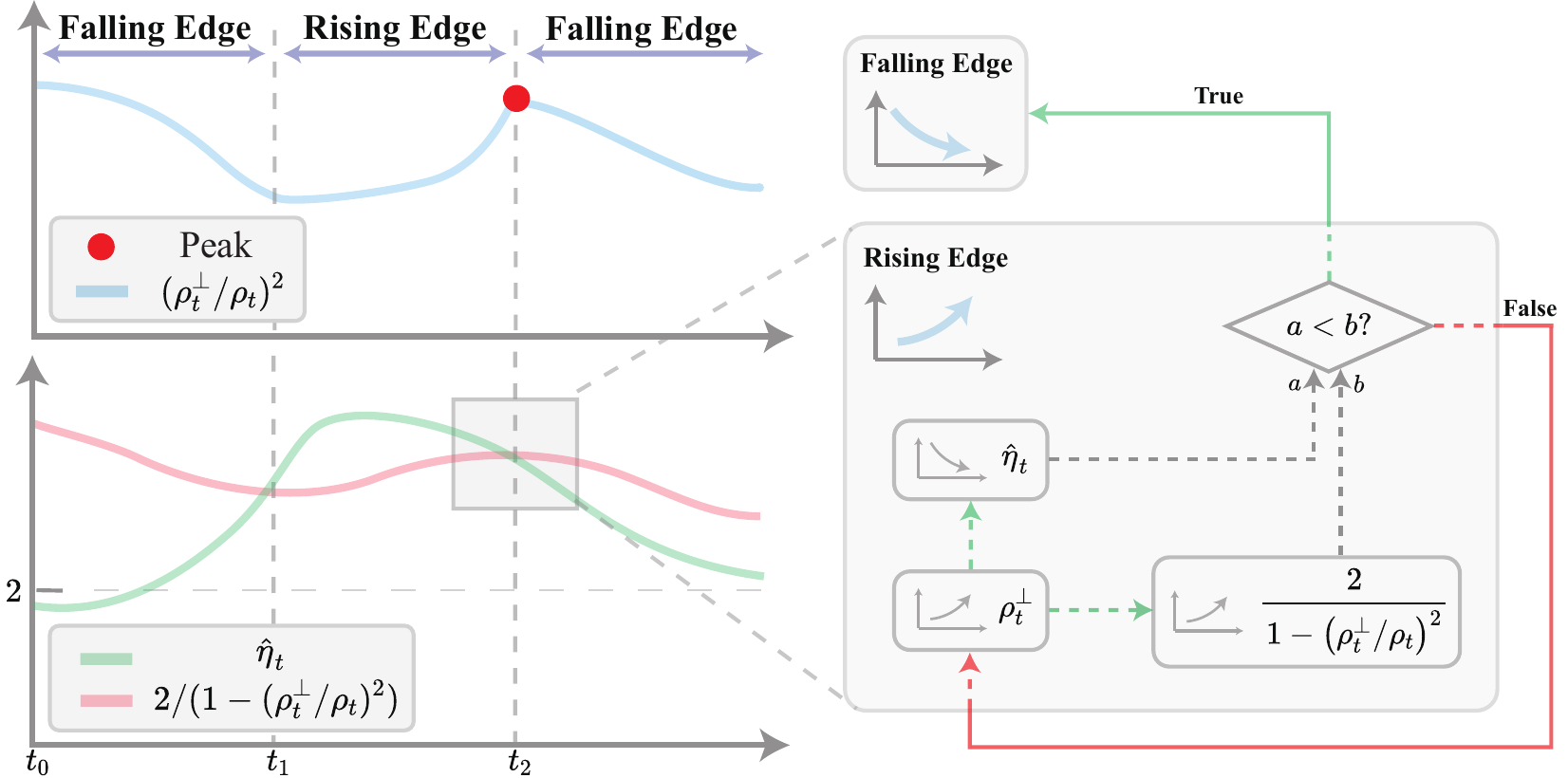}
  \caption{Schematic dynamics for batch-normalized linear regression. The relative magnitude of $2/(1 - (\rho_t^{\perp} / \rho_t)^2)$ and $\hat{\eta}_t$ determines whether the system is in a \textit{Falling Edge} or a \textit{Rising Edge}. During the \textit{Rising Edge}, the growth of $\rho_t^{\perp}$ activates a negative-feedback loop that eventually drives the dynamics back to the \textit{Falling Edge}.}
  \label{fig1}
\end{figure}
\begin{lemma}[The Dynamics of BN Linear Regression]\label{lemma:Dynamics_of_BN_Linear_Regression}
  Let $\ell = \ell_{squ}$, $\hat{\mathbf{w}} = \mathbf{\Sigma}^{-1} \boldsymbol{\mu}$ and $\mathbf{\Sigma} = \mathbf{I}$. 
  Consider the gradient descent (\ref{gradient_updata}), 
  it holds that 
  \begin{equation*}
  \begin{aligned}
    & (1).
    \frac{\rho_{t+1}^{\perp}}{\rho_{t+1}} = 
    \frac{|\hat{\eta}_t - 1|}{1 + \hat{\eta}_t \left({\rho_t^{\perp}}/{\rho_t}\right)^2}
    \frac{\rho_t^{\perp}}{\rho_t}; 
    \\ & (2). \alpha_{t+1}  = \alpha_t + \eta_{\alpha} \left( \rho_{t} - \alpha_t \right);
    \\ & (3). \Vert \mathbf{w}_{t+1} \Vert^2 = 
    \Vert \mathbf{w}_{t} \Vert^2
    + 
    \frac{\eta^2 \alpha_t^2}{\Vert \mathbf{w}_{t} \Vert^2} \left(\rho_t^{\perp}\right)^2,
  \end{aligned}
  \end{equation*}
  where $\hat{\eta}_t$ is effective learning rate, defined as $\hat{\eta}_{t} := {\eta \alpha_t \rho_{t}}/{\Vert \mathbf{w}_{t} \Vert^{2}}$. 
\end{lemma}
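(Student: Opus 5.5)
Direct computation from the update rule (\ref{gradient_updata}), using the simplified gradient formulas displayed just before the lemma; no earlier lemma is needed beyond the whitened setting. First I verify those formulas. With $\mathbf{\Sigma}=\mathbf{I}$ we have $\Vert\mathbf{w}\Vert_{\mathbf{\Sigma}}=\Vert\mathbf{w}\Vert$. For square loss, $\ell'(z)=z-1$, so
\[
\tfrac1n\tilde{\mathbf{X}}\boldsymbol{\ell}'\bigl(\alpha\tilde{\mathbf{X}}^T\mathbf{w}/\Vert\mathbf{w}\Vert\bigr)=\alpha\mathbf{\Sigma}\mathbf{w}/\Vert\mathbf{w}\Vert-\boldsymbol{\mu}=\alpha\mathbf{w}/\Vert\mathbf{w}\Vert-\hat{\mathbf{w}} .
\]
The projector $\mathbf{I}-\mathbf{w}\mathbf{w}^T/\Vert\mathbf{w}\Vert^2$ kills the first term. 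Hence $\nabla_{\mathbf{w}}\mathcal{R}_t=-\frac{\alpha_t}{\Vert\mathbf{w}_t\Vert}P_t\hat{\mathbf{w}}$, where $P_t$ is that projector. Likewise
\[
\partial_\alpha\mathcal{R}_t=\frac{\mathbf{w}_t^T}{\Vert\mathbf{w}_t\Vert}\Bigl(\alpha_t\frac{\mathbf{w}_t}{\Vert\mathbf{w}_t\Vert}-\hat{\mathbf{w}}\Bigr)=\alpha_t-\rho_t .
\]
This immediately gives item (2).

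Next I introduce an orthonormal frame. Let $\mathbf{u}_t=\mathbf{w}_t/\Vert\mathbf{w}_t\Vert$, and let $\mathbf{v}_t$ be the unit vector along $P_t\hat{\mathbf{w}}$; if $\rho_t^\perp=0$ everything is trivial. Then $\hat{\mathbf{w}}=\rho_t\mathbf{u}_t+\rho_t^\perp\mathbf{v}_t$ and $P_t\hat{\mathbf{w}}=\rho_t^\perp\mathbf{v}_t$. The update is
\[
\mathbf{w}_{t+1}=\Vert\mathbf{w}_t\Vert\mathbf{u}_t+\frac{\eta\alpha_t\rho_t^\perp}{\Vert\mathbf{w}_t\Vert}\mathbf{v}_t .
\]
Orthogonality (Pythagoras) yields item (3) at once.

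For item (1), I expand $\mathbf{w}_{t+1}$ and $\hat{\mathbf{w}}$ in the basis $\{\mathbf{u}_t,\mathbf{v}_t\}$; everything happens in this 2D plane. I write $\mathbf{w}_{t+1}\propto \mathbf{u}_t+s\mathbf{v}_t$ with $s=\eta\alpha_t\rho_t^\perp/\Vert\mathbf{w}_t\Vert^2=\hat\eta_t\,\rho_t^\perp/\rho_t$. The tangent of the angle between $\hat{\mathbf{w}}$ and $\mathbf{w}_{t+1}$ is the absolute value of the 2D cross product divided by the dot product:
\[
\frac{|\rho_t^\perp-s\rho_t|}{\rho_t+s\rho_t^\perp}.
\]
Substituting $s$, I divide numerator and denominator by $\rho_t$. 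This gives
\[
\frac{|1-\hat\eta_t|\,(\rho_t^\perp/\rho_t)}{1+\hat\eta_t(\rho_t^\perp/\rho_t)^2},
\]
which is the claimed formula. Here I use that $\rho_{t+1}^\perp/\rho_{t+1}$ is exactly this tangent, by the Pythagorean remark, provided $\rho_{t+1}>0$.

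The only delicate point is sign and positivity. The identity $\rho^\perp/\rho=\tan\angle$ requires $\rho_{t+1}>0$, i.e.\ $\rho_t+s\rho_t^\perp>0$. This holds whenever $\rho_t>0$ and $\alpha_t>0$, since then $s\ge0$. I will note that the formula is meant in this regime, which is propagated in the applications because $\alpha_t$ moves toward $\rho_t$ under item (2) with $\eta_\alpha\in(0,1)$. I will also state that the absolute value is exactly what records the possible overshoot of $\mathbf{w}_{t+1}$ past $\hat{\mathbf{w}}$ when $\hat\eta_t>1$.
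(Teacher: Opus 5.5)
Your proposal is correct and is essentially the paper's argument. It uses the same simplified gradient $-\frac{\alpha_t}{\Vert\mathbf{w}_t\Vert}P_t\hat{\mathbf{w}}$, the same orthonormal frame $\{\mathbf{w}_t/\Vert\mathbf{w}_t\Vert,\ P_t\hat{\mathbf{w}}/\rho_t^\perp\}$, and the same two quantities $\langle\mathbf{w}_{t+1},\hat{\mathbf{w}}\rangle$ and $\Vert\mathbf{w}_{t+1}\Vert^2$. The only difference is the last step: the paper expands $(\Vert\mathbf{w}_{t+1}\Vert^2\Vert\hat{\mathbf{w}}\Vert^2-\langle\mathbf{w}_{t+1},\hat{\mathbf{w}}\rangle^2)/\langle\mathbf{w}_{t+1},\hat{\mathbf{w}}\rangle^2$ and silently takes a square root, whereas you read off the 2D cross product directly, which keeps the absolute value honest.

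Your positivity caveat is unnecessary. Since $\rho_{t+1}^\perp=|\rho_t^\perp-s\rho_t|/\sqrt{1+s^2}$ and $\rho_{t+1}=(\rho_t+s\rho_t^\perp)/\sqrt{1+s^2}$ hold exactly, identity (1) holds whenever $\rho_t\neq0$ and $\rho_{t+1}\neq0$, regardless of signs.
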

\begin{remark}
  Lemma \ref{lemma:Dynamics_of_BN_Linear_Regression} gives an almost minimal dynamical system for the batch-normalized linear model. It retains the core BN-specific feedback loop while stripping away complications that are irrelevant for the spike mechanism.
\end{remark}
The key question is therefore how the dynamics switches between the \textit{Falling Edge} and the \textit{Rising Edge}. Lemma \ref{lemma:Dynamics_of_BN_Linear_Regression} answers this explicitly: if $\hat{\eta}_t > {2}/{(1 - (\rho_t^{\perp} / \rho_t)^2)}$, then $\rho_t^{\perp} / \rho_t$ increases; otherwise it decreases. The proof roadmap for Theorems \ref{Occurance_of_Spike_ls} and \ref{Divergence_lr} has three steps.

\noindent\textbf{Step 1: enter the delayed-onset regime.}
During the \textit{Falling Edge}, $\rho_t^{\perp}$ decreases while $\rho_t$ increases because $\rho_t^2 + (\rho_t^{\perp})^2 = \Vert \hat{\mathbf{w}} \Vert^2$. Item (2) of Lemma \ref{lemma:Dynamics_of_BN_Linear_Regression} shows that $\alpha_t$ tracks $\rho_t$, while item (3) shows that $\Vert \mathbf{w}_t \Vert$ grows only slowly once the iterate is already well aligned. Consequently $\hat{\eta}_t$ rises while the threshold ${2}/{(1 - (\rho_t^{\perp} / \rho_t)^2)}$ falls, so the trajectory must eventually cross into the \textit{Rising Edge}; Theorem \ref{Occurance_of_Spike_ls} turns this into the no-rising-edge, delayed-onset, and waiting-time statements.

\noindent\textbf{Step 2: force the rising edge to end.}
Once the trajectory enters the \textit{Rising Edge}, $\rho_t^{\perp}$ increases while $\rho_t$ decreases. The same BN dynamics now creates negative feedback: $\alpha_t$ follows the decreasing $\rho_t$, while $\Vert \mathbf{w}_t \Vert$ keeps growing. In the appendix this is quantified by the growth estimate
\[
\Vert \mathbf{w}_t \Vert^2 
\geq c \cdot \left(\int_{\tau=0}^{t} \left(\rho_{\tau}^{\perp}\right)^2 \, d\tau\right)^{1/2}
\geq c \cdot \rho_0^{\perp} \cdot \sqrt{t},
\]
which eventually pushes $\hat{\eta}_t$ back below the divergence threshold. Lemma \ref{Existence_of_Falling_Edge_lr} formalizes this return to the \textit{Falling Edge}.

\noindent\textbf{Step 3: bound the rising-edge shape.}
With the two transition directions in hand, Theorem \ref{Divergence_lr} bounds how long the \textit{Rising Edge} can last and why the peak directional ratio remains bounded. Figure \ref{fig1} summarizes this feedback loop at the level of mechanism.

\subsection{Supporting Logistic-Regression Analysis}\label{proof_overview_Logistic_Regression}
Under Assumption \ref{Logistic_Setup}, the iterate $\mathbf{w}_t$ evolves only inside $\text{span} \left(\mathbf{X}\right)$, so the same restricted eigenvalues $\lambda_{\min}$ and $\lambda_{\max}$ from Section \ref{Logistic_Regression_Model} control the proof. The roadmap has three steps: first control how directional error perturbs the logits, then convert the abstract directional lemma into explicit finite-horizon inequalities, and finally show that the trajectory can reach the small-ratio regime before any later positive-alignment rising branch must exit.
\begin{lemma}\label{logit_bound}
  Let $\ell$ be $\ell_{log}$ and $\hat{\mathbf{w}}$ be the SVM solution as presented in Definition \ref{margin}. Suppose Assumptions \ref{Overparameterization} and \ref{Logistic_Setup} hold. Then, on the positive-alignment branch $\rho_t>0$, the appendix proves explicit inequalities showing that both $\Vert \mathbf{w}_t \Vert_{\mathbf{\Sigma}}$ and every signed logit $y_i\langle \mathbf{w}_t,\mathbf{x}_i\rangle$ remain within controlled $O(\rho_t^\perp)$ perturbations of their aligned reference values.
\end{lemma}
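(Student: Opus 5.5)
The plan is to reduce everything to a single orthogonal decomposition of the reference direction along the current iterate. Write $\mathbf{u}_t := \mathbf{w}_t/\Vert \mathbf{w}_t\Vert$ and $\mathbf{r}_t := \hat{\mathbf{w}} - \rho_t \mathbf{u}_t$. By the definitions of $\rho_t$ and $\rho_t^{\perp}$, $\mathbf{r}_t \perp \mathbf{u}_t$ and $\Vert \mathbf{r}_t\Vert = \rho_t^{\perp}$. On the branch $\rho_t>0$ we can therefore invert:
\[
\mathbf{u}_t = \frac{\hat{\mathbf{w}} - \mathbf{r}_t}{\rho_t}, \qquad \rho_t = \sqrt{\Vert \hat{\mathbf{w}}\Vert^2 - (\rho_t^{\perp})^2} = \gamma^{-1}\sqrt{1-\gamma^2(\rho_t^{\perp})^2}.
\]
The key structural fact is that both $\hat{\mathbf{w}}$ and $\mathbf{w}_t$ lie in $\text{span}(\mathbf{X})$. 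For $\hat{\mathbf{w}}$ this holds because any component orthogonal to $\text{span}(\mathbf{X})$ can be removed without violating the constraints while strictly decreasing the norm. For $\mathbf{w}_t$ it is Assumption \ref{Logistic_Setup}. Hence $\mathbf{r}_t \in \text{span}(\mathbf{X})$, so the restricted eigenvalues apply: $\sqrt{\lambda_{\min}}\,\rho_t^{\perp} \le \Vert \mathbf{r}_t\Vert_{\mathbf{\Sigma}} \le \sqrt{\lambda_{\max}}\,\rho_t^{\perp}$.

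For the signed logits, compute
\[
y_i\langle \mathbf{x}_i,\mathbf{u}_t\rangle = \frac{y_i\langle \mathbf{x}_i,\hat{\mathbf{w}}\rangle - y_i\langle \mathbf{x}_i,\mathbf{r}_t\rangle}{\rho_t}.
\]
By Cauchy--Schwarz, $|y_i\langle \mathbf{x}_i,\mathbf{r}_t\rangle| \le \max_j\Vert \mathbf{x}_j\Vert\,\rho_t^{\perp}$. Combining this with the SVM constraint $y_i\langle \mathbf{x}_i,\hat{\mathbf{w}}\rangle\ge 1$ (with equality under Assumption \ref{Active_margin_data} when that assumption is in force) gives two-sided bounds of the form
\[
\frac{y_i\langle \mathbf{x}_i,\hat{\mathbf{w}}\rangle \pm \max_j\Vert\mathbf{x}_j\Vert\,\rho_t^{\perp}}{\rho_t}.
\]
For the normalized norm, the triangle inequality gives $\bigl|\Vert \hat{\mathbf{w}}-\mathbf{r}_t\Vert_{\mathbf{\Sigma}} - \Vert\hat{\mathbf{w}}\Vert_{\mathbf{\Sigma}}\bigr| \le \sqrt{\lambda_{\max}}\,\rho_t^{\perp}$, so $\Vert \mathbf{w}_t\Vert_{\mathbf{\Sigma}}/\Vert\mathbf{w}_t\Vert = \Vert\mathbf{u}_t\Vert_{\mathbf{\Sigma}}$ lies within $\sqrt{\lambda_{\max}}\,\rho_t^{\perp}/\rho_t$ of $\Vert\hat{\mathbf{w}}\Vert_{\mathbf{\Sigma}}/\rho_t$. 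Note also that
\[
\Vert\hat{\mathbf{w}}\Vert_{\mathbf{\Sigma}}^2 = \frac{1}{n}\sum_i \bigl(y_i\langle\mathbf{x}_i,\hat{\mathbf{w}}\rangle\bigr)^2 \ge 1,
\]
with equality in the active-margin case.

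Finally, the quantity actually entering the risk is the BN-normalized logit
\[
\frac{y_i\langle\mathbf{x}_i,\mathbf{w}_t\rangle}{\Vert\mathbf{w}_t\Vert_{\mathbf{\Sigma}}} = \frac{y_i\langle\mathbf{x}_i,\hat{\mathbf{w}}-\mathbf{r}_t\rangle}{\Vert\hat{\mathbf{w}}-\mathbf{r}_t\Vert_{\mathbf{\Sigma}}},
\]
in which $\rho_t$ cancels. We combine the two previous estimates as a ratio of perturbed quantities. When $\rho_t^{\perp}$ is small enough that the denominator perturbation is below, say, half of $\Vert\hat{\mathbf{w}}\Vert_{\mathbf{\Sigma}}$, a first-order expansion gives deviation at most $C_0\gamma\rho_t^{\perp}$ from the aligned value $y_i\langle\mathbf{x}_i,\hat{\mathbf{w}}\rangle/\Vert\hat{\mathbf{w}}\Vert_{\mathbf{\Sigma}}$. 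Here $C_0$ collects $\max_j\Vert\mathbf{x}_j\Vert$ and $\sqrt{\lambda_{\max}}$, rescaled by $\Vert\hat{\mathbf{w}}\Vert = \gamma^{-1}$ so that the bound reads in the form used in Lemma \ref{risk_upperbound_logit_reg}.

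The only delicate step is this last one. The ratio bound must stay valid on the whole positive branch, not merely for infinitesimal $\rho_t^{\perp}$. My first attempt is to state the lower logit bound $1 - C_0\gamma\rho_t^{\perp}$ exactly, avoiding any linearization, by lower bounding the numerator and upper bounding the denominator separately. I would then record the upper bound only under the explicit smallness condition $C_0\gamma\rho_t^{\perp}<1$.
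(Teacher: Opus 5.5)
Your inequalities are correct, and you reach them by a genuinely different decomposition. The paper splits $\mathbf{w}_t$ along $\hat{\mathbf{w}}$ (its $\tilde\rho$, $\tilde\rho^{\perp}$), so its reference value is $\gamma^2\rho_t\Vert\mathbf{w}_t\Vert$. It gets (2) by Cauchy--Schwarz on the part of $\mathbf{w}_t$ orthogonal to $\hat{\mathbf{w}}$. It gets (1) by expanding $\Vert\mathbf{w}_t\Vert_{\mathbf{\Sigma}}^2$, bounding the cross term by $\lambda_{\max}$, and dividing a difference of squares by $\Vert\mathbf{w}_t\Vert_{\mathbf{\Sigma}}$. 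You instead split $\hat{\mathbf{w}}$ along $\mathbf{u}_t$ and invert. This yields exact two-sided bounds around $\frac{\Vert\mathbf{w}_t\Vert}{\rho_t}\,y_i\langle\mathbf{x}_i,\hat{\mathbf{w}}\rangle$ and $\frac{\Vert\mathbf{w}_t\Vert}{\rho_t}\Vert\hat{\mathbf{w}}\Vert_{\mathbf{\Sigma}}$.

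Your route buys two things the paper's proof lacks. First, you keep $y_i\langle\mathbf{x}_i,\hat{\mathbf{w}}\rangle$ and $\Vert\hat{\mathbf{w}}\Vert_{\mathbf{\Sigma}}$ general. The paper's step $\Vert\tilde{\mathbf{e}}_1\Vert_{\mathbf{\Sigma}}^2=\gamma^2$ in (1), and its replacement of $y_i\tilde\rho(\mathbf{x}_i)$ by $\gamma$ in (2), silently use Assumption~\ref{Active_margin_data}, which the lemma does not assume. Second, you use $\max_j\Vert\mathbf{x}_j\Vert$, whereas the paper's bound $\Vert\mathbf{x}_i\Vert\le\sqrt{\lambda_{\max}}$ holds only up to a factor $\sqrt{n}$. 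One only has $\lambda_{\max}\ge\mathbf{x}_i^T\mathbf{\Sigma}\mathbf{x}_i/\Vert\mathbf{x}_i\Vert^2\ge\Vert\mathbf{x}_i\Vert^2/n$, and orthonormal $\mathbf{x}_i$ give $\lambda_{\max}=1/n$ with $\Vert\mathbf{x}_i\Vert=1$. You also make visible that $\rho_t$ cancels in the BN-normalized logit, which is what Lemma~\ref{risk_upperbound_logit_reg} actually consumes.

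The price is the factor $1/\rho_t$. Both your references and your error terms scale like $\Vert\mathbf{w}_t\Vert/\rho_t$, which diverges as $\rho_t\to0^{+}$, even though $|y_i\langle\mathbf{x}_i,\mathbf{w}_t\rangle|\le\Vert\mathbf{x}_i\Vert\Vert\mathbf{w}_t\Vert$. So you prove $O(\rho_t^{\perp}/\rho_t)$ control, which is uniform only near alignment. The statement asks for uniform $O(\rho_t^{\perp})$ control on the whole branch $\rho_t>0$, and the paper's reference $\gamma^2\rho_t\Vert\mathbf{w}_t\Vert$ is built to give exactly that.

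The repair is one line: write $\mathbf{u}_t=\gamma^2\rho_t\hat{\mathbf{w}}+\mathbf{s}_t$ with $\mathbf{s}_t\perp\hat{\mathbf{w}}$, $\Vert\mathbf{s}_t\Vert=\gamma\rho_t^{\perp}$, and $\mathbf{s}_t\in\text{span}(\mathbf{X})$. Cauchy--Schwarz then gives $|y_i\langle\mathbf{x}_i,\mathbf{w}_t\rangle-\gamma^2\rho_t\Vert\mathbf{w}_t\Vert\,y_i\langle\mathbf{x}_i,\hat{\mathbf{w}}\rangle|\le\max_j\Vert\mathbf{x}_j\Vert\,\gamma\Vert\mathbf{w}_t\Vert\rho_t^{\perp}$. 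Your triangle-inequality argument gives $|\Vert\mathbf{w}_t\Vert_{\mathbf{\Sigma}}-\gamma^2\rho_t\Vert\mathbf{w}_t\Vert\,\Vert\hat{\mathbf{w}}\Vert_{\mathbf{\Sigma}}|\le\sqrt{\lambda_{\max}}\,\gamma\Vert\mathbf{w}_t\Vert\rho_t^{\perp}$. Both bounds are uniform in $\rho_t>0$. The second is sharper than the paper's (1), whose error term is at least $2\sqrt{2}\sqrt{\lambda_{\max}}\,\gamma\Vert\mathbf{w}_t\Vert\rho_t^{\perp}$.

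Your final ratio paragraph is not needed for this lemma, since that content belongs to Lemma~\ref{risk_upperbound_logit_reg}. As sketched it also needs a case split. Dividing a numerator lower bound by a denominator upper bound is valid only while $y_i\langle\mathbf{x}_i,\hat{\mathbf{w}}\rangle-\max_j\Vert\mathbf{x}_j\Vert\rho_t^{\perp}\ge0$. Beyond that you need the uniform bound $|y_i\langle\mathbf{x}_i,\mathbf{u}_t\rangle|/\Vert\mathbf{u}_t\Vert_{\mathbf{\Sigma}}\le\Vert\mathbf{x}_i\Vert/\sqrt{\lambda_{\min}}$.
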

Lemma \ref{logit_bound} supplies the first ingredient: once $\rho_t^\perp$ is small, the logits behave like a controlled perturbation of the aligned reference scale $\alpha_t$. To use Lemma \ref{Direction_Convegence_and_Divergence}, we then need matching gradient bounds.
\begin{lemma}\label{inner_product_bound}
  Let $\ell$ be $\ell_{log}$ and $\hat{\mathbf{w}}$ be the SVM solution as defined in Definition \ref{margin}. Suppose Assumptions \ref{Overparameterization}, \ref{Logistic_Setup}, and \ref{Active_margin_data} hold. Then the appendix proves an explicit lower bound on the aligned gradient component $-\langle \hat{\mathbf{w}}, \nabla_{\mathbf{w}} \mathcal{R}_t\rangle$, showing that it is nonnegative and scales quadratically in $\rho_t^{\perp}$ on the positive-alignment branch.
\end{lemma}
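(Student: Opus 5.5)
We need a lower bound on $-\langle \hat{\mathbf{w}}, \nabla_{\mathbf{w}} \mathcal{R}_t\rangle$ that is nonnegative and quadratic in $\rho_t^\perp$. The plan is to write the gradient explicitly, use the projection structure to pull out a factor of $\Vert \hat{\mathbf{w}}\Vert^2\Vert\mathbf{w}_t\Vert_{\mathbf{\Sigma}}^2-\langle\hat{\mathbf{w}},\mathbf{w}_t\rangle_{\mathbf{\Sigma}}^2$-type, and control the rest using Assumption \ref{Active_margin_data}.

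\textbf{Step 1: reduce to a covariance-type expression.} Write $\mathbf{v}_t:=\mathbf{w}_t/\Vert\mathbf{w}_t\Vert_{\mathbf{\Sigma}}$ and $g_i:=-\ell'(\alpha_t y_i\langle\mathbf{x}_i,\mathbf{v}_t\rangle)>0$. From the gradient formula in Section \ref{ProblemSetup},
\[
-\langle\hat{\mathbf{w}},\nabla_{\mathbf{w}}\mathcal{R}_t\rangle=\frac{\alpha_t}{n\Vert\mathbf{w}_t\Vert_{\mathbf{\Sigma}}}\sum_i g_i\Bigl(y_i\langle\mathbf{x}_i,\hat{\mathbf{w}}\rangle-\langle\hat{\mathbf{w}},\mathbf{v}_t\rangle_{\mathbf{\Sigma}}\,y_i\langle\mathbf{x}_i,\mathbf{v}_t\rangle\Bigr).
\]
Assumption \ref{Active_margin_data} gives $y_i\langle\mathbf{x}_i,\hat{\mathbf{w}}\rangle=1$ for every $i$. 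Set $m_i:=y_i\langle\mathbf{x}_i,\mathbf{v}_t\rangle$ and $\beta:=\langle\hat{\mathbf{w}},\mathbf{v}_t\rangle_{\mathbf{\Sigma}}$. Note that $\beta=\frac1n\sum_i m_i$, because $\mathbf{\Sigma}\hat{\mathbf{w}}=\frac1n\tilde{\mathbf{X}}\tilde{\mathbf{X}}^T\hat{\mathbf{w}}=\boldsymbol\mu$. Also $\frac1n\sum_i m_i^2=\Vert\mathbf{v}_t\Vert_{\mathbf{\Sigma}}^2=1$. The bracket is therefore $\sum_i g_i(1-\beta m_i)$, and we have the moment identities $\bar m=\beta$ and $\overline{m^2}=1$.

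\textbf{Step 2: exploit monotonicity of $g$ (the main step).} Since $g_i=h(m_i)$ with $h(s)=-\ell'(\alpha_t s)$ decreasing in $s$ (for $\alpha_t>0$), the function $1-\beta s$ is also decreasing when $\beta>0$. The weights $g_i$ and the terms $(1-\beta m_i)$ are therefore comonotone. Chebyshev's sum inequality then gives
\[
\frac1n\sum_i g_i(1-\beta m_i)\ \ge\ \bar g\,(1-\beta\bar m)=\bar g\,(1-\beta^2).
\]
This yields nonnegativity immediately. The quantity $1-\beta^2$ equals $(\Vert\hat{\mathbf{w}}\Vert_{\mathbf{\Sigma}}^2\Vert\mathbf{v}\Vert_{\mathbf{\Sigma}}^2-\langle\hat{\mathbf{w}},\mathbf{v}\rangle_{\mathbf{\Sigma}}^2)$, since $\Vert\hat{\mathbf{w}}\Vert_{\mathbf{\Sigma}}^2=\frac1n\sum_i1=1$ under active margin. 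This is the $\mathbf{\Sigma}$-Cauchy--Schwarz defect, i.e.\ the squared $\mathbf{\Sigma}$-sine of the angle.

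\textbf{Step 3: convert to $\rho_t^\perp$ and bound $\bar g$.} Decompose $\hat{\mathbf{w}}=\rho_t\mathbf{u}+\rho_t^\perp\mathbf{e}$ with $\mathbf{u}=\mathbf{w}_t/\Vert\mathbf{w}_t\Vert$ and $\mathbf{e}\perp\mathbf{u}$ a unit vector, both in $\operatorname{span}(\mathbf{X})$. The defect then equals $(\rho_t^\perp)^2(\Vert\mathbf{e}\Vert_{\mathbf{\Sigma}}^2\Vert\mathbf{u}\Vert_{\mathbf{\Sigma}}^2-\langle\mathbf{e},\mathbf{u}\rangle_{\mathbf{\Sigma}}^2)/\Vert\mathbf{u}\Vert_{\mathbf{\Sigma}}^2$. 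The numerator is the Gram determinant of $\mathbf{\Sigma}$ on the span of the orthonormal pair $\mathbf{u},\mathbf{e}$, and is at least $\lambda_{\min}^2$. The denominator is at most $\lambda_{\max}$. Hence $1-\beta^2\ge\lambda_{\min}^2(\rho_t^\perp)^2/\lambda_{\max}$, which gives the quadratic scaling in $\rho_t^\perp$.

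Finally, bound $\bar g\ge -\ell'(\alpha_t\max_i m_i)$. Using $|m_i|\le\sqrt n$ or, more sharply, Lemma \ref{logit_bound} (which gives $m_i=O(1)$ perturbations of the aligned value), we get an explicit positive constant. The prefactor is $\alpha_t/\Vert\mathbf{w}_t\Vert_{\mathbf{\Sigma}}\ge\alpha_t/(\sqrt{\lambda_{\max}}\Vert\mathbf{w}_t\Vert)$. Combining these yields the stated explicit lower bound.

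The obstacle I expect is verifying the Chebyshev step cleanly: it requires $\beta>0$. On the positive-alignment branch, $\beta=\langle\hat{\mathbf{w}},\mathbf{w}_t\rangle_{\mathbf{\Sigma}}/\Vert\mathbf{w}_t\Vert_{\mathbf{\Sigma}}$, but $\rho_t>0$ is a Euclidean condition. I would try first to show $\beta\ge0$ from Lemma \ref{logit_bound} when $\rho_t^\perp$ is small. If $\beta\le0$, the bracket is trivially $\ge\sum_i g_i\cdot(1-|\beta|\,|m_i|)$, which requires separate handling; alternatively the lemma can be restricted to $\beta>0$.
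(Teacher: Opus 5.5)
Your proposal has a genuine gap at the Chebyshev step. Comonotonicity needs $\beta=\langle\hat{\mathbf w},\mathbf v_t\rangle_{\mathbf\Sigma}\ge0$, and this is not implied by $\rho_t>0$.

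Under Assumption \ref{Active_margin_data} you have $\beta\Vert\mathbf w_t\Vert_{\mathbf\Sigma}=\langle\boldsymbol\mu,\mathbf w_t\rangle$ with $\boldsymbol\mu=\mathbf\Sigma\hat{\mathbf w}$, whereas $\rho_t\Vert\mathbf w_t\Vert=\langle\hat{\mathbf w},\mathbf w_t\rangle$. In general $\boldsymbol\mu$ is not parallel to $\hat{\mathbf w}$. For example, $\tilde{\mathbf x}_1=\mathbf e_1$, $\tilde{\mathbf x}_2=10\,\mathbf e_2$ in $\mathbb R^3$ satisfy all three assumptions with $\hat{\mathbf w}=\mathbf e_1+\tfrac1{10}\mathbf e_2$, and $\mathbf w=\mathbf e_1-\tfrac12\mathbf e_2$ has $\rho>0>\beta$. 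For $\beta<0$ the two sequences are oppositely ordered, so Chebyshev reverses and you get neither nonnegativity nor the quadratic bound.

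Your fallbacks do not repair this. Lemma \ref{logit_bound} can certify $\beta>0$ only in a small-angle regime where all logits are positive. The bound is needed precisely at large angles: item (1) of Theorem \ref{logis_spike} uses it there, and Lemma \ref{First_Phase} sums it over every $t<T_0$. Restricting to $\beta>0$ would change the statement.

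The missing idea is the paper's pairwise symmetrization. In your notation it reads
\[
\frac1n\sum_i g_i(1-\beta m_i)=\frac{1}{2n^2}\sum_{i,j}(g_im_j-g_jm_i)(m_j-m_i)=\frac{1}{2n^2}\sum_{i,j}g_ig_j\bigl(F(m_j)-F(m_i)\bigr)(m_j-m_i),
\]
with $F(s)=s/g(s)=s(1+e^{\alpha_t s})$. Since $F'(s)=1+(1+\alpha_ts)e^{\alpha_ts}\ge1-e^{-2}>0$ for $\alpha_t>0$, every pair contributes nonnegatively, whatever the sign of $\beta$. Moreover $\frac{1}{2n^2}\sum_{i,j}(m_j-m_i)^2=1-\beta^2$ and $g_ig_j\ge(1+e^{\alpha_t\sqrt n})^{-2}$. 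Together these already give the quadratic bound $(1-e^{-2})(1+e^{\alpha_t\sqrt n})^{-2}(1-\beta^2)$ for every $\beta$, with an $n$-dependent constant.

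Where it applies ($\beta\ge0$), your Chebyshev step is cleaner than the paper's. It gives $\bar g(1-\beta^2)$, four times the paper's $\tfrac14\bar g(1-\beta^2)$. It needs no imported pairwise lemma and no sign condition on the logits. By contrast, the paper's weighted step $\frac{be^b-ae^a}{b-a}\ge e^{\max(a,b)}$ is equivalent, for $a<b$, to $a(e^b-e^a)\ge0$, so it needs the smaller logit to be nonnegative. All logits nonnegative already forces $\beta\ge0$. So in either route, a dimension-free constant for $\beta<0$ needs an extra estimate; only the sign conclusion is robust.

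Two further losses in your argument are easy to remove:
\begin{itemize}
\item In Step 3, replace the Gram-determinant bound $\lambda_{\min}^2/\lambda_{\max}$ by
$1-\beta^2=\Vert\hat{\mathbf w}-c\,\mathbf w_t\Vert_{\mathbf\Sigma}^2\ge\lambda_{\min}\Vert\hat{\mathbf w}-c\,\mathbf w_t\Vert^2\ge\lambda_{\min}(\rho_t^\perp)^2$.
Here $c\,\mathbf w_t$ is the $\mathbf\Sigma$-projection of $\hat{\mathbf w}$, and the last step holds because the Euclidean projection minimizes Euclidean distance to $\mathrm{span}\{\mathbf w_t\}$.
\item Instead of $|m_i|\le\sqrt n$, bound $g_i\ge(1+e^{\alpha_t|m_i|})^{-1}$. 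Then use convexity of $s\mapsto(1+e^{\alpha_ts})^{-1}$ on $[0,\infty)$ and $\frac1n\sum_i|m_i|\le1$ to get $\bar g\ge(1+e^{\alpha_t})^{-1}\ge e^{-\alpha_t}/2$. The downstream results need this $n$-free factor.
\end{itemize}
With both changes, on $\beta\ge0$ your route gives $\frac{\lambda_{\min}}{2}\frac{\alpha_te^{-\alpha_t}}{\Vert\mathbf w_t\Vert_{\mathbf\Sigma}}(\rho_t^\perp)^2$, which implies the appendix bound.
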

\begin{lemma}\label{Gradient_Norm_Bound}
  Let $\ell$ be $\ell_{log}$ and $\hat{\mathbf{w}}$ be the SVM solution as defined in Definition \ref{margin}. Suppose Assumptions \ref{Overparameterization}, \ref{Logistic_Setup}, and \ref{Active_margin_data} hold. Then the appendix proves matching upper and lower bounds on $\Vert \nabla_{\mathbf{w}} \mathcal{R}_t \Vert$, again in terms of $\alpha_t$, $\Vert \mathbf{w}_t \Vert_{\mathbf{\Sigma}}$, and $\rho_t^{\perp}$.
\end{lemma}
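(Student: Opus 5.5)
The plan is to use the active-margin structure to split the gradient into an ``aligned'' main term and a perturbation, and to bound each with the restricted eigenvalues $\lambda_{\min},\lambda_{\max}$.

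\textbf{Step 1: identities from Assumption \ref{Active_margin_data}.} The assumption gives $\tilde{\mathbf{X}}^T\hat{\mathbf{w}}=\mathbf{1}_n$. It follows that
\[
\mathbf{\Sigma}\hat{\mathbf{w}}=\tfrac1n\tilde{\mathbf{X}}\mathbf{1}_n=\boldsymbol{\mu},
\qquad
\Vert\hat{\mathbf{w}}\Vert_{\mathbf{\Sigma}}^2=\tfrac1n\mathbf{1}_n^T\mathbf{1}_n=1 .
\]
So at perfect alignment every logit $z_i:=\alpha_t y_i\langle\mathbf{x}_i,\mathbf{w}_t\rangle/\Vert\mathbf{w}_t\Vert_{\mathbf{\Sigma}}$ equals $\alpha_t$. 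Accordingly I write
\[
\boldsymbol{\ell'}(\mathbf{z})=\ell'(\alpha_t)\mathbf{1}_n+\big(\boldsymbol{\ell'}(\mathbf{z})-\ell'(\alpha_t)\mathbf{1}_n\big).
\]
Set $\mathbf{P}_t:=\mathbf{I}-\mathbf{\Sigma}\mathbf{w}_t\mathbf{w}_t^T/\Vert\mathbf{w}_t\Vert_{\mathbf{\Sigma}}^2$. Since $\tilde{\mathbf{X}}\mathbf{1}_n=n\mathbf{\Sigma}\hat{\mathbf{w}}$, the gradient formula becomes
\[
\nabla_{\mathbf{w}}\mathcal{R}_t=\frac{\alpha_t\,\ell'(\alpha_t)}{\Vert\mathbf{w}_t\Vert_{\mathbf{\Sigma}}}\,\mathbf{\Sigma}\mathbf{v}_t+\frac{\alpha_t}{n\Vert\mathbf{w}_t\Vert_{\mathbf{\Sigma}}}\,\mathbf{P}_t\tilde{\mathbf{X}}\big(\boldsymbol{\ell'}(\mathbf{z})-\ell'(\alpha_t)\mathbf{1}_n\big).
\]
Here $\mathbf{v}_t:=\hat{\mathbf{w}}-\frac{\langle\hat{\mathbf{w}},\mathbf{w}_t\rangle_{\mathbf{\Sigma}}}{\Vert\mathbf{w}_t\Vert_{\mathbf{\Sigma}}^2}\mathbf{w}_t$ is the $\mathbf{\Sigma}$-orthogonal residual of $\hat{\mathbf{w}}$ against $\mathbf{w}_t$. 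It lies in $\text{span}(\mathbf{X})$.

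\textbf{Step 2: the main term is of exact order $\rho_t^\perp$.}
\begin{itemize}
\item Upper bound. The $\mathbf{\Sigma}$-projection minimizes the $\mathbf{\Sigma}$-norm over the line spanned by $\mathbf{w}_t$, so comparing with the Euclidean residual gives $\Vert\mathbf{v}_t\Vert_{\mathbf{\Sigma}}\le\sqrt{\lambda_{\max}}\,\rho_t^\perp$.
\item Lower bound. Every vector of the form $\hat{\mathbf{w}}-c\mathbf{w}_t$ has Euclidean norm at least $\rho_t^\perp$, so $\Vert\mathbf{v}_t\Vert_{\mathbf{\Sigma}}\ge\sqrt{\lambda_{\min}}\,\rho_t^\perp$.
\end{itemize}
On $\text{span}(\mathbf{X})$ we also have $\sqrt{\lambda_{\min}}\Vert\mathbf{v}\Vert_{\mathbf{\Sigma}}\le\Vert\mathbf{\Sigma}\mathbf{v}\Vert\le\sqrt{\lambda_{\max}}\Vert\mathbf{v}\Vert_{\mathbf{\Sigma}}$. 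Combining these, the main term has norm between $\lambda_{\min}$ and $\lambda_{\max}$ times $\alpha_t|\ell'(\alpha_t)|\rho_t^\perp/\Vert\mathbf{w}_t\Vert_{\mathbf{\Sigma}}$.

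\textbf{Step 3: the remainder is a higher-order perturbation.} Lemma \ref{logit_bound} gives $|z_i-\alpha_t|\le C\alpha_t\rho_t^\perp$ on the branch $\rho_t>0$, with $C$ depending on $\gamma,\lambda_{\min},\lambda_{\max}$.
\begin{itemize}
\item Since $|\ell''|\le 1/4$, or more sharply $|\ell''(s)|\le|\ell'(s)|$ evaluated at the worst point $(1-C\rho_t^\perp)\alpha_t$ as in Lemma \ref{risk_upperbound_logit_reg}, the remainder vector has Euclidean norm $O(\sqrt n\,\alpha_t\rho_t^\perp)$ times that derivative factor.
\item The operator norms satisfy $\Vert\tilde{\mathbf{X}}\Vert=\sqrt{n\lambda_{\max}}$ and $\Vert\mathbf{P}_t\Vert\le\sqrt{\lambda_{\max}/\lambda_{\min}}$ on the relevant subspace.
\end{itemize}
Together these bound the remainder by a constant times $\alpha_t^2\rho_t^\perp|\ell'((1-C\rho_t^\perp)\alpha_t)|/\Vert\mathbf{w}_t\Vert_{\mathbf{\Sigma}}$. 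The upper bound of the lemma then follows by the triangle inequality.

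\textbf{Step 4 (main obstacle): the lower bound.} Here the remainder must not cancel the main term. The remainder is also linear in $\rho_t^\perp$ but carries an extra factor $\alpha_t$, so naive subtraction only works when $\alpha_t$ is small relative to $\lambda_{\min}$-type constants. That is exactly what the restriction $\alpha_0\le\frac13\log\lambda_{\max}$ in Theorem \ref{logis_spike} suggests is intended, so I would first try to state the lower bound under such a smallness condition on $\alpha_t$.

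If that proves too lossy, my fallback is Cauchy--Schwarz against the reference direction:
\[
\Vert\nabla_{\mathbf{w}}\mathcal{R}_t\Vert\ \ge\ -\langle\hat{\mathbf{w}},\nabla_{\mathbf{w}}\mathcal{R}_t\rangle/\Vert\hat{\mathbf{w}}\Vert,
\]
combined with Lemma \ref{inner_product_bound}. This yields a weaker, quadratic-in-$\rho_t^\perp$ lower bound, but one that holds with no condition on $\alpha_t$. I would present whichever form matches what Lemma \ref{Direction_Convegence_and_Divergence} needs in the divergence and convergence checks.
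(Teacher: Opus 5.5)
Your upper bound (Steps 1--3) is sound on the branch $\rho_t>0$. The lower bound, which is the half the logistic analysis actually relies on, is not established.

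\textbf{The primary route needs a hypothesis you don't have.} With your own constants, the subtraction in Step 4 closes only if roughly $\alpha_t\lesssim\lambda_{\min}^{3/2}/(C\lambda_{\max})$, where $C$ is your constant in $|z_i-\alpha_t|\le C\alpha_t\rho_t^{\perp}$. Nothing of this kind is available. The appendix version of the lemma assumes only $\alpha_t>0$. The condition $\alpha_0\le\frac13\log\lambda_{\max}$ is not a smallness condition at all: it grows with $\lambda_{\max}$, and the paper uses it only to absorb $e^{3\alpha_0/2}\le\sqrt{\lambda_{\max}}$. The bound is also invoked at $t=0$ in Lemma \ref{First_Phase}, where neither your smallness of $\alpha_0$ nor $\rho_0>0$ is assumed.

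\textbf{The fallback, as written, is too weak.} It yields only $\Vert\nabla_{\mathbf{w}}\mathcal{R}_t\Vert\ge\gamma\,\frac{\lambda_{\min}}{8}\frac{\alpha_t e^{-\alpha_t}}{\Vert\mathbf{w}_t\Vert_{\mathbf{\Sigma}}}(\rho_t^{\perp})^2$. This does not match the linear upper bound, and it fails exactly where the bound is needed. The divergence condition of Lemma \ref{Direction_Convegence_and_Divergence} has right-hand side $2\rho_t\rho_t^{\perp}/(\rho_t^2-(\rho_t^{\perp})^2)$, which is linear in $\rho_t^{\perp}$. A quadratic lower bound therefore can never certify divergence as $\rho_t^{\perp}\to 0$. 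Indeed, (\ref{main_text_spike_logistic}) is obtained precisely by cancelling one factor of $\rho_t^{\perp}$.

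\textbf{One observation fixes the fallback.} Since $\langle\mathbf{w}_t,\nabla_{\mathbf{w}}\mathcal{R}_t\rangle=0$ (Lemma \ref{Orthogonal}), you may test against the residual $\hat{\mathbf{w}}-\rho_t\mathbf{w}_t/\Vert\mathbf{w}_t\Vert$ instead of $\hat{\mathbf{w}}$. Its norm is exactly $\rho_t^{\perp}$ rather than $1/\gamma$. Cauchy--Schwarz and Lemma \ref{inner_product_bound} then give
\[
\Vert\nabla_{\mathbf{w}}\mathcal{R}_t\Vert\ \ge\ \frac{-\langle\hat{\mathbf{w}},\nabla_{\mathbf{w}}\mathcal{R}_t\rangle}{\rho_t^{\perp}}\ \ge\ \frac{\lambda_{\min}}{8}\,\frac{\alpha_t e^{-\alpha_t}}{\Vert\mathbf{w}_t\Vert_{\mathbf{\Sigma}}}\,\rho_t^{\perp}
\]
for every $\alpha_t>0$ (trivially if $\rho_t^{\perp}=0$). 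This is the paper's bound up to a factor $2$.

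\textbf{How the paper proves it.} The paper argues directly. It commutes the projector, $(\mathbf{I}-\mathbf{\Sigma}\mathbf{w}_t\mathbf{w}_t^T/\Vert\mathbf{w}_t\Vert_{\mathbf{\Sigma}}^2)\tilde{\mathbf{X}}=\tilde{\mathbf{X}}(\mathbf{I}-\mathbf{m}\mathbf{m}^T/\Vert\mathbf{m}\Vert^2)$ with $\mathbf{m}=\tilde{\mathbf{X}}^T\mathbf{w}_t$. It then uses $\Vert\tilde{\mathbf{X}}\mathbf{u}\Vert\ge\sqrt{n\lambda_{\min}}\Vert\mathbf{u}\Vert$, which holds because $\tilde{\mathbf{X}}$ has rank $n$. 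Finally it applies Lagrange's identity, the monotonicity of $|\ell^{\prime}|$ in the margins, and Lemma C.1 of \citet{implicit_bias_BN}. Together these show that $\boldsymbol{\ell^{\prime}}$ cannot be nearly parallel to $\mathbf{m}$ unless $\mathbf{m}$ is nearly parallel to $\mathbf{1}=\tilde{\mathbf{X}}^T\hat{\mathbf{w}}$.

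\textbf{The cancellation you feared does not occur.} Because $\mathbf{P}_t\mathbf{\Sigma}\mathbf{w}_t=0$, we have $\mathbf{P}_t\tilde{\mathbf{X}}\mathbf{z}=0$. So to first order your remainder is the positive multiple $-\alpha_t\ell''(\alpha_t)/\ell^{\prime}(\alpha_t)$ of the main term: it reinforces rather than competes.

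The same commutation identity is also how the paper's upper bound avoids your $\sqrt{\lambda_{\max}/\lambda_{\min}}$ loss. Centering at $\cos_{\mathbf{\Sigma}}\angle(\mathbf{w}_t,\hat{\mathbf{w}})\,\alpha_t$ instead of $\alpha_t$ is what removes your restriction $\rho_t>0$.
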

Lemmas \ref{Gradient_Norm_Bound} and \ref{inner_product_bound} provide the second ingredient by translating the abstract directional conditions of Lemma \ref{Direction_Convegence_and_Divergence} into explicit finite-horizon inequalities. The only reader-facing threshold we keep in the main text is the divergence criterion
\begin{equation}\label{main_text_spike_logistic}
\begin{aligned}
  \frac{\lambda_{\min}}{4}
  \cdot
  \frac{\alpha_t}{e^{\alpha_t}}
  \cdot 
  \frac{\eta {\rho}_t}{
    \Vert \mathbf{w}_t \Vert_{\mathbf{\Sigma}}
    \Vert \mathbf{w}_t \Vert
  }
  \ge
  \frac{2}{
    1 - \left({\rho}_t^{\perp} / \rho_t\right)^2
  }
\end{aligned}
\end{equation}
When both the margin $\gamma$ and the direction error $\rho_t^{\perp}$ are small, Lemma \ref{logit_bound} turns (\ref{main_text_spike_logistic}) into an effective threshold of order $1/\gamma^2$. This is why smaller margins make the directional transition easier to satisfy within the finite-horizon regime of Theorem \ref{logis_spike}. The corresponding convergence inequality, derived in the appendix from the same two gradient bounds, yields the explicit exit threshold in item (3): once a positive-alignment rising branch grows past that threshold, the next iterate must leave the branch.

The third ingredient is entry into the small-ratio regime itself.
\begin{lemma}\label{First_Phase}
  Let $\ell$ be $\ell_{log}$ and $\hat{\mathbf{w}}$ be the SVM solution as defined in Definition \ref{margin}. Suppose Assumptions \ref{Overparameterization}, \ref{Logistic_Setup}, and \ref{Active_margin_data} hold and $\lambda_{\max} > 1$. Then, under the explicit parameter conditions recorded in the appendix restatement, there exists a finite horizon $T_0$ and an iterate $t_0<T_0$ such that $\left(\rho_{t_0}^{\perp} / \rho_{t_0}\right)^2 \le \tan_{\min}^2$, while $\alpha_t$ stays within a controlled constant-factor window throughout $[0,T_0)$.
\end{lemma}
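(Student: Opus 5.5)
The plan is to run an induction over $t\in[0,T_0)$ that carries three invariants together: (a) $\rho_t>0$; (b) $\alpha_t$ stays in a window $[\alpha_0, c\,\alpha_0]$ with $c$ a fixed constant; (c) $\Vert \mathbf{w}_t\Vert^2\le 2\Vert\mathbf{w}_0\Vert^2$. Under these invariants, the convergence condition of Lemma \ref{Direction_Convegence_and_Divergence} should hold at every step where $(\rho_t^\perp/\rho_t)^2>\tan_{\min}^2$. It should moreover hold with a quantitative margin, so that $(\rho_t^\perp)^2$ contracts geometrically and the small-ratio regime is reached before the horizon $T_0$.

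\textbf{Step 1: norm growth and $\alpha$-control.} Differentiating shows $\mathbf{w}_t^T\nabla_{\mathbf{w}}\mathcal{R}_t=0$, because the $\mathbf{\Sigma}$-projection kills the $\mathbf{w}$ component. Hence
\[
\Vert\mathbf{w}_{t+1}\Vert^2=\Vert\mathbf{w}_t\Vert^2+\eta^2\Vert\nabla_{\mathbf{w}}\mathcal{R}_t\Vert^2 .
\]
The upper bound of Lemma \ref{Gradient_Norm_Bound} gives $\Vert\nabla_{\mathbf{w}}\mathcal{R}_t\Vert\lesssim \alpha_t/\Vert\mathbf{w}_t\Vert_{\mathbf{\Sigma}}$, up to factors in $\lambda_{\max}$. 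Summing over $T_0$ steps, invariant (c) follows once $T_0\,\eta^2\alpha_{\max}^2/(\lambda_{\min}\Vert\mathbf{w}_0\Vert^4)\le 1$, and I would choose $T_0$ to satisfy this. For $\alpha$, note that $|\ell'_{log}|\le 1$ and that the normalized logits have mean-square norm $1$. Therefore $|\partial\mathcal{R}_t/\partial\alpha|\le 1$, and the derivative is negative on the positive-alignment branch by Lemma \ref{logit_bound}, since all signed logits are positive when $\rho_t^\perp$ is small relative to $\gamma$. So $\alpha_t$ is nondecreasing and grows by at most $\eta_\alpha T_0$. The hypothesis $\eta_\alpha\le C_\alpha\Vert\mathbf{w}_0\Vert^2/(\eta\gamma)$ is exactly what should make $\eta_\alpha T_0=O(\alpha_0)$, which keeps $\alpha_t$ in the window; the condition $\alpha_0\le\frac13\log\lambda_{\max}$ ensures $\alpha_te^{-\alpha_t}$ is bounded below there.

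\textbf{Step 2: verify the convergence condition with slack.} Plug the lower bound of Lemma \ref{inner_product_bound}, $-\langle\hat{\mathbf{w}},\nabla\rangle\gtrsim \alpha_te^{-\alpha_t}(\rho_t^\perp)^2/\Vert\mathbf{w}_t\Vert_{\mathbf{\Sigma}}$, and the gradient upper bound into the condition $\eta\rho_t\Vert\nabla\Vert^2/\Vert\mathbf{w}_t\Vert\le -2\langle\hat{\mathbf{w}},\nabla\rangle$. It reduces to a bound of the form
\[
\frac{\eta}{\Vert\mathbf{w}_t\Vert^2}\lesssim \gamma^{-1}\frac{(\rho_t^\perp)^2}{\rho_t}\,\frac{\lambda_{\min}}{\lambda_{\max}^2}.
\]
This is satisfiable by the upper hypothesis $\eta/\Vert\mathbf{w}_0\Vert^2\le C_{\mathrm{high}}\gamma^{-1}$ as long as $(\rho_t^\perp/\rho_t)^2\ge\tan_{\min}^2$, which is where the threshold $\gamma^2\lambda_{\min}/(8\lambda_{\max}^2)$ should come from. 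I would then not use the lemma only as a sign statement. Instead, I would expand $\rho_{t+1}=(\langle\hat{\mathbf{w}},\mathbf{w}_t\rangle-\eta\langle\hat{\mathbf{w}},\nabla\rangle)/\Vert\mathbf{w}_{t+1}\Vert$ exactly. Using the slack (taking the left side at most half the right side), this should give
\[
(\rho_{t+1}^\perp)^2\le(\rho_t^\perp)^2\Bigl(1-c\,\frac{\eta\,\alpha_te^{-\alpha_t}\rho_t}{\Vert\mathbf{w}_t\Vert\,\Vert\mathbf{w}_t\Vert_{\mathbf{\Sigma}}}\Bigr).
\]
Since $\rho_t^\perp$ decreases, $\rho_t$ increases, so $\rho_t>0$ is preserved and invariant (a) follows.

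\textbf{Step 3: count iterations.} With invariants (b) and (c), the contraction factor is at most $1-c'\,\eta\gamma^{-1}\lambda_{\max}^{-1/2}/\Vert\mathbf{w}_0\Vert^2$. The lower hypothesis $\eta/\Vert\mathbf{w}_0\Vert^2\ge C_{\mathrm{low}}\gamma$ makes this a uniform rate $\kappa>0$. The first time $t_0$ at which $(\rho_{t_0}^\perp/\rho_{t_0})^2\le\tan^2_{\min}$ is then at most $\kappa^{-1}\log\bigl((\rho_0^\perp/\rho_0)^2/\tan_{\min}^2\bigr)$. Choosing $T_0$ above this count closes the argument, and gives $t_0<T_0$.

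The main obstacle is compatibility. $T_0$ must be large enough to cover the geometric-decay time, yet small enough that invariants (b) and (c) survive. Similarly, the learning rate must be large enough for a usable contraction rate and small enough for the convergence condition. If the constants clash, I would first try to refine Step 1 by bounding the cumulative norm growth through $\sum_t\Vert\nabla_t\Vert^2\lesssim\sum_t(\rho_t^\perp)^2$, which is a geometric series, instead of through $T_0$ times a worst-case step. This makes the growth of $\Vert\mathbf{w}_t\Vert^2$ independent of the horizon.
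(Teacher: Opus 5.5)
There is a genuine gap. Your argument is built for a \emph{small} effective learning rate, but the lemma's hypotheses force a \emph{large} one.

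The appendix conditions impose only a lower bound $\eta/\Vert\mathbf{w}_0\Vert^2\ge C_1(1+\tan_{\min}^{-2})\gamma$. There is no upper bound on $\eta$ in the lemma; the $C_{\mathrm{high}}\gamma^{-1}$ cap you invoke belongs only to the theorem's post-$t_0$ analysis. Also, $\tan_{\min}>0$ is arbitrary in the lemma, so the threshold cannot come from that cap. Moreover, $C_1$ is a very large constant containing the factor $36\frac{\lambda_{\max}^2}{\lambda_{\min}^2}e^{2\alpha_0}(\rho_0^{\perp,\mathbf{\Sigma}})^{-2}$. The lower bound of Lemma~\ref{Gradient_Norm_Bound} at $t=0$ gives
\[
\Vert\mathbf{w}_1\Vert^2\ \ge\ \frac{\eta^2}{16}\,\frac{\lambda_{\min}}{\lambda_{\max}}\,\frac{\alpha_0^2e^{-2\alpha_0}\bigl(\rho_0^{\perp,\mathbf{\Sigma}}\bigr)^2}{\Vert\mathbf{w}_0\Vert^2}.
\]
Under the hypothesis this exceeds $\Vert\mathbf{w}_0\Vert^2$ by orders of magnitude, so invariant (c) already fails at $t=1$. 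Your fallback of refining the cumulative-growth upper estimate cannot repair this, because it is a lower bound.

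Step~2 fails for the same reason. Fed with Lemmas~\ref{inner_product_bound} and \ref{Gradient_Norm_Bound}, the convergence condition of Lemma~\ref{Direction_Convegence_and_Divergence} needs an \emph{upper} bound on $\eta/\Vert\mathbf{w}_t\Vert^2$, of order $(\rho_t^\perp)^2/\rho_t\approx\tan_{\min}^2/\gamma$ near the target. But with $\Vert\mathbf{w}_t\Vert\approx\Vert\mathbf{w}_0\Vert$ the hypothesis gives $\eta/\Vert\mathbf{w}_t\Vert^2\gtrsim C_1\gamma/\tan_{\min}^2$, and for $\tan_{\min}\le 1$ the two are incompatible. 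So the compatibility problem you anticipate is structural, not a matter of constants. A symptom: (c) forces $T_0\lesssim\Vert\mathbf{w}_0\Vert^4/\eta^2$. Yet the $\eta_\alpha$ hypothesis yields $\eta_\alpha T_0=O(\alpha_0)$ only if $T_0\propto\eta\gamma/\Vert\mathbf{w}_0\Vert^2$, which \emph{increases} with $\eta$; this is the paper's $T_0$.

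A smaller issue: your claim that $\alpha_t$ is nondecreasing needs all signed logits positive, i.e.\ $\rho_t^\perp$ already small, which you do not have early on. The paper instead uses $|\alpha_{t+1}-\alpha_t|\le\eta_\alpha$ with $\eta_\alpha\le\alpha_0/(2T_0)$, which gives $\alpha_t\in[\alpha_0/2,\,3\alpha_0/2]$.

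The missing idea is to give up per-step monotonicity and let the large first step work for you. Lemma~\ref{inner_product_bound} holds for every step size once $\alpha_t>0$. Hence $\langle\mathbf{w}_t,\hat{\mathbf{w}}\rangle$ is nondecreasing, with increments at least $c\,\eta(\rho_t^\perp)^2/\Vert\mathbf{w}_t\Vert$, using the $\alpha$-window and $\Vert\cdot\Vert_{\mathbf{\Sigma}}\le\sqrt{\lambda_{\max}}\Vert\cdot\Vert$. Telescoping and $|\langle\mathbf{w},\hat{\mathbf{w}}\rangle|\le\Vert\mathbf{w}\Vert/\gamma$ then give
\[
\min_{t<T_0}(\rho_t^\perp)^2\ \le\ \frac{1}{T_0}\sum_{t<T_0}(\rho_t^\perp)^2\ \lesssim\ \frac{\Vert\mathbf{w}_{T_0}\Vert^2}{T_0\,\eta\,\gamma}.
\]
Now the huge $\Vert\mathbf{w}_1\Vert^2\gtrsim\eta^2/\Vert\mathbf{w}_0\Vert^2$ caps every later increment: $\eta^2\Vert\nabla_{\mathbf{w}}\mathcal{R}_\tau\Vert^2\lesssim\eta^2\alpha_0^2/\Vert\mathbf{w}_1\Vert^2\lesssim\Vert\mathbf{w}_0\Vert^2$. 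So the right-hand side is $\lesssim\Vert\mathbf{w}_1\Vert^2/(T_0\eta\gamma)+\Vert\mathbf{w}_0\Vert^2/(\eta\gamma)$. This falls below $\tan_{\min}^2/\bigl(\gamma^2(1+\tan_{\min}^2)\bigr)$ precisely because $\eta/\Vert\mathbf{w}_0\Vert^2$ and $T_0$ are as large as the lemma demands. The argument yields only the existence of some $t_0<T_0$ (the minimum is below the average), which is all the lemma claims.
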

Lemma \ref{First_Phase} supplies the final step: it shows that the theorem's parameter regime is strong enough to force entry into the small-ratio window before finite-horizon control is lost. This yields item (2) of Theorem \ref{logis_spike}. Item (3) then follows by combining the directional transition condition (\ref{main_text_spike_logistic}) with Lemma \ref{Direction_Convegence_and_Divergence} along the positive-alignment branch. If Assumption \ref{Non_degenerate_data} is also imposed, Lemma \ref{risk_lowerbound_logit_reg} gives an optional lower bound on the logistic loss, but the theorem itself remains directional.

\section{Qualitative Mechanism Illustration}\label{sec:experiment}
\begin{figure*}[t]
  \centerline{\includegraphics[width=0.95\textwidth]{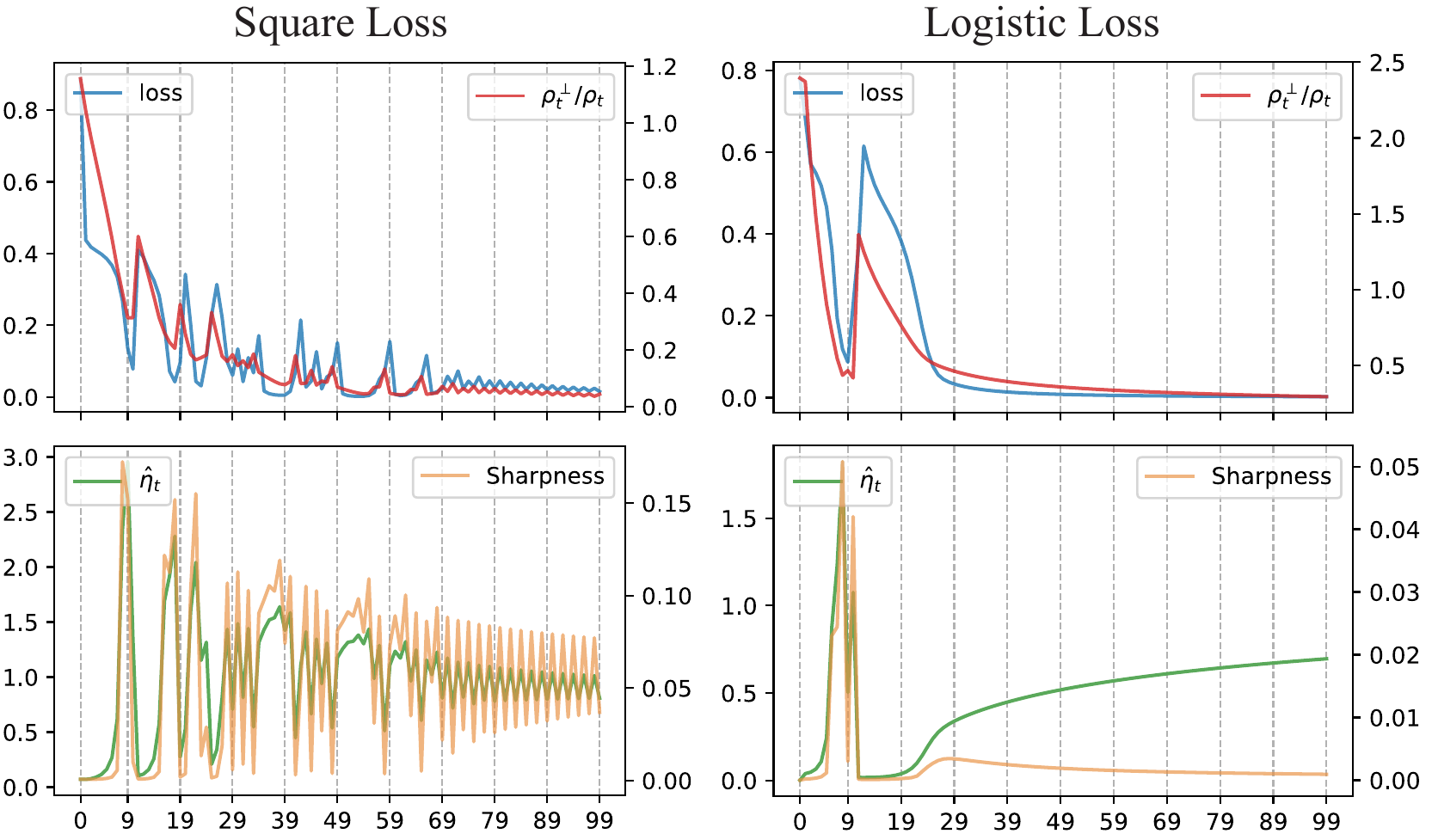}}
  \caption{Synthetic mechanism illustrations for square loss (left column) and logistic loss (right column). Each column reports the loss trajectory together with the corresponding effective-learning-rate and sharpness trends over full-batch gradient-descent iterations. The square-loss panel illustrates the theorem-backed delayed-rising-edge mechanism in the whitened linear setting together with its square-loss interpretation. The logistic panel is included only as a qualitative directional illustration in a narrow stylized regime; it is not evidence of a theorem-backed generic logistic spike claim.}
  \label{Fig2}
\end{figure*}
To illustrate the directional mechanism highlighted by the theory, we run two synthetic full-batch experiments on deliberately ill-conditioned data. Concretely, we slice a Hilbert matrix, apply random rotations, and add Gaussian noise to obtain a small-margin, poorly conditioned instance. The dataset contains $10$ samples with feature dimension $20$, matching the overparameterized setting, and we train a batch-normalized linear model with square loss and logistic loss.

As shown in Figure \ref{Fig2}, the synthetic trajectories are qualitatively consistent with the directional mechanism analyzed in the main text, but the interpretation is intentionally asymmetric. In the square-loss panel, the visible delayed square-loss increase coincides with rapid growth in $\rho_t^{\perp} / \rho_t$ and with the effective-learning-rate mechanism analyzed in Sections \ref{Main_Results} and \ref{ProofOverview}. In the logistic panel, we use the same diagnostics only to illustrate that a related directional transition can be seen in one stylized instance; this panel should not be read as theorem-level evidence for a generic logistic spike claim. We also record the effective learning rate $\hat{\eta}_t = \alpha_t \cdot \rho_t / \Vert \mathbf{w}_t \Vert_{\mathbf{\Sigma}}^2$, whose increase and later decrease track the directional mechanism discussed in Section \ref{ProofOverview}. The sharpness curves are included only as auxiliary diagnostics rather than as theorem-level evidence.

These experiments use full-batch gradient descent on synthetic data and can be run without specialized hardware, but the current draft still lacks a public code package, fixed random seeds, and a full hyperparameter table. This section should therefore be read only as a qualitative mechanism illustration in a stylized setting, not as the main scientific support for the paper.

\section{Conclusion}

This paper isolates a concrete delayed-instability mechanism in batch-normalized linear models. In the whitened square-loss regime, batch normalization can induce a delayed rising edge through directional divergence, and that rising edge self-stabilizes rather than causing immediate runaway behavior. Via the square-loss decomposition, this yields the paper's delayed-spike interpretation in the whitened regime. The logistic-regression analysis is weaker: under additional restrictive structural assumptions, it yields only a finite-horizon directional precursor together with an explicit exit threshold on a positive-alignment branch. Relative to prior work on batch-normalization auto-rate tuning, asymptotic implicit bias, and large-step logistic-regression oscillations, our contribution is therefore not broader generality but a sharper mechanism-level picture in a tractable normalized setting. The strongest theorem here relies essentially on the whitened linear regime, and it remains open which parts of that picture survive without whitening or in nonlinear architectures.

\begin{appendices}

\section{Proof of Lemma \ref{Direction_Convegence_and_Divergence}}\label{Proof_of_Lemma_direction}

We first prove a directional lemma for scale-invariant normalized parameterizations. 
\begin{lemma}\label{Orthogonal}
  Let $\mathcal{R} \left(\mathbf{w}\right): \mathbb{R}^{d} \rightarrow \mathbb{R}$.
  Suppose that $\mathcal{R} \left(\mathbf{w}\right) = \mathcal{R} \left(k \cdot \mathbf{w}\right)$ for every $k > 0$.
  Then $\langle \mathbf{w}, \nabla_\mathbf{w} \mathcal{R} \left( \mathbf{w} \right)\rangle = 0$.
\end{lemma}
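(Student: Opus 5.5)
The plan is to differentiate the scale-invariance identity along the ray through $\mathbf{w}$, an Euler-type argument for degree-zero homogeneous functions. We assume, as is implicit in the statement, that $\mathcal{R}$ is differentiable at $\mathbf{w}$. In the application to Lemma \ref{Direction_Convegence_and_Divergence} this holds on $\mathcal{X}$, because $\Vert \mathbf{w} \Vert_{\mathbf{\Sigma}} > 0$ there.

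Fix $\mathbf{w}$ and define the scalar function $g(k) := \mathcal{R}(k\mathbf{w})$ for $k > 0$. By hypothesis $g(k) = \mathcal{R}(\mathbf{w})$ for every $k > 0$, so $g$ is constant on the open interval $(0,\infty)$. Hence $g'(1) = 0$.

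On the other hand, the chain rule applied to the differentiable map $k \mapsto k\mathbf{w}$, whose derivative is $\mathbf{w}$, gives
\[
g'(k) = \langle \nabla_{\mathbf{w}} \mathcal{R}(k\mathbf{w}), \mathbf{w} \rangle .
\]
Evaluating at $k = 1$ yields $\langle \mathbf{w}, \nabla_{\mathbf{w}} \mathcal{R}(\mathbf{w}) \rangle = g'(1) = 0$, which is the claim.

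The only delicate point is that $k=1$ is an interior point of the domain $(0,\infty)$ of $g$, so the derivative there is genuinely two-sided and the chain rule applies; this is immediate. For the batch-normalized risk one can also check the claim directly from the explicit gradient formula in Section \ref{ProblemSetup}. Indeed,
\[
\mathbf{w}^T\left(\mathbf{I} - \frac{\mathbf{\Sigma}\mathbf{w}\mathbf{w}^T}{\Vert \mathbf{w} \Vert_{\mathbf{\Sigma}}^2}\right) = \mathbf{w}^T - \frac{\Vert \mathbf{w} \Vert_{\mathbf{\Sigma}}^2}{\Vert \mathbf{w} \Vert_{\mathbf{\Sigma}}^2}\,\mathbf{w}^T = 0,
\]
so $\mathbf{w}$ is orthogonal to the gradient. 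This serves as a consistency check on the general argument.
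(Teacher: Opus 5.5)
Your proposal is correct and follows essentially the same route as the paper, which also differentiates the identity $\mathcal{R}(\mathbf{w}) = \mathcal{R}(k\mathbf{w})$ in $k$ and sets $k=1$. Your version states the differentiability assumption explicitly and only needs it at $\mathbf{w}$ rather than along the whole ray. The direct check against the explicit BN gradient formula is a correct extra.
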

\begin{proof}
  We have
  $\forall k > 0, \mathcal{R}\left(\mathbf{w}\right) = \mathcal{R}\left(k \mathbf{w}\right)$. 
  Take derivatives with respect to $k$ on both sides to obtain
  $
    \left\langle 
       \nabla_{\mathbf{w}} \mathcal{R}\left(k \cdot \mathbf{w}\right),
      \mathbf{w}
    \right\rangle =  0.
  $
  By setting $k = 1$, we prove that $
  \left\langle 
      \nabla_{\mathbf{w}} \mathcal{R}\left(\mathbf{w}\right), \mathbf{w}
  \right\rangle = 0$. 
\end{proof}

\newtheorem*{restate_Direction_Convegence_and_Divergence}{Lemma \ref{Direction_Convegence_and_Divergence} (Directional Convergence and Divergence)}
\definecolor{shadecolor}{rgb}{0.92,0.92,0.92}
\begin{shaded}
  \begin{restate_Direction_Convegence_and_Divergence}
    Suppose there exists a reference direction $\hat{\mathbf{w}}$. 
  Consider gradient descent (\ref{gradient_updata}) on an objective function 
  $\mathcal{R} \left(\mathbf{w}, \alpha\right)$, where $\mathbf{w}$
  is parameterized by normalization and $\alpha$ represents the scaling factor of the normalization.
  Assume further that the objective is scale-invariant in $\mathbf{w}$, namely
  \[
    \mathcal{R}(k \mathbf{w}, \alpha) = \mathcal{R}(\mathbf{w}, \alpha),
    \qquad \forall k > 0.
  \]
  We have the following direction convergence condition: 
  if there exists a $t \ge 0$ such that 
  \begin{equation*}
  \begin{aligned}
  \rho_t > 0 \text{ and }\ 
    \frac{\eta {\rho}_{t}}{\Vert \mathbf{w}_{t} \Vert}
    \left\Vert
      \nabla_{\mathbf{w}} \mathcal{R}_t
    \right\Vert^2
    \le 
    - 2
    \left\langle 
      \hat{\mathbf{w}},
      \nabla_{\mathbf{w}} \mathcal{R}_t
    \right\rangle,
  \end{aligned}
  \end{equation*}
  it holds that $\left( {\rho}_{t+1}^{\perp} \right)^2 \le \left( {\rho}_{t}^{\perp}\right)^2$, 
  and the following direction divergence condition: 
  if there exists a $t \ge 0$ such that 
  \begin{equation*}
  \begin{aligned}
  0 < \rho_t^{\perp} / \rho_t \le 1, \ \ \alpha_t > 0
  \ \ \text{and} \ \ 
    \frac{\eta }{\Vert \mathbf{w}_t \Vert} 
    \left\Vert 
      \nabla_{\mathbf{w}} \mathcal{R}_t
    \right\Vert
    \ge
    \frac{2  {\rho}_t {\rho}_t^{\perp}}{
      {\rho}_t^2 
      - \left({\rho}_t^{\perp}\right)^2
    }
  \end{aligned},
  \end{equation*}
  it holds that $\left({\rho}_{t+1}^{\perp}\right)^2 \ge \left({\rho}_{t}^{\perp}\right)^2$. 
  \end{restate_Direction_Convegence_and_Divergence}
\end{shaded}
\begin{proof}
  We prove the first result. 
  \begin{equation*}
  \begin{aligned}
    \left(
      {\rho}_{t+1}^{\perp}
    \right)^2 
    & = 
    \left\Vert 
      \hat{\mathbf{w}} - 
      \frac{{\rho}_{t+1}}{\Vert \mathbf{w}_{t+1} \Vert}
      \mathbf{w}_{t+1}
    \right\Vert^2
    \\ & \le 
    \left\Vert 
      \hat{\mathbf{w}} - 
      \frac{{\rho}_{t}}{\Vert \mathbf{w}_{t} \Vert}
      \mathbf{w}_{t+1}
    \right\Vert^2
    \\ & =
    \left\Vert 
      \hat{\mathbf{w}} - \frac{{\rho}_{t}}{\Vert \mathbf{w}_{t} \Vert} \mathbf{w}_{t}
      +
      \frac{\eta {\rho}_{t}}{\Vert \mathbf{w}_{t} \Vert}
      \nabla_{\mathbf{w}} \mathcal{R} \left(\mathbf{w}_t, \alpha_t\right)
    \right\Vert^2
    \\ & =
    \left(
      {\rho}_{t}^{\perp}
    \right)^2 
    +
    \frac{\eta^2 {\rho}_{t}^2}{\Vert \mathbf{w}_{t} \Vert^2}
    \left\Vert
      \nabla_{\mathbf{w}} \mathcal{R} \left(\mathbf{w}_t, \alpha_t\right)
    \right\Vert^2
    +
    \frac{2 \eta {\rho}_{t}}{\Vert \mathbf{w}_{t} \Vert}
    \left(
      \hat{\mathbf{w}} - 
      \frac{{\rho}_{t}}{\Vert \mathbf{w}_{t} \Vert} 
      \mathbf{w}_{t}
    \right)^T
    \nabla_{\mathbf{w}} \mathcal{R} \left(\mathbf{w}_t, \alpha_t\right)
\end{aligned}
\end{equation*}
where 
the inequality is since $\mathbf{w} \cdot \langle \mathbf{w}, \hat{\mathbf{w}}\rangle / \Vert \mathbf{w} \Vert^2$ 
is the projection of $\hat{\mathbf{w}}$ onto $\text{span}\{\mathbf{w}\}$ under the
Euclidean inner product $\langle\cdot, \cdot\rangle$
  and 
the last equation is because $\langle \mathbf{w}_t, \nabla_{\mathbf{w}} \mathcal{R} \left(\mathbf{w}_t, \alpha_t\right) \rangle = 0$ by Lemma \ref{Orthogonal}. And then, we have
\begin{equation*}
\begin{aligned}
\left(
      {\rho}_{t+1}^{\perp}
    \right)^2 
    & \le
    \left(
      {\rho}_{t}^{\perp}
    \right)^2 
    +
    \frac{\eta^2 {\rho}_{t}^2}{\Vert \mathbf{w}_{t} \Vert^2}
    \left\Vert
      \nabla_{\mathbf{w}} \mathcal{R} \left(\mathbf{w}_t, \alpha_t\right)
    \right\Vert^2
    +
    \frac{2 \eta {\rho}_{t}}{\Vert \mathbf{w}_{t} \Vert}
    \left\langle 
    \hat{\mathbf{w}},
    \nabla_{\mathbf{w}} \mathcal{R} \left(\mathbf{w}_t, \alpha_t\right)
    \right\rangle
    \\ & =
    \left(
      {\rho}_{t}^{\perp}
    \right)^2 
    +
    \frac{\eta {\rho}_{t}}{\Vert \mathbf{w}_{t} \Vert}
    \left(
      \frac{\eta {\rho}_{t}}{\Vert \mathbf{w}_{t} \Vert}
      \left\Vert
        \nabla_{\mathbf{w}} \mathcal{R} \left(\mathbf{w}_t, \alpha_t\right)
      \right\Vert^2
      +
      2
      \left\langle 
      \hat{\mathbf{w}},
      \nabla_{\mathbf{w}} \mathcal{R} \left(\mathbf{w}_t, \alpha_t\right)
      \right\rangle
    \right)
    \\ & \le 
    \left(
      {\rho}_{t}^{\perp}
    \right)^2,
  \end{aligned}
  \end{equation*}
  Next, we prove the second result. 
  Since $\left({\rho}_{t}^{\perp}\right)^2 + {\rho}_{t}^2 = \Vert \hat{\mathbf{w}}\Vert^2$, 
  we check if $\left(
    {\rho}_{t+1}^{\perp}
  \right)^2 \ge \left(
    {\rho}_{t}^{\perp}
  \right)^2$ by verifying if $\left(
    {\rho}_{t+1}
  \right)^2 - \left(
    {\rho}_{t}
  \right)^2 \le 0$.
  To simplify the calculation, we decompose $\hat{\mathbf{w}}$ orthogonally into two components: one in the $\text{span}\left\{\mathbf{w}_t\right\}$ and the other
  in a direction orthogonal to it: 
  \begin{equation*}
  \begin{aligned}
    \mathbf{e}_1 := \mathbf{w}_t / \Vert \mathbf{w}_t \Vert; \quad 
    \mathbf{e}_2 := \left(\hat{\mathbf{w}} - \rho_t \mathbf{e}_1 \right) / \rho_t^{\perp}.
  \end{aligned}
  \end{equation*}
  It is easy to see that
  \begin{equation*}
  \begin{aligned}
     \langle \mathbf{e}_1, \mathbf{e}_2 \rangle = 0; \quad 
     \mathbf{w}_t = \Vert \mathbf{w}_t \Vert \mathbf{e}_1; \quad
    \hat{\mathbf{w}} = \rho_t \mathbf{e}_1 + \rho_t^{\perp} \mathbf{e}_2.
  \end{aligned}
  \end{equation*}
  Recall that 
  \begin{equation*}
  \begin{aligned}
    \mathbf{w}_{t+1} = \mathbf{w}_{t} - \eta 
    \nabla_{\mathbf{w}} \mathcal{R}_t
    = \Vert \mathbf{w}_t \Vert \mathbf{e}_1 - \eta 
    \nabla_{\mathbf{w}} \mathcal{R}_t
  \end{aligned}
  \end{equation*}
  By Lemma \ref{Orthogonal}, we know $\langle \mathbf{w}_t, \nabla_{\mathbf{w}} \mathcal{R}_t \rangle  = 0$.
  We then calculate $\langle \mathbf{w}_{t+1}, \hat{\mathbf{w}} \rangle^2$ and $\Vert \mathbf{w}_{t+1} \Vert^2$. 
  \begin{equation*}
  \begin{aligned}
    \langle \mathbf{w}_{t+1}, \hat{\mathbf{w}} \rangle^2 
    & = 
    \langle 
      \Vert \mathbf{w}_t \Vert \mathbf{e}_1 - \eta \nabla_{\mathbf{w}} \mathcal{R}_t, 
      \rho_t \mathbf{e}_1 + \rho_t^{\perp} \mathbf{e}_2
    \rangle^2 
    \\ & = 
    \left(
      \rho_t \Vert \mathbf{w}_t \Vert
      -
      \eta \rho_t^{\perp} \left\langle \nabla_{\mathbf{w}} \mathcal{R}_t, \mathbf{e}_2\right\rangle
    \right)^2
    \\ & = 
    \rho_t^2 \Vert \mathbf{w}_t \Vert^2
    +
    \eta^2 \left(\rho_t^{\perp}\right)^2 \left\langle \nabla_{\mathbf{w}} \mathcal{R}_t, \mathbf{e}_2 \right\rangle^2
    - 2 \eta \rho_t^{\perp}
    \rho_t \Vert \mathbf{w}_t \Vert
    \left\langle \nabla_{\mathbf{w}} \mathcal{R}_t, \mathbf{e}_2\right\rangle;
    \\
    \Vert \mathbf{w}_{t+1} \Vert^2 
    & = 
    \Vert \mathbf{w}_{t} \Vert^2
    + 
    \eta^2 \left\Vert \nabla_{\mathbf{w}} \mathcal{R}_t \right\Vert^2 
    - 
    2 \eta \left\langle \mathbf{w}_{t}, \nabla_{\mathbf{w}} \mathcal{R}_t\right\rangle 
    \\ & = 
    \Vert \mathbf{w}_{t} \Vert^2
    + 
    \eta^2 \left\Vert \nabla_{\mathbf{w}} \mathcal{R}_t \right\Vert^2,
  \end{aligned},
  \end{equation*}
  Then we check the sign of $\rho_{t+1}^2 - \rho_{t}^2$:
  \begin{equation*}
  \begin{aligned}
    & \Vert \mathbf{w}_t \Vert^2
    \Vert \mathbf{w}_{t+1} \Vert^2
    \left(
      \rho_{t+1}^2 - \rho_{t}^2
    \right)
    \\ = & 
    \Vert \mathbf{w}_t \Vert^2 \langle \mathbf{w}_{t+1}, \hat{\mathbf{w}} \rangle^2
    - 
    \langle \mathbf{w}_{t}, \hat{\mathbf{w}} \rangle^2 \Vert \mathbf{w}_{t+1} \Vert^2
    \\ = &
    \Vert \mathbf{w}_t \Vert^2 
    \left(
      \rho_t^2 \Vert \mathbf{w}_t \Vert^2
      +
      \eta^2 \left(\rho_t^{\perp}\right)^2 \left\langle \nabla_{\mathbf{w}} \mathcal{R}_t, \mathbf{e}_2 \right\rangle^2
      - 2 \eta \rho_t^{\perp}
      \rho_t \Vert \mathbf{w}_t \Vert
      \left\langle \nabla_{\mathbf{w}} \mathcal{R}_t, \mathbf{e}_2\right\rangle
    \right)
    \\ & \ \ \ \ \ \ -
    \rho_t^2 \Vert \mathbf{w}_t \Vert^2
    \left(
      \Vert \mathbf{w}_{t} \Vert^2
      + 
      \eta^2 \left\Vert \nabla_{\mathbf{w}} \mathcal{R}_t \right\Vert^2 
    \right)
    \\  =&
    \eta \Vert \mathbf{w}_t \Vert^2 
    \left(
      \eta \left(\rho_t^{\perp}\right)^2 \left\langle \nabla_{\mathbf{w}} \mathcal{R}_t, \mathbf{e}_2 \right\rangle^2
      - 2  \rho_t^{\perp}
      \rho_t \Vert \mathbf{w}_t \Vert
      \left\langle \nabla_{\mathbf{w}} \mathcal{R}_t, \mathbf{e}_2\right\rangle
      -
      \rho_t^2 
      \eta \left\Vert \nabla_{\mathbf{w}} \mathcal{R}_t \right\Vert^2 
    \right)
    \\  \le &
    \eta \Vert \mathbf{w}_t \Vert^2 
    \left(
      - \eta 
      \left(
        \rho_t^2 
        -
        \left(\rho_t^{\perp}\right)^2 
      \right)
      \left\Vert \nabla_{\mathbf{w}} \mathcal{R}_t \right\Vert^2 
      + 2  \rho_t^{\perp}
      \rho_t \Vert \mathbf{w}_t \Vert
      \Vert \nabla_{\mathbf{w}} \mathcal{R}_t \Vert
    \right) 
    \\  =&
    \eta \Vert \mathbf{w}_t \Vert^2 
    \Vert \nabla_{\mathbf{w}} \mathcal{R}_t \Vert
    \left(
      - \eta \left\Vert \nabla_{\mathbf{w}} \mathcal{R}_t \right\Vert \left(
        \rho_t^2 
        - \left(\rho_t^{\perp}\right)^2
      \right)
      + 
      2  \rho_t^{\perp}
      \rho_t \Vert \mathbf{w}_t \Vert
    \right)
  \end{aligned}
  \end{equation*}
  Now, let's examine the signs within the parentheses. 
  Recall $0 < \rho_t^{\perp} / \rho_t \le 1$, it holds that 
  \begin{equation*}
  \begin{aligned}
    - \eta 
    \left(
      \rho_t^2 
      -
      \left(\rho_t^{\perp}\right)^2 
    \right)
    \left\Vert \nabla_{\mathbf{w}} \mathcal{R}_t \right\Vert^2 \le 0
  \end{aligned}
  \end{equation*}
Hence the relevant case here is $0 < \rho_t^{\perp} / \rho_t \le 1$. 
  Since $1 \ge \rho_t^{\perp} / \rho_t > 0$, 
  it holds that ${\rho}_t^2 - \left({\rho}_t^{\perp}\right)^2 > 0$,
  and in particular $\rho_t > 0$. Therefore, we can verify
  \begin{equation*}
  \begin{aligned}
    \frac{\eta \Vert \nabla_{\mathbf{w}} \mathcal{R}_t \Vert }{\Vert \mathbf{w}_t \Vert} 
    \ge 
    \frac{2  {\rho}_t {\rho}_t^{\perp}}{
      {\rho}_t^2 
      - \left({\rho}_t^{\perp}\right)^2
    }
  \end{aligned}
  \end{equation*}
  ensures that 
  \begin{equation*}
  \begin{aligned}
    - \eta \left\Vert \nabla_{\mathbf{w}} \mathcal{R}_t \right\Vert \left(
      \rho_t^2 
      - \left(\rho_t^{\perp}\right)^2
    \right)
    + 
    2  \rho_t^{\perp}
    \rho_t \Vert \mathbf{w}_t \Vert
    \le 0
  \end{aligned}.
  \end{equation*}
  This means $\rho_{t+1}^2 - \rho_{t}^2 \le 0$. 
\end{proof}

\section{Detailed Proof for Linear Regression}\label{Detailed_Proof_for_Linear_Regression}

\newtheorem*{restate_lemma_Decompositionofrisk}{Lemma \ref{lemma_Decompositionofrisk} (Decomposition of Mean Square Loss)}
\definecolor{shadecolor}{rgb}{0.92,0.92,0.92}
\begin{shaded}
  \begin{restate_lemma_Decompositionofrisk}
    Let $\ell = \ell_{squ}$, $\hat{\mathbf{w}} = \mathbf{\Sigma}^{-1} \boldsymbol{\mu}$ and $\mathbf{\Sigma} = \mathbf{I}$. 
    Then, the following holds:
    \begin{equation*}
    \begin{aligned}
      \mathcal{R}_t & = 
      (\alpha_t - \rho_t)^{2} + (\rho_t^{\perp})^2 + 
      1 - \left\Vert \hat{\mathbf{w}} \right\Vert^{2}
    \end{aligned}
    \end{equation*}
  \end{restate_lemma_Decompositionofrisk}
\end{shaded}
\begin{proof}
  Since $\mathbf{\Sigma} = \mathbf{I}$, we have 
  \begin{equation*}
  \begin{aligned}
    \rho_t = \frac{\langle \hat{\mathbf{w}}, \mathbf{w}_t \rangle_\mathbf{\Sigma}}{\Vert \mathbf{w}_t \Vert_{\mathbf{\Sigma}}};
    \quad 
    \rho_t^{\perp} = \left\Vert
    \hat{\mathbf{w}} - \rho_t \frac{\mathbf{w}_t}{\Vert \mathbf{w}_t \Vert_\mathbf{\Sigma}}
  \right\Vert_{\mathbf{\Sigma}}
  \end{aligned}
  \end{equation*}
  Therefore, we have
  \begin{equation*}
  \begin{aligned}
    \mathcal{R}_t & = 
    \frac{1}{n} \left\Vert \alpha_t \frac{\tilde{\mathbf{X}}^{T} \mathbf{w}_t}{
      \left\Vert \mathbf{w}_t \right\Vert_{\mathbf{\Sigma}}
    } - \mathbf{1}_{n} \right\Vert^{2}
    \\ & =
    \alpha_t^{2} + 1 - \frac{2\alpha_t}{n} \frac{\mathbf{w}_t^{T} \tilde{\mathbf{X}} \mathbf{1}}{\left\Vert \mathbf{w}_t \right\Vert_{\mathbf{\Sigma}}}
    \\ & =
    \alpha_t^{2} + 1 - 2 \alpha_t \frac{\mathbf{w}_t^{T} \mathbf{u}}{\left\Vert \mathbf{w}_t \right\Vert_{\mathbf{\Sigma}}}
    \\ & =
    \alpha_t^{2} + 1 - 2 \alpha_t \frac{\langle \mathbf{w}_t, \hat{\mathbf{w}} \rangle_{\mathbf{\Sigma}}}{\left\Vert \mathbf{w}_t \right\Vert_{\mathbf{\Sigma}}}
    \\ & =
    \alpha_t^{2} + 1 - 2 \alpha_t \rho_t
    \\ & = 
    (\alpha_t - \rho_t)^{2} + 1 - \rho_t^{2} 
    \\ & = 
    (\alpha_t - \rho_t)^{2} + \left\Vert \hat{\mathbf{w}} \right\Vert_{\mathbf{\Sigma}}^{2} - \rho_t^{2} 
    + 1 - \left\Vert \hat{\mathbf{w}} \right\Vert_{\mathbf{\Sigma}}^{2}
    \\ & = 
    (\alpha_t - \rho_t)^{2} + 
    \left( \rho_t^{\perp} \right)^2
    + 1 - \left\Vert \hat{\mathbf{w}} \right\Vert_{\mathbf{\Sigma}}^{2}
  \end{aligned}
  \end{equation*}
  Note that in $\mathbf{\Sigma}$-inner product space, Pythagorean theorem also holds, 
  which means $\rho_t^2 + \left( \rho_t^\perp \right)^2 = 
  \Vert \hat{\mathbf{w}}\Vert_{\mathbf{\Sigma}}^2$. It is easy to verify that a solution is reached when \( \mathbf{w} \) is colinear with \(\hat{\mathbf{w}} \), which indicates that \( \rho_t = \alpha_t = \Vert \hat{\mathbf{w}} \Vert \) and \( \rho_t^{\perp} = 0 \). Therefore, we have $\inf_{\mathbf{w}, \alpha} \mathcal{R} (\mathbf{w}, \alpha) = 1 - \Vert \hat{\mathbf{w}} \Vert_{\mathbf{\Sigma}}^2$. This completes the proof. 
\end{proof}

\newtheorem*{restate_Dynamics_of_BN_Linear_Regression}{Lemma \ref{lemma:Dynamics_of_BN_Linear_Regression} (The Dynamics of BN Linear Regression)}
\definecolor{shadecolor}{rgb}{0.92,0.92,0.92}
\begin{restate_Dynamics_of_BN_Linear_Regression}
  Let $\ell = \ell_{squ}$, $\hat{\mathbf{w}} = \mathbf{\Sigma}^{-1} \boldsymbol{\mu}$ and $\mathbf{\Sigma} = \mathbf{I}$. 
  Consider the gradient descent (\ref{gradient_updata}), 
  it holds that 
  \begin{equation*}
  \begin{aligned}
    & (1).
    \frac{\rho_{t+1}^{\perp}}{\rho_{t+1}} = 
    \frac{|\hat{\eta}_t - 1|}{1 + \hat{\eta}_t \left(\frac{\rho_t^{\perp}}{\rho_t}\right)^2}
    \frac{\rho_t^{\perp}}{\rho_t}; \quad
    \\ & (2). \alpha_{t+1}  = \alpha_t + \eta_{\alpha} \left( \rho_{t} - \alpha_t \right); \quad
    \\ & (3). \Vert \mathbf{w}_{t+1} \Vert^2 = 
      \Vert \mathbf{w}_{t} \Vert^2
      + 
      \frac{\eta^2 \alpha_t^2}{\Vert \mathbf{w}_{t} \Vert^2} \left(\rho_t^{\perp}\right)^2,
  \end{aligned}
  \end{equation*}
  where $\hat{\eta}_t$ is effective learning rate, defined as $\hat{\eta}_{t} := {\eta \alpha_t \rho_{t}}/{\Vert \mathbf{w}_{t} \Vert^{2}}$. 
\end{restate_Dynamics_of_BN_Linear_Regression}

\pagebreak
\begin{proof}
  \noindent \textbf{(1).}
  We first reformulate the gradient $\nabla_\mathbf{w} \mathcal{R}_t$:
  \begin{equation*}
  \begin{aligned}
    - \nabla_\mathbf{w} \mathcal{R}_t
    & =
    \frac{\alpha}{n \Vert \mathbf{w} \Vert_{\mathbf{\Sigma}}}
    \left(
      \mathbf{I} - \frac{\mathbf{\Sigma} \mathbf{w} \mathbf{w}^{T}}{\Vert \mathbf{w} \Vert_{\mathbf{\Sigma}}^{2}}
    \right)
    \tilde{\mathbf{X}} 
    \boldsymbol{\ell^\prime}\left(
      \frac{
      \alpha \tilde{\mathbf{X}}^T \mathbf{w}
      }{\Vert \mathbf{w} \Vert_\mathbf{\Sigma}}
    \right)
    \\ & =
    \frac{\alpha}{n \Vert \mathbf{w} \Vert_{\mathbf{\Sigma}}}
    \left(
      \mathbf{I} - \frac{\mathbf{\Sigma} \mathbf{w} \mathbf{w}^{T}}{\Vert \mathbf{w} \Vert_{\mathbf{\Sigma}}^{2}}
    \right)
    \tilde{\mathbf{X}} 
    \left(
      \mathbf{1}_{n}
      -
      \alpha \cdot
      \frac{
       \tilde{\mathbf{X}}^T \mathbf{w}
      }{\Vert \mathbf{w} \Vert_\mathbf{\Sigma}}
    \right)
    \\ & =
    \frac{\alpha_t}{\Vert \mathbf{w}_{t} \Vert_{\mathbf{\Sigma}}}
    \left(
      \mathbf{I} - \frac{\mathbf{\Sigma} \mathbf{w}_{t} \mathbf{w}_{t}^{T}}{\Vert \mathbf{w}_{t} \Vert_{\mathbf{\Sigma}}^{2}}
    \right)
    \mathbf{u}
    \\ & = 
    \frac{\alpha_t}{\Vert \mathbf{w}_{t} \Vert_{\mathbf{\Sigma}}}
    \mathbf{\Sigma}
    \left(
      \mathbf{\Sigma}^{-1} \mathbf{u} - \frac{\mathbf{w}^{T} \mathbf{u} \mathbf{w}_{t}}{\Vert \mathbf{w}_{t} \Vert_{\mathbf{\Sigma}}^{2}}
    \right)
    \\ & = 
    \frac{\alpha_t}{\Vert \mathbf{w}_{t} \Vert_{\mathbf{\Sigma}}}
    \mathbf{\Sigma}
    \left(
      \mathbf{\Sigma}^{-1} \mathbf{u} - \frac{\mathbf{w}^{T} \mathbf{\Sigma} \mathbf{\Sigma}^{-1} \mathbf{u} \mathbf{w}_{t}}{\Vert \mathbf{w}_{t} \Vert_{\mathbf{\Sigma}}^{2}}
    \right)
    \\ & = 
    \frac{\alpha_t}{\Vert \mathbf{w}_{t} \Vert_{\mathbf{\Sigma}}}
    \mathbf{\Sigma}
    \left(
      \hat{\mathbf{w}} - \frac{\left\langle \mathbf{w}, \hat{\mathbf{w}}\right\rangle_\mathbf{\Sigma} }{\Vert \mathbf{w}_{t} \Vert_{\mathbf{\Sigma}}^{2}}
      \mathbf{w}_{t}
    \right)
    \\ & = 
    \frac{\alpha_t}{\Vert \mathbf{w}_{t} \Vert}
    \left(
      \hat{\mathbf{w}} - \frac{\left\langle \mathbf{w}, \hat{\mathbf{w}}\right\rangle }{\Vert \mathbf{w}_{t} \Vert^{2}}
      \mathbf{w}_{t}
    \right)
  \end{aligned}
  \end{equation*}
  Therefore, we have 
  \begin{equation*}
  \begin{aligned}
    & \Vert \nabla_\mathbf{w} \mathcal{R}_t \Vert =  \frac{| \alpha_t |}{\Vert \mathbf{w}_{t} \Vert} \rho_t^{\perp};
    \\ & 
    - \left\langle \nabla_\mathbf{w} \mathcal{R}_t, \hat{\mathbf{w}} \right\rangle
    = 
    \frac{\alpha_t}{\Vert \mathbf{w}_{t} \Vert}
    \left\langle 
      \hat{\mathbf{w}} - \frac{\left\langle \mathbf{w}, \hat{\mathbf{w}}\right\rangle }{\Vert \mathbf{w}_{t} \Vert^{2}}
      \mathbf{w}_{t}
    , \hat{\mathbf{w}} \right\rangle
    = 
    \frac{\alpha_t}{\Vert \mathbf{w}_{t} \Vert} \left( \Vert \hat{\mathbf{w}} \Vert^2 - \rho_t^2 \right)
    = 
    \frac{\alpha_t}{\Vert \mathbf{w}_{t} \Vert} \left( \rho_t^{\perp} \right)^2.
  \end{aligned}
  \end{equation*}
  Next, We decompose $\hat{\mathbf{w}}$ orthogonally into two components: one in the $\text{span}\left\{\mathbf{w}_t\right\}$ and the other
  in a direction orthogonal to it:
  \begin{equation*}
  \begin{aligned}
    \mathbf{e}_1 = \frac{\mathbf{w}_t}{\Vert \mathbf{w}_t \Vert};
    \quad
    \mathbf{e}_2 = 
    \left(
      \hat{\mathbf{w}} - \rho_t \mathbf{e}_1
    \right) / \rho_t^{\perp}.
  \end{aligned}
  \end{equation*}
  Recall the gradient descent update:
  \begin{equation*}
  \begin{aligned}
    \mathbf{w}_{t+1} = \mathbf{w}_{t} - \eta 
    \nabla_{\mathbf{w}} \mathcal{R}_t
    = \Vert \mathbf{w}_t \Vert \mathbf{e}_1 - \eta 
    \nabla_{\mathbf{w}} \mathcal{R}_t
  \end{aligned}
  \end{equation*}
  We have 
  \begin{equation*}
  \begin{aligned}
    \langle \mathbf{w}_{t+1}, \hat{\mathbf{w}} \rangle
    & = 
    \langle 
      \Vert \mathbf{w}_t \Vert \mathbf{e}_1 - \eta \nabla_{\mathbf{w}} \mathcal{R}_t, 
      \rho_t \mathbf{e}_1 + \rho_t^{\perp} \mathbf{e}_2
    \rangle
    \\ & = 
    \rho_t \Vert \mathbf{w}_t \Vert
    -
    \eta \rho_t^{\perp} \left\langle \nabla_{\mathbf{w}} \mathcal{R}_t, \mathbf{e}_2\right\rangle
    \\ & = 
    \rho_t \Vert \mathbf{w}_t \Vert
    -
    \eta \left\langle \nabla_{\mathbf{w}} \mathcal{R}_t, \hat{\mathbf{w}} - \rho_t \mathbf{e}_1 \right\rangle
    \\ & = 
    \rho_t \Vert \mathbf{w}_t \Vert
    -
    \eta \left\langle \nabla_{\mathbf{w}} \mathcal{R}_t, \hat{\mathbf{w}} \right\rangle
    \\ & = 
    \rho_t \Vert \mathbf{w}_t \Vert
    +
    \frac{\eta \alpha_t}{\Vert \mathbf{w}_{t} \Vert} \left( \rho_t^{\perp} \right)^2
  \end{aligned}
  \end{equation*}
  and 
  \begin{equation*}
  \begin{aligned}
    \Vert \mathbf{w}_{t+1} \Vert^2 
    & = 
    \Vert \mathbf{w}_{t} \Vert^2
    + 
    \eta^2 \left\Vert \nabla_{\mathbf{w}} \mathcal{R}_t \right\Vert^2 
    - 
    2 \eta \left\langle \mathbf{w}_{t}, \nabla_{\mathbf{w}} \mathcal{R}_t\right\rangle 
    \\ & = 
    \Vert \mathbf{w}_{t} \Vert^2
    + 
    \eta^2 \left\Vert \nabla_{\mathbf{w}} \mathcal{R}_t \right\Vert^2 
    \\ & = 
    \Vert \mathbf{w}_{t} \Vert^2
    + 
    \frac{\eta^2 \alpha_t^2}{\Vert \mathbf{w}_{t} \Vert^2} \left(\rho_t^{\perp}\right)^2,
  \end{aligned}
  \end{equation*}
  where we apply $\left\langle \mathbf{w}_{t}, \nabla_{\mathbf{w}} \mathcal{R}_t\right\rangle = 0$ and 
  $\left\langle \mathbf{e}_{1}, \nabla_{\mathbf{w}} \mathcal{R}_t\right\rangle = 0$. 
  Then, by definition of $\rho_t^\perp$ and $\rho_t$, we have 
  \begin{equation*}
  \begin{aligned}
    \left(\frac{\rho_{t+1}^{\perp}}{\rho_{t+1}}\right)^2
    & = 
    \frac{\Vert \hat{\mathbf{w}} \Vert^2 - \rho_{t+1}^2}{\rho_{t+1}^2}
    \\ & = 
    \frac{\Vert \mathbf{w}_{t+1} \Vert^2 \Vert \hat{\mathbf{w}} \Vert^2 - \left\langle \mathbf{w}_{t+1}, \hat{\mathbf{w}} \right\rangle^2}
    {\left\langle \mathbf{w}_{t+1}, \hat{\mathbf{w}} \right\rangle^2}
    \\ & = 
    \frac{\left(
      \Vert \mathbf{w}_{t} \Vert^2
      + 
      \frac{\eta^2 \alpha_t^2}{\Vert \mathbf{w}_{t} \Vert^2} \left(\rho_t^{\perp}\right)^2
    \right) \left(\rho_t^2 + \left(\rho_t^{\perp}\right)^2\right) - 
    \left(    \rho_t \Vert \mathbf{w}_t \Vert
    +
    \frac{\eta \alpha_t}{\Vert \mathbf{w}_{t} \Vert} \left( \rho_t^{\perp} \right)^2\right)^2}
    {
      \left(    \rho_t \Vert \mathbf{w}_t \Vert
    +
    \frac{\eta \alpha_t}{\Vert \mathbf{w}_{t} \Vert} \left( \rho_t^{\perp} \right)^2\right)^2}
    \\ & = 
    \left(\rho_t^{\perp}\right)^2
    \frac{
      \left(
        \frac{\eta^2 \alpha_t^2}{\Vert \mathbf{w}_t \Vert^2} \rho_t^2 
        +
        \Vert \mathbf{w}_t \Vert^2  
        - 
        2 \eta \alpha_t \rho_t  
      \right)
    }{
      \left(    \rho_t \Vert \mathbf{w}_t \Vert
    +
    \frac{\eta \alpha_t}{\Vert \mathbf{w}_{t} \Vert} \left( \rho_t^{\perp} \right)^2\right)^2
    }
    \\ & = 
    \left(\frac{\rho_t^{\perp}}{\rho_t}\right)^2
    \frac{
      \left(
        \hat{\eta}_t - 1
      \right)^2
    }{
      \left( 1
    +
    \hat{\eta}_t \left( \rho_t^{\perp} / \rho_t\right)^2\right)^2
    }
  \end{aligned}
  \end{equation*}

  \noindent \textbf{(2).}
  Then we consider the update of $\alpha_t$, which is much simpler:
  \begin{equation*}
  \begin{aligned}
    \frac{\partial \mathcal{R}_t}{\partial \mathbf{\alpha}} & = 
    \frac{1}{n}
    \left(
      \frac{
        \tilde{\mathbf{X}}^{T} \mathbf{w}_t
        }{ \Vert \mathbf{w}_t \Vert_\mathbf{\Sigma}}
    \right)^T 
    \boldsymbol{\ell^\prime}\left(
      \frac{
      \alpha_t \tilde{\mathbf{X}}^T \mathbf{w}_t
      }{\Vert \mathbf{w}_t \Vert_\mathbf{\Sigma}}
    \right)
    \\ & = 
    \frac{1}{n}
    \left(
      \frac{
        \tilde{\mathbf{X}}^{T} \mathbf{w}_t
        }{ \Vert \mathbf{w}_t \Vert_\mathbf{\Sigma}}
    \right)^T 
    \left(
      \alpha_t \cdot \frac{
        \tilde{\mathbf{X}}^{T} \mathbf{w}_t
        }{ \Vert \mathbf{w}_t \Vert_\mathbf{\Sigma}} - \mathbf{1}_{n}
    \right)
    \\ & = 
    \alpha_t - \frac{\mathbf{w}_t^T \mathbf{u}}{\Vert \mathbf{w}\Vert_{\mathbf{\Sigma}}}
  \end{aligned}
  \end{equation*}
  Therefore, we have 
  \begin{equation*}
  \begin{aligned}
    \alpha_{t+1} 
    & = 
    \alpha_t - \eta_{\alpha} \frac{\partial \mathcal{R}_t}{\partial \mathbf{\alpha}} 
    = 
    \alpha_t
    -
    \eta_{\alpha}
    \left(
      \alpha_t - \frac{\mathbf{w}_{t}^{T} \mathbf{u}}{\Vert \mathbf{w}_{t} \Vert_{\mathbf{\Sigma}}}
    \right)
    = 
    \alpha_t
    +
    \eta_{\alpha}
    \left(
      \rho_{t} - \alpha_t
    \right)
  \end{aligned}
  \end{equation*}
\end{proof}

\newtheorem*{restate_Occurance_of_Spike_ls}{Theorem \ref{Occurance_of_Spike_ls} (Delayed Onset of the Rising Edge)}
\begin{restate_Occurance_of_Spike_ls}
  Let $\ell = \ell_{squ}$, $\hat{\mathbf{w}} = \mathbf{\Sigma}^{-1} \boldsymbol{\mu}$. Suppose $\mathbf{\Sigma} = \mathbf{I}$. 
  Consider the gradient descent (\ref{gradient_updata}) 
  for $t > t_0$ with $\eta_{\alpha} \in (0, 1)$, where $t_0$ is such that 
  $\rho_{t_0}^{\perp} / \rho_{t_0} \le 1/\sqrt{3}$ and $0 < \alpha_{t_0} < \rho_{t_0}$. The following results hold:
  \begin{enumerate}
    \item \textbf{Condition of No Rising Edge.} 
    If $\frac{\eta}{\| \mathbf{w}_{t_0} \|^2} < \frac{2}{\| \hat{\mathbf{w}} \|^2}$, there shall be no rising edge when $t \ge t_0$.
    \item \textbf{Condition of Delayed Onset.}
    If $\eta$ satisfies $\frac{8 }{\Vert \hat{\mathbf{w}} \Vert^2} 
    < 
    \frac{\eta}{\Vert \mathbf{w}_{t_0} \Vert^{2}} 
    \le 
    C$, 
    the dynamics shall stay in \textit{Falling Edge} for at most $\Delta T_0$ iterations, 
    and then enters \textit{Rising Edge} state. 
    Formally speaking, there exists $t_1 \in (t_0, t_0 + \Delta T_0]$ such that 
    \begin{equation*}
    \begin{aligned}
      & \frac{\rho_{t+1}^{\perp}}{\rho_{t+1}} \le \frac{\rho_{t}^{\perp}}{\rho_{t}} \ \ \forall t \in [t_0, t_1), 
      \ \ 
      \frac{\rho_{t_1+1}^{\perp}}{\rho_{t_1+1}} \ge \frac{\rho_{t_1}^{\perp}}{\rho_{t_1}}
    \end{aligned}
    \end{equation*}
    where 
    \begin{equation*}
    \begin{aligned}
    & C = 
    \min \left(
      \frac{1}{\alpha_{t_0} \rho_{t_0}},
      \frac{3}{16\Vert \hat{\mathbf{w}} \Vert^2}
      \frac{\eta_\alpha}{e^{2} \left( 1 - \alpha_{t_0}/\rho_{t_0} \right)}
    \right);
    \\ & \Delta T_0 \le 
      \left\lfloor
        \ln
        \left(
          \frac{ \eta_{\alpha} \eta \left( 1 - k \right) \Vert \mathbf{w}_{t_0} \Vert^2}{4 
          \eta^{2} \Vert \hat{\mathbf{w}} \Vert^2 \left(\rho_{t_0}^{\perp} / \rho_{t_0}\right)^2 }
        \right) 
        / \eta_\alpha
        + 1
      \right\rfloor.
    \end{aligned}
    \end{equation*}
  \end{enumerate} 
\end{restate_Occurance_of_Spike_ls}
\begin{proof} We first prove the no-rising-edge condition, then the delayed-onset condition.

\noindent\textbf{Condition of No Rising Edge} 
We first prove $\alpha_t \le \Vert \hat{\mathbf{w}} \Vert \ \forall t \ge t_0$ by induction. 
When $t = t_0$, it holds that $\alpha_t \le \rho_t \le \Vert \hat{\mathbf{w}} \Vert$. 
Then we assume $\alpha_t \le \Vert \hat{\mathbf{w}} \Vert$ for $t \ge t_0$ and consider $\alpha_{t+1}$:
\begin{equation*}
\begin{aligned}
  \alpha_{t+1} = \alpha_{t} + \eta_{\alpha} \left( \rho_t - \alpha_t \right) = 
  \left(1 - \eta_{\alpha}\right) \alpha_t + \eta_\alpha \rho_t
\end{aligned}
\end{equation*}
Since $\eta_{\alpha} \in (0, 1)$, we observe that $\alpha_{t+1}$ is, in fact, the convex combination of 
$\rho_t$ and $\alpha_t$, both of which are smaller than $\Vert \hat{\mathbf{w}} \Vert$. 
Therefore, $\alpha_{t+1} \le \Vert \hat{\mathbf{w}} \Vert$. 
By Lemma \ref{lemma:Dynamics_of_BN_Linear_Regression}, if 
$\hat{\eta}_{t}$ is always smaller than $\frac{2}{1 - (\rho_t^{\perp} / \rho_t)^2}$, 
the rising edge can never start. We verify the value of $\hat{\eta}_{t}$. 
For any $t \ge 0$, we have 
\begin{equation*}
\begin{aligned}
  \hat{\eta}_{t} & = 
  \frac{\eta \alpha_{t} \rho_{t}}{\Vert \mathbf{w}_{t} \Vert^{2}} 
  \le 
  \frac{\eta \Vert \hat{\mathbf{w}} \Vert^2}{\Vert \mathbf{w}_{t} \Vert^{2}} 
  \le 
  \frac{\eta \Vert \hat{\mathbf{w}} \Vert^2}{\Vert \mathbf{w}_{t_0} \Vert^{2}} 
  \le 
  2
  \le \frac{2}{1 - \left(\rho_{t}^{\perp}/\rho_{t}\right)^2},
\end{aligned}
\end{equation*}
where the second inequality is by $\Vert \mathbf{w}_{t} \Vert \ge \Vert \mathbf{w}_{t_0} \Vert \ \forall t_0 \le t$ 
and the third inequality is by the upper bound of $\eta$. 

\noindent\textbf{Condition of Delayed Onset.}
  By the upper bound of $\eta$, we can verify that following inequalities hold:
  \begin{equation*}
  \begin{aligned}
    \hat{\eta}_{t_0} & = 
    \frac{\eta \alpha_{t_0} \rho_{t_0}}{\Vert \mathbf{w}_{t_0} \Vert^{2}} 
    \le 
    \frac{2}{1 - \left(\rho_{t_0}^{\perp}/\rho_{t_0}\right)^2},
  \end{aligned}
  \end{equation*}
  which ensures that $\rho_t^{\perp} / \rho_t$ is decreasing when $t = t_0$. 

  Without loss of generality, we treat the time interval $[t_0, t_0 + \Delta T_0)$ as $[0, \Delta T_0)$ by defining 
  $t \leftarrow t - t_0$. This index shift does not affect the correctness of the proof. 
  By the upper bound of $\eta$ in condition (1), we have $\hat{\eta}_0 \le \frac{2}{1 - \left({\rho_0^{\perp}}/{\rho_0}\right)^2 }$, 
  which guarantees that dynamics is still in \textit{Falling Edge} state when $t = 0$. 
  Next, we assume that $\rho_t^{\perp}$ will be decreasing for all $t \ge 0$, then derive a contradiction to identify the delayed onset of the rising edge.
  By this assumption, we have $\forall t \ge 0$, $\rho_{t+1}^{\perp} \le \rho_{t}^{\perp}$ and $\rho_{t+1} \ge \rho_{t}$. 
  By the update of $\alpha_{t}$ and $\eta_{\alpha} \in (0, 1)$, we have 
  \begin{equation*}
  \begin{aligned}
    \alpha_{t+1} & = \alpha_{t} + \eta_{\alpha} \left(
      \rho_t - \alpha_{t}
    \right)
    \\ & \ge 
    \alpha_{t} + \eta_{\alpha} \left(
      \rho_0 - \alpha_{t}
    \right)
  \end{aligned}
  \end{equation*}
  Take the negative of both sides and add $\rho_0$ to obtain
  \begin{equation*}
  \begin{aligned}
    \rho_0 - \alpha_{t+1} & \le 
    \rho_0 -\alpha_{t} - \eta_{\alpha} \left(
      \rho_0 - \alpha_{t}
    \right)
    \\ & \le 
    \left(1 - \eta_{\alpha} \right)\left(\rho_0 -\alpha_{t}\right) 
    \\ & \le 
    \exp(-\eta_{\alpha}) \left(\rho_0 -\alpha_{t}\right)
     \\ & \le 
    \exp(-\eta_{\alpha}(t+1)) \left(\rho_0 -\alpha_{0}\right)
  \end{aligned}
  \end{equation*}
  Then we obtain the lower bound of $\alpha_t$:
  \begin{equation*}
  \begin{aligned}
    \forall t \ge 0, \ \ \alpha_t \ge \rho_0 - e^{-\eta_{\alpha}t} \left(\rho_0 -\alpha_{0}\right)
  \end{aligned}
  \end{equation*}
  Next, we calculate the upper bound of $\Vert \mathbf{w}_t \Vert$:
  \begin{equation*}
  \begin{aligned}
    \forall t \ge 0, \ \ \Vert \mathbf{w}_t \Vert^{2} & = \Vert \mathbf{w}_0 \Vert^{2} + 
    \eta^{2} \sum_{\tau=0}^{t-1} \frac{\alpha_\tau^2 \left(\rho_{\tau}^{\perp}\right)^2}{\Vert \mathbf{w}_\tau \Vert^2}
    \\ & \le 
    \Vert \mathbf{w}_0 \Vert^{2} + 
    \frac{\eta^{2} \Vert \hat{\mathbf{w}} \Vert^2 }{\Vert \mathbf{w}_0 \Vert^2} \sum_{\tau=0}^{t-1} \left(\rho_{\tau}^{\perp}\right)^2
    \\ & \le 
    \Vert \mathbf{w}_0 \Vert^{2} + 
    \frac{\eta^{2} \Vert \hat{\mathbf{w}} \Vert^2 }{\Vert \mathbf{w}_0 \Vert^2} \left(\rho_{0}^{\perp}\right)^2 t
  \end{aligned}
  \end{equation*}
  Now, we can lower bound the effective learning rate $\hat{\eta}_t$: 
  \begin{equation*}
  \begin{aligned}
    \forall t \ge 0, \ \ \hat{\eta}_t & = \frac{\eta \alpha_t \rho_t}{\Vert \mathbf{w}_t \Vert^2}
    \ge 
    \frac{\eta \alpha_t \rho_0}{\Vert \mathbf{w}_0 \Vert^{2} + 
    \frac{\eta^{2} \Vert \hat{\mathbf{w}} \Vert^2 }{\Vert \mathbf{w}_0 \Vert^2} \left(\rho_{0}^{\perp}\right)^2 t}
    \ge 
    \frac{\eta \rho_0^2 - \eta \rho_0 \left(\rho_0 -\alpha_{0}\right) e^{-\eta_{\alpha}t}}
    {\Vert \mathbf{w}_0 \Vert^{2} + 
    \frac{\eta^{2} \Vert \hat{\mathbf{w}} \Vert^2 }{\Vert \mathbf{w}_0 \Vert^2} \left(\rho_{0}^{\perp}\right)^2 t}
  \end{aligned}
  \end{equation*}
  By Lemma.\ref{lemma:Dynamics_of_BN_Linear_Regression}, if $\hat{\eta}_t$ is larger than $\frac{2}{1 - (\rho_t^{\perp}/\rho_t)^2}$, 
  $\rho_{t+1}^{\perp} / \rho_{t+1}$ would be larger than $\rho_{t}^{\perp} / \rho_t$, which means that the delayed rising edge has started. 
  We now investigate whether the lower bound of $\hat{\eta}_t$ can exceed the threshold at some $t$. 
  Therefore, according to the Lemma.\ref{lemma:Dynamics_of_BN_Linear_Regression}, assuming $\rho_t^{\perp}$ is always decreasing leads to a contradiction, 
  indicating the presence of a delayed onset event. 
  We investigate if there exists a $t > 0$ such that 
  \begin{equation}\label{eq:linear_spike_trigger_threshold}
  \begin{aligned}
    \frac{\eta \rho_0^2 - \eta \rho_0 \left(\rho_0 -\alpha_{0}\right) e^{-\eta_{\alpha}t}}
    {\Vert \mathbf{w}_0 \Vert^{2} + 
    \frac{\eta^{2} \Vert \hat{\mathbf{w}} \Vert^2 }{\Vert \mathbf{w}_0 \Vert^2} \left(\rho_{0}^{\perp}\right)^2 t}
    \ge
    \frac{2}{1 - (\rho_t^{\perp})^2/\rho_t^2} 
  \end{aligned}
  \end{equation}
  Since $\rho_0^{\perp} \ge \rho_{t}^{\perp}$, we have 
  \begin{equation}\label{suff_1}
  \begin{aligned}
    (\ref{eq:linear_spike_trigger_threshold}) 
    \ \boldsymbol{\Leftarrow} \ 
    \frac{\eta \rho_0^2 - \eta \rho_0 \left(\rho_0 -\alpha_{0}\right) e^{-\eta_{\alpha}t}}
    {\Vert \mathbf{w}_0 \Vert^{2} + 
    \frac{\eta^{2} \Vert \hat{\mathbf{w}} \Vert^2 }{\Vert \mathbf{w}_0 \Vert^2} \left(\rho_{0}^{\perp}\right)^2 t} \ge \frac{2}{1 - (\rho_0^{\perp})^2/\rho_0^2}
  \end{aligned}
  \end{equation}
  Rearrange the inequality to obtain
  \begin{equation*}
  \begin{aligned}
    \eta \rho_0^2 
    -
    \frac{2 \Vert \mathbf{w}_0 \Vert^{2}}{1 - (\rho_0^{\perp})^2/\rho_0^2}
      >
    \frac{2}{1 - (\rho_0^{\perp})^2/\rho_0^2}
    \frac{\eta^{2} \Vert \hat{\mathbf{w}} \Vert^2 }{\Vert \mathbf{w}_0 \Vert^2} \left(\rho_{0}^{\perp}\right)^2 t
    +
    \eta \rho_0 \left(\rho_0 -\alpha_{0}\right) e^{-\eta_{\alpha}t} 
  \end{aligned}
  \end{equation*}
  Devide $\rho_0^2$ on both side to obtain
  \begin{equation}\label{eq:aaa}
  \begin{aligned}
    \eta 
    -
    \frac{2 \Vert \mathbf{w}_0 \Vert^{2}}{\rho_0^2 - (\rho_0^{\perp})^2}
    &  >
    \frac{2}{1 - (\rho_0^{\perp})^2/\rho_0^2}
    \frac{\eta^{2} \Vert \hat{\mathbf{w}} \Vert^2 }{\Vert \mathbf{w}_0 \Vert^2} \left(\rho_{0}^{\perp} / \rho_0\right)^2 t
    +
    \eta \left( 1 - \alpha_0/\rho_0 \right) e^{-\eta_{\alpha}t} 
    \\ & =
    \frac{2}{1 - (\rho_0^{\perp})^2/\rho_0^2}
    \frac{\eta^{2} \Vert \hat{\mathbf{w}} \Vert^2 }{\Vert \mathbf{w}_0 \Vert^2} \left(\rho_{0}^{\perp} / \rho_0\right)^2 t
    +
    \eta \left( 1 - k \right) e^{-\eta_{\alpha}t}
  \end{aligned}
  \end{equation}
  where $k = \alpha_0 / \rho_0 \in (0, 1)$.
  We next study the left-hand side of (\ref{eq:aaa}):
  \begin{equation}\label{eq:lhs_bbb}
  \begin{aligned}
    \text{LHS of }(\ref{eq:aaa})
    & = 
    \eta 
    -
    \frac{2 \Vert \mathbf{w}_0 \Vert^{2}}{\rho_0^2 - (\rho_0^{\perp})^2}
    \\ & = 
    \eta 
    -
    2 \frac{\Vert \mathbf{w}_0 \Vert^{2}}{\Vert \hat{\mathbf{w}} \Vert^{2}}
    \frac{1 + (\rho_0^{\perp} / \rho_0)^2}{1 - (\rho_0^{\perp} / \rho_0)^2}
    \\ & \ge
    \eta 
    -
    \frac{4 \Vert \mathbf{w}_0 \Vert^{2}}{\Vert \hat{\mathbf{w}} \Vert^2}
    \\ & \ge \frac{1}{2}\eta
  \end{aligned}
  \end{equation}
  Then we look at the right hand side of the (\ref{eq:aaa}):
  \begin{equation*}
  \begin{aligned}
    \text{RHS of }(\ref{eq:aaa}) & = 
    \frac{2 \left(\rho_{0}^{\perp} / \rho_0\right)^2 }{1 - (\rho_0^{\perp})^2/\rho_0^2}
    \frac{\eta^{2} \Vert \hat{\mathbf{w}} \Vert^2 }{\Vert \mathbf{w}_0 \Vert^2} t
    +
    \eta \left( 1 - k \right)
     e^{-\eta_{\alpha}t} 
    \\ & \le 
    \underbrace{
      4 \frac{\eta^{2} \Vert \hat{\mathbf{w}} \Vert^2 }{\Vert \mathbf{w}_0 \Vert^2} \left(\rho_{0}^{\perp} / \rho_0\right)^2 
    }_{B} t
    +
    \underbrace{
      \eta \left( 1 - k \right)
    }_{
      A
    }
     e^{-\eta_{\alpha}t}
  \end{aligned}
  \end{equation*}
  We discuss the minimum of the function $f(t; a, b) = A e^{-\eta_{\alpha} t} + B t$.
  Its minimum is achieved by $t_{\min} = \ln(A \eta_{\alpha} / B) / \eta_{\alpha}$. 
  As $t_{\min}$ is unlikely to be an integer, we calculate $f(t)$ at the integers closest to $t_{\min}$. 
  Since $f(t)$ is monotonically decreasing in $t < t_{\min}$. We have 
  \begin{equation*}
  \begin{aligned}
    f(\lfloor t_{\min} + 1 \rfloor) 
    \le 
    f(t_{\min} + 1)
    & = 
    f(t_{\min}) + f(t_{\min} + 1) - f(t_{\min})
    \\ & = 
    f(t_{\min})
    +
    \Bigl(
      A e^{-\eta_{\alpha} \left(t_{\min} + 1\right)}
      -
      A e^{-\eta_{\alpha} t_{\min}}
    \Bigr)
    \\ & \quad +
    \Bigl(
      B \left(t_{\min} + 1\right)
      -
      B t_{\min}
    \Bigr)
    \\ & = 
    f(t_{\min})
    +
    B \left(
      \frac{e^{-\eta_{\alpha}} - 1}{\eta_{\alpha}} + 1
    \right)
    \\ & \le 
    f(t_{\min})
    \\ & \quad + B
  \end{aligned}
  \end{equation*}
  where the inequality uses $\eta_{\alpha} \in (0, 1)$.
  And
  \begin{equation*}
  \begin{aligned}
    f_{\min} = f(t_{\min})
    =
    \frac{B}{\eta_{\alpha}}
    \ln\!\left(
      \frac{A \eta_{\alpha} e}{B}
    \right).
  \end{aligned}
  \end{equation*}
  \begin{equation}\label{eq:bbb}
  \begin{aligned}
    \inf_{t > 0} \left\{
      \text{RHS of }(\ref{eq:aaa})
    \right\}
    & \le
    f(\lfloor t_{\min} + 1 \rfloor) 
    \\ & \le 
    f(t_{\min}) + B 
    \\ & = 
    \frac{B}{\eta_{\alpha}}
    \ln\left( A \eta_{\alpha} e / B \right) + B 
    \\ & = 
    \frac{B}{\eta_{\alpha}}
    \ln\left( A \eta_{\alpha} e^{1+\eta_{\alpha}} / B \right)
    \\ & \le 
    \sqrt{e^{1+\eta_{\alpha}} A B / \eta_{\alpha}}
    \\ & =
    \sqrt{
      4 \frac{e \eta^{2} \Vert \hat{\mathbf{w}} \Vert^2 }{\eta_\alpha \Vert \mathbf{w}_0 \Vert^2} 
      \eta \left( 1 - k \right)
    }
    \left(\rho_{0}^{\perp} / \rho_0\right)
    \\ & =
    \frac{2\eta \Vert \hat{\mathbf{w}} \Vert }{\Vert \mathbf{w}_0 \Vert} 
    \sqrt{
      \frac{\eta e^{1+\eta_{\alpha}} \left( 1 - k \right)}{\eta_\alpha}
    }
    \left(\rho_{0}^{\perp} / \rho_0\right)
  \end{aligned}
  \end{equation}

  To verify that there exists a $t$ for which the inequality (\ref{eq:linear_spike_trigger_threshold}) holds, we have 
  \begin{equation*}
  \begin{aligned}
    (\ref{eq:linear_spike_trigger_threshold})
    \ \boldsymbol{\Leftarrow} \ 
    (\ref{suff_1})
    \boldsymbol{\Leftrightarrow}
    (\ref{eq:aaa})
    & \ \boldsymbol{\Leftrightarrow} \ 
    \text{LHS of }(\ref{eq:aaa}) \ge 
    \text{RHS of }(\ref{eq:aaa})
    \\ & \ \boldsymbol{\Leftarrow} \ 
    \text{LHS of }(\ref{eq:aaa}) \ge 
    \inf_{t > 0} \left\{
      \text{RHS of }(\ref{eq:aaa})
    \right\}
    \\ & \ \boldsymbol{\Leftarrow} \ 
    \frac{1}{2} \eta \ge 
    \frac{2\eta \Vert \hat{\mathbf{w}} \Vert }{\Vert \mathbf{w}_0 \Vert} 
    \sqrt{
      \frac{\eta e^{1+\eta_{\alpha}} \left( 1 - k \right)}{\eta_\alpha}
    }
    \left(\rho_{0}^{\perp} / \rho_0\right),
    \\ & \ \boldsymbol{\Leftrightarrow} \ 
    \frac{\eta}{\Vert {\mathbf{w}}_0 \Vert^2}
    \le 
    \frac{1}{16\Vert \hat{\mathbf{w}} \Vert^2}
    \frac{\eta_\alpha}{e^{1+\eta_{\alpha}} \left( 1 - k \right)}
    \left(
      \frac{\rho_0}{\rho_{0}^{\perp}}
    \right)^2
    \\ & \ \boldsymbol{\Leftarrow} \ 
    \frac{\eta}{\Vert {\mathbf{w}}_0 \Vert^2}
    \le 
    \frac{3}{16\Vert \hat{\mathbf{w}} \Vert^2}
    \frac{\eta_\alpha}{e^{1+\eta_{\alpha}} \left( 1 - k \right)}
    \\ & \ \boldsymbol{\Leftarrow} \ 
    \frac{\eta}{\Vert {\mathbf{w}}_0 \Vert^2}
    \le 
    \frac{3}{16\Vert \hat{\mathbf{w}} \Vert^2}
    \frac{\eta_\alpha}{e^{2} \left( 1 - k \right)},
  \end{aligned}
  \end{equation*}
  which means after at most $t_{\min}$ iterations, $\rho_t^{\perp}$ will increase. This completes the proof. 
\end{proof}

\newtheorem*{restate_Divergence_lr}{Theorem \ref{Divergence_lr} (Finite-Time Self-Stabilization of the Rising Edge)}
\definecolor{shadecolor}{rgb}{0.92,0.92,0.92}
\begin{shaded}
  \begin{restate_Divergence_lr}
    Let the assumptions of item (2) in Theorem \ref{Occurance_of_Spike_ls} hold, so that a delayed rising edge exists. 
  Then 
  the \textit{Rising Edge} will last for at most $\Delta T_1$ iterations, 
  then it returns to \textit{Falling Edge}. 
  Specifically, there exists a $t_2 \in (t_1, t_1 + \Delta T_1]$ such that 
  \begin{equation*}
  \begin{aligned}
    \frac{\rho_{t+1}^{\perp}}{\rho_{t+1}} \ge \frac{\rho_{t}^{\perp}}{\rho_{t}} \ \ \forall t \in [t_1, t_2) 
    \ \ \text{and} \ \ 
    \frac{\rho_{t_2+1}^{\perp}}{\rho_{t_2+1}} \le \frac{\rho_{t_2}^{\perp}}{\rho_{t_2}},
  \end{aligned}
  \end{equation*}
  where 
  \begin{equation*}
  \begin{aligned}
    \Delta T_1 = 
    \left\lceil
      \frac{1}{4}
      \frac{\Vert \hat{\mathbf{w}} \Vert^4 }{\alpha_{t_1}^2}
        \left(
          1 / (\rho_{t_1}^{\perp})^2 - 1 / \rho_{t_1}^2
        \right)^2
      \right\rceil
        +
        \left\lceil
      \frac{1}{4}
      \frac{\Vert \hat{\mathbf{w}} \Vert^2}{\rho_{t_1}^2}
      \left(
        \rho_{t_1} / \rho_{t_1}^{\perp} - \rho_{t_1}^{\perp} / \rho_{t_1}
      \right)^2
    \right\rceil
  \end{aligned}
  \end{equation*}
  Moreover, we define a time $\phi \in [t_1, t_2]$ as the first moment when $\alpha_t$ 
  catches up with $\rho_t$, \textit{i.e.}, the time such that 
  \( \alpha_t \le \rho_t \ \forall t \in [t_1, \phi] \) and \( \alpha_t \ge \rho_t \ \forall t \in (\phi, t_2) \). 
  Then, the dynamics of $\rho_t^{\perp}/\rho_t$ is given by:
  \begin{equation*}
  \begin{aligned}
    \forall t \in [t_1, \phi], \ \ 
    (\rho_{t}^{\perp}/\rho_{t})^2 
    \le 
    1 - 
    \frac{ 2 \rho_{t_1}^{\perp} \alpha_{t_1}}{\Vert \hat{\mathbf{w}} \Vert^2}
    \sqrt{t - t_1};
    \quad 
    \forall t \in (\phi, t_2], \ \ 
    (\rho_{t}^{\perp}/\rho_{t})^2 
    \le 
    1 - 
    \frac{ 2 \rho_{t_1}^{\perp}}{
      \Vert \hat{\mathbf{w}} \Vert
    }
    \sqrt{t - \phi}.
  \end{aligned}
  \end{equation*}
\end{restate_Divergence_lr}
\end{shaded}
\begin{proof}
  By Lemma \ref{Existence_of_Falling_Edge_lr}, we know that the \textit{Rising Edge} will eventually terminate. 
  We assume that \textit{Rising Edge} ends after \( \Delta T_1 \) iterations, i.e., 
  \[
    \frac{\rho_t^{\perp}}{\rho_t} \le \frac{\rho_{t+1}^{\perp}}{\rho_{t+1}} 
    \quad \forall t \in [t_1, t_1 + \Delta T_1)
    \quad \text{and} \quad 
    \frac{\rho_{t_1 + \Delta T_1}^{\perp}}{\rho_{t_1 + \Delta T_1}} \ge \frac{\rho_{t_1 + \Delta T_1+1}^{\perp}}{\rho_{t_1 + \Delta T_1+1}} .
  \]
  Since $\rho_t$ keeps increasing during $t \in [t_0, t_1)$ and $0 < \alpha_{t_0} < \rho_{t_0}$, 
  the update of $\alpha_t$ implies that $0 < \alpha_{t_1} < \rho_{t_1}$. 
  For simplicity, we shift time so that the \textit{Rising Edge} starts at $t=0$, namely,
  $t \leftarrow t - t_1$. 
  Therefore, we have 
  \[
    \frac{\rho_t^{\perp}}{\rho_t} \le \frac{\rho_{t+1}^{\perp}}{\rho_{t+1}} 
    \quad \forall t \in [0, \Delta T_1),
  \]
  and
  \[
    \frac{\rho_{\Delta T_1 }^{\perp}}{\rho_{\Delta T_1 }} \ge \frac{\rho_{\Delta T_1 +1}^{\perp}}{\rho_{\Delta T_1 +1}},
    \qquad
    0 < \alpha_0 < \rho_0,
  \]
  after the shift $t \leftarrow t - t_1$.
  It must hold that
  \[
  \forall t \in [0, \Delta T_1), \ \  \frac{\hat{\eta}_t - 1}{1 + \hat{\eta}_t \left( \frac{\rho_t^{\perp}}{\rho_t} \right)^2} \ge 1
  \]
  by Lemma \ref{lemma:Dynamics_of_BN_Linear_Regression}, 
  since \( {\rho_t^{\perp}}/{\rho_t} \) keeps increasing for \( t \in [0, \Delta T_1) \), 
  which can be rewritten as
  \begin{equation}
    \forall t < \Delta T_1, \ \  \left( \frac{\rho_t^{\perp}}{\rho_t} \right)^2 \le 1 - \frac{2}{\hat{\eta}_t} \le 1. 
    \label{eq:first}
  \end{equation}
  The inequality above can be used to evaluate the upper bound of \( \left( {\rho_t^{\perp}}/{\rho_t} \right)^2 \). 
  Since \( \alpha_0 < \rho_0 \) while \( \rho_t \) continues to decrease, we divide \( [0, \Delta T_1) \) 
  into two phases, \( P_1 = [0, \phi] \) and \( P_2 = (\phi, \Delta T_1) \), 
  such that for all \( t \in P_1 \), \( \alpha_t \le \rho_t \) and for all \( t \in P_2 \), \( \alpha_t \ge \rho_t \). 
  Note that \( P_2 \) may be empty, which means \( \alpha_t \) never exceeds \( \rho_t \) during the \textit{Rising Edge}.
  
  \noindent \textbf{The bound of \( \boldsymbol{\alpha_t} \)} 
  During \( P_1 \), since \( \alpha_t \) tracks \( \rho_t \) and \( \rho_t \) is always greater than \( \alpha_t \), it follows that \( \alpha_t \) is increasing. 
  From the definition of \( P_1 \), we have \( \alpha_t \le \rho_t \le \Vert \hat{\mathbf{w}} \Vert \), for all \( t \in P_1 \). Therefore, we obtain 
  \begin{equation*}
  \alpha_0 \le \alpha_t \le \Vert \hat{\mathbf{w}} \Vert, \quad \forall t \in P_1.
  \end{equation*}
  Next, consider \( P_2 \). For any \( t \in P_2 \), we have \( \alpha_t \ge \rho_t \). 
  Given the update rule for \( \alpha_t \), it follows that \( \alpha_t \) must decrease for \( t \in P_2 \). 
  Thus, we have 
  $\alpha_t \le \alpha_{\phi} \le \rho_{\phi} \le \rho_0 \le \Vert \hat{\mathbf{w}} \Vert.$
  Therefore, it holds that 
  \begin{equation*}
  \rho_t \le \alpha_t \le \Vert \hat{\mathbf{w}} \Vert, \quad \forall t \in P_2.
  \end{equation*}

  \noindent\textbf{The bound of $\boldsymbol{\Vert \mathbf{w}_t \Vert^2}$.}
  Given any $t \in P_1$, we have 
  \begin{equation*}
  \begin{aligned}
    \Vert \mathbf{w}_t \Vert^2 & = 
    \Vert \mathbf{w}_{0} \Vert^2 
    \\ & \quad
    +
    \eta^2 \sum_{\tau = 0}^{t}  
    \frac{\alpha_{\tau}^2 \left(\rho_{\tau}^{\perp}\right)^2}{\Vert \mathbf{w}_{\tau} \Vert^2}
    \\ & \ge
    \Vert \mathbf{w}_{0} \Vert^2
    +
    \frac{\eta^2}{\Vert \mathbf{w}_{t} \Vert^2}
    \sum_{\tau = 0}^{t}
      \alpha_{\tau}^2 \left(\rho_{\tau}^{\perp}\right)^2
    \\ & \ge
    \Vert \mathbf{w}_{0} \Vert^2
    +
    \frac{
      \eta^2 \left(\rho_{0}^{\perp}\right)^2 \alpha_{0}^2 t
    }{
      \Vert \mathbf{w}_{t} \Vert^2
    }
  \end{aligned}
  \end{equation*}
  We rearrange the inequality to obtain
  \begin{equation*}
  \begin{aligned}
    \Vert \mathbf{w}_t \Vert^4
    -
    \Vert \mathbf{w}_{0} \Vert^2 \Vert \mathbf{w}_t \Vert^2
    -
    \eta^2 (\rho_{0}^{\perp})^2 \alpha_0^2 t
    \ge 0
  \end{aligned}
  \end{equation*}
  Solving the range of $\Vert \mathbf{w}_t \Vert^2$, we have
  \begin{equation*}
  \begin{aligned}
    \Vert \mathbf{w}_t \Vert^2 \ge 
    \frac{1}{2}
    \left(
      \Vert \mathbf{w}_0 \Vert^2 
      +
      \sqrt{
        \Vert \mathbf{w}_0 \Vert^4
        +
        4  \eta^2 (\rho_{0}^{\perp})^2 \alpha_0^2 t
      }
    \right),
    \\ & \qquad \forall t \in P_1
  \end{aligned}
  \end{equation*}
  Then consider $t \in P_2$, we have 
  \begin{equation*}
  \begin{aligned}
    \Vert \mathbf{w}_t \Vert^2 
    \ge 
    \Vert \mathbf{w}_{0} \Vert^2 + 
    \frac{\eta^2}{\Vert \mathbf{w}_{t} \Vert^2}
    \sum_{\tau = 0}^{t} \alpha_{\tau}^2 \left(\rho_{\tau}^{\perp}\right)^2
    \\ & \ge 
    \Vert \mathbf{w}_{0} \Vert^2 + 
    \frac{\eta^2}{\Vert \mathbf{w}_{t} \Vert^2}
    \sum_{\tau = \phi}^{t} \alpha_{\tau}^2 \left(\rho_{\tau}^{\perp}\right)^2
    \\ & \ge
    \Vert \mathbf{w}_{0} \Vert^2
    +
    \frac{
      \eta^2 \left(\rho_{0}^{\perp}\right)^2 \rho_{t}^2
    }{
      \Vert \mathbf{w}_{t} \Vert^2
    }
    \left( t - \phi \right)
  \end{aligned}
  \end{equation*}
  and
  \begin{equation*}
  \begin{aligned}
    \Vert \mathbf{w}_t \Vert^2 \ge 
    \frac{1}{2}
    \left(
      \Vert \mathbf{w}_0 \Vert^2 + 
      \sqrt{
        \Vert \mathbf{w}_0 \Vert^4 + 4  
        \eta^2 \left(\rho_{0}^{\perp}\right)^2
        \rho_{t}^2 \left(
          t - \phi
        \right)
      }
    \right),
    \ \forall t \in P_2
  \end{aligned}
  \end{equation*}

  \noindent\textbf{The upper bound of $\boldsymbol{\hat{\eta}_t}$.} 
  Given any $t \in P_1$, we have 
  \begin{equation*}
  \begin{aligned}
    \hat{\eta}_t = \eta \frac{\alpha_t \rho_t}{\Vert \mathbf{w}_t \Vert^2}
    \le 
    \frac{\eta \Vert \hat{\mathbf{w}} \Vert^2}{
      \left(
        \Vert \mathbf{w}_0 \Vert^2 + 
        \sqrt{
          \Vert \mathbf{w}_0 \Vert^4 + 4 \eta^2 (\rho_{0}^{\perp})^2 \alpha_0^2 t
        }
      \right) / 2
    }
    \le 
    \frac{\eta \Vert \hat{\mathbf{w}} \Vert^2}{
      \sqrt{
        4 \eta^2 (\rho_{0}^{\perp})^2 \alpha_0^2 t
      }
    / 2}
    = 
    \frac{\Vert \hat{\mathbf{w}} \Vert^2 }{
        \rho_{0}^{\perp} \alpha_0
    }
    \frac{1}{ \sqrt{t}}
  \end{aligned}
  \end{equation*}
  And given any $t \in P_2$, we have 
  \begin{equation*}
  \begin{aligned}
    \hat{\eta}_t = \eta \frac{\alpha_t \rho_t}{\Vert \mathbf{w}_t \Vert^2}
    & \le 
    \frac{\eta \Vert \hat{\mathbf{w}} \Vert \rho_t}{
      \left(
        \Vert \mathbf{w}_0 \Vert^2 + 
        \sqrt{
          \Vert \mathbf{w}_0 \Vert^4 + 4  
          \eta^2 \left(\rho_{0}^{\perp}\right)^2
          \rho_{t}^2 \left(
            t - \phi
          \right)
        }
      \right) / 2
    }
    \\ & \le 
    \frac{\eta \Vert \hat{\mathbf{w}} \Vert \rho_t}{
      \sqrt{
        4 \eta^2 \left(\rho_{0}^{\perp}\right)^2
        \rho_{t}^2 \left(
          t - \phi 
        \right)
      }
    / 2}
    \\ & = 
    \frac{\Vert \hat{\mathbf{w}} \Vert}{\rho_{0}^{\perp}}
    \frac{1}{\sqrt{t - \phi }}
  \end{aligned}
  \end{equation*}

  \noindent\textbf{The minimum feasible $\boldsymbol{\Delta T_1}$.} 
  By Lemma \ref{lemma:Dynamics_of_BN_Linear_Regression}, we aim to find a $t$ such that 
  \begin{equation}\label{eq:falling_edge_feasibility}
  \begin{aligned}
    \hat{\eta}_t \le \frac{2}{1 - (\rho_t^{\perp})^2 / \rho_t^2} 
  \end{aligned},
  \end{equation}
  which is the feasible solution of $\Delta T_1$. 
  If $P_2$ is empty, which means $[0, \Delta T_1) = P_1$, then we have
  \begin{equation*}
  \begin{aligned}
    (\ref{eq:falling_edge_feasibility})
    & \ \boldsymbol{\Leftarrow} \ 
    \hat{\eta}_t \le \frac{2}{1 - (\rho_0^{\perp})^2 / \rho_0^2}
    \\ & \  \boldsymbol{\Leftarrow} \ 
    \frac{\Vert \hat{\mathbf{w}} \Vert^2 }{
      \rho_{0}^{\perp} \alpha_0
    }
    \frac{1}{ \sqrt{t}}
    \le \frac{2}{1 - (\rho_0^{\perp})^2 / \rho_0^2}
    \\ & \  \boldsymbol{\Leftarrow} \ 
    t \ge 
    \left\lceil
      \frac{1}{4}
      \left(
        1 / (\rho_{0}^{\perp})^2 - 1 / \rho_0^2
      \right)^2
      \frac{\Vert \hat{\mathbf{w}} \Vert^4 }{
        \alpha_0^2
      }
    \right\rceil.
  \end{aligned}
  \end{equation*}
  Next, consider the case that \( P_2 \) is not empty. 
  In this case, it must be that \( \phi < \left\lceil
  \frac{1}{4}
  \left(
    1 / (\rho_{0}^{\perp})^2 - 1 / \rho_0^2
  \right)^2
  \frac{\Vert \hat{\mathbf{w}} \Vert^4 }{
    \alpha_0^2
  }
  \right\rceil \), 
  because otherwise the \textit{Rising Edge} would terminate within \( P_1 \). 
  For \( t \in P_2 \), we have
  \begin{equation*}
  \begin{aligned}
    (\ref{eq:falling_edge_feasibility})
    & \ \boldsymbol{\Leftarrow} \ 
    \hat{\eta}_t \le \frac{2}{1 - (\rho_0^{\perp})^2 / \rho_0^2}
    \\ & \  \boldsymbol{\Leftarrow} \ 
    \frac{\Vert \hat{\mathbf{w}} \Vert}{\rho_{0}^{\perp}}
    \frac{1}{\sqrt{t - \phi}}
    \le \frac{2}{1 - (\rho_0^{\perp})^2 / \rho_0^2}
    \\ & \  \boldsymbol{\Leftrightarrow} \ 
    t \ge
    \phi
    +
    \frac{1}{4}
    \frac{\Vert \hat{\mathbf{w}} \Vert^2}{\rho_{0}^2}
    \left(
      \rho_0 / \rho_0^{\perp} - \rho_0^{\perp} / \rho_0
    \right)^2
    \\ & \  \boldsymbol{\Leftarrow} \ 
    t \ge
    \left\lceil
      \frac{1}{4}
      \left(
        1 / (\rho_{0}^{\perp})^2 - 1 / \rho_0^2
      \right)^2
      \frac{\Vert \hat{\mathbf{w}} \Vert^4 }{
        \alpha_0^2
      }
    \right\rceil
    +
    \left\lceil
    \frac{1}{4}
    \frac{\Vert \hat{\mathbf{w}} \Vert^2}{\rho_{0}^2}
    \left(
      \rho_0 / \rho_0^{\perp} - \rho_0^{\perp} / \rho_0
    \right)^2
    \right\rceil
  \end{aligned}
  \end{equation*}
  
  \noindent\textbf{The upper bound of $\boldsymbol{\rho_t^{\perp} / \rho_t}$.}
  We give the upper bound of $\rho_t^{\perp} / \rho_t$. 
  Recall $(\rho_{t}^{\perp}/\rho_{t})^2 \le 1 - 2/\hat{\eta}_t$, we have 
  \begin{equation*}
  \begin{aligned}
    (\rho_{t}^{\perp}/\rho_{t})^2 
    \le 1 - 2/\hat{\eta}_t 
    \le 
    1 - 
    \frac{ 2 \rho_{0}^{\perp} \alpha_0}{\Vert \hat{\mathbf{w}} \Vert^2}
    \sqrt{t}, \  \forall t \in P_1
  \end{aligned}
  \end{equation*}
  and
  \begin{equation*}
  \begin{aligned}
    (\rho_{t}^{\perp}/\rho_{t})^2 
    \le 1 - 2/\hat{\eta}_t 
    \le 
    1 - 
    \frac{ 2 \rho_{0}^{\perp}}{
      \Vert \hat{\mathbf{w}} \Vert
    }
    \sqrt{t - \phi}
    \ \forall t \in P_2
  \end{aligned}
  \end{equation*} 
\end{proof}

\definecolor{shadecolor}{rgb}{0.92,0.92,0.92}
\begin{shaded}
\begin{lemma}[Existence of the Falling Edge of Loss Spike]\label{Existence_of_Falling_Edge_lr}
  Let $\ell = \ell_{squ}$, $\hat{\mathbf{w}} = \mathbf{\Sigma}^{-1} \boldsymbol{\mu}$ and $\mathbf{\Sigma} = \mathbf{I}$. 
  Consider the gradient descent (\ref{gradient_updata}).
  If the initial state satisfies $\hat{\eta}_0 > \frac{2}{1 - \left(\rho_0^{\perp}/\rho_0 \right)^2}$ and $\eta_{\alpha} \in (0, 1)$, 
  then there must exist a $t > 0$ such that $\hat{\eta}_t < \frac{2}{1 - \left(\rho_t^{\perp}/\rho_t \right)^2}$.
\end{lemma}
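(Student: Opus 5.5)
We argue by contradiction. Suppose that for every $t \ge 0$ we have $\hat{\eta}_t \ge \frac{2}{1 - (\rho_t^{\perp}/\rho_t)^2}$. Then Lemma \ref{lemma:Dynamics_of_BN_Linear_Regression}(1) gives $(\hat{\eta}_t - 1)/(1+\hat{\eta}_t(\rho_t^{\perp}/\rho_t)^2) \ge 1$ at every step. Hence $\rho_t^{\perp}/\rho_t$ is nondecreasing for all $t$: the process stays forever on a \textit{Rising Edge}. In particular $\rho_t^{\perp} \ge \rho_0^{\perp} > 0$ for all $t$. Here $\rho_0^{\perp}>0$ holds because the strict hypothesis on $\hat{\eta}_0$ forces $\rho_0^{\perp}/\rho_0$ to lie in $(0,1)$. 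Also $\rho_t^{\perp}/\rho_t \le \sqrt{1-2/\hat{\eta}_t} < 1$, so $\rho_t > 0$. The plan is to show that $\hat{\eta}_t = \eta\alpha_t\rho_t/\Vert \mathbf{w}_t\Vert^2 \to 0$, which contradicts $\hat{\eta}_t \ge 2$.

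\textbf{Step 1: control $\alpha_t$.} Item (2) makes $\alpha_{t+1}$ a convex combination of $\alpha_t$ and $\rho_t$, using $\eta_\alpha\in(0,1)$. Since $\rho_t \le \Vert\hat{\mathbf{w}}\Vert$, induction gives $\alpha_t \le \max(\alpha_0,\Vert\hat{\mathbf{w}}\Vert)$. The hypothesis $\hat{\eta}_0>0$ gives $\alpha_0>0$. Because $\rho_t$ is bounded below during the rising edge (it is nonincreasing but stays positive, since the ratio is below $1$), we get $\rho_t \ge \Vert\hat{\mathbf{w}}\Vert/\sqrt{2}$. The convex-combination recursion then keeps $\alpha_t \ge \min(\alpha_0, \Vert\hat{\mathbf{w}}\Vert/\sqrt 2) =: \underline{\alpha} > 0$. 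So $\alpha_t$ lies in a fixed positive window.

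\textbf{Step 2: the norm grows without bound.} By item (3),
\[
\Vert\mathbf{w}_{t+1}\Vert^2 = \Vert\mathbf{w}_t\Vert^2 + \frac{\eta^2\alpha_t^2(\rho_t^{\perp})^2}{\Vert\mathbf{w}_t\Vert^2} \ge \Vert\mathbf{w}_t\Vert^2 + \frac{\eta^2\underline{\alpha}^2(\rho_0^{\perp})^2}{\Vert\mathbf{w}_t\Vert^2}.
\]
Set $a_t=\Vert\mathbf{w}_t\Vert^2$ and $c=\eta^2\underline{\alpha}^2(\rho_0^{\perp})^2$. Then $a_{t+1}\ge a_t + c/a_t$, and squaring gives $a_{t+1}^2 \ge a_t^2 + 2c$. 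Hence $a_t \ge \sqrt{a_0^2 + 2ct}\to\infty$. This is the $\sqrt{t}$ growth used in the proof of Theorem \ref{Divergence_lr}, now obtained cleanly by squaring rather than by the cruder bound with $\Vert\mathbf{w}_t\Vert^2$ in the denominator.

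\textbf{Step 3: conclude.} Combining the steps,
\[
\hat{\eta}_t \le \frac{\eta \max(\alpha_0,\Vert\hat{\mathbf{w}}\Vert)\,\Vert\hat{\mathbf{w}}\Vert}{\sqrt{a_0^2+2ct}} \to 0,
\]
so for all large $t$ we have $\hat{\eta}_t < 2 \le 2/(1-(\rho_t^{\perp}/\rho_t)^2)$. This contradicts the standing assumption, and the first such $t$ is the required one. Making the threshold explicit also gives a crude time bound, $t > \eta^2\max(\alpha_0,\Vert\hat{\mathbf{w}}\Vert)^2\Vert\hat{\mathbf{w}}\Vert^2/(8c)$.

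The step I expect to need the most care is the positive lower bound on $\alpha_t$ in Step 1. Everything there hinges on $\rho_t$ staying bounded away from zero throughout the hypothetical infinite rising edge. That in turn uses the constraint $\rho_t^{\perp}/\rho_t < 1$ extracted from the divergence inequality, so this derivation must be checked carefully, including the strict-versus-weak inequalities at the boundary.
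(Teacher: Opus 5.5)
Your first deduction is circular. You flagged this spot yourself, but the problem is not confined to the boundary. Your contradiction hypothesis is the literal negation: $\hat{\eta}_t \ge 2/(1-r_t^2)$ for all $t$, with $r_t := \rho_t^{\perp}/\rho_t$. From it you read off two things: that the factor in Lemma \ref{lemma:Dynamics_of_BN_Linear_Regression}(1) is at least $1$, and that $r_t \le \sqrt{1-2/\hat{\eta}_t} < 1$. Both readings presuppose $1-r_t^2>0$, which is exactly what you want to conclude.

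For $\hat{\eta}_t>0$, the factor is at least $1$ if and only if $\hat{\eta}_t(1-r_t^2)\ge 2$. When $r_t>1$ the threshold $2/(1-r_t^2)$ is negative, so your hypothesis holds automatically and carries no information. This happens in practice, because one rising step can overshoot. Take $\hat{\eta}_0=8$ and $r_0=1/2$: this satisfies the lemma's hypothesis ($8>8/3$), and item (1) gives $r_1=\frac{7}{3}\cdot\frac{1}{2}=\frac{7}{6}>1$. At $t=1$ your hypothesis is vacuous. Moreover $|\hat{\eta}-1|<1+\hat{\eta}r^2$ whenever $r\ge 1$ and $\hat{\eta}>0$, so the ratio strictly decreases at the next step. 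Consequently ``nondecreasing for all $t$'', $\rho_t^{\perp}\ge\rho_0^{\perp}$, and $\rho_t\ge\Vert\hat{\mathbf{w}}\Vert/\sqrt{2}$ (and with it your $\underline{\alpha}$) no longer follow.

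Separately, the strict hypothesis does not force $r_0>0$. The case $r_0=0$ with $\hat{\eta}_0>2$ is allowed, and then $\nabla_{\mathbf{w}}\mathcal{R}_t\equiv 0$ and $\mathbf{w}_t\equiv\mathbf{w}_0$. If also $\alpha_0\le\Vert\hat{\mathbf{w}}\Vert$, then $\alpha_t$ is nondecreasing and $\hat{\eta}_t\ge\hat{\eta}_0>2=2/(1-r_t^2)$ for every $t$. So the statement needs $\rho_0^{\perp}>0$ as an extra assumption. Your $c>0$ uses it, and so does the paper's proof, silently.

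The repair is to contradict what the paper's proof actually assumes: that the rising edge never ends. That means $\hat{\eta}_t(1-r_t^2)\ge 2$ for all $t$, equivalently $\rho_t^{\perp}$ nondecreasing forever. Positivity propagates: $\alpha_{t+1}$ is a convex combination, and $\langle\mathbf{w}_{t+1},\hat{\mathbf{w}}\rangle=\rho_t\Vert\mathbf{w}_t\Vert+\eta\alpha_t(\rho_t^{\perp})^2/\Vert\mathbf{w}_t\Vert>0$. Hence $\hat{\eta}_t>0$, and this hypothesis really does force $r_t<1$. Your Steps 1--3 then go through verbatim and yield a $t$ with $\hat{\eta}_t(1-r_t^2)<2$, i.e. a falling step, which is what Theorem \ref{Divergence_lr} uses.

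This conclusion coincides with the literal inequality $\hat{\eta}_t<2/(1-r_t^2)$ only when $r_t<1$. In the example above, the falling step occurs at $t=1$, where $r_1>1$ and the literal inequality is false.

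With that change, your argument is a tidier version of the paper's:
\begin{itemize}
\item the squaring step $a_{t+1}^2\ge a_t^2+2c$ replaces the paper's bound with $\Vert\mathbf{w}_t\Vert^2$ in the denominator;
\item your explicit $\underline{\alpha}$ replaces its soft ``$\inf_t\alpha_t>0$, or else $\hat{\eta}_t<2$ somewhere'' step;
\item the case split on $\lim\rho_t^{\perp}$ disappears, since its Case 2 is impossible once $r_t<1$;
\item you also get an explicit time bound.
\end{itemize}
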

\end{shaded}

\begin{proof}
  According to Lemma \ref{lemma:Dynamics_of_BN_Linear_Regression}, the condition 
  $\hat{\eta}_0 \ge \frac{2}{1 - ( {\rho_0^{\perp}}/{\rho_0} )^2}$ 
  implies that $\rho_0$ is decreasing. 
  We assume that $\rho_t^{\perp}$ is monotonically increasing.
  Then we prove the result by leading to a contradiction based on this assumption. 
  Since $\rho_t^{\perp}$ is bounded above by $\Vert \hat{\mathbf{w}} \Vert$, it must converge as $t \to \infty$. 
  There are two possible cases: either $\rho_t^{\perp}$ converges to $\Vert \hat{\mathbf{w}} \Vert$, or it does not. 

  \noindent \textbf{Case 1.} 
  If $\rho_t^{\perp} \rightarrow \rho_{\infty}^{\perp} ( \rho_{\infty}^{\perp} < \Vert \hat{\mathbf{w}}\Vert)$, 
  we prove $\Vert \mathbf{w}_t \Vert$ can be large arbitrarily. 
  \begin{equation*}
  \begin{aligned}
    \Vert \mathbf{w}_t \Vert^{2} 
    & = \Vert \mathbf{w}_{t-1} \Vert^2 + \eta^{2} \frac{\alpha_{t-1}^2 \left(\rho_{t-1}^{\perp}\right)^2 }{\Vert \mathbf{w}_{t-1} \Vert^2}
    \\ & =
    \Vert \mathbf{w}_{0} \Vert^2 + \eta^{2} \sum_{\tau=0}^{t-1} \frac{\alpha_{\tau}^2 \left(\rho_{\tau}^{\perp}\right)^2 }{\Vert \mathbf{w}_{\tau} \Vert^2}
  \end{aligned}
  \end{equation*}
  We can assume that $\inf_{t \ge 0} \{\alpha_t\} = \alpha_{\min}$ is strictly larger than $0$. 
  Because if not, it means that there exists a time $t > 0$ such that $\alpha_t$ can be arbitrarily small, 
  and therefore $\hat{\eta}_t$ can be smaller than $2$, which is a contradiction. 
  Therefore, we have 
  \begin{equation*}
    \begin{aligned}
      \Vert \mathbf{w}_t \Vert^{2} 
      & =
      \Vert \mathbf{w}_{0} \Vert^2 + \eta^{2} \sum_{\tau=0}^{t-1} \frac{\alpha_{\tau}^2 \left(\rho_{\tau}^{\perp}\right)^2 }{\Vert \mathbf{w}_{\tau} \Vert^2}
      \\ & \ge 
      \Vert \mathbf{w}_{0} \Vert^2 + \eta^{2} \alpha_{\min}^2 \sum_{\tau=0}^{t-1} \frac{ \left(\rho_{\tau}^{\perp}\right)^2 }{\Vert \mathbf{w}_{\tau} \Vert^2}
      \\ & \ge 
      \Vert \mathbf{w}_{0} \Vert^2 + 
      \frac{ \eta^{2} \alpha_{\min}^2 }{\Vert \mathbf{w}_{t} \Vert^2}
      \sum_{\tau=0}^{t-1} \left(\rho_{\tau}^{\perp}\right)^2
      \\ & \ge 
      \Vert \mathbf{w}_{0} \Vert^2 + 
      \frac{ \eta^{2} \alpha_{\min}^2 }{\Vert \mathbf{w}_{t} \Vert^2}
      \left(\rho_{0}^{\perp}\right)^2  t
    \end{aligned}
    \end{equation*}
    Rearrange the above inequality to obtain 
    \begin{equation*}
    \begin{aligned}
      \Vert \mathbf{w}_t \Vert^{4} - \Vert \mathbf{w}_{t} \Vert^2 \Vert \mathbf{w}_{0} \Vert^2 -
      \eta^{2} \inf_{t \ge 0} \left\{\alpha_{\tau}^2\right\}
      \left(\rho_{0}^{\perp}\right)^2  t
      \ge 0
    \end{aligned}
    \end{equation*}
    Then we solve the above inequality:
    \begin{equation*}
    \begin{aligned}
      \Vert \mathbf{w}_t \Vert^{2} \ge 
      \frac{1}{2} \left(
        \Vert \mathbf{w}_{0} \Vert^2 + \sqrt{
          \Vert \mathbf{w}_{0} \Vert^4 + 4 \eta^{2} \inf_{t \ge 0} \left\{\alpha_{\tau}^2\right\}
          \left(\rho_{0}^{\perp}\right)^2  t
        }
      \right)
    \end{aligned}
    \end{equation*}
    Therefore, there always exists a large enough $t$ such that $\Vert \mathbf{w}_t \Vert$ is very large and $\hat{\eta}_t \le 2$, which means 
    after that, $\rho_t^{\perp}$ is no longer increasing. This leads to contradiction. 

    \noindent \textbf{Case 2.} 
    If $\boldsymbol{\rho_t^{\perp} \rightarrow \Vert \hat{\mathbf{w}} \Vert}$, we have $\rho_t \rightarrow 0$. 
    Therefore $\hat{\eta}_t = \eta \frac{\alpha_t \rho_t}{\Vert \mathbf{w}_t \Vert^2} \rightarrow 0$. 
    And before that, $\hat{\eta}_t$ should have already been less than $2$ and $\rho_t^{\perp}$ is decreasing. 
\end{proof}

\section{Detailed Proof for Logistic Regression}\label{Detailed_Proof_for_Logistic_Regression}
\newtheorem*{restate_logit_bound}{Lemma \ref{logit_bound}}
\definecolor{shadecolor}{rgb}{0.92,0.92,0.92}
\begin{shaded}
\begin{restate_logit_bound}
  Let $\ell$ be $\ell_{log}$ and $\hat{\mathbf{w}}$ be the SVM solution as presented in Definition \ref{margin}. 
  Suppose Assumptions \ref{Overparameterization} and \ref{Logistic_Setup} hold.
  Consider the gradient descent (\ref{gradient_updata}), for any $t \ge 0$,
  if $\rho_t > 0$, it holds that 
  \begin{equation*}
  \begin{aligned}
    (1). &
    \left|
      \left\Vert \mathbf{w}_t \right\Vert_{\mathbf{\Sigma}}
      -
      \gamma^2 \left\Vert \mathbf{w}_t \right\Vert
      \cdot
      \rho_t
    \right|
    \le
    2 \sqrt{2} \lambda_{\max} \cdot \gamma \frac{\Vert \mathbf{w}_t \Vert^2}{\Vert \mathbf{w}_t \Vert_{\mathbf{\Sigma}}}
    \cdot \rho_t^{\perp}; 
    \\ (2). &
    \left| y_i \langle \mathbf{w}_t, \mathbf{x}_i \rangle  - \gamma^2 \Vert \mathbf{w}_t \Vert \cdot \rho_t \right|
    \le 
    \sqrt{\lambda_{\max}} \gamma \Vert \mathbf{w}_t \Vert \cdot \rho_t^{\perp}, 
    \ \forall i \in [n],
  \end{aligned}
  \end{equation*}
\end{restate_logit_bound}
\end{shaded}
\begin{proof}
  In the proof, we ignore the subscript of $\mathbf{w}_t$. 
  For later use, we introduce $\tilde{\rho} \left(\mathbf{w}\right)$
  and $\tilde{\rho}^{\perp} \left(\mathbf{w}\right)$. 
  \begin{equation*}
  \begin{aligned}
    \tilde{{\rho}}(\mathbf{w}) := \frac{\langle \mathbf{w}, \hat{\mathbf{w}} \rangle}{\Vert \hat{\mathbf{w}} \Vert};
    \quad
    \tilde{{\rho}}^{\perp}(\mathbf{w}) := 
    \left\Vert {\mathbf{w}} - \tilde{\rho}(\mathbf{w}) \frac{\hat{\mathbf{w}}}{\Vert \hat{\mathbf{w}} \Vert} \right\Vert,
  \end{aligned}
  \end{equation*}
  It is easy to see that $\tilde{\rho}(\mathbf{w})$ represents the length of $\mathbf{w}$ in $\text{span} \{\hat{\mathbf{w}}\}$, 
  while $\tilde{\rho}^{\perp}(\mathbf{w})$ corresponds to the length of $\mathbf{w}$ in $\text{span}^{\perp} \{\hat{\mathbf{w}}\}$. 
  Moreover, the definitions imply
  $\rho(\mathbf{w}) \Vert \hat{\mathbf{w}} \Vert = \tilde{\rho}(\mathbf{w}) \Vert \mathbf{w} \Vert$
  and
  $\rho^{\perp}(\mathbf{w}) \Vert \hat{\mathbf{w}} \Vert = \tilde{\rho}^{\perp}(\mathbf{w}) \Vert \mathbf{w} \Vert$.
  Therefore, we have 
  \begin{equation*}
  \begin{aligned}
    & \cos \angle \left(\mathbf{w},  \hat{\mathbf{w}}\right) 
    = 
    {\rho}(\mathbf{w}) / \Vert \hat{\mathbf{w}} \Vert 
    =
    {\tilde{\rho}}(\mathbf{w}) / \Vert {\mathbf{w}} \Vert;
    \\ &
    \sin \angle \left(\mathbf{w},  \hat{\mathbf{w}}\right) 
    = {\rho}^{\perp}(\mathbf{w}) / \Vert \hat{\mathbf{w}} \Vert
    =
    \tilde{{\rho}}^{\perp}(\mathbf{w}) / \Vert {\mathbf{w}} \Vert.
  \end{aligned}
  \end{equation*}
  We decompose $\mathbf{w}$ orthogonally into two components: one in the same direction as $\hat{\mathbf{w}}$, and the other in a direction orthogonal to it. 
  Recall the definition of $\tilde{{\rho}}(\mathbf{w})$ and $\tilde{{\rho}}^{\perp}(\mathbf{w})$, we can define 
  \begin{equation*}
  \begin{aligned}
    \tilde{\mathbf{e}}_1 = \hat{\mathbf{w}} / \Vert \hat{\mathbf{w}} \Vert;
    \quad
    \tilde{\mathbf{e}}_2 = \left( \mathbf{w} - \tilde{{\rho}}(\mathbf{w}) \tilde{\mathbf{e}}_1 \right) / \tilde{\rho}^{\perp} \left(\mathbf{w}\right),
  \end{aligned}
  \end{equation*}
  and by Cauchy Inequality we have the following bounds
  \begin{equation}\label{eq:span_sigma_eigen_bounds}
  \begin{aligned}
    \left\Vert \tilde{\mathbf{e}}_1 \right\Vert_{\mathbf{\Sigma}}^2 = 
    \left(
      \frac{
        \left\Vert \hat{\mathbf{w}} \right\Vert_{\mathbf{\Sigma}}
      }{
        \left\Vert \hat{\mathbf{w}} \right\Vert
      }
    \right)^2 = \gamma^2;
    \\ 
    | \langle \tilde{\mathbf{e}}_1, \tilde{\mathbf{e}}_2 \rangle_{\mathbf{\Sigma}} | \le \lambda_{\max};
    \\ 
    \lambda_{\min} \le \left\Vert  \tilde{\mathbf{e}}_2 \right\Vert_{\mathbf{\Sigma}}^2 \le \lambda_{\max}.
  \end{aligned}
  \end{equation}

  \noindent
  \textbf{(1).}
  Therefore 
  \begin{equation*}
  \begin{aligned}
    \left\Vert \mathbf{w} \right\Vert_{\mathbf{\Sigma}}^2
    & = \left\Vert  
    \tilde{{\rho}}(\mathbf{w}) \tilde{\mathbf{e}}_1 + 
    \tilde{\rho}^{\perp} \left(\mathbf{w}\right) \tilde{\mathbf{e}}_2
    \right\Vert_{\mathbf{\Sigma}}^2
    \\ & =
    \left(
      \tilde{{\rho}}(\mathbf{w})
    \right)^2
    \left\Vert \tilde{\mathbf{e}}_1 \right\Vert_{\mathbf{\Sigma}}^2
    +
    \left(
      \tilde{\rho}^{\perp} \left(\mathbf{w}\right)
    \right)^2
    \left\Vert \tilde{\mathbf{e}}_2 \right\Vert_{\mathbf{\Sigma}}^2
    + 
    2 \tilde{\rho} \left(\mathbf{w}\right) \tilde{\rho}^{\perp} \left(\mathbf{w}\right) 
    \langle \tilde{\mathbf{e}}_1, \tilde{\mathbf{e}}_2 \rangle_{\mathbf{\Sigma}}
    \\ & =
    \gamma^{2}
    \left(
      \tilde{{\rho}}(\mathbf{w})
    \right)^2
    +
    \left(
      \tilde{\rho}^{\perp} \left(\mathbf{w}\right)
    \right)^2
    \left\Vert \tilde{\mathbf{e}}_2 \right\Vert_{\mathbf{\Sigma}}^2
    + 
    2 \tilde{\rho} \left(\mathbf{w}\right) \tilde{\rho}^{\perp} \left(\mathbf{w}\right) 
    \langle \tilde{\mathbf{e}}_1, \tilde{\mathbf{e}}_2 \rangle_{\mathbf{\Sigma}}
  \end{aligned}
  \end{equation*}
  Plug the bounds in (\ref{eq:span_sigma_eigen_bounds}) into above equation, we have 
  \begin{equation*}
  \begin{aligned}
    \left\Vert \mathbf{w} \right\Vert_{\mathbf{\Sigma}}^2 & \le 
    \left(
      \tilde{{\rho}}(\mathbf{w})
    \right)^2
    \gamma^2
    +
    \lambda_{\max}
    \left(
      \tilde{\rho}^{\perp} \left(\mathbf{w}\right)
    \right)^2
    + 
    2 \lambda_{\max} 
    \tilde{\rho} \left(\mathbf{w}\right) \tilde{\rho}^{\perp} \left(\mathbf{w}\right)
    \\ 
    \left\Vert \mathbf{w} \right\Vert_{\mathbf{\Sigma}}^2 & \ge
    \left(
      \tilde{{\rho}}(\mathbf{w})
    \right)^2
    \gamma^2
    +
    \lambda_{\min}
    \left(
      \tilde{\rho}^{\perp} \left(\mathbf{w}\right)
    \right)^2
    - 
    2 \lambda_{\max} 
    \tilde{\rho} \left(\mathbf{w}\right) \tilde{\rho}^{\perp} \left(\mathbf{w}\right)
    \\ & \ge
    \left(
      \tilde{{\rho}}(\mathbf{w})
    \right)^2
    \gamma^2
    - 
    \lambda_{\max}
    \left(
      \tilde{\rho}^{\perp} \left(\mathbf{w}\right)
    \right)^2
    - 
    2 \lambda_{\max} 
    \tilde{\rho} \left(\mathbf{w}\right) \tilde{\rho}^{\perp} \left(\mathbf{w}\right)
  \end{aligned}
  \end{equation*}
  Combine them to give 
  \begin{equation*}
  \begin{aligned}
    \left|
      \left\Vert \mathbf{w} \right\Vert_{\mathbf{\Sigma}}^2
      -
      \left(
        \tilde{{\rho}}(\mathbf{w})
      \right)^2
      \gamma^2
    \right|
    & \le 
    \lambda_{\max}
    \left(
      \tilde{\rho}^{\perp} \left(\mathbf{w}\right)
    \right)^2
    +
    2 \lambda_{\max} 
    \tilde{\rho} \left(\mathbf{w}\right) \tilde{\rho}^{\perp} \left(\mathbf{w}\right)
    \\ & \le 
    2 \lambda_{\max} 
    \tilde{\rho}^{\perp} \left(\mathbf{w}\right)
    \left(
      \tilde{\rho}^{\perp} \left(\mathbf{w}\right)
      +
      \tilde{\rho} \left(\mathbf{w}\right)
    \right)
  \end{aligned}
  \end{equation*}
  Recall the definitions of $\sin \angle \left(\mathbf{w},  \hat{\mathbf{w}}\right)$ and $\cos \angle \left(\mathbf{w},  \hat{\mathbf{w}}\right)$, 
  we have 
  \begin{equation*}
  \begin{aligned}
    \left|
      \left\Vert \mathbf{w} \right\Vert_{\mathbf{\Sigma}}^2
      -
      \cos^2 \angle \left(\mathbf{w},  \hat{\mathbf{w}}\right)
      \left\Vert \mathbf{w} \right\Vert^2
      \gamma^2
    \right|
    & \le
    2 \lambda_{\max} 
    \Vert \mathbf{w} \Vert^2
    \sin \angle \left(\mathbf{w},  \hat{\mathbf{w}}\right)
    \left(
      \sin \angle \left(\mathbf{w},  \hat{\mathbf{w}}\right)
      +
      \cos \angle \left(\mathbf{w},  \hat{\mathbf{w}}\right)
    \right)
    \\ & \le
    2 \sqrt{2} \lambda_{\max} 
    \Vert \mathbf{w} \Vert^2
    \sin \angle \left(\mathbf{w},  \hat{\mathbf{w}}\right),
  \end{aligned}
  \end{equation*}
  where the second inequality is by the range of $\angle \left(\mathbf{w}, \hat{\mathbf{w}}\right)$. By our definition, 
  $\angle \left(\mathbf{w}, \hat{\mathbf{w}}\right) \in [0, \pi]$. 
  Next, we apply $\sin \angle \left(\mathbf{w},  \hat{\mathbf{w}}\right) = \rho^{\perp} \left(\mathbf{w}\right) / \hat{\mathbf{w}}$
  and 
  $\cos \angle \left(\mathbf{w},  \hat{\mathbf{w}}\right) = \rho \left(\mathbf{w}\right) / \hat{\mathbf{w}}$
  to give 
  \begin{equation*}
  \begin{aligned}
    \left|
      \left\Vert \mathbf{w} \right\Vert_{\mathbf{\Sigma}}^2
      -
      \left(
        \gamma^2 \left\Vert \mathbf{w} \right\Vert
        \cdot
        \rho \left(\mathbf{w}\right)
      \right)^2
    \right|
    \le
    2 \sqrt{2} \lambda_{\max} 
    \gamma \Vert \mathbf{w} \Vert^2
    \cdot \rho^{\perp} \left(\mathbf{w}\right)
  \end{aligned}
  \end{equation*}
  Then we bound $\left|
      \left\Vert \mathbf{w} \right\Vert_{\mathbf{\Sigma}}
      -
      \gamma^2 \left\Vert \mathbf{w} \right\Vert
      \cdot
      \rho \left(\mathbf{w}\right)
  \right|$,
  \begin{equation*}
  \begin{aligned}
    \left|
      \left\Vert \mathbf{w} \right\Vert_{\mathbf{\Sigma}}
      -
      \gamma^2 \left\Vert \mathbf{w} \right\Vert
      \cdot
      \rho \left(\mathbf{w}\right)
    \right|
    & = 
    \frac{
      \left|
        \left\Vert \mathbf{w} \right\Vert_{\mathbf{\Sigma}}^2
          -
          \left(
            \gamma^2 \left\Vert \mathbf{w} \right\Vert
            \cdot
            \rho \left(\mathbf{w}\right)
          \right)^2
      \right|
    }{
      \left|
        \left\Vert \mathbf{w} \right\Vert_{\mathbf{\Sigma}}
        +
        \gamma^2 \left\Vert \mathbf{w} \right\Vert
        \cdot
        \rho \left(\mathbf{w}\right)
    \right|
    }
    \\ & \le 
    \left|
      \left\Vert \mathbf{w} \right\Vert_{\mathbf{\Sigma}}^2
        -
        \left(
          \gamma^2 \left\Vert \mathbf{w} \right\Vert
          \cdot
          \rho \left(\mathbf{w}\right)
        \right)^2
    \right| / \left\Vert \mathbf{w} \right\Vert_{\mathbf{\Sigma}}
    \\ & \le 
    2 \sqrt{2} \lambda_{\max} 
    \gamma \Vert \mathbf{w} \Vert^2
    \cdot \rho^{\perp} \left(\mathbf{w}\right)
     / \left\Vert \mathbf{w} \right\Vert_{\mathbf{\Sigma}}
  \end{aligned}
  \end{equation*}

  \noindent \textbf{(2).}
  For the second result, by Cauchy inequality, for any $i \in [n]$, it holds that 
  \begin{equation*}
  \begin{aligned}
    y_i \langle \mathbf{w}, \mathbf{x}_i \rangle 
    = 
    y_i 
    {\tilde{\rho}}(\mathbf{w}) {\tilde{\rho}}(\mathbf{x}_i)
    + 
    y_i
    \left\langle \mathcal{P}^{\perp}(\mathbf{w}), \mathcal{P}^{\perp}(\mathbf{x}_i) \right\rangle
    \left\{ \begin{array}{cl}
      \le {\tilde{\rho}}(\mathbf{w}) \gamma + \Vert \mathbf{x}_i \Vert  \cdot {\tilde{\rho}}^{\perp} (\mathbf{w}) \\
      \ge {\tilde{\rho}}(\mathbf{w}) \gamma - \Vert \mathbf{x}_i \Vert  \cdot {\tilde{\rho}}^{\perp} (\mathbf{w}) \\
    \end{array} \right.
  \end{aligned},
  \end{equation*}
  we have 
  \begin{equation*}
  \begin{aligned}
    \left| y_i \langle \mathbf{w}, \mathbf{x}_i \rangle  - {\tilde{\rho}}(\mathbf{w}) \gamma\right|
    \le 
    \Vert \mathbf{x}_i \Vert  \cdot {\tilde{\rho}}^{\perp} (\mathbf{w})
    \le 
    \sqrt{\lambda_{\max}}  \cdot {\tilde{\rho}}^{\perp} (\mathbf{w})
  \end{aligned}
  \end{equation*}
  Recall that 
  $\tilde{\rho}^{\perp} \left(\mathbf{w}\right) \cdot \Vert \hat{\mathbf{w}} \Vert 
  = \rho^{\perp} \left(\mathbf{w}\right) \cdot \Vert \mathbf{w} \Vert$ and 
  $\tilde{\rho} \left(\mathbf{w}\right) \cdot \Vert \hat{\mathbf{w}} \Vert 
  = \rho \left(\mathbf{w}\right) \cdot \Vert \mathbf{w} \Vert$, we have the second result.
\end{proof}

\newtheorem*{restate_risk_upperbound_logit_reg}{Lemma \ref{risk_upperbound_logit_reg} (Upper Bound of Logistic Loss)}
\definecolor{shadecolor}{rgb}{0.92,0.92,0.92}
\begin{shaded}
\begin{restate_risk_upperbound_logit_reg}
  Let $\ell$ be $\ell_{log}$ and $\hat{\mathbf{w}}$ be the SVM solution as presented in Definition \ref{margin}. 
  Suppose Assumptions \ref{Overparameterization} and \ref{Logistic_Setup} hold.
  Consider the gradient descent (\ref{gradient_updata}), for any $t \ge 0$, 
  if $\rho_t > 0$ and $\alpha_t > 0$, it holds that 
  \begin{equation*}
  \begin{aligned}
    \mathcal{R}_t \le 
    \ell \left(\alpha_t\right) +
    \alpha_t \left| \ell^{\prime} \left(
        \left[ 1 -  C_0 \gamma \cdot \rho_t^{\perp} \right] \alpha_t
      \right)
    \right|
    \cdot C_0 \gamma \cdot \rho_t^{\perp},
  \end{aligned}
  \end{equation*}
  where $C_0$ (defined in \ref{logit_bound}) is 
  \begin{equation*}
  \begin{aligned}
    C_0 := \frac{\sqrt{\lambda_{\max}}}{\sqrt{\lambda_{\min} }} + 2 \sqrt{2} \frac{\lambda_{\max} }{\lambda_{\min}}
  \end{aligned}.
  \end{equation*}
\end{restate_risk_upperbound_logit_reg}
\end{shaded}

\begin{proof}
  In the proof, we ignore the subscript of $\mathbf{w}_t$ and $\alpha_t$. 
  We first bound the difference between
  $\left|  \ell \left(
      \alpha \cdot
      \langle \mathbf{w}, \mathbf{x}_i y_i \rangle / \left\Vert \mathbf{w} \right\Vert_{\mathbf{\Sigma}}
    \right)
  \right|$ 
  and $\left|\ell \left(\alpha\right)\right|$. For any $i \in [n]$, we have
  \begin{equation*}
  \begin{aligned}
    \Big|
      \left| 
      \ell
      \left(
        \alpha \cdot
        \langle \mathbf{w}, \mathbf{x}_i y_i \rangle / \left\Vert \mathbf{w} \right\Vert_{\mathbf{\Sigma}}
      \right)
      \right|
      - 
      \left|\ell \left(\alpha\right)\right|
    \Big|
    & \le 
    \max \left(
      \left|  \ell^{\prime} \left( \alpha \right) \right|,
      \left| \ell^{\prime} \left(
          \alpha \frac{\langle \mathbf{w}, \mathbf{x}_i y_i \rangle}
          {\left\Vert \mathbf{w} \right\Vert_{\mathbf{\Sigma}}} 
        \right)
      \right|
    \right)
    \\ & \qquad \cdot \alpha
    \left|
      \frac{\langle \mathbf{w}, \mathbf{x}_i y_i \rangle}{
        \left\Vert \mathbf{w} \right\Vert_{\mathbf{\Sigma}}
      }
      - 1
    \right|
  \end{aligned}
  \end{equation*}
  Since $\alpha > 0$, we have 
  \begin{equation}\label{eq:logit_margin_deviation_bound}
  \begin{aligned}
    \left| 
      \alpha
      \frac{\langle \mathbf{w}, \mathbf{x}_i y_i \rangle}{
        \left\Vert \mathbf{w} \right\Vert_{\mathbf{\Sigma}}
      }
      - 
      \alpha
    \right|
    & =
    \alpha
    \left| 
      \langle \mathbf{w}, \mathbf{x}_i y_i \rangle 
      - 
      \left\Vert \mathbf{w} \right\Vert_{\mathbf{\Sigma}}
    \right|
    / \left\Vert \mathbf{w} \right\Vert_{\mathbf{\Sigma}}
    \\ & \le
    \alpha\frac{
      \left| 
      \langle \mathbf{w}, \mathbf{x}_i y_i \rangle 
      - 
      \gamma^2 \Vert \mathbf{w} \Vert \cdot \rho \left(\mathbf{w}\right)
    \right|
    }{\left\Vert \mathbf{w} \right\Vert_{\mathbf{\Sigma}}}
    +
    \alpha\frac{
    \left| 
      \left\Vert \mathbf{w} \right\Vert_{\mathbf{\Sigma}}
      -
      \gamma^2 \Vert \mathbf{w} \Vert \cdot \rho \left(\mathbf{w}\right)
    \right|
    }{
      \left\Vert \mathbf{w} \right\Vert_{\mathbf{\Sigma}}
    }
    \\ & \le 
    \alpha\sqrt{\lambda_{\max}} 
    \frac{
      \Vert \mathbf{w} \Vert
    }{\left\Vert \mathbf{w} \right\Vert_{\mathbf{\Sigma}}} \gamma \cdot \rho^{\perp} \left(\mathbf{w}\right)
    +
    \alpha \cdot 2 \sqrt{2} \lambda_{\max}
    \frac{\Vert \mathbf{w} \Vert^2}{
      \left\Vert \mathbf{w} \right\Vert_{\mathbf{\Sigma}}^2
    }
    \gamma 
    \cdot \rho^{\perp} \left(\mathbf{w}\right)
    \\ & \le 
    \alpha \frac{\sqrt{\lambda_{\max}} }{\sqrt{\lambda_{\min}} }
    \frac{
      \Vert \mathbf{w} \Vert
    }{\left\Vert \mathbf{w} \right\Vert_{\mathbf{\Sigma}}} \gamma \cdot \rho^{\perp} \left(\mathbf{w}\right)
    +
    \alpha \cdot 2 \sqrt{2} \gamma
    \frac{\lambda_{\max}}{\lambda_{\min}}
    \rho^{\perp} \left(\mathbf{w}\right)
    \\ & =
    \alpha
    C_0
    \gamma \cdot \rho^{\perp} \left(\mathbf{w}\right)
    ,
  \end{aligned}
  \end{equation}
  where
  \[
    C_0 :=
    \frac{\sqrt{\lambda_{\max}}}{\sqrt{\lambda_{\min}}}
    +
    2 \sqrt{2} \frac{\lambda_{\max}}{\lambda_{\min}}.
  \]
  We have 
  \begin{equation*}
  \begin{aligned}
    \left| 
      \ell^{\prime} \left(
        \alpha \frac{\langle \mathbf{w}, \mathbf{x}_i y_i \rangle}{
          \left\Vert \mathbf{w} \right\Vert_{\mathbf{\Sigma}}
        } 
      \right)
    \right|
    & = 
    \left| 
      \ell^{\prime} \left(
        \alpha \frac{\langle \mathbf{w}, \mathbf{x}_i y_i \rangle}{
          \left\Vert \mathbf{w} \right\Vert_{\mathbf{\Sigma}}
        } 
        - \alpha + \alpha
      \right)
    \right|
    \\ & \le 
    \left| 
      \ell^{\prime} \left(
        \alpha - 
        \alpha
        C_0
        \gamma \cdot \rho^{\perp} \left(\mathbf{w}\right)
      \right)
    \right|
    \\ & = 
    \left| 
      \ell^{\prime} \left(
        \left[
          1
          - 
        C_0 
        \gamma \cdot \rho^{\perp} \left(\mathbf{w}\right)
        \right]
        \alpha
      \right)
    \right|
  \end{aligned}
  \end{equation*}
  Therefore, we have 
  \begin{equation}\label{eq:logit_derivative_shift_bound}
  \begin{aligned}
    \max \left(
      \left| 
      \ell^{\prime} \left(
        \alpha
      \right)
    \right|,
    \left| 
      \ell^{\prime} \left(
        \alpha \frac{\langle \mathbf{w}, \mathbf{x}_i y_i \rangle}{
          \left\Vert \mathbf{w} \right\Vert_{\mathbf{\Sigma}}
        } 
      \right)
    \right|
    \right)
    & \le 
    \max \left(
      \left| 
      \ell^{\prime} \left(
        \alpha
      \right)
    \right|,
    \left| 
      \ell^{\prime} \left( \left[1- C_0 \gamma \cdot \rho^{\perp} \left(\mathbf{w}\right) \right] \alpha \right)
    \right|
    \right)
    \\ & \le 
    \left| 
    \ell^{\prime} \left(
      \left[
        1
        - 
      C_0
      \gamma \cdot \rho^{\perp} \left(\mathbf{w}\right)
      \right]
      \alpha
    \right)
  \right|
  \end{aligned}
  \end{equation}
  Now we are ready to bound $\mathcal{R} \left(\mathbf{w}, \alpha\right)$:
  \begin{equation*}
  \begin{aligned}
    & \left|
      \mathcal{R} \left(\mathbf{w}, \alpha\right) - \ell \left(\alpha \right)
    \right|
    \\ = &
    \left|
      \frac{1}{n}
      \sum_{i=1}^{n}
      \ell \left(
        \alpha \cdot \frac{ \langle \mathbf{w}, \mathbf{x}_i y_i\rangle}{\Vert \mathbf{w} \Vert_{\mathbf{\Sigma}}}
      \right)
      - \ell \left(\alpha \right)
    \right|
    \\ \le &
    \frac{1}{n}
    \sum_{i=1}^{n}
    \left|
      \ell \left(
        \alpha \cdot \frac{ \langle \mathbf{w}, \mathbf{x}_i y_i\rangle}{\Vert \mathbf{w} \Vert_{\mathbf{\Sigma}}}
      \right)
      - \ell \left(\alpha \right)
    \right|
    \\ \le &
    \max\left(
      \left| \ell^{\prime} \left( \alpha \cdot \frac{ \langle \mathbf{w}, \mathbf{x}_1 y_1\rangle}{\Vert \mathbf{w} \Vert_{\mathbf{\Sigma}}} \right) \right|, 
      \cdots,
      \left| \ell^{\prime} \left( \alpha \cdot \frac{ \langle \mathbf{w}, \mathbf{x}_n y_n\rangle}{\Vert \mathbf{w} \Vert_{\mathbf{\Sigma}}} \right) \right|, 
      \left| \ell^{\prime} \left( \alpha \right) \right|
    \right)
    \frac{1}{n}
    \sum_{i=1}^{n}
    \left|
      \alpha \cdot \frac{ \langle \mathbf{w}, \mathbf{x}_i y_i\rangle}{\Vert \mathbf{w} \Vert_{\mathbf{\Sigma}}}
      - \alpha
    \right|
    \\ \le &
    \frac{1}{n}
    \left| 
      \ell^{\prime} \left(
        \left[
          1
          - 
        C_0
        \gamma \cdot \rho^{\perp} \left(\mathbf{w}\right)
        \right]
        \alpha
      \right)
    \right|
    \sum_{i=1}^{n}
    \left|
      \alpha \cdot \frac{ \langle \mathbf{w}, \mathbf{x}_i y_i\rangle}{\Vert \mathbf{w} \Vert_{\mathbf{\Sigma}}}
      - \alpha
    \right|
    \\ \le &
    \left| 
      \ell^{\prime} \left(
        \left[
          1
          - 
        C_0
        \gamma \cdot \rho^{\perp} \left(\mathbf{w}\right)
        \right]
        \alpha
      \right)
    \right|
    \alpha
    C_0
    \gamma \cdot \rho^{\perp} \left(\mathbf{w}\right)
  \end{aligned}
  \end{equation*}
\end{proof}

\noindent\textbf{Optional appendix-only loss bridge for logistic regression.}
To complement the main directional theorem, we record here the additional condition used only to convert directional deviation into a lower bound on the logistic loss. This material is intentionally kept out of the main text because it is not needed for Theorem \ref{logis_spike}.

\begin{assumption}[Non-degenerate data]\label{Non_degenerate_data}
  Let $\hat{\mathbf{w}}$ be the SVM solution as presented in Definition \ref{margin}, and let $\mathcal{S}$ denote the support-vector set. Assume that there exist coefficients $\beta_i > 0$ for all $i \in \mathcal{S}$ such that
  \[
    \hat{\mathbf{w}} = \sum_{i \in \mathcal{S}} \beta_i y_i \mathbf{x}_i.
  \]
\end{assumption}

\begin{definition}[Margin Offset]\label{Margin_Offset}
  Suppose Assumptions \ref{Overparameterization}, \ref{Logistic_Setup}, and \ref{Non_degenerate_data} hold. Define the margin offset $b > 0$ by
  \begin{equation*}
  \begin{aligned}
    -b := \max_{\mathbf{w} \in \text{span}^{\perp}\{\hat{\mathbf{w}}\} \cap \text{span} \{\mathbf{x}_1, \cdots, \mathbf{x}_n\}}
    \min_{i \in [n]} \frac{y_i \langle \mathbf{x}_i, \mathbf{w} \rangle}{\Vert \mathbf{w} \Vert}.
  \end{aligned}
  \end{equation*}
\end{definition}
Under Assumption \ref{Non_degenerate_data}, every nonzero direction in $\text{span}^{\perp}\{\hat{\mathbf{w}}\} \cap \text{span}\{\mathbf{x}_1,\ldots,\mathbf{x}_n\}$ has a strictly negative signed margin for at least one sample, so the maximized minimum above is negative and the quantity $b$ is well-defined and strictly positive.

We now prove Lemma \ref{risk_lowerbound_logit_reg}. The proof uses Assumption \ref{Non_degenerate_data} and Definition \ref{Margin_Offset}; see Section 3.1 of \cite{eos_logistic_regression} for related techniques.
\begin{lemma}[Lower Bound of Logistic Loss]\label{risk_lowerbound_logit_reg}
    Let $\ell$ be $\ell_{log}$ and $\hat{\mathbf{w}}$ be the SVM solution as presented in Definition \ref{margin}. 
  Suppose Assumptions \ref{Overparameterization}, \ref{Logistic_Setup}, and \ref{Non_degenerate_data} hold.
  Consider the gradient descent (\ref{gradient_updata}), 
  for all $t \ge 0$, if $\alpha_t > 0$, it holds that 
  \begin{equation*}
  \begin{aligned}
    \mathcal{R}_t \ge \frac{1}{n} \ell \left( 
      \frac{\alpha_t}{\sqrt{\lambda_{\min}}}
      \left(
        \rho_t \gamma^2  
      - \rho_t^{\perp} b \gamma
      \right)
    \right),
  \end{aligned}
  \end{equation*}
  where $b>0$ is the margin offset (see Definition \ref{Margin_Offset} in Appendix).
\end{lemma}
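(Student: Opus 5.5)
Since $\ell_{log}$ is positive and nonincreasing, I will bound the risk below by one sample's loss and then lower bound that sample's normalized signed logit. Concretely, for every $j\in[n]$,
\[
\mathcal{R}_t \ge \frac1n \ell\big(\alpha_t\, y_j\langle \mathbf{x}_j,\mathbf{w}_t\rangle/\Vert\mathbf{w}_t\Vert_{\mathbf{\Sigma}}\big).
\]
Because $\ell$ is decreasing and $\alpha_t>0$, it suffices to find one index $j$ with
\[
y_j\langle \mathbf{x}_j,\mathbf{w}_t\rangle/\Vert\mathbf{w}_t\Vert_{\mathbf{\Sigma}} \le (\rho_t\gamma^2-\rho_t^\perp b\gamma)/\sqrt{\lambda_{\min}}.
\]

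\textbf{Decomposition.} As in the proof of Lemma \ref{logit_bound}, write $\mathbf{w}_t=\tilde\rho\,\tilde{\mathbf{e}}_1+\tilde\rho^\perp\tilde{\mathbf{e}}_2$ with $\tilde{\mathbf{e}}_1=\hat{\mathbf{w}}/\Vert\hat{\mathbf{w}}\Vert$ and $\tilde{\mathbf{e}}_2\in\mathrm{span}^\perp\{\hat{\mathbf{w}}\}\cap\mathrm{span}(\mathbf{X})$. The second component lies in $\mathrm{span}(\mathbf{X})$ because $\mathbf{w}_t$ and $\hat{\mathbf{w}}$ both do (under Assumption \ref{Non_degenerate_data}, $\hat{\mathbf{w}}$ is a combination of the $\mathbf{x}_i$). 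The identities $\tilde\rho\Vert\hat{\mathbf{w}}\Vert=\rho_t\Vert\mathbf{w}_t\Vert$ and $\tilde\rho^\perp\Vert\hat{\mathbf{w}}\Vert=\rho_t^\perp\Vert\mathbf{w}_t\Vert$ then give
\[
y_i\langle\mathbf{x}_i,\mathbf{w}_t\rangle=\gamma\Vert\mathbf{w}_t\Vert\big(\rho_t\, y_i\langle\mathbf{x}_i,\hat{\mathbf{w}}\rangle+\rho_t^\perp\, y_i\langle \mathbf{x}_i,\tilde{\mathbf{e}}_2\rangle\big).
\]

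\textbf{Choosing the index.} By Definition \ref{Margin_Offset} applied to the unit vector $\tilde{\mathbf{e}}_2$, there is an index $j$ with $y_j\langle\mathbf{x}_j,\tilde{\mathbf{e}}_2\rangle\le -b$. I need this $j$ to also satisfy $y_j\langle\mathbf{x}_j,\hat{\mathbf{w}}\rangle=1$, i.e. to be a support vector. The main obstacle is that the minimizer $j$ might be a non-support vector, whose margin term exceeds $1$.

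I would resolve this with Assumption \ref{Non_degenerate_data}. Since $\tilde{\mathbf{e}}_2\perp\hat{\mathbf{w}}$,
\[
0=\langle\hat{\mathbf{w}},\tilde{\mathbf{e}}_2\rangle=\sum_{i\in\mathcal{S}}\beta_i\, y_i\langle\mathbf{x}_i,\tilde{\mathbf{e}}_2\rangle .
\]
With all $\beta_i>0$, this weighted average over support vectors vanishes, which guarantees a support vector with nonpositive value but not directly one with value $\le -b$. So the cleanest route is to read the margin offset as restricted to $\mathcal{S}$, replacing $\min_{i\in[n]}$ by $\min_{i\in\mathcal{S}}$; this restricted quantity is still positive by the same $\beta$-argument together with compactness of the unit sphere. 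Alternatively, one can note that the overall minimum is attained on $\mathcal{S}$ whenever the bound is of interest. Either way we obtain $j\in\mathcal{S}$ with $y_j\langle\mathbf{x}_j,\mathbf{w}_t\rangle\le\gamma\Vert\mathbf{w}_t\Vert(\rho_t-\rho_t^\perp b)$.

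\textbf{Normalization.} Divide by $\Vert\mathbf{w}_t\Vert_{\mathbf{\Sigma}}\ge\sqrt{\lambda_{\min}}\Vert\mathbf{w}_t\Vert$. This step needs care about the sign of the numerator. If the numerator is negative, dividing by a smaller denominator makes the quantity more negative, so the inequality could fail. In that case I would instead note that the loss is then at least $\ell(0)$, and I would handle the remaining discrepancy, caused by the mismatch between the $\gamma$ and $\gamma^2$ factors, within the constants. When the numerator is nonnegative, the bound follows directly.

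Matching the stated expression $\rho_t\gamma^2-\rho_t^\perp b\gamma$ requires tracking the normalization of $b$ and $\hat{\mathbf{w}}$ (with $\Vert\hat{\mathbf{w}}\Vert=1/\gamma$). I expect this bookkeeping, together with the support-vector selection issue above, to be where the effort lies.
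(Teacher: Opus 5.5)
Your skeleton is the paper's: bound $\mathcal{R}_t$ below by one sample, split $\mathbf{w}_t$ along $\hat{\mathbf{w}}$ and its orthogonal complement in $\text{span}(\mathbf{X})$, take $j$ from Definition \ref{Margin_Offset}, and finish with $\Vert \mathbf{w}_t\Vert_{\mathbf{\Sigma}} \ge \sqrt{\lambda_{\min}}\Vert \mathbf{w}_t \Vert$. Both obstacles you flag are real; the paper takes both steps without comment. However, neither of your fixes works.

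\textbf{The index choice.} Restricting the margin offset to $\mathcal{S}$ changes $b$. Worse, the restricted quantity is \emph{not} positive exactly when the restriction matters. If $\mathcal{S}\neq[n]$, the rank-$n$ assumption gives a nonzero $\mathbf{v}\in\text{span}(\mathbf{X})$ orthogonal to every support vector, hence to $\hat{\mathbf{w}}$, with $\min_{i\in\mathcal{S}} y_i\langle\mathbf{x}_i,\mathbf{v}\rangle = 0$. The useful observation runs the other way. The projections of the non-support vectors onto that subspace are linearly independent, so $\mathbf{v}$ can be chosen with $y_k\langle \mathbf{x}_k,\mathbf{v}\rangle>0$ for all $k\notin\mathcal{S}$. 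Then the unrestricted max--min is $0$ and $b\le 0$. So the positivity of $b$ asserted in the statement forces $\mathcal{S}=[n]$ under Assumption \ref{Overparameterization}, and the $j$ from Definition \ref{Margin_Offset} is automatically a support vector.

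\textbf{The decomposition.} You dropped a factor of $\gamma$. Since $\tilde{\rho}\,\tilde{\mathbf{e}}_1=\gamma^2\rho_t\Vert\mathbf{w}_t\Vert\hat{\mathbf{w}}$, the correct identity is $y_i\langle\mathbf{x}_i,\mathbf{w}_t\rangle=\gamma\Vert\mathbf{w}_t\Vert\bigl(\gamma\rho_t\,y_i\langle\mathbf{x}_i,\hat{\mathbf{w}}\rangle+\rho_t^{\perp}y_i\langle\mathbf{x}_i,\tilde{\mathbf{e}}_2\rangle\bigr)$. For $j\in\mathcal{S}$ this gives exactly $\Vert\mathbf{w}_t\Vert(\gamma^2\rho_t-\gamma b\rho_t^{\perp})$. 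There is no $\gamma$ versus $\gamma^2$ mismatch, and the statement has no constants into which one could be absorbed anyway.

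\textbf{The sign.} Write $c:=\rho_t\gamma^2-\rho_t^{\perp}b\gamma$. When $c<0$, the target $\frac{1}{n}\ell(\alpha_t c/\sqrt{\lambda_{\min}})$ exceeds $\frac{1}{n}\ell(0)$, so ``the loss is at least $\ell(0)$'' proves something strictly weaker. In fact nothing can close this case, because the stated inequality is false there. Take $n=2$, $d=3$, $y_1=y_2=1$, $\mathbf{x}_1=(1,\tfrac12,0)$, $\mathbf{x}_2=(1,-\tfrac12,0)$. Then $\hat{\mathbf{w}}=(1,0,0)=\tfrac12\mathbf{x}_1+\tfrac12\mathbf{x}_2$, $\gamma=1$, $\lambda_{\min}=\tfrac14$, $b=\tfrac12$, and all hypotheses hold. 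With $\mathbf{w}_0=(1,3,0)$ one gets $c=-1/(2\sqrt{10})$, so the claimed bound is $\tfrac12\ell(-\alpha_0/\sqrt{10})$. But $\mathcal{R}_0=\tfrac12\bigl[\ell(5\alpha_0/\sqrt{13})+\ell(-\alpha_0/\sqrt{13})\bigr]$, which is smaller for large $\alpha_0$ (about $1.42$ versus $1.60$ at $\alpha_0=10$). So your argument, like the paper's, proves the lemma only when $\gamma\rho_t\ge b\rho_t^{\perp}$. For $c<0$ the same steps give $\mathcal{R}_t\ge\frac1n\ell(\alpha_t c/\sqrt{\lambda_{\max}})$ instead, using $\Vert\mathbf{w}_t\Vert/\Vert\mathbf{w}_t\Vert_{\mathbf{\Sigma}}\ge 1/\sqrt{\lambda_{\max}}$.
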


\begin{proof}
  Consider $\mathcal{R}_t$, we have 
  \begin{equation*}
  \begin{aligned}
    \mathcal{R}_t = \frac{1}{n} \sum_{i=1}^{n} \ell \left( \alpha_t \frac{\langle \mathbf{w}_t, \mathbf{x}_i y_i \rangle}{\Vert \mathbf{w}_t \Vert_{\mathbf{\Sigma}}} \right)
  \end{aligned}
  \end{equation*}
  By Definition \ref{Margin_Offset}, there exists a $j \in [n]$ such that 
  \begin{equation*}
  \begin{aligned}
    \left\langle y_j \mathbf{x}_j, 
    \left( \mathbf{I} - \frac{ \hat{\mathbf{w}} \hat{\mathbf{w}}^T }{\Vert \hat{\mathbf{w}} \Vert^2} \right)
    \mathbf{w}_t \right\rangle 
    & \leq -b \left\Vert \left( \mathbf{I} - \frac{ \hat{\mathbf{w}} \hat{\mathbf{w}}^T }{\Vert \hat{\mathbf{w}} \Vert^2} \right)
    \mathbf{w}_t \right\Vert 
    \\ & =
    -b 
    \frac{\left\Vert \left( \mathbf{I} - \frac{ \hat{\mathbf{w}} \hat{\mathbf{w}}^T }{\Vert \hat{\mathbf{w}} \Vert^2} \right)
    \mathbf{w}_t \right\Vert }{
      \left\Vert \left( \mathbf{I} - \frac{ {\mathbf{w}_t} {\mathbf{w}_t}^T }{\Vert {\mathbf{w}_t} \Vert^2} \right)
    \hat{\mathbf{w}} \right\Vert 
    }
    \left\Vert \left( \mathbf{I} - \frac{ {\mathbf{w}_t} {\mathbf{w}_t}^T }{\Vert {\mathbf{w}_t} \Vert^2} \right)
    \hat{\mathbf{w}} \right\Vert 
    \\ & =
    -b 
    \frac{\Vert \mathbf{w}_t \Vert}{\Vert \hat{\mathbf{w}} \Vert}
    \rho_t^{\perp}
    =
    - \rho_t^{\perp} b \gamma \Vert \mathbf{w}_t \Vert
  \end{aligned}
  \end{equation*}
  Then, we have 
  \begin{equation*}
  \begin{aligned}
    \langle \mathbf{w}_t, \mathbf{x}_j y_j \rangle
    & = 
    \left\langle \left( \frac{ \hat{\mathbf{w}} \hat{\mathbf{w}}^T }{\Vert \hat{\mathbf{w}} \Vert^2} \right)\mathbf{w}_t, \mathbf{x}_j y_j  \right\rangle + 
    \left\langle \left( \mathbf{I} - \frac{ \hat{\mathbf{w}} \hat{\mathbf{w}}^T }{\Vert \hat{\mathbf{w}} \Vert^2} \right)\mathbf{w}_t, \mathbf{x}_j y_j \right \rangle
    \\ & \le 
    \gamma^2 \mathbf{w}_t^T \hat{\mathbf{w}}
    - \rho_t^{\perp} b \gamma \Vert \mathbf{w}_t \Vert
    \\ & =
    \gamma^2 \Vert \mathbf{w}_t \Vert \rho_t
    - \rho_t^{\perp} b \gamma \Vert \mathbf{w}_t \Vert
  \end{aligned}
  \end{equation*}
  Then
  \begin{equation*}
  \begin{aligned}
    \mathcal{R}_t \ge 
    \frac{1}{n}
    \ell\!\left(
      \alpha_t
      \frac{\langle \mathbf{w}_t, \mathbf{x}_j y_j \rangle}{
        \Vert \mathbf{w}_t \Vert_{\mathbf{\Sigma}}
      }
    \right)
    \ge 
    \frac{1}{n} \ell \left(
      \alpha_t
      \frac{\Vert \mathbf{w}_t \Vert}{\Vert \mathbf{w}_t \Vert_{\mathbf{\Sigma}}}
      \left(
        \rho_t \gamma^2  
        - \rho_t^{\perp} b \gamma
      \right)
    \right)
    \ge 
    \frac{1}{n}\ell\!\Bigl(
      \frac{\alpha_t}{\sqrt{\lambda_{\min}}}
      \bigl(
        \rho_t \gamma^2  
        - \rho_t^{\perp} b \gamma
      \bigr)
    \Bigr)
  \end{aligned}
  \end{equation*}
\end{proof}

\definecolor{shadecolor}{rgb}{0.92,0.92,0.92}
\newtheorem*{restate_inner_product_bound}{Lemma \ref{inner_product_bound}}
\begin{shaded}
\begin{restate_inner_product_bound}
  Let $\ell$ be $\ell_{log}$ and $\hat{\mathbf{w}}$ be the SVM solution as defined in Definition \ref{margin}. 
  Suppose Assumptions \ref{Overparameterization}, \ref{Logistic_Setup}, and \ref{Active_margin_data} hold.
  Consider the gradient descent (\ref{gradient_updata}), for any $t \ge 0$, it holds that
  $- \alpha_t \cdot \left\langle \hat{\mathbf{w}}, \nabla_{\mathbf{w}} \mathcal{R} \left(\mathbf{w}, \alpha_t\right) \right\rangle \ge 0$; 
  and if $\alpha_t > 0$, we have 
  \begin{equation*}
  \begin{aligned}
    - \left\langle \hat{\mathbf{w}}, \nabla_{\mathbf{w}} \mathcal{R}_t \right\rangle 
    \ge 
    \frac{\lambda_{\min}}{8}
    \frac{\alpha_t  e^{-\alpha_t }}{\Vert \mathbf{w}_t  \Vert_{\mathbf{\Sigma}}}
    \left( \rho_t^{\perp} \right)^2.
  \end{aligned}
  \end{equation*}
\end{restate_inner_product_bound}
\end{shaded}
\begin{proof}
  In this proof, we ignore the subscript of $\mathbf{w}_t$ and $\alpha_t$.  
  Recall that 
  \begin{equation*}
  \begin{aligned}
    \nabla_{\mathbf{w}} \mathcal{R}
    \left(\mathbf{w}, \alpha\right)
     = \frac{\alpha}{n \Vert \mathbf{w} \Vert_{\mathbf{\Sigma}}}
      \left(
        \mathbf{I} - \frac{\mathbf{\Sigma} \mathbf{w} \mathbf{w}^{T}}{\Vert \mathbf{w} \Vert_{\mathbf{\Sigma}}^{2}}
      \right)
      \tilde{\mathbf{X}} 
      \boldsymbol{\ell^\prime}\left(
        \frac{
        \alpha \tilde{\mathbf{X}}^T \mathbf{w}
        }{\Vert \mathbf{w} \Vert_\mathbf{\Sigma}}
      \right)
  \end{aligned}
  \end{equation*}
  We have 
  \begin{equation*}
  \begin{aligned}
    - \left\langle \hat{\mathbf{w}}, \nabla_{\mathbf{w}} \mathcal{R} \right\rangle 
    & = 
    - \frac{\alpha}{n \Vert \mathbf{w} \Vert_{\mathbf{\Sigma}}}
    \hat{\mathbf{w}}^T
    \left(
      \mathbf{I} - \frac{\mathbf{\Sigma} \mathbf{w} \mathbf{w}^{T}}{\Vert \mathbf{w} \Vert_{\mathbf{\Sigma}}^{2}}
    \right)
    \tilde{\mathbf{X}} 
    \boldsymbol{\ell^\prime}
    \\ & = 
    - \frac{\alpha}{n \Vert \mathbf{w} \Vert_{\mathbf{\Sigma}}}
    \hat{\mathbf{w}}^T
    \tilde{\mathbf{X}} 
    \left(
      \mathbf{I} - \frac{ \tilde{\mathbf{X}}^{T} \mathbf{w} \left( \tilde{\mathbf{X}}^{T} \mathbf{w} \right)^T}{\Vert \tilde{\mathbf{X}}^{T} \mathbf{w} \Vert^{2}}
    \right)
    \boldsymbol{\ell^\prime}
    \\ & = 
    - \frac{\alpha}{n  \Vert \mathbf{w} \Vert_{\mathbf{\Sigma}}}
    \boldsymbol{1}^{T}
    \left(
      \mathbf{I} - \frac{ \mathbf{m} \mathbf{m}^T}{\Vert \mathbf{m} \Vert^{2}}
    \right)
    \boldsymbol{\ell^\prime},
  \end{aligned}
  \end{equation*}
  where we denote $\mathbf{m} := \tilde{\mathbf{X}}^{T} \mathbf{w}$.
  Note that $\Vert \mathbf{m} \Vert^{2} = n \Vert \mathbf{w} \Vert_{\mathbf{\Sigma}}^2 $. We have 
  \begin{equation}\label{eq:inner_product_trace_form}
  \begin{aligned}
    - \left\langle \hat{\mathbf{w}}, \nabla_{\mathbf{w}} \mathcal{R} \right\rangle 
    & = 
    - \frac{\alpha}{ n^2 \Vert \mathbf{w} \Vert_{\mathbf{\Sigma}}^3}
    \boldsymbol{1}^{T}
    \left(
      \mathbf{m}^T \mathbf{m} \mathbf{I} - \mathbf{m} \mathbf{m}^T
    \right)
    \boldsymbol{\ell^\prime}
    \\ & = 
    - \frac{\alpha}{2  n^2 \Vert \mathbf{w} \Vert_{\mathbf{\Sigma}}^3}
    \text{Tr}
    \left(
      \boldsymbol{\ell^{\prime}} \mathbf{m}^T
      -
      \mathbf{m} {\boldsymbol{\ell^{\prime}}}^{T}
    \right)^T  
    \left(
      \boldsymbol{1} \mathbf{m}^T -  \mathbf{m} \boldsymbol{1}^T 
    \right)
    \\ & = 
    \frac{\alpha}{2  n^2 \Vert \mathbf{w} \Vert_{\mathbf{\Sigma}}^3}
    \sum_{i=1}^{n} \sum_{j=1}^{n}
    \left(
      |[\boldsymbol{\ell^{\prime}}]_i| \mathbf{m}_j
      -
      \mathbf{m}_i |[\boldsymbol{\ell^{\prime}}]_j|
    \right)
    \left(
       \mathbf{m}_j -  \mathbf{m}_i
    \right)
    \\ & = 
    \frac{\alpha }{2 n^2 \Vert \mathbf{w} \Vert_{\mathbf{\Sigma}}^3}
    \sum_{i=1}^{n} \sum_{j=1}^{n}
    |[\boldsymbol{\ell^{\prime}}]_i|
    |[\boldsymbol{\ell^{\prime}}]_j|
    \left(
      \mathbf{m}_j -  \mathbf{m}_i 
    \right)^2
    \frac{
      |[\boldsymbol{\ell^{\prime}}]_j|^{-1} \mathbf{m}_j
      -
      \mathbf{m}_i |[\boldsymbol{\ell^{\prime}}]_i|^{-1}
    }{\mathbf{m}_j -  \mathbf{m}_i }
  \end{aligned}
  \end{equation}
  For the last term of above equation, we have 
  \begin{equation*}
  \begin{aligned}
    \frac{
      |[\boldsymbol{\ell^{\prime}}]_j|^{-1} \mathbf{m}_j
      -
      \mathbf{m}_i |[\boldsymbol{\ell^{\prime}}]_i|^{-1}
    }{\mathbf{m}_j -  \mathbf{m}_i }
    & = 
    \frac{
      \left(
        1 + \exp( \alpha \mathbf{m}_j / \Vert \mathbf{w} \Vert_{\mathbf{\Sigma}} )
      \right)
       \mathbf{m}_j
      -
      \left(
        1 + \exp( \alpha \mathbf{m}_i / \Vert \mathbf{w} \Vert_{\mathbf{\Sigma}} )
      \right)
      \mathbf{m}_i 
    }{\mathbf{m}_j -  \mathbf{m}_i }
    \\ & = 
    1 + 
    \frac{
      \exp( \alpha \mathbf{m}_j / \Vert \mathbf{w} \Vert_{\mathbf{\Sigma}} )
      \mathbf{m}_j
      -
      \exp( \alpha \mathbf{m}_i / \Vert \mathbf{w} \Vert_{\mathbf{\Sigma}} )
      \mathbf{m}_i 
    }{\mathbf{m}_j -  \mathbf{m}_i }.
  \end{aligned}
  \end{equation*}
  Multiply both the numerator and denominator by $\alpha / \Vert \mathbf{w} \Vert_{\mathbf{\Sigma}}$.
  Set
  $a := \mathbf{m}_i \alpha / \Vert \mathbf{w} \Vert_{\mathbf{\Sigma}}$
  and
  $b := \mathbf{m}_j \alpha / \Vert \mathbf{w} \Vert_{\mathbf{\Sigma}}$.
  Then
  \begin{equation}\label{eq:logit_pairwise_secant_lower}
  \begin{aligned}
    \frac{
      |[\boldsymbol{\ell^{\prime}}]_j|^{-1} \mathbf{m}_j
      -
      \mathbf{m}_i |[\boldsymbol{\ell^{\prime}}]_i|^{-1}
    }{\mathbf{m}_j -  \mathbf{m}_i }
    \\ & =
    1
    +
    \frac{b e^{b}-a e^{a}}{b - a} 
    \\ & \ge 
    1 + \exp \left(\max(a, b)\right) 
    \\ & =
    \max\left(
      | [\boldsymbol{\ell^{\prime}}]_{i} |^{-1}, 
      | [\boldsymbol{\ell^{\prime}}]_{j} |^{-1}
    \right).
  \end{aligned}
  \end{equation}
  Plugging (\ref{eq:logit_pairwise_secant_lower}) into (\ref{eq:inner_product_trace_form}) gives
  \begin{equation*}
  \begin{aligned}
    - \left\langle \hat{\mathbf{w}}, \nabla_{\mathbf{w}} \mathcal{R} \right\rangle 
    &= 
    \frac{\alpha }{2 n^2 \Vert \mathbf{w} \Vert_{\mathbf{\Sigma}}^3}
    \sum_{i=1}^{n} \sum_{j=1}^{n}
    \left(
      \mathbf{m}_j -  \mathbf{m}_i 
    \right)^2
    \max\left(
      | [\boldsymbol{\ell^{\prime}}]_{i} |, 
      | [\boldsymbol{\ell^{\prime}}]_{j} |
    \right)
  \end{aligned}
  \end{equation*}
  Now we proved that $- \left\langle \hat{\mathbf{w}}, \nabla_{\mathbf{w}} \mathcal{R} \right\rangle$ and 
  $\alpha$ always has the same sign. 
  In the left part of this proof, we only focus on the case that $\alpha > 0$.
  Further, by Lemma C.1 of \citep{implicit_bias_BN}, we have 
  \begin{equation}\label{eq:inner_product_pairwise_lower}
  \begin{aligned}
    - \left\langle \hat{\mathbf{w}}, \nabla_{\mathbf{w}} \mathcal{R} \right\rangle 
    \ge 
    \frac{\alpha}{8 n^2 \Vert \mathbf{w} \Vert_{\mathbf{\Sigma}}^3}
    \frac{\sum_{i=1}^{n} | [\boldsymbol{\ell^{\prime}}]_{i} |}{n}
    \sum_{i=1}^{n} \sum_{j=1}^{n}
    \left(
      \mathbf{m}_j -  \mathbf{m}_i 
    \right)^2
    \ge 0
  \end{aligned}
  \end{equation}
  Then consider ${\sum_{i=1}^{n} | [\boldsymbol{\ell^{\prime}}]_{i} |}/{n}$.
  \begin{equation}\label{eq:avg_logit_derivative_lower}
  \begin{aligned}
    \frac{\sum_{i=1}^{n} | [\boldsymbol{\ell^{\prime}}]_{i} |}{n}
    & = 
    \frac{1}{n} \sum_{i=1}^{n} \left[ 1 + \exp \left( \frac{\alpha \cdot \langle \mathbf{w}, \mathbf{x}_i y_i \rangle}
      {\Vert \mathbf{w} \Vert_{\mathbf{\Sigma}}} 
    \right)
    \right]^{-1}
    \\ & \geq 
    \frac{1}{n} \sum_{i=1}^{n} \left[ 1 + \exp \left( \frac{\alpha \cdot |\langle \mathbf{w}, \mathbf{x}_i y_i \rangle|}
    {\Vert \mathbf{w} \Vert_{\mathbf{\Sigma}}} \right) \right]^{-1}
    \\ & \geq \left[ 1 + \exp \left( \alpha \cdot \frac{1}{n} \sum_{i=1}^{n} 
    \frac{|\langle \mathbf{w}, \mathbf{x}_i y_i \rangle|}{\Vert \mathbf{w} \Vert_{\mathbf{\Sigma}}} \right) \right]^{-1}
    \\ & \geq [1 + \exp(\alpha)]^{-1}
    \\ & \geq \exp(- \alpha) / 2.
  \end{aligned}
  \end{equation}
  Then we relate $\sum_{i=1}^{n} \sum_{j=1}^{n}
  \left(
    \mathbf{m}_j -  \mathbf{m}_i 
  \right)^2$ with $\rho_t^{\perp}$.
  \begin{equation*}
  \begin{aligned}
    \sum_{i=1}^{n} \sum_{j=1}^{n}
    \left(
      \mathbf{m}_j -  \mathbf{m}_i 
    \right)^2
    & = 
    \text{Tr} \left( \mathbf{m} \mathbf{1}^{T} - \mathbf{1} \mathbf{m}^T \right)^T \left(
      \mathbf{m} \mathbf{1}^{T} - \mathbf{1} \mathbf{m}^T
    \right)
    \\ & = 
    2 \left(
      \Vert \mathbf{1} \Vert^2 \Vert \mathbf{m} \Vert^2 - \left( \mathbf{1}^T \mathbf{m} \right)^2
    \right)
    \\ & = 
    2 \Vert \mathbf{m} \Vert^2 \mathbf{1}^{T} \left(
      \mathbf{I} - \mathbf{m} \mathbf{m}^{T} / \Vert \mathbf{m} \Vert^{2}
    \right)
    \mathbf{1}
  \end{aligned}
  \end{equation*}
  Note that $\mathbf{I} - \mathbf{m} \mathbf{m}^{T} / \Vert \mathbf{m} \Vert^{2}$ is a projection matrix, we have 
  \begin{equation*}
  \begin{aligned}
    \sum_{i=1}^{n} \sum_{j=1}^{n}
    \left(
      \mathbf{m}_j -  \mathbf{m}_i 
    \right)^2
    & = 
    2 \Vert \mathbf{m} \Vert^2 
    \left\Vert
      \left(
        \mathbf{I} - \frac{\mathbf{m} \mathbf{m}^{T}}{ \Vert \mathbf{m} \Vert^{2}}
      \right)
      \mathbf{1}
    \right\Vert^2
  \end{aligned}
  \end{equation*}
  Recall that $\tilde{\mathbf{X}}^T \hat{\mathbf{w}} = \mathbf{1}$ by Assumption \ref{Active_margin_data}, and $\mathbf{m} = \tilde{\mathbf{X}}^{T} \mathbf{w}$. Therefore,
  \begin{equation}\label{eq:pairwise_margin_sum_projection}
  \begin{aligned}
    \sum_{i=1}^{n} \sum_{j=1}^{n}
    \left(
      \mathbf{m}_j -  \mathbf{m}_i 
    \right)^2
    & = 
    2 n \Vert \mathbf{w} \Vert_{\mathbf{\Sigma}}^2 
    \left\Vert
      \left(
        \mathbf{I} - \frac{\tilde{\mathbf{X}}^{T} \mathbf{w} \left(\tilde{\mathbf{X}}^{T} \mathbf{w}\right)^{T}}{ \Vert \tilde{\mathbf{X}}^{T} \mathbf{w} \Vert^{2}}
      \right)
      \tilde{\mathbf{X}}^T \hat{\mathbf{w}}
    \right\Vert^2
    \\ & = 
    2 n \Vert \mathbf{w} \Vert_{\mathbf{\Sigma}}^2 
    \left\Vert
    \tilde{\mathbf{X}}^T
      \left(
        \hat{\mathbf{w}} - \frac{\mathbf{w}^{T} \tilde{\mathbf{X}} \tilde{\mathbf{X}}^{T} \hat{\mathbf{w}} }{ \Vert \tilde{\mathbf{X}}^{T} \mathbf{w} \Vert^{2}} \mathbf{w}
      \right)
    \right\Vert^2
    \\ & = 
    2 n \Vert \mathbf{w} \Vert_{\mathbf{\Sigma}}^2 
    \left\Vert
    \tilde{\mathbf{X}}^T
      \left(
        \hat{\mathbf{w}} - \frac{\langle \mathbf{w},  \hat{\mathbf{w}} \rangle_{\mathbf{\Sigma}} }{ \Vert \mathbf{w} \Vert_{\mathbf{\Sigma}}^{2}} \mathbf{w}
      \right)
    \right\Vert^2
    \\ & = 
    2 n^2 \Vert \mathbf{w} \Vert_{\mathbf{\Sigma}}^2 
    \left\Vert
      \hat{\mathbf{w}} - \frac{\langle \mathbf{w},  \hat{\mathbf{w}} \rangle_{\mathbf{\Sigma}} }{ \Vert \mathbf{w} \Vert_{\mathbf{\Sigma}}^{2}} \mathbf{w}
    \right\Vert_{\mathbf{\Sigma}}^2
  \end{aligned}
  \end{equation}
  We note that
  $\mathbf{w} \cdot \langle \mathbf{w}, \hat{\mathbf{w}}\rangle / \Vert \mathbf{w} \Vert^2$
  is the Euclidean projection of $\hat{\mathbf{w}}$ onto $\text{span}\{\mathbf{w}\}$.
  Therefore,
  we have 
  \begin{equation}\label{eq:orthogonal_projection_lower}
  \begin{aligned}
    \left\Vert
      \hat{\mathbf{w}}
      -
      \frac{\langle \mathbf{w},  \hat{\mathbf{w}} \rangle_{\mathbf{\Sigma}} }{ \Vert \mathbf{w} \Vert_{\mathbf{\Sigma}}^{2}} \mathbf{w}
    \right\Vert_{\mathbf{\Sigma}}^2
    \\ & \ge 
    \lambda_{\min}
    \left\Vert
      \hat{\mathbf{w}}
      -
      \frac{\langle \mathbf{w},  \hat{\mathbf{w}} \rangle_{\mathbf{\Sigma}} }{ \Vert \mathbf{w} \Vert_{\mathbf{\Sigma}}^{2}} \mathbf{w}
    \right\Vert^2
    \\ & \ge 
    \lambda_{\min}
    \left\Vert
      \hat{\mathbf{w}}
      -
      \frac{\langle \mathbf{w},  \hat{\mathbf{w}} \rangle }{ \Vert \mathbf{w} \Vert^{2}} \mathbf{w}
    \right\Vert^2
    \\ & =
    \lambda_{\min}
    \left( {\rho}^{\perp} \left(\mathbf{w}\right)\right)^2
  \end{aligned}
  \end{equation}
  Plugging (\ref{eq:avg_logit_derivative_lower}), (\ref{eq:pairwise_margin_sum_projection}), and (\ref{eq:orthogonal_projection_lower}) into (\ref{eq:inner_product_pairwise_lower}) gives
  \begin{equation*}
  \begin{aligned}
    - \left\langle \hat{\mathbf{w}}, \nabla_{\mathbf{w}} \mathcal{R} \right\rangle 
    \ge 
    \frac{ \lambda_{\min}}{8} 
    \frac{\alpha e^{-\alpha}}{ \Vert \mathbf{w} \Vert_{\mathbf{\Sigma}}}
    \big( {\rho}^{\perp} \left(\mathbf{w}\right)\big)^2
  \end{aligned}
  \end{equation*}
\end{proof}

\definecolor{shadecolor}{rgb}{0.92,0.92,0.92}
\newtheorem*{restate_Gradient_Norm_Bound}{Lemma \ref{Gradient_Norm_Bound}}
\begin{restate_Gradient_Norm_Bound}\label{restate_Gradient_Norm_Bound_ref}
  Let $\ell$ be $\ell_{log}$ and $\hat{\mathbf{w}}$ be the SVM solution as defined in Definition \ref{margin}. 
  Suppose Assumptions \ref{Overparameterization}, \ref{Logistic_Setup}, and \ref{Active_margin_data} hold. 
  Consider the gradient descent (\ref{gradient_updata}), for any $t \ge 0$, if $\alpha_t>0$, it holds that 
  \begin{equation*}
  \begin{aligned}
    \frac{\lambda_{\min}}{4}
    \frac{\alpha_t e^{-\alpha_t}}{\Vert \mathbf{w}_t \Vert_{\mathbf{\Sigma}}}
    \rho_t^{\perp}
    \\ & \le 
    \frac{\sqrt{\lambda_{\min}}}{4}
    \frac{\alpha_t e^{-\alpha_t}}{\Vert \mathbf{w}_t \Vert_{\mathbf{\Sigma}}}
    \rho_t^{\perp, \mathbf{\Sigma}}
    \\ & \le 
    \left\Vert 
      \nabla_{\mathbf{w}} \mathcal{R}_t
    \right\Vert
    \le 
    \frac{\alpha_t}{\Vert \mathbf{w}_t \Vert_{\mathbf{\Sigma}}}
    \max \Big(
      \sqrt{\lambda_{\max}},
      \\
      & \qquad \lambda_{\max}
      \left( \alpha_t + 1 \right)
      \rho_t^{\perp}
    \Big),
  \end{aligned}
  \end{equation*}
  where $\rho_t^{\perp, \mathbf{\Sigma}}$ is defined as 
  \begin{equation*}
  \begin{aligned}
    \rho_t^{\perp, \mathbf{\Sigma}} 
    := \rho^{\perp, \mathbf{\Sigma}} \left( \mathbf{w}_t \right)
    \\ & :=
    \left\Vert
      \hat{\mathbf{w}}
      -
      \rho_t
      \frac{\mathbf{w}_t}{\Vert \mathbf{w}_t \Vert_\mathbf{\Sigma}}
    \right\Vert_\mathbf{\Sigma}.
  \end{aligned} 
  \end{equation*}    
\end{restate_Gradient_Norm_Bound}
\begin{proof}
  \noindent\textbf{First Upper Bound}
  We have 
  \begin{equation*}
  \begin{aligned}
    \left\Vert 
      \nabla_{\mathbf{w}} \mathcal{R}
    \right\Vert
    & =
    \frac{\alpha}{n \Vert \mathbf{w} \Vert_{\mathbf{\Sigma}}}
    \left\Vert
      \left(
        \mathbf{I} - \frac{\mathbf{\Sigma} \mathbf{w} \mathbf{w}^{T}}{\Vert \mathbf{w} \Vert_{\mathbf{\Sigma}}^{2}}
      \right)
      \tilde{\mathbf{X}} 
      \boldsymbol{\ell^\prime}
    \right\Vert
    =
    \frac{\alpha}{n \Vert \mathbf{w} \Vert_{\mathbf{\Sigma}}}
    \left\Vert
      \tilde{\mathbf{X}}
      \left(
        \mathbf{I} - \frac{\mathbf{m} \mathbf{m}^{T}}{\Vert \mathbf{m} \Vert^{2}}
      \right)
      \boldsymbol{\ell^\prime}
    \right\Vert
    \\ & \le
    \frac{\alpha}{\sqrt{n} \Vert \mathbf{w} \Vert_{\mathbf{\Sigma}}}
    \sqrt{\lambda_{\max}}
    \left\Vert
    \boldsymbol{\ell^\prime}
    \right\Vert
    \\ & \le
    \frac{\alpha}{\Vert \mathbf{w} \Vert_{\mathbf{\Sigma}}}
    \sqrt{\lambda_{\max}}
    ,
  \end{aligned}
  \end{equation*}
  where the second inequality is since $|\ell^{\prime} \left(\cdot\right)| \le 1$ and 
  $\left\Vert
    \boldsymbol{\ell^\prime}
  \right\Vert \le \sqrt{n}$.

  \noindent\textbf{Second Upper Bound}
  We introduce the vector of absolute logistic derivatives:
  \begin{equation*}
  \begin{aligned}
    |\boldsymbol{\ell^\prime}|
    :=
    \left(
      \left|
        \ell^{\prime}\!\left(
          \alpha \cdot \langle \mathbf{w}, y_i \mathbf{x}_i \rangle / \Vert \mathbf{w} \Vert_{\mathbf{\Sigma}}
        \right)
      \right|
    \right)_{i=1}^{n}
  \end{aligned}
  \end{equation*}
  Next, we bound the difference between $|\boldsymbol{\ell^\prime}|$ and $|\hat{\boldsymbol{\ell^\prime}}|$, 
  where $|\hat{\boldsymbol{\ell^\prime}}|$ is 
  \begin{equation*}
  \begin{aligned}
    & |\hat{\boldsymbol{\ell^\prime}}| := 
    \left| \boldsymbol{\ell^\prime} \right| \left(
      \cos_{\mathbf{\Sigma}} \angle (\mathbf{w}, \hat{\mathbf{w}}) \cdot 
      \frac{\alpha \tilde{\mathbf{X}}^T \hat{\mathbf{w}}}{\Vert \hat{\mathbf{w}} \Vert_{\mathbf{\Sigma}}}
    \right)
    =
    \left|
      \ell^{\prime}
      \left(
        \cos_{\mathbf{\Sigma}} \angle (\mathbf{w}, \hat{\mathbf{w}}) \cdot \alpha
      \right)
    \right|
    \mathbf{1},
    \\ & 
    \ \text{ and }
    \cos_{\mathbf{\Sigma}} \angle (\mathbf{w}, \hat{\mathbf{w}}) := 
    \frac{
      \langle \mathbf{w}, \hat{\mathbf{w}} \rangle_{\mathbf{\Sigma}}
    }{
      \Vert \mathbf{w} \Vert_{\mathbf{\Sigma}} \Vert \hat{\mathbf{w}} \Vert_{\mathbf{\Sigma}}
    }.
  \end{aligned}.
  \end{equation*}
  Then, we have
  \begin{equation}\label{eq:gradient_norm_upper_decomposition}
  \begin{aligned}
    \left\Vert 
      \nabla_{\mathbf{w}} \mathcal{R}
    \right\Vert
    & =
    \frac{\alpha}{n \Vert \mathbf{w} \Vert_{\mathbf{\Sigma}}}
    \left\Vert
      \left(
        \mathbf{I} - \frac{\mathbf{\Sigma} \mathbf{w} \mathbf{w}^{T}}{\Vert \mathbf{w} \Vert_{\mathbf{\Sigma}}^{2}}
      \right)
      \tilde{\mathbf{X}} 
      \boldsymbol{\ell^\prime}
    \right\Vert
    \\ & =
    \frac{\alpha}{n \Vert \mathbf{w} \Vert_{\mathbf{\Sigma}}}
    \left\Vert
      \tilde{\mathbf{X}}
      \left(
        \mathbf{I} - \frac{\mathbf{m} \mathbf{m}^{T}}{\Vert \mathbf{m} \Vert^{2}}
      \right)
      \boldsymbol{\ell^\prime}
    \right\Vert
    \\ & \le 
    \sqrt{\lambda_{\max}}
    \frac{\alpha}{\sqrt{n} \Vert \mathbf{w} \Vert_{\mathbf{\Sigma}}}
    \left(
      \left\Vert
      \left(
        \mathbf{I} - \frac{\mathbf{m} \mathbf{m}^{T}}{\Vert \mathbf{m} \Vert^{2}}
      \right)
      \left(
        |\boldsymbol{\ell^\prime}| 
        - 
        |\hat{\boldsymbol{\ell^\prime}}|
      \right)
      \right\Vert
      +
      \right. 
      \\ & \ \ \ \ \ \ \ 
      \left.
      \left|
        \ell^{\prime}
        \left(
          \cos_{\mathbf{\Sigma}} \angle (\mathbf{w}, \hat{\mathbf{w}}) \cdot 
          \alpha
        \right)
      \right|
      \left\Vert
      \left(
        \mathbf{I} - \frac{\mathbf{m} \mathbf{m}^{T}}{\Vert \mathbf{m} \Vert^{2}}
      \right)
        \mathbf{1}
      \right\Vert
    \right),
  \end{aligned}
  \end{equation}
  where we denote $\mathbf{m} = \tilde{\mathbf{X}}^T \mathbf{w}$. 
  For the first term in above inequality, we have 
  \begin{equation*}
  \begin{aligned}
    & \left\Vert
    \left(
      \mathbf{I} - \frac{ \mathbf{m} \mathbf{m}^T}{\Vert \mathbf{m} \Vert^{2}}
    \right)
    \left(
      |\boldsymbol{\ell^\prime}| 
      - 
      |\hat{\boldsymbol{\ell^\prime}}|
    \right)
  \right\Vert
  \\ \le & 
  \left\Vert
    |\boldsymbol{\ell^\prime}| - |\hat{\boldsymbol{\ell^\prime}}|
  \right\Vert
  \\  = &
  |\ell^{\prime \prime}\left( z \right)|
  \left\Vert 
    \frac{
      \alpha \tilde{\mathbf{X}}^T \mathbf{w}
      }{\Vert \mathbf{w} \Vert_\mathbf{\Sigma}}
    -
    \cos_{\mathbf{\Sigma}} \angle (\mathbf{w}, \hat{\mathbf{w}}) \cdot 
    \frac{
    \alpha \tilde{\mathbf{X}}^T \hat{\mathbf{w}}
    }{\Vert \hat{\mathbf{w}} \Vert_\mathbf{\Sigma}}
  \right\Vert
  \\ \le  & 
  \max \left\{ 
    |[\boldsymbol{\ell^\prime}]_1|,
    \cdots, 
    |[\boldsymbol{\ell^\prime}]_n|, 
    |\ell^{\prime}(\cos_{\mathbf{\Sigma}} \angle (\mathbf{w}, \hat{\mathbf{w}}) \cdot \alpha)|
  \right\}
  \cdot 
  \left\Vert 
    \frac{
      \alpha \tilde{\mathbf{X}}^T \mathbf{w}
      }{\Vert \mathbf{w} \Vert_\mathbf{\Sigma}}
    -
    \cos_{\mathbf{\Sigma}} \angle (\mathbf{w}, \hat{\mathbf{w}}) \cdot 
    \frac{
    \alpha \tilde{\mathbf{X}}^T \hat{\mathbf{w}}
    }{\Vert \hat{\mathbf{w}} \Vert_\mathbf{\Sigma}}
  \right\Vert,
  \end{aligned}
  \end{equation*}
  where the equation is by Mean Value Theorem and $z$ is between
  $\alpha \cdot \langle \mathbf{w} , \mathbf{x}_i y_i\rangle$ and $\cos_{\mathbf{\Sigma}} \angle (\mathbf{w}, \hat{\mathbf{w}}) \cdot \alpha$.
  Next, we have 
  \begin{equation*}
  \begin{aligned}
    \left\Vert 
      \frac{
        \alpha \tilde{\mathbf{X}}^T \mathbf{w}
        }{\Vert \mathbf{w} \Vert_\mathbf{\Sigma}}
      -
      \cos_{\mathbf{\Sigma}} \angle (\mathbf{w}, \hat{\mathbf{w}})
      \cdot
      \frac{
      \alpha \tilde{\mathbf{X}}^T \hat{\mathbf{w}}
      }{\Vert \hat{\mathbf{w}} \Vert_\mathbf{\Sigma}}
    \right\Vert
    & \le 
    \frac{\alpha}{\Vert \mathbf{w} \Vert_\mathbf{\Sigma}} 
    \left\Vert 
      \tilde{\mathbf{X}}^{T} \mathbf{w}
      -
      \frac{
        \langle \mathbf{w}, \hat{\mathbf{w}} \rangle_{\mathbf{\Sigma}}
      }{
        \Vert \hat{\mathbf{w}} \Vert_{\mathbf{\Sigma}}^2
      }
      \tilde{\mathbf{X}}^{T} \hat{\mathbf{w}}
    \right\Vert
    \\ & =
    \frac{\alpha}{\Vert \mathbf{w} \Vert_\mathbf{\Sigma}} 
    \left\Vert 
      \left(
        \mathbf{I} - 
        \frac{
          \mathbf{1} \mathbf{1}^{T}
        }{
          \Vert \mathbf{1} \Vert^2
        }
      \right)
      \mathbf{m}
    \right\Vert
    \\ & =
    \frac{\alpha}{\Vert \mathbf{w} \Vert_\mathbf{\Sigma}} 
    \frac{\Vert \mathbf{m} \Vert}{\Vert \mathbf{1} \Vert}
    \left\Vert 
      \left(
        \mathbf{I} - 
        \frac{
          \mathbf{m} \mathbf{m}^{T}
        }{
          \Vert \mathbf{m} \Vert^2
        }
      \right)
      \mathbf{1}
    \right\Vert
    \\ & =
    \alpha
    \left\Vert 
      \left(
        \mathbf{I} - 
        \frac{
          \mathbf{m} \mathbf{m}^{T}
        }{
          \Vert \mathbf{m} \Vert^2
        }
      \right)
      \mathbf{1}
    \right\Vert
  \end{aligned}
  \end{equation*}
  Now take all of above bounds into (\ref{eq:gradient_norm_upper_decomposition}) together
  \begin{equation*}
  \begin{aligned}
    \left\Vert 
      \nabla_{\mathbf{w}} \mathcal{R}
    \right\Vert
    & \le 
    \sqrt{\lambda_{\max}}
    \frac{\alpha}{\sqrt{n} \Vert \mathbf{w} \Vert_{\mathbf{\Sigma}}}
    \Big(
      \alpha
      \max \left\{ 
        |[\boldsymbol{\ell^\prime}]_1|,
        \cdots, 
        |[\boldsymbol{\ell^\prime}]_n|, 
        |\ell^{\prime}(\cos_{\mathbf{\Sigma}} \angle (\mathbf{w}, \hat{\mathbf{w}}) \cdot \alpha)|
      \right\} +
    \\ & \ \ \ \ \ \ \ \ \ \ \ \ \ \ \ \ \ \ \ \ \ \ 
      \left|
        \ell^{\prime}
        \left(
          \cos_{\mathbf{\Sigma}} \angle (\mathbf{w}, \hat{\mathbf{w}}) \cdot 
          \alpha
        \right)
      \right|
    \Big)
    \cdot 
    \left\Vert
    \left(
      \mathbf{I} - \frac{\mathbf{m} \mathbf{m}^{T}}{\Vert \mathbf{m} \Vert^{2}}
    \right)
      \mathbf{1}
    \right\Vert
    \\ & \le 
    \sqrt{\lambda_{\max}}
    \frac{\alpha}{\sqrt{n} \Vert \mathbf{w} \Vert_{\mathbf{\Sigma}}}
    \left( \alpha + 1 \right)
    \left\Vert
    \left(
      \mathbf{I} - \frac{\mathbf{m} \mathbf{m}^{T}}{\Vert \mathbf{m} \Vert^{2}}
    \right)
      \mathbf{1}
    \right\Vert
  \end{aligned}
  \end{equation*}
  By (\ref{eq:pairwise_margin_sum_projection}), we have 
  \begin{equation*}
  \begin{aligned}
    \left\Vert
    \left(
      \mathbf{I} - \frac{\mathbf{m} \mathbf{m}^{T}}{\Vert \mathbf{m} \Vert^{2}}
    \right)
      \mathbf{1}
    \right\Vert^2 
    = n
    \left\Vert
      \hat{\mathbf{w}} - \frac{\langle \mathbf{w},  \hat{\mathbf{w}} \rangle_{\mathbf{\Sigma}} }{ \Vert \mathbf{w} \Vert_{\mathbf{\Sigma}}^{2}} \mathbf{w}
    \right\Vert_{\mathbf{\Sigma}}^2
  \end{aligned}
  \end{equation*}
  And note that $\mathbf{w} \cdot \langle \mathbf{w}, \hat{\mathbf{w}}\rangle / \Vert \mathbf{w} \Vert^2$ is the projection of $\hat{\mathbf{w}}$ onto $\text{span}\{\mathbf{w}\}$ under the
  Euclidean inner product $\langle\cdot, \cdot\rangle$, therefore we have 
  \begin{equation}\label{eq:orthogonal_projection_upper}
  \begin{aligned}
    \left\Vert
      \hat{\mathbf{w}} - \frac{\langle \mathbf{w},  \hat{\mathbf{w}} \rangle_{\mathbf{\Sigma}} }{ \Vert \mathbf{w} \Vert_{\mathbf{\Sigma}}^{2}} \mathbf{w}
    \right\Vert_{\mathbf{\Sigma}}^2
    \le 
    \left\Vert
      \hat{\mathbf{w}} - 
      \frac{\langle \mathbf{w},  \hat{\mathbf{w}} \rangle }
      { \Vert \mathbf{w} \Vert^{2}} \mathbf{w}
    \right\Vert_{\mathbf{\Sigma}}^2
    \le 
    \lambda_{\max}
    \left\Vert
      \hat{\mathbf{w}} - 
      \frac{\langle \mathbf{w},  \hat{\mathbf{w}} \rangle }
      { \Vert \mathbf{w} \Vert^{2}} \mathbf{w}
    \right\Vert^2
    = 
    \lambda_{\max}
    \left(
      \rho_{t}^{\perp}
    \right)^2
  \end{aligned}
  \end{equation}
    
  \noindent\textbf{Lower Bound}
  We have 
  \begin{equation}\label{eq:gradient_norm_lower_start}
  \begin{aligned}
    \left\Vert 
      \nabla_{\mathbf{w}} \mathcal{R}
    \right\Vert
    =
    \frac{\alpha}{n \Vert \mathbf{w} \Vert_{\mathbf{\Sigma}}}
    \left\Vert
      \tilde{\mathbf{X}}
      \left(
        \mathbf{I} - \frac{\mathbf{m} \mathbf{m}^{T}}{\Vert \mathbf{m} \Vert^{2}}
      \right)
      \boldsymbol{\ell^\prime}
    \right\Vert
    \ge 
    \sqrt{\lambda_{\min}}
    \frac{\alpha}{\sqrt{n} \Vert \mathbf{w} \Vert_{\mathbf{\Sigma}}}
    \left\Vert
      \left(
        \mathbf{I} - \frac{\mathbf{m} \mathbf{m}^{T}}{\Vert \mathbf{m} \Vert^{2}}
      \right)
      \boldsymbol{\ell^\prime}
    \right\Vert
  \end{aligned}
  \end{equation}
  By Lagrange's identity, we have
  \begin{equation*}
  \begin{aligned}
    \left\Vert
      \left(
        \mathbf{I} - \frac{\mathbf{m} \mathbf{m}^{T}}{\Vert \mathbf{m} \Vert^{2}}
      \right)
      \boldsymbol{\ell^\prime}
    \right\Vert^2 
    & = 
    \frac{1}{2 \Vert \mathbf{m} \Vert^2}
    \sum_{i=1}^n \sum_{j=1}^{n}
    \left(
      [\boldsymbol{\ell^\prime}]_i
      \mathbf{m}_j
      -
      [\boldsymbol{\ell^\prime}]_j
      \mathbf{m}_i
    \right)^2 
    \\ & = 
    \frac{1}{2 \Vert \mathbf{m} \Vert^2}
    \sum_{i=1}^n \sum_{j=1}^{n}
    \left( [\boldsymbol{\ell^\prime}]_i [\boldsymbol{\ell^\prime}]_j \right)^2
    \left(
      \mathbf{m}_j - \mathbf{m}_i
    \right)^2
    \left(
      \frac{
        [\boldsymbol{\ell^\prime}]_j^{-1}
        \mathbf{m}_j
        -
        [\boldsymbol{\ell^\prime}]_i^{-1}
        \mathbf{m}_i
      }{
        \mathbf{m}_j - \mathbf{m}_i
      }
    \right)^2
    \\ & \ge 
    \frac{1}{2 \Vert \mathbf{m} \Vert^2}
    \sum_{i=1}^n \sum_{j=1}^{n}
    \max \left(
      [\boldsymbol{\ell^\prime}]_i^2, 
      [\boldsymbol{\ell^\prime}]_j^2
    \right)
    \left(
      \mathbf{m}_j - \mathbf{m}_i
    \right)^2
  \end{aligned},
  \end{equation*}
  By Lemma C.1 of \citep{implicit_bias_BN}, we have 
  \begin{equation}\label{eq:gradient_norm_lower_projection}
  \begin{aligned}
    \left\Vert
      \left(
        \mathbf{I} - \frac{\mathbf{m} \mathbf{m}^{T}}{\Vert \mathbf{m} \Vert^{2}}
      \right)
      \boldsymbol{\ell^\prime}
    \right\Vert^2
    & \ge 
    \frac{1}{2 \Vert \mathbf{m} \Vert^2}
    \frac{1}{4n}
    \sum_{i=1}^{n} [\boldsymbol{\ell^\prime}]_i^2
    \sum_{i=1}^n \sum_{j=1}^{n}
    \left(
      \mathbf{m}_j - \mathbf{m}_i
    \right)^2
    \\ & \ge 
    \frac{1}{4}
    \left(
      \sum_{i=1}^{n} [\boldsymbol{\ell^\prime}]_i^2
    \right)
    \left\Vert
      \hat{\mathbf{w}} - \frac{\langle \mathbf{w},  \hat{\mathbf{w}} \rangle_{\mathbf{\Sigma}} }{ \Vert \mathbf{w} \Vert_{\mathbf{\Sigma}}^{2}} \mathbf{w}
    \right\Vert_{\mathbf{\Sigma}}^2
    \\ & \ge 
    \frac{\lambda_{\min}}{4}
    \left(
      \sum_{i=1}^{n} [\boldsymbol{\ell^\prime}]_i^2
    \right)
    \left(
      \rho_{t}^{\perp}
    \right)^2
  \end{aligned}
  \end{equation}
  Plugging (\ref{eq:pairwise_margin_sum_projection}) and (\ref{eq:orthogonal_projection_lower}) into (\ref{eq:gradient_norm_lower_projection}), and then using (\ref{eq:gradient_norm_lower_start}), gives 
  \begin{equation*}
  \begin{aligned}
    \left\Vert 
      \nabla_{\mathbf{w}} \mathcal{R}
    \right\Vert
    & \ge 
    \frac{\lambda_{\min}}{2}
    \frac{\alpha}{\Vert \mathbf{w} \Vert_{\mathbf{\Sigma}}}
    \cdot
    \frac{1}{\sqrt{n}}
    \Vert \boldsymbol{\ell^\prime} \Vert
    \cdot 
    \rho_{t}^{\perp}
  \end{aligned}
  \end{equation*}
  Then we look at $\frac{1}{\sqrt{n}}
      \Vert \boldsymbol{\ell^\prime} \Vert$
  \begin{equation*}
  \begin{aligned}
    \frac{1}{n}
    \Vert \boldsymbol{\ell^\prime} \Vert^2 
    & = 
    \frac{1}{n} \sum_{i=1}^{n} \left[ 1 + \exp \left( \frac{\alpha \cdot \langle \mathbf{w}, \mathbf{x}_i y_i \rangle}
      {\Vert \mathbf{w} \Vert_{\mathbf{\Sigma}}} 
    \right)
    \right]^{-2}
    \\ & \geq 
    \frac{1}{n} \sum_{i=1}^{n} \left[ 1 + \exp \left( \frac{\alpha \cdot |\langle \mathbf{w}, \mathbf{x}_i y_i \rangle|}
    {\Vert \mathbf{w} \Vert_{\mathbf{\Sigma}}} \right) \right]^{-2}
    \\ & \geq \left[ 1 + \exp \left( \alpha \cdot \frac{1}{n} \sum_{i=1}^{n} 
    \frac{|\langle \mathbf{w}, \mathbf{x}_i y_i \rangle|}{\Vert \mathbf{w} \Vert_{\mathbf{\Sigma}}} \right) \right]^{-2}
    \\ & \geq [1 + \exp(\alpha)]^{-2}
    \\ & \geq \exp(-2 \alpha) / 4.
  \end{aligned}
  \end{equation*}
\end{proof}

\definecolor{shadecolor}{rgb}{0.92,0.92,0.92}
\newtheorem*{restate_First_Phase}{Lemma \ref{First_Phase}}
\begin{shaded}
\begin{restate_First_Phase}
  Let $\ell$ be $\ell_{log}$ and $\hat{\mathbf{w}}$ be the SVM solution as defined in Definition \ref{margin}. 
  Suppose Assumptions \ref{Overparameterization}, \ref{Logistic_Setup}, and \ref{Active_margin_data} hold and $\lambda_{\max} > 1$.
  Consider the gradient descent (\ref{gradient_updata}), 
  for any $\tan_{\min} > 0$, if
  $0 < \alpha_0 \le \frac{1}{3} \log \left(\lambda_{\max}\right)$, 
  \begin{equation*}
  \begin{aligned}
    \frac{\eta}{\Vert \mathbf{w}_0 \Vert^2} \ge 
    C_1 \cdot \left(1 + \frac{1}{\tan_{\min}^2}\right) \gamma
    \ \text{ and } \ 
    \eta_{\alpha} \le C_3 \cdot \frac{\tan_{\min}^2}{1 + \tan_{\min}^2}
    \frac{\Vert \mathbf{w}_0 \Vert^2}{\eta \gamma},
  \end{aligned}
  \end{equation*}
  then there exists $t_0 < T_0$ such that 
  $\left(\rho_{t_0}^{\perp} / \rho_{t_0}\right)^2 \le \tan_{\min}^2$, 
  and for any $t < T_0$, 
  we have $\frac{1}{2}\alpha_0 \le \alpha_t \le \frac{3}{2}\alpha_0$, 
  where $C_1, C_2, C_3$ are some constants and 
  \begin{equation*}
  \begin{aligned}
    & 
    T_0 := C_2 \cdot \left(1 + \frac{1}{\tan_{\min}^2}\right) \frac{\eta \gamma}{\Vert \mathbf{w}_0 \Vert^2};
    \quad C := 
    \frac{32 \sqrt{\lambda_{\max}}}{\sqrt{\lambda_{\min}}}
    \frac{ \exp \left({3 \alpha_0 /2 }\right)}{\alpha_0};
    \\ & C_1 := 
    2 C \cdot
      \left(
      1 + 36 
      \frac{ \lambda_{\max}^2}{\lambda_{\min}^2}
      e^{2\alpha_0}
      \left(
        \rho_0^{\perp, \mathbf{\Sigma}}
      \right)^{-2}
    \right);
    \quad
    C_2 := 
    2 C \cdot
    \alpha_{0}^2 \left( \frac{\lambda_{\max}}{\lambda_{\min}} \right)^{3/2};
    \quad
    C_3 := \frac{\alpha_0}{2 C_2}.
  \end{aligned}
  \end{equation*}
  We mention that $\rho_0^{\perp, \mathbf{\Sigma}}$ is quantity related to initial parameter, which is defined as 
  \begin{equation*}
  \begin{aligned}
    \rho_0^{\perp, \mathbf{\Sigma}} = \left\Vert
      \hat{\mathbf{w}} - \frac{\langle \mathbf{w}_0, \hat{\mathbf{w}}\rangle_{\mathbf{\Sigma}} }
      {\Vert \mathbf{w}_0 \Vert_{\mathbf{\Sigma}}^2 } \mathbf{w}_{0}
    \right\Vert_{\mathbf{\Sigma}}.
  \end{aligned}
  \end{equation*}
\end{restate_First_Phase}
\end{shaded}

\begin{proof}
  Given any $T_0 > 0$, we choose a small enough $\eta_{\alpha}$ to control the growth of $\alpha_{t}$. 
  We have 
  \begin{equation*}
  \begin{aligned}
    \left| \alpha_{t+1} - \alpha_t \right|
    &= 
    \left| - \eta_{\alpha}\frac{\partial \mathcal{R}}{\partial \mathbf{\alpha}} \left(\mathbf{w}_t, \alpha_t \right) \right|
    =
    \frac{ \eta_{\alpha} }{n \Vert \mathbf{w}_t \Vert_\mathbf{\Sigma}}
    \left| 
      \mathbf{w}_t^T \tilde{\mathbf{X}}
      \boldsymbol{\ell^\prime}\left(
        \frac{
        \alpha_t \tilde{\mathbf{X}}^T \mathbf{w}_t
        }{\Vert \mathbf{w}_t \Vert_\mathbf{\Sigma}}
      \right)
    \right|
    \\ & \le
    \frac{\eta_{\alpha}}{n} \Vert \boldsymbol{\ell^{\prime}}_t \Vert 
    \frac{\left\Vert \tilde{\mathbf{X}}^T \mathbf{w}_t \right\Vert}{\Vert \mathbf{w}_t \Vert_{\mathbf{\Sigma}}}
    \le
    \frac{\eta_{\alpha}}{n} \Vert \mathbf{1} \Vert 
    \frac{\left\Vert \tilde{\mathbf{X}}^T \mathbf{w}_t \right\Vert}{\Vert \mathbf{w}_t \Vert_{\mathbf{\Sigma}}}
    = \eta_{\alpha}.
  \end{aligned}
  \end{equation*}
  If $\eta_{\alpha} \le \frac{\alpha_0}{2 T_0}$, for any $t \in [0, T_0)$, we have 
  \begin{equation*}
  \begin{aligned}
    | \alpha_{t} - \alpha_0 | \le t \eta_{\alpha} \le T_0 \eta_{\alpha} \le \frac{\alpha_0}{2},
    \quad 
    \frac{1}{2} \alpha_0 \le \alpha_{t} \le \frac{3}{2} \alpha_0
  \end{aligned}
  \end{equation*}
  Then, we calculate the bounds of $\Vert \mathbf{w}_{T_0} \Vert$ and $\Vert \mathbf{w}_{1} \Vert$. 
  By Lemma \ref{Gradient_Norm_Bound} (refer to the version in Appendix), we have the lower bound of $\Vert \mathbf{w}_{1} \Vert^2$
  \begin{equation*}
  \begin{aligned}
    \Vert \mathbf{w}_{1} \Vert^2 
    & = 
    \Vert \mathbf{w}_{0} \Vert^2 
    + \eta^2 
    \left\Vert 
        \frac{\partial \mathcal{R}}{\partial \mathbf{w}} \left(\mathbf{w}_0, \alpha_0\right)
    \right\Vert^2
    \\ & \ge 
    \Vert \mathbf{w}_{0} \Vert^2  
    + 
    \frac{\eta^2 }{16}
    \lambda_{\min}
    \frac{1}{\Vert \mathbf{w}_0 \Vert_{\mathbf{\Sigma}}^2}
    \frac{\alpha_0^2}{e^{2\alpha_0}}
    \left(
      \rho_0^{\perp, \mathbf{\Sigma}}
    \right)^2
    \\ & \ge 
    \Vert \mathbf{w}_{0} \Vert^2  
    + 
    \frac{\eta^2 }{16}
    \frac{\lambda_{\min} }{\lambda_{\max}}
    \frac{1}{\Vert \mathbf{w}_0 \Vert^2}
    \frac{\alpha_0^2}{e^{2\alpha_0}}
    \left(
      \rho_0^{\perp, \mathbf{\Sigma}}
    \right)^2
    \\ & \ge 
    \frac{\eta^2 }{16}
    \frac{\lambda_{\min} }{\lambda_{\max}}
    \frac{1}{\Vert \mathbf{w}_0 \Vert^2}
    \frac{\alpha_0^2}{e^{2\alpha_0}}
    \left(
      \rho_0^{\perp, \mathbf{\Sigma}}
    \right)^2
  \end{aligned}
  \end{equation*}
  And we apply the first upper bound of gradient in Lemma \ref{Gradient_Norm_Bound} 
  to obtain the upper bound of $\Vert \mathbf{w}_{T_0} \Vert$:
  \begin{equation*}
  \begin{aligned}
    \Vert \mathbf{w}_{1} \Vert^2 
    = 
    \Vert \mathbf{w}_{0} \Vert^2 
    + \eta^2 
    \left\Vert 
        \frac{\partial \mathcal{R}}{\partial \mathbf{w}} \left(\mathbf{w}_0, \alpha_0\right)
    \right\Vert^2
    \le 
    \Vert \mathbf{w}_{0} \Vert^2 
    +
    \frac{\eta^2 \lambda_{\max}}{\lambda_{\min}}
    \frac{\alpha_{0}^2}{\Vert \mathbf{w}_0 \Vert^2},
  \end{aligned}
  \end{equation*}
  and the upper bound of $\Vert \mathbf{w}_{1} \Vert^2$:
  \begin{equation*}
  \begin{aligned}
    \Vert \mathbf{w}_{T_0} \Vert^2 
    & = 
    \Vert \mathbf{w}_{1} \Vert^2 
    + \eta^2 
    \sum_{\tau=1}^{T_0 - 1}
    \left\Vert 
        \frac{\partial \mathcal{R}}{\partial \mathbf{w}} \left(\mathbf{w}_{\tau}, \alpha_{\tau}\right)
    \right\Vert^2
    \\ & \le 
    \Vert \mathbf{w}_{1} \Vert^2 
    + 
    \frac{\eta^2 \lambda_{\max}}{\lambda_{\min}}
    \sum_{\tau=1}^{T_0 - 1}
    \frac{\alpha_{\tau}^2}{\Vert \mathbf{w}_\tau \Vert^2}
    \\ & \le 
    \Vert \mathbf{w}_{1} \Vert^2 
    + 
    \left(T_0 - 1\right) \eta^2 \frac{ \lambda_{\max}}{\lambda_{\min}}
    \max_{\tau \in [1, T_0)} \left\{ \alpha_{\tau}^2 \right\}
    \frac{1}{\Vert \mathbf{w}_{1} \Vert^2}
    \\ & \le 
    \Vert \mathbf{w}_{1} \Vert^2 
    + 
     \frac{9}{4} \frac{ \alpha_{0}^2 \lambda_{\max}}{\lambda_{\min}}
    \frac{1}{\Vert \mathbf{w}_{1} \Vert^2} T_0 \eta^2.
  \end{aligned}
  \end{equation*}

  Recall the bound in Lemma \ref{inner_product_bound}, we have 
  \begin{equation*}
    \begin{aligned}
      - \left\langle \hat{\mathbf{w}}, \frac{\partial \mathcal{R}}{\partial \mathbf{w}} \left(\mathbf{w}, \alpha\right) \right\rangle 
      \ge 
      \lambda_{\min} \frac{\alpha }{\Vert \mathbf{w} \Vert_{\mathbf{\Sigma}}}
      \frac{e^{-\alpha}}{8}
      \left(
        \rho^{\perp}\left(\mathbf{w}\right)
      \right)^2 
  \end{aligned}
  \end{equation*}
  Therefore, by gradient descent update, we have for any $t \in [0, T_0)$
  \begin{equation*}
  \begin{aligned}
    \left\langle \mathbf{w}_{t+1}, \hat{\mathbf{w}} \right\rangle - \left\langle \mathbf{w}_{t}, \hat{\mathbf{w}} \right\rangle 
    & = 
    -
    \eta \left\langle 
      \frac{\partial \mathcal{R}}{\partial \mathbf{w}} \left(\mathbf{w}_t, \alpha_t\right)
    , \hat{\mathbf{w}} \right\rangle 
    \\ & \ge 
    \eta \frac{\lambda_{\min}}{8}
    \frac{1}{\Vert \mathbf{w}_t \Vert_{\mathbf{\Sigma}}}
      \frac{\alpha_t }{ e^{\alpha_t}}
    \left(
      \rho_t^{\perp}
    \right)^2 
    \\ & \ge 
    \eta \frac{\lambda_{\min}}{8 \sqrt{\lambda_{\max}}}
    \frac{1}{\Vert \mathbf{w}_t \Vert}
      \frac{\alpha_t }{ e^{\alpha_t}}
    \left(
      \rho_t^{\perp}
    \right)^2 
    \\ & \ge 
    \eta \frac{\alpha_0}{16 \exp\left(\frac{3}{2}\alpha_0\right)} \frac{\lambda_{\min}}{\sqrt{\lambda_{\max}}}
    \frac{1}{\Vert \mathbf{w}_t \Vert}
    \left(
      \rho_t^{\perp}
    \right)^2 
  \end{aligned}
  \end{equation*}
  The expression can be rearranged to obtain
  \begin{equation*}
  \begin{aligned}
    \left(
      \rho_t^{\perp}
    \right)^2 
    \le 
    \frac{16 \sqrt{\lambda_{\max}} }{\lambda_{\min}}
    \frac{ \exp \left({3 \alpha_0 /2 }\right)}{\alpha_0 }
    \frac{\Vert \mathbf{w}_t \Vert}{\eta}
    \left(
      \left\langle \mathbf{w}_{t+1}, \hat{\mathbf{w}} \right\rangle - \left\langle \mathbf{w}_{t}, \hat{\mathbf{w}} \right\rangle 
    \right)
  \end{aligned}
  \end{equation*}
  Further, we have 
  \begin{equation*}
  \begin{aligned}
    \min_{\tau \in [0, T_0)} 
    \left(
      \rho_{\tau}^{\perp}
    \right)^2 
    & \le 
    \frac{1}{T_0}
    \sum_{\tau = 0}^{T_0-1}
    \left(
      \rho_{\tau}^{\perp}
    \right)^2 
    \\ & \le 
    \frac{16 \sqrt{\lambda_{\max}}}{\lambda_{\min}}
    \frac{ \exp \left({3 \alpha_0 /2 }\right)}{\alpha_0 }
    \frac{1}{\eta}
    \sum_{\tau = 0}^{T_0-1}
    \Vert \mathbf{w}_\tau \Vert
    \left(
      \left\langle \mathbf{w}_{\tau+1}, \hat{\mathbf{w}} \right\rangle - \left\langle \mathbf{w}_{\tau}, \hat{\mathbf{w}} \right\rangle 
    \right)
    \\ & \le 
    \frac{16 \sqrt{\lambda_{\max}}}{\lambda_{\min}}
    \frac{ \exp \left({3 \alpha_0 /2 }\right)}{\alpha_0 }
    \max_{\tau \in [0, T_0)} \left\{ \Vert \mathbf{w}_\tau \Vert \right\}
    \frac{1}{\eta}
    \sum_{\tau = 0}^{T_0-1}
    \left(
      \left\langle \mathbf{w}_{\tau+1}, \hat{\mathbf{w}} \right\rangle - \left\langle \mathbf{w}_{\tau}, \hat{\mathbf{w}} \right\rangle 
    \right)
    \\ & = 
    \frac{16 \sqrt{\lambda_{\max}}}{\lambda_{\min}}
    \frac{ \exp \left({3 \alpha_0 /2 }\right)}{\alpha_0 }
    \max_{\tau \in [0, T_0)} \left\{ \Vert \mathbf{w}_\tau \Vert \right\}
    \frac{1}{T_0 \eta}
    \left(
      \left\langle \mathbf{w}_{T_0}, \hat{\mathbf{w}} \right\rangle - \left\langle \mathbf{w}_{0}, \hat{\mathbf{w}} \right\rangle 
    \right)
  \end{aligned}
  \end{equation*}
  By Cauchy inequalities, we have 
  \begin{equation*}
  \begin{aligned}
    \min_{\tau \in [0, T_0)} 
    \left(
      \rho_{\tau}^{\perp}
    \right)^2  & \le
    \frac{16 \sqrt{\lambda_{\max}}}{\gamma \lambda_{\min}}
    \frac{ \exp \left({3 \alpha_0 /2 }\right)}{\alpha_0 }
    \max_{\tau \in [0, T_0)} \left\{ \Vert \mathbf{w}_\tau \Vert \right\}
    \frac{1}{T_0 \eta}
    \left(
      \Vert \mathbf{w}_{0} \Vert + \Vert \mathbf{w}_{T_0} \Vert
    \right)
    \\ & \le  
    \frac{32 \sqrt{\lambda_{\max}}}{\gamma \lambda_{\min}}
    \frac{ \exp \left({3 \alpha_0 /2 }\right)}{\alpha_0 }
    \max_{\tau \in [0, T_0)} \left\{ \Vert \mathbf{w}_\tau \Vert \right\}
    \frac{\Vert \mathbf{w}_{T_0} \Vert}{T_0\eta}
    \\ & \le
    \frac{32 \sqrt{\lambda_{\max}}}{\lambda_{\min}}
    \frac{ \exp \left({3 \alpha_0 /2 }\right)}{\alpha_0 }
    \frac{\Vert \mathbf{w}_{T_0} \Vert^2}{T_0 \eta \gamma}
    \\ & \le
    \frac{ 32 }{\alpha_0 }
    \frac{ \lambda_{\max}}{\lambda_{\min}}
    \frac{\Vert \mathbf{w}_{T_0} \Vert^2}{T_0 \eta \gamma}
    \\ & =
    {C}
    \frac{\Vert \mathbf{w}_{T_0} \Vert^2}{T_0 \eta \gamma}
    ,
  \end{aligned}
  \end{equation*}
  where
  \[
    C = \frac{ 32 }{\alpha_0 }
    \frac{ \lambda_{\max}}{\lambda_{\min}}.
  \]
  Now, let us check $\frac{\Vert \mathbf{w}_{T_0} \Vert^2}{T_0 \eta \gamma}$. 
  Recall just derived bounds of $\Vert \mathbf{w}_{T_0} \Vert^2$ and $\Vert \mathbf{w}_{1} \Vert^2$, we have 
  \begin{equation*}
  \begin{aligned}
    \frac{\Vert \mathbf{w}_{T_0} \Vert^2}{T_0 \eta \gamma} 
    & \le 
    \frac{1}{T_0 \eta \gamma} 
    \left(
      \Vert \mathbf{w}_{1} \Vert^2 
      + 
       \frac{9}{4} \frac{ \alpha_{0}^2 \lambda_{\max}}{\lambda_{\min}}
      \frac{1}{\Vert \mathbf{w}_{1} \Vert^2} T_0 \eta^2
    \right)
    \\& = 
    \frac{\Vert \mathbf{w}_{1} \Vert^2 }{T_0 \eta \gamma} 
    +
    \frac{9}{4} \frac{ \alpha_{0}^2 \lambda_{\max}}{\lambda_{\min}}
    \frac{\eta}{\Vert \mathbf{w}_{1} \Vert^2 \gamma}
    \\ & \le 
    \frac{1}{T_0 \eta \gamma} 
    \left(
      \Vert \mathbf{w}_{0} \Vert^2 
      +
      \frac{\eta^2 \lambda_{\max}}{\lambda_{\min}}
      \frac{\alpha_{0}^2}{\Vert \mathbf{w}_0 \Vert^2}
    \right)
    \\ & \quad
    +
    \frac{9}{4} \frac{ \alpha_{0}^2 \lambda_{\max}}{\lambda_{\min}}
    \frac{\eta}{\gamma}
    \left(
      \frac{\eta^2 }{16}
      \frac{\lambda_{\min} }{\lambda_{\max}}
      \frac{1}{\Vert \mathbf{w}_0 \Vert^2}
      \frac{\alpha_0^2}{e^{2\alpha_0}}
      \left(
        \rho_0^{\perp, \mathbf{\Sigma}}
      \right)^2
    \right)^{-1}
    \\ & =
    \Vert \mathbf{w}_{0} \Vert^2 \frac{1}{T_0 \eta \gamma} 
    + \frac{\lambda_{\max}}{\lambda_{\min}}
    \frac{\alpha_{0}^2}{\Vert \mathbf{w}_0 \Vert^2}
    \frac{\eta}{T_0 \gamma} 
    \\ & \quad
    + 36 e^{2\alpha_0}
    \frac{ \lambda_{\max}^2}{\lambda_{\min}^2}
    \Vert \mathbf{w}_0 \Vert^2
    \frac{1}{\eta \gamma}
    \left(
      \rho_0^{\perp, \mathbf{\Sigma}}
    \right)^{-2}
    \\ & \le A_0 + C_{\rho,0} B_0
  \end{aligned}
  \end{equation*}
  where the last inequality uses $T_0 \ge 1$ and
  \begin{equation*}
  \begin{aligned}
    A_0
    & :=
    \frac{\alpha_{0}^2 \lambda_{\max}}{\lambda_{\min}}
    \\ & \quad \cdot
    \frac{\eta}{\Vert \mathbf{w}_{0} \Vert^2 T_0 \gamma},
  \end{aligned}
  \end{equation*}
  \begin{equation*}
    C_{\rho,0}
    :=
    1
    +
    36 e^{2\alpha_0}
    \frac{\lambda_{\max}^2}{\lambda_{\min}^2}
    \left(
      \rho_0^{\perp, \mathbf{\Sigma}}
    \right)^{-2}.
  \end{equation*}
  \begin{equation*}
    B_0
    :=
    \frac{\Vert \mathbf{w}_{0} \Vert^2}{\eta\,\gamma}.
  \end{equation*}
  \begin{equation*}
    Q_0
    :=
    \frac{\Vert \mathbf{w}_{T_0} \Vert^2}{T_0 \eta \gamma}.
  \end{equation*}
  Therefore, given any $\rho_{\min}^{\perp} > 0$, a sufficient condition is
  \begin{equation*}
  \begin{aligned}
    & \ \min_{\tau \in [0, T_0)} 
    \left(
      \rho_{\tau}^{\perp}
    \right)^2 
     \le
    \left(
      \rho_{\min}^{\perp}
    \right)^2 
    \boldsymbol{\Leftarrow} \ 
    C \cdot 
    \frac{\Vert \mathbf{w}_{T_0} \Vert^2}{T_0 \eta \gamma} 
    \le
    \left(
      \rho_{\min}^{\perp}
    \right)^2
  \end{aligned}
  \end{equation*}
  A sufficient way to enforce the last display is
  \begin{equation*}
  \begin{aligned}
    \frac{\eta}{\Vert \mathbf{w}_0 \Vert^2}
    & \ge C_1 \cdot \frac{1}{ \left(\rho_{\min}^{\perp}\right)^2 }
    \frac{1}{\gamma},
    \\
    T_0
    & \ge C_2 \cdot \frac{\eta}{\Vert \mathbf{w}_0 \Vert^2} \cdot \frac{1}{ \left(\rho_{\min}^{\perp}\right)^2 } 
    \frac{1}{\gamma},
  \end{aligned}
  \end{equation*}
  where $C_1$ and $C_2$ are explicit constants determined by the initialization and the data:
  \begin{equation*}
  \begin{aligned}
    C_1 := 
    2 C \cdot
      \left(
      1 + 36 
      \frac{ \lambda_{\max}^2}{\lambda_{\min}^2}
      e^{2\alpha_0}
      \left(
        \rho_0^{\perp, \mathbf{\Sigma}}
      \right)^{-2}
    \right);
    \ \ 
    C_2 := 
    2 C \cdot
    \alpha_{0}^2 \frac{\lambda_{\max}}{\lambda_{\min}}
    .
  \end{aligned}
  \end{equation*}
  To obtain the bound for $\rho_t^{\perp} / \rho_t$, we set $\left(\rho_{\min}^{\perp}\right)^2 = 
  {\tan_{\min}^2}/\left(\gamma^2 \cdot \left(1 + \tan_{\min}^2\right)\right)$. 
  Using $1/\gamma^2 = \left(\rho_t^{\perp}\right)^2 + \rho_t^2$, if $\rho_t^{\perp} \leq \rho_{\min}^{\perp}$ 
  as defined above, then $\rho_t^{\perp} / \rho_t \leq \tan_{\min}$. Therefore, we have 
  \begin{equation*}
  \begin{aligned}
    & \ \min_{\tau \in [0, T_0)} 
    \left(
      \rho_{\tau}^{\perp} / \rho_{\tau}
    \right)^2 
     \le
      \tan_{\min}^{2}
    \\  \boldsymbol{\Leftarrow} & \ 
    \frac{\eta}{\Vert \mathbf{w}_0 \Vert^2} \ge C_1 \cdot \left(1 + \frac{1}{\tan_{\min}^2}\right)
    \gamma
    \text{ and }
    T_0 \ge C_2 \cdot \left(1 + \frac{1}{\tan_{\min}^2}\right) \frac{\eta \gamma}{\Vert \mathbf{w}_0 \Vert^2}.
  \end{aligned}
  \end{equation*}
  Finally, remember that all the sufficient conditions above hold under the condition of $\eta_{\alpha} \le \frac{1}{2 T_0} \alpha_0$. 
  Therefore, we have the condition for $\eta_{\alpha}$
  \begin{equation*}
  \begin{aligned}
    \eta_{\alpha} \le 
    \frac{1}{2 T_0} \alpha_0
    =
    C_3 \cdot \frac{\tan_{\min}^2}{1 + \tan_{\min}^2}
    \frac{\Vert \mathbf{w}_0 \Vert^2}{\eta \gamma}
    ,
    \text{ where }
    C_3 := \frac{\alpha_0}{2 C_2}
  \end{aligned}
  \end{equation*}
\end{proof}

\newtheorem*{restate_logis_spike}{Theorem \ref{logis_spike}}
\definecolor{shadecolor}{rgb}{0.92,0.92,0.92}
\begin{shaded}
\begin{restate_logis_spike}
  Let $\ell$ be $\ell_{log}$ and $\hat{\mathbf{w}}$ be the SVM solution as defined in Definition \ref{margin}. 
  Suppose Assumptions \ref{Overparameterization}, \ref{Logistic_Setup}, and \ref{Active_margin_data} hold and $\lambda_{\max} > 1$. If 
  $0 < \alpha_0 \le \frac{1}{3} \log \left( \lambda_{\max} \right)$ and 
  \begin{equation*}
  \begin{aligned}
    \max \left( 
      C_1 \frac{16 \lambda_{\max}^2  }{ \lambda_{\min}^2 }, 
      C_4
    \right)  \cdot \gamma
    \le \frac{\eta}{\Vert \mathbf{w}_0 \Vert^2} \le C_5 \cdot \gamma^{-1}
    \text{  and  }
    \eta_{\alpha} \le 
    C_3 \cdot \frac{\lambda_{\min}^2}{16\lambda_{\max}^2} \cdot \frac{\Vert \mathbf{w}_0 \Vert^2}{\eta \gamma}, 
  \end{aligned}
  \end{equation*}
  it holds that during $t \in [0, T_0)$
  \begin{itemize}
    \item (1). $\rho_t^{\perp}$ keeps decreasing as long as $\left( {\rho_t^{\perp}}/{\rho_t} \right)^2 \ge 
    {4}/{\left(\sqrt{ \Phi^2 + 4} - \Phi\right)^2} - 1$ and $\rho_t > 0$; 

    \item (2). $\exists t_0$ such that $\left(\rho_{t_0}^{\perp}/\rho_{t_0}\right)^2 \le 
    \tan_{\min}^2$; 
    
    \item (3). if, after item (2) is reached, the directional dynamics enters a \textit{Rising Edge} segment on the positive-alignment branch $\rho_t > 0$, then every iterate on that segment satisfying $\left(\rho_t^{\perp} / \rho_t\right)^2 \ge C_6 \cdot \frac{\eta^2}{\Vert \mathbf{w}_0 \Vert^4} \cdot \frac{1}{\gamma^2} - 1$
    already meets the convergence condition of Lemma \ref{Direction_Convegence_and_Divergence}, so the next iterate leaves the \textit{Rising Edge}.
  \end{itemize}
  We mention $T_0$ and $\tan_{\min}$ are defined as:
  \begin{equation*}
  \begin{aligned}
    T_0 = C_2 \cdot \left( 1 + \frac{1}{\tan_{\min}^2}\right) \cdot \frac{\eta \gamma}{\Vert \mathbf{w}_0 \Vert^2}, 
    \quad 
    \tan_{\min}^2 = \frac{\gamma^2 \lambda_{\min}}{8 \lambda_{\max}^2}.
  \end{aligned}
  \end{equation*}
  And $C_1, C_2, C_3 (\text{defined in Lemma \ref{First_Phase}})$ are constants, depending in particular on $\alpha_0$ and $\rho_0^{\perp, \mathbf{\Sigma}}$, and 
  \begin{equation*}
  \begin{aligned}
    & \Phi := \frac{6\lambda_{\max}^2 }{\lambda_{\min}^2 }
    \alpha_0 \cdot \frac{\eta \gamma}{\Vert \mathbf{w}_{0} \Vert^2}; 
    \quad
    \tilde{C} := \frac{3}{256} \frac{\lambda_{\min}}{\lambda_{\max}} \alpha_0;
    \quad C_4 := 2 / \tilde{C}; 
    \\ & 
    C_5 := 
    \sqrt{\frac{\tilde{C}}{2} / 
    \left(
      36 C_2 \frac{ \alpha_{0}^2 \lambda_{\max}^3}{\lambda_{\min}^3}
    \right)}, 
    \quad 
    C_6 := 
    \left(
      6
      \frac{\lambda_{\max}^{5/2}}{\lambda_{\min}^{5/2} }
      \alpha_0 e^{3 \alpha_0 / 2}\left( \frac{3}{2} \alpha_0 + 1 \right)^2
    \right)^2.
  \end{aligned}
  \end{equation*}
\end{restate_logis_spike}
\end{shaded}
\begin{proof}
  We observe that $\frac{\lambda_{\min}^2 }{8 \lambda_{\max}^2} \le \tan_{\min}^2 \le 1/4 < 1$ since $\lambda_{\min} \le \gamma^2 \le \lambda_{\max}$. 
  Then, by the lower bound of $\frac{\eta}{\Vert \mathbf{w}_0 \Vert^2}$, we have 
  \begin{equation*}
  \begin{aligned}
    \frac{\eta}{\Vert \mathbf{w}_0 \Vert^2}
    \ge
    C_1 \frac{16 \lambda_{\max}^2  }{ \lambda_{\min}^2 } \gamma 
    \ge
    \frac{2 C_1 }{ \tan_{\min}^2 } \gamma 
    \ge 
    C_1 \left(1 + \frac{1}{\tan_{\min}^2}\right) \gamma.
  \end{aligned}
  \end{equation*} 
  Next, we consider the bound for $\eta_{\alpha}$
  \begin{equation*}
  \begin{aligned}
    \eta_{\alpha} 
    \le 
    C_3 \cdot \frac{\lambda_{\min}^2}{16\lambda_{\max}^2} \cdot \frac{\Vert \mathbf{w}_0 \Vert^2}{\eta \gamma}
    \le 
    C_3 \cdot \frac{\tan_{\min}^2}{2} \cdot \frac{\Vert \mathbf{w}_0 \Vert^2}{\eta \gamma}
    \le 
    C_3 \cdot \frac{\tan_{\min}^2}{1 + \tan_{\min}^2} \cdot \frac{\Vert \mathbf{w}_0 \Vert^2}{\eta \gamma}
  \end{aligned}
  \end{equation*}
  Therefore, by Lemma \ref{First_Phase}, we know there exists a $t_0 < T_0$ such that 
  $\left( {\rho_{t_0}^\perp}/{\rho_{t_0}} \right)^2 \le \tan_{\min}^2 \le 1/4$,
  and for any $t < T_0$, $\frac{1}{2} \alpha_0 \le \alpha_{t} \le \frac{3}{2} \alpha_0$.
  Item (2) is exactly the conclusion of Lemma \ref{First_Phase}.

  We next explain the auxiliary step-size restriction $C_4 \gamma \le \eta / \Vert \mathbf{w}_0 \Vert^2 \le C_5 \gamma^{-1}$ that appears in the theorem statement. Starting from a time $t_0$ on the positive-alignment branch with $\rho_{t_0} > 0$, the second conclusion of Lemma \ref{Direction_Convegence_and_Divergence} applies once
  \begin{equation}\label{eq:positive_branch_exit_condition}
  \begin{aligned}
    \frac{\eta }{\Vert \mathbf{w}_{t_0} \Vert} 
      \left\Vert 
        \nabla_{\mathbf{w}} \mathcal{R} \left(\mathbf{w}_{t_0}, \alpha_{t_0}\right)
      \right\Vert
      \ge
      \frac{2  {\rho}_{t_0} {\rho}_{t_0}^{\perp}}{
        {\rho}_{t_0}^2 
        - \left({\rho}_{t_0}^{\perp}\right)^2
      }
  \end{aligned}
  \end{equation}
  By Lemma \ref{Gradient_Norm_Bound}, we have
  \begin{equation}\label{eq:positive_branch_exit_reduction}
  \begin{aligned}
    (\ref{eq:positive_branch_exit_condition}) 
    \ \boldsymbol{\Leftarrow} \ &
    \frac{1}{4}
    \lambda_{\min}
    \frac{\alpha_{t_0}}{e^{\alpha_{t_0}}}
    \frac{\eta}{\Vert \mathbf{w}_{t_0} \Vert \Vert \mathbf{w}_{t_0} \Vert_{\mathbf{\Sigma}}}
    \rho_{t_0}^{\perp}
    \ge
    \frac{2  {\rho}_{t_0} {\rho}_{t_0}^{\perp}}{
      {\rho}_{t_0}^2 
      - \left({\rho}_{t_0}^{\perp}\right)^2
    }
    \\ \ \boldsymbol{\Leftrightarrow} \ &
    \frac{1}{8}
    \lambda_{\min}
    \frac{\alpha_{t_0} }{e^{\alpha_{t_0}}}
    \left(
      1
      - 
      \left( \frac{{\rho}_{t_0}^{\perp}}{{\rho}_{t_0}}\right)^2
    \right)
    \ge
    \frac{\Vert \mathbf{w}_{t_0} \Vert \Vert \mathbf{w}_{t_0} \Vert_{\mathbf{\Sigma}}}{\eta \rho_{t_0}}
  \end{aligned}
  \end{equation}
  Since $1/2 \alpha_0 \le \alpha_{t} \le 3/2 \alpha_0 \ \forall t < T_0$ and $\rho_{t_0}^{\perp} / \rho_{t_0} \le \frac{1}{2}$, we have 
  \begin{equation}\label{eq:positive_branch_exit_alpha_reduction}
  \begin{aligned}
    (\ref{eq:positive_branch_exit_reduction}) 
    \ \boldsymbol{\Leftarrow} \ &
    \frac{1}{16}
    \lambda_{\min}
    \frac{\alpha_0 }{e^{3 \alpha_0 / 2 }}
    \left(
      1
      - 
      \left( \frac{{\rho}_{t_0}^{\perp}}{{\rho}_{t_0}}\right)^2
    \right)
    \ge
    \frac{\Vert \mathbf{w}_{t_0} \Vert \Vert \mathbf{w}_{t_0} \Vert_{\mathbf{\Sigma}}}{\eta \rho_{t_0}}
    \\ \ \boldsymbol{\Leftarrow} \ &
    \frac{3}{64}
    \lambda_{\min}
    \frac{\alpha_0 }{e^{3 \alpha_0 / 2 }}
    \ge
    \frac{\Vert \mathbf{w}_{t_0} \Vert \Vert \mathbf{w}_{t_0} \Vert_{\mathbf{\Sigma}}}{\eta \rho_{t_0}}
  \end{aligned}
  \end{equation}
  Note that 
  by Lemma \ref{logit_bound}, we have 
  \begin{equation*}
  \begin{aligned}
    \frac{\Vert \mathbf{w}_{t_0} \Vert_{\mathbf{\Sigma}} }{\Vert \mathbf{w}_{t_0} \Vert}
    & \le 
    \gamma^2 \rho_{t_0} + 2 \sqrt{2} \cdot \lambda_{\max} \cdot \gamma \frac{\Vert \mathbf{w}_{t_0} \Vert }{\Vert \mathbf{w}_{t_0} \Vert_{\mathbf{\Sigma}}} \rho_{t_0}^{\perp}
    \\ & \le 
    \gamma^2 \rho_{t_0} + 2 \sqrt{2} \cdot \frac{\lambda_{\max}}{\sqrt{\lambda_{\min}}} \cdot \gamma \rho_{t_0}^{\perp}
    \\ & \le 
    2 \gamma^2 \rho_{t_0}
  \end{aligned}
  \end{equation*}
  Therefore, we have 
  \begin{equation}\label{eq:positive_branch_exit_norm_reduction}
  \begin{aligned}
    (\ref{eq:positive_branch_exit_alpha_reduction}) 
    \ \boldsymbol{\Leftarrow} \ &
    \frac{3}{256}
    \lambda_{\min}
    \frac{\alpha_0 }{e^{3 \alpha_0 / 2 }}
    \ge
    \frac{\gamma^2}{\eta} \Vert \mathbf{w}_{t_0} \Vert^2
  \end{aligned}
  \end{equation}
  Since $\Vert \mathbf{w}_{T_0} \Vert \ge \Vert \mathbf{w}_{t} \Vert \ \forall t < T_0$, we have 
  \begin{equation}\label{eq:positive_branch_exit_terminal_norm}
  \begin{aligned}
    (\ref{eq:positive_branch_exit_norm_reduction}) 
    \ \boldsymbol{\Leftarrow} \ &
    \frac{3}{256}
    \lambda_{\min}
    \frac{\alpha_0 }{e^{3 \alpha_0 / 2 }}
    \ge
    \frac{\gamma^2}{\eta} \Vert \mathbf{w}_{T_0} \Vert^2
  \end{aligned}
  \end{equation}
  For $\Vert \mathbf{w}_{T_0} \Vert^2$, we apply the first upper bound of gradient in Lemma \ref{Gradient_Norm_Bound} to obtain
  \begin{equation*}
  \begin{aligned}
    \Vert \mathbf{w}_{T_0} \Vert^2 
    & = 
    \Vert \mathbf{w}_{0} \Vert^2 
    + \eta^2 
    \sum_{\tau=0}^{T_0 - 1}
    \left\Vert 
        \nabla_{\mathbf{w}} \mathcal{R} \left(\mathbf{w}_{\tau}, \alpha_{\tau}\right)
    \right\Vert^2
    \\ & \le 
    \Vert \mathbf{w}_{0} \Vert^2 
    + 
    \frac{\eta^2 \lambda_{\max}}{\lambda_{\min}}
    \sum_{\tau=0}^{T_0 - 1}
    \frac{\alpha_{\tau}^2}{\Vert \mathbf{w}_\tau \Vert^2}
    \\ & \le 
    \Vert \mathbf{w}_{0} \Vert^2 
    + 
    T_0 \eta^2 \frac{ \lambda_{\max}}{\lambda_{\min}}
    \max_{\tau \in [0, T_0)} \left\{ \alpha_{\tau}^2 \right\}
    \frac{1}{\Vert \mathbf{w}_{0} \Vert^2}
    \\ & \le 
    \Vert \mathbf{w}_{0} \Vert^2 
    + 
      \frac{9}{4} \frac{ \alpha_{0}^2 \lambda_{\max}}{\lambda_{\min}}
    \frac{1}{\Vert \mathbf{w}_{0} \Vert^2} T_0 \eta^2
  \end{aligned}
  \end{equation*}
  Therefore, it holds that 
  \begin{equation}\label{eq:positive_branch_exit_initial_sufficient}
  \begin{aligned}
    (\ref{eq:positive_branch_exit_terminal_norm}) 
    \ \boldsymbol{\Leftarrow} \ &
    \frac{3}{256}
    \lambda_{\min}
    \frac{\alpha_0 }{e^{3 \alpha_0 / 2 }}
    \ge
    \frac{\gamma^2}{\eta}
    \left(
      \Vert \mathbf{w}_{0} \Vert^2 
      + 
      \frac{9}{4} \frac{ \alpha_{0}^2 \lambda_{\max}}{\lambda_{\min}}
      \frac{1}{\Vert \mathbf{w}_{0} \Vert^2} T_0 \eta^2
    \right)
    \\ 
    \ \boldsymbol{\Leftrightarrow} \ &
    \frac{3}{256}
    \lambda_{\min}
    \frac{\alpha_0 }{e^{3 \alpha_0 / 2 }}
    \ge 
    \gamma^2
    \frac{\Vert \mathbf{w}_{0} \Vert^2 }{\eta}
    + 
    \frac{9}{4} \frac{ \alpha_{0}^2 \lambda_{\max}}{\lambda_{\min}}
    \frac{1}{\Vert \mathbf{w}_{0} \Vert^2} 
    T_0 \eta \gamma^2
    \\ 
    \ \boldsymbol{\Leftrightarrow} \ &
    \frac{3}{256}
    \lambda_{\min}
    \frac{\alpha_0 }{e^{3 \alpha_0 / 2 }}
    \ge 
    \gamma^2 \frac{\Vert \mathbf{w}_{0} \Vert^2}{\eta} 
    + 
    \frac{9}{4} C_2 \frac{ \alpha_{0}^2 \lambda_{\max}}{\lambda_{\min}}
    \left( 1 + \frac{1}{\tan_{\min}^2}\right)
    \frac{\eta^2 \gamma^3}{\Vert \mathbf{w}_{0} \Vert^4} 
    \\ 
    \ \boldsymbol{\Leftarrow} \ &
    \frac{3}{256}
    \lambda_{\min}
    \frac{\alpha_0 }{e^{3 \alpha_0 / 2 }}
    \ge 
    \gamma^2 \frac{\Vert \mathbf{w}_{0} \Vert^2}{\eta}
    + 
    \frac{9}{4} C_2 \frac{ \alpha_{0}^2 \lambda_{\max}}{\lambda_{\min}}
    \left(
      1+ 
      \frac{8 \lambda_{\max}^2}{\lambda_{\min}^2}
    \right)
    \frac{\eta^2 \gamma^3}{\Vert \mathbf{w}_{0} \Vert^4}  
    \\ 
    \ \boldsymbol{\Leftarrow} \ &
    \frac{3}{256}
    \lambda_{\min}
    \frac{\alpha_0 }{e^{3 \alpha_0 / 2 }}
    \ge 
    \gamma^2 \frac{\Vert \mathbf{w}_{0} \Vert^2}{\eta}
    + 
    36 C_2 \frac{ \alpha_{0}^2 \lambda_{\max}^3}{\lambda_{\min}^3}
    \frac{\eta^2 \gamma^3 }{\Vert \mathbf{w}_{0} \Vert^4 } 
    \\ 
    \ \boldsymbol{\Leftarrow} \ &
    \frac{3}{256}
    \frac{\lambda_{\min}}{\sqrt{\lambda_{\max}}}
    \alpha_0
    \ge 
    \gamma^2 \frac{\Vert \mathbf{w}_{0} \Vert^2}{\eta}
    + 
    36 C_2 \frac{ \alpha_{0}^2 \lambda_{\max}^3}{\lambda_{\min}^3}
    \frac{\eta^2 \gamma^3 }{\Vert \mathbf{w}_{0} \Vert^4 } 
    \\ 
    \ \boldsymbol{\Leftrightarrow} \ &
    \frac{3}{256}
    \frac{{\lambda_{\min}}}{{\lambda_{\max}}}
    \frac{\sqrt{\lambda_{\max}}}{\gamma}
    \alpha_0
    \ge 
    \gamma \frac{\Vert \mathbf{w}_{0} \Vert^2}{\eta}
    + 
    36 C_2 \frac{ \alpha_{0}^2 \lambda_{\max}^3}{\lambda_{\min}^3}
    \frac{\eta^2 \gamma^2 }{\Vert \mathbf{w}_{0} \Vert^4 }
    \\ 
    \ \boldsymbol{\Leftarrow} \ &
    \frac{3}{256}
    \frac{{\lambda_{\min}}}{{\lambda_{\max}}}
    \alpha_0
    \ge 
    \gamma \frac{\Vert \mathbf{w}_{0} \Vert^2}{\eta}
    + 
    36 C_2 \frac{ \alpha_{0}^2 \lambda_{\max}^3}{\lambda_{\min}^3}
    \frac{\eta^2 \gamma^2 }{\Vert \mathbf{w}_{0} \Vert^4 }
  \end{aligned}
  \end{equation}
  Here we used the definition of $T_0$, the bound $\tan_{\min}^2 \ge \lambda_{\min}^2 / (8 \lambda_{\max}^2)$, the initialization constraint $\alpha_0 \le \frac{1}{3} \log(\lambda_{\max})$, and $\gamma \le \sqrt{\lambda_{\max}}$.
  Combining (\ref{eq:positive_branch_exit_reduction})--(\ref{eq:positive_branch_exit_initial_sufficient}), condition (\ref{eq:positive_branch_exit_condition}) is therefore implied by
  \begin{equation*}
    C_4 \cdot \gamma \le 
    \frac{\eta}{\Vert \mathbf{w}_0 \Vert^2} 
    \le C_5 \cdot \gamma^{-1}.
  \end{equation*}
  This auxiliary implication explains the constants $C_4$ and $C_5$ appearing in the theorem statement, where
  \begin{equation*}
  \begin{aligned}
    \tilde{C} & := \frac{3}{256} \frac{\lambda_{\min}}{\lambda_{\max}} \alpha_0,
    \\
    C_4 & := 2 / \tilde{C},
    \\
    C_5 & := 
    \sqrt{
      \frac{\tilde{C}}{2}
      \Big/
      \left(
        36 C_2 \frac{ \alpha_{0}^2 \lambda_{\max}^3}{\lambda_{\min}^3}
      \right)
    }.
  \end{aligned}
  \end{equation*}

  \noindent \textbf{Proof of (1)}
  Next, we prove that during $t \in [0, T_0)$, the convergence condition is satisfied whenever
  \[
    \left(
      \frac{\rho_t^{\perp}}{\rho_t}
    \right)^2
    \ge
    \frac{4}{\left(\sqrt{ \Phi^2 + 4} - \Phi\right)^2} - 1.
  \]
  To see if $\rho_t^{\perp}$ is decreasing, we need to verify the convergence 
  condition in Lemma \ref{Direction_Convegence_and_Divergence} during $t \in [0, T_0)$:
  \begin{equation}\label{con_11}
  \begin{aligned}
    \frac{\eta {\rho}_{t}}{\Vert \mathbf{w}_{t} \Vert}
    \left\Vert
      \nabla_{\mathbf{w}} \mathcal{R} \left(\mathbf{w}_{t}, \alpha_{t}\right)
    \right\Vert^2
    \le 
    - 2
    \left\langle 
    \hat{\mathbf{w}},
    \nabla_{\mathbf{w}} \mathcal{R} \left(\mathbf{w}_{t}, \alpha_{t}\right)
    \right\rangle
  \end{aligned}
  \end{equation}
  By Lemma \ref{inner_product_bound} and \ref{Gradient_Norm_Bound}, we have
  \begin{equation*}
  \begin{aligned}
    & - \left\langle \hat{\mathbf{w}}, \nabla_{\mathbf{w}} \mathcal{R}_t\right\rangle 
    \ge 
    \frac{\lambda_{\min} }{8 \sqrt{\lambda_{\max} }}
    \frac{\alpha_t  e^{-\alpha_t }}{\Vert \mathbf{w}_t \Vert}
    \left( \rho_t^{\perp} \right)^2;
    \\ &
    \left\Vert 
    \nabla_{\mathbf{w}} \mathcal{R}_t
    \right\Vert
    \le 
    \sqrt{
      \frac{\lambda_{\max}}{\lambda_{\min}}
    }
    \frac{\alpha_t}{\Vert \mathbf{w}_t \Vert}.
  \end{aligned}
  \end{equation*}
  By above bounds, we have
  \begin{equation}\label{eq:positive_branch_descent_sufficient}
  \begin{aligned}
    (\ref{con_11})
    \ \boldsymbol{\Leftarrow} \ \ & 
    \frac{\lambda_{\max} }{\lambda_{\min} }
    \frac{\eta {\rho}_{t}}{\Vert \mathbf{w}_{t} \Vert}
    \frac{\alpha_t^2}{\Vert \mathbf{w}_t \Vert^{2} }
    \le 
    \frac{\lambda_{\min}}{4}
    \frac{1}{\Vert \mathbf{w}_{t} \Vert_{\mathbf{\Sigma}}}
    \frac{\alpha_{t}}{e^{\alpha_{t}}}
    \left(
      \rho_{t}^{\perp}
    \right)^2 
    \\ \ \boldsymbol{\Leftarrow} \ \ & 
    \frac{\lambda_{\max} }{\lambda_{\min} }
    \frac{\eta {\rho}_{t}}{\Vert \mathbf{w}_{t} \Vert}
    \frac{\alpha_t^2}{\Vert \mathbf{w}_t \Vert^{2} }
    \le 
    \frac{\lambda_{\min}}{4 \sqrt{\lambda_{\max}}}
    \frac{1}{\Vert \mathbf{w}_{t} \Vert}
    \frac{\alpha_{t}}{e^{\alpha_{t}}}
    \left(
      \rho_{t}^{\perp}
    \right)^2 
    \\ \ \boldsymbol{\Leftrightarrow} \ \ & 
    \frac{4\lambda_{\max}^{3/2} }{\lambda_{\min}^2 }
    \frac{\eta}{\Vert \mathbf{w}_{t} \Vert^2}
    \alpha_te^{\alpha_{t}}
    {\rho}_{t}
    \le 
    \left(
      \rho_{t}^{\perp}
    \right)^2 
    \\ \ \boldsymbol{\Leftarrow} \ \ & 
    \frac{6\lambda_{\max}^{3/2} }{\lambda_{\min}^2 }
    \frac{\eta}{\Vert \mathbf{w}_{t} \Vert^2}
    \alpha_0 e^{3\alpha_{0}/2}
    {\rho}_{t}
    \le 
    \left(
      \rho_{t}^{\perp}
    \right)^2 
    \\ \ \boldsymbol{\Leftarrow} \ \ & 
    \frac{6\lambda_{\max}^{3/2} }{\lambda_{\min}^2 }
    \frac{\eta}{\Vert \mathbf{w}_{0} \Vert^2}
    \alpha_0 e^{3\alpha_{0}/2}
    {\rho}_{t}
    \le 
    \left(
      \rho_{t}^{\perp}
    \right)^2 
    \\ \ \boldsymbol{\Leftarrow} \ \ & 
    \frac{6\lambda_{\max}^{2} }{\lambda_{\min}^2 }
    \frac{\eta}{\Vert \mathbf{w}_{0} \Vert^2}
    \alpha_0
    {\rho}_{t}
    \le 
    \left( \rho_{t}^{\perp} \right)^2 
  \end{aligned}
  \end{equation}
  Here we used $\frac{1}{2}\alpha_0 \le \alpha_t \le \frac{3}{2}\alpha_0$ for $t < T_0$, monotonic growth of $\Vert \mathbf{w}_t \Vert$, and $\alpha_0 \le \frac{1}{3}\log(\lambda_{\max})$.
  Then we denote 
  \begin{equation*}
  \begin{aligned}
    \Phi := \frac{6\lambda_{\max}^2 }{\lambda_{\min}^2 }
    \alpha_0 \cdot \frac{\eta \gamma}{\Vert \mathbf{w}_{0} \Vert^2}.
  \end{aligned},
  \end{equation*}
  to simplify the notation. Further, we have 
  \begin{equation*}
  \begin{aligned}
    (\ref{eq:positive_branch_descent_sufficient})
    \ \boldsymbol{\Leftrightarrow} \ \ & 
    \frac{\Phi}{\gamma}
    {\rho}_{t}
    \le 
    \left(
      \rho_{t}^{\perp}
    \right)^2 
    \ \boldsymbol{\Leftrightarrow} \ 
    {\rho}_{t}^2 + \frac{\Phi}{\gamma} {\rho}_{t} - \frac{1}{\gamma^2} \le  0
  \end{aligned}
  \end{equation*}
  Since $\rho_t > 0$, we solve the range of $\rho_t$:
  \begin{equation*}
  \begin{aligned}
    \rho_{t} 
    \le 
    \frac{\sqrt{ \Phi^2 + 4} - \Phi}{2}
    \cdot \frac{1}{\gamma}
    \ \boldsymbol{\Leftrightarrow} \ 
    \left(
      \frac{\rho_t^{\perp}}{\rho_t}
    \right)^2
    \ge 
    \frac{4}{\left(\sqrt{ \Phi^2 + 4} - \Phi\right)^2} - 1
  \end{aligned},
  \end{equation*}
  where the range of $\frac{\rho_t^{\perp}}{\rho_t}$ is by $1/\gamma^2 = \left(\rho_t^{\perp}\right)^2 + \rho_t^2$.
  Therefore, we have
  \begin{equation*}
  \begin{aligned}
    (\ref{con_11}) 
    \ \boldsymbol{\Leftarrow} \
    (\ref{eq:positive_branch_descent_sufficient}) 
    \ \boldsymbol{\Leftrightarrow} \
    \left(
      \frac{\rho_t^{\perp}}{\rho_t}
    \right)^2
    \ge 
    \frac{4}{\left(\sqrt{ \Phi^2 + 4} - \Phi\right)^2} - 1
  \end{aligned}
  \end{equation*}

  \noindent \textbf{Proof of item (3): explicit exit threshold on a rising-edge segment} 
  We now prove the explicit exit threshold for a \textit{Rising Edge} on the positive-alignment branch by checking the
  convergence condition of Lemma \ref{Direction_Convegence_and_Divergence}. 
  To obtain an explicit threshold, we upper bound $\Vert \nabla_{\mathbf{w}} \mathcal{R}_t \Vert$ by $\rho_t^\perp$, 
  using the second upper bound of Lemma \ref{Gradient_Norm_Bound}: 
  \begin{equation*}
  \begin{aligned}
    \left\Vert 
    \nabla_{\mathbf{w}} \mathcal{R}_t
    \right\Vert
    \le 
    \lambda_{\max}
    \frac{\alpha_t}{\Vert \mathbf{w}_t \Vert_{\mathbf{\Sigma}}}
    \left( \alpha_t + 1 \right)
    \rho_t^{\perp}
  \end{aligned}
  \end{equation*}
  Therefore, we have 
  \begin{equation*}
  \begin{aligned}
    (\ref{con_11})
    & \ \boldsymbol{\Leftarrow} \ 
    4
    \frac{\lambda_{\max}^{5/2}}{\lambda_{\min} }
    \frac{\alpha_t e^{\alpha_t} }{\Vert \mathbf{w}_t \Vert_{\mathbf{\Sigma}}^2}
    \left( \alpha_t + 1 \right)^2
    \eta {\rho}_{t}
    \le 
    1
    \\ & \ \boldsymbol{\Leftarrow} \ 
    4
    \frac{\lambda_{\max}^{5/2}}{\lambda_{\min}^2 }
    \frac{\alpha_t e^{\alpha_t} }{\Vert \mathbf{w}_t \Vert^2}
    \left( \alpha_t + 1 \right)^2
    \eta {\rho}_{t}
    \le 
    1
    \\ & \ \boldsymbol{\Leftarrow} \ 
    4
    \frac{\lambda_{\max}^{5/2}}{\lambda_{\min}^2 }
    \frac{\alpha_t e^{\alpha_t} }{\Vert \mathbf{w}_0 \Vert^2}
    \left( \alpha_t + 1 \right)^2
    \eta {\rho}_{t}
    \le 
    1
    \\ & \ \boldsymbol{\Leftarrow} \ 
    6
    \frac{\lambda_{\max}^{5/2}}{\lambda_{\min}^2 }
    \alpha_0 e^{3 \alpha_0 / 2}\left( \frac{3}{2} \alpha_0 + 1 \right)^2
    \frac{\eta {\rho}_{t}}{\Vert \mathbf{w}_0 \Vert^2}
    \le 
    1
    \\ & \ \boldsymbol{\Leftrightarrow} \ 
    6
    \frac{\lambda_{\max}^{5/2}}{\lambda_{\min}^{5/2} }
    \alpha_0 e^{3 \alpha_0 / 2}\left( \frac{3}{2} \alpha_0 + 1 \right)^2
    \frac{\eta \gamma}{\Vert \mathbf{w}_0 \Vert^2} {\rho}_{t}
    \le 
    1
    \\ & \ \boldsymbol{\Leftrightarrow} \ 
    {\rho}_{t}^2
    \le 
    \frac{\Vert \mathbf{w}_0 \Vert^4}{\gamma^2 \eta^2}
    C_6^{-1}
    \\ & \ \boldsymbol{\Leftrightarrow} \ 
    \left(
      \frac{\rho_t^\perp}{\rho_t}
    \right)^2
    \ge
    C_6
    \frac{\eta^2}{\Vert \mathbf{w}_0 \Vert^4}
    \frac{1}{\gamma^2}
    - 1
  \end{aligned}
  \end{equation*}
  Here we used $\Vert \mathbf{w}_t \Vert \ge \Vert \mathbf{w}_0 \Vert$, $\frac{1}{2}\alpha_0 \le \alpha_t \le \frac{3}{2}\alpha_0$ on $[0, T_0)$, $\gamma \ge \sqrt{\lambda_{\min}}$, and the identity $1/\gamma^2 = \rho_t^2 + (\rho_t^\perp)^2$.
  where $C_6$ is 
  \begin{equation*}
  \begin{aligned}
    C_6 := 
    \left(
      6
      \frac{\lambda_{\max}^{5/2}}{\lambda_{\min}^{5/2} }
      \alpha_0 e^{3 \alpha_0 / 2}\left( \frac{3}{2} \alpha_0 + 1 \right)^2
    \right)^2
  \end{aligned}
  \end{equation*}
  Therefore, every iterate on the positive-alignment branch satisfying
  \[
    \left(
      \frac{\rho_t^\perp}{\rho_t}
    \right)^2
    \ge
    C_6
    \frac{\eta^2}{\Vert \mathbf{w}_0 \Vert^4}
    \frac{1}{\gamma^2}
    - 1
  \]
  already meets the convergence condition (\ref{con_11}). Lemma \ref{Direction_Convegence_and_Divergence} then gives $\left(\rho_{t+1}^\perp\right)^2 \le \left(\rho_t^\perp\right)^2$. Since we are on the branch $\rho_t > 0$ and $\rho_t^2 + \left(\rho_t^\perp\right)^2 = 1/\gamma^2$, this implies $\rho_{t+1} \ge \rho_t$ and hence
  \[
    \left(
      \frac{\rho_{t+1}^\perp}{\rho_{t+1}}
    \right)^2
    \le
    \left(
      \frac{\rho_t^\perp}{\rho_t}
    \right)^2.
  \]
  Thus the next iterate leaves the \textit{Rising Edge}, proving item (3).
\end{proof}

\end{appendices}

\bibliography{sn-bibliography}

\end{document}